\documentclass[a4paper,12pt]{article}
\usepackage[utf8]{inputenc}
\usepackage{fullpage}
\usepackage{amsmath, amssymb, amstext, amsfonts, amsthm, bbm, array, enumerate,scalerel}
\usepackage{mathrsfs}
\usepackage{dsfont}
\usepackage{nicefrac}
\usepackage{shuffle}
\usepackage{mathabx}
\usepackage{mathtools}
\usepackage[dvipsnames]{xcolor}
\usepackage[english]{babel}
\usepackage{tikz}
\usepackage{stmaryrd}
\usepackage{subcaption}
\usepackage{tikz-3dplot}
\usepackage{tikz}
\usetikzlibrary{positioning}
\usetikzlibrary{patterns}
\usetikzlibrary{plotmarks}
\usepackage{enumitem}
\usepackage{bbm}

\newtheorem{theorem}{Theorem}[section]
\newtheorem{lemma}[theorem]{Lemma}
\newtheorem{corollary}[theorem]{Corollary}
\newtheorem{proposition}[theorem]{Proposition}

\theoremstyle{definition}
\newtheorem{definition}[theorem]{Definition}
\newtheorem{example}[theorem]{Example}

\theoremstyle{remark}
\newtheorem{remark}[theorem]{Remark}

\numberwithin{equation}{section}
\numberwithin{theorem}{section}

\DeclareMathOperator{\supp}{supp}

\newcommand{\Pol}{\textup{Pol}}

\DeclareMathOperator{\id}{id}
\DeclareMathOperator{\linspan}{span}

\newcommand{\img}{\text{Im}}

\newcommand{\vertiii}[1]{{\left\vert\kern-0.25ex\left\vert\kern-0.25ex\left\vert #1 
    \right\vert\kern-0.25ex\right\vert\kern-0.25ex\right\vert}}

\usepackage{scalerel,stackengine}
\stackMath
\newcommand\reallywidehat[1]{%
\savestack{\tmpbox}{\stretchto{%
  \scaleto{%
    \scalerel*[\widthof{\ensuremath{#1}}]{\kern.1pt\mathchar"0362\kern.1pt}%
    {\rule{0ex}{\textheight}}
  }{\textheight}%
}{2.4ex}}%
\stackon[-6.9pt]{#1}{\tmpbox}%
}

\stackMath
\newcommand\reallywidecheck[1]{%
\savestack{\tmpbox}{\stretchto{%
  \scaleto{%
    \scalerel*[\widthof{\ensuremath{#1}}]{\kern-.6pt\bigwedge\kern-.6pt}%
    {\rule[-\textheight/2]{1ex}{\textheight}}
  }{\textheight}%
}{0.5ex}}%
\stackon[1pt]{#1}{\scalebox{-1}{\tmpbox}}%
}

\makeatletter
\newcommand{\labeltext}[3][]{%
	\@bsphack%
	\csname phantomsection\endcsname
	\def\tst{#1}%
	\def\labelmarkup{\emph}
	\def\refmarkup{}%
	\ifx\tst\empty\def\@currentlabel{\refmarkup{#2}}{\label{#3}}%
	\else\def\@currentlabel{\refmarkup{#1}}{\label{#3}}\fi%
	\@esphack%
	\labelmarkup{#2}
}
\makeatother

\RequirePackage[colorlinks,citecolor=blue,urlcolor=blue, linkcolor=blue]{hyperref}



\newcommand\mydots{\hbox to 1em{.\hss.\hss.}}

\begin{document}
	
	\title{Global universal approximation of functional input maps on weighted spaces}

	\date{}

	\author{Christa Cuchiero\thanks{Vienna University, Department of Statistics and Operations Research, Data Science @ Uni Vienna, Kolingasse 14-16, 1090 Vienna, Austria, \mbox{christa.cuchiero@univie.ac.at}
		} \quad Philipp Schmocker\thanks{ETH Zurich, Department of Mathematics, Rämistrasse 101, 8092 Zurich, Switzerland, \mbox{philipp.schmocker@math.ethz.ch}} \quad  Josef Teichmann\thanks{ETH Zurich, Department of Mathematics, Rämistrasse 101, 8092 Zurich, Switzerland, \mbox{jteichma@math.ethz.ch} \newline
			The first author gratefully acknowledges financial support through grant Y 1235 of the START-program.\newline
			The third author gratefully acknowledges financial support by SNF and ETH Foundation.\newline
			The authors are grateful to the anonymous referees whose comments helped to improve the article.}
}

	\maketitle

	\begin{abstract}
    	We introduce so-called \emph{functional input neural networks} defined on a possibly infinite dimensional weighted space with values also in a possibly infinite dimensional output space. To this end, we use an additive family to map the input weighted space to the hidden layer, on which a non-linear scalar activation function is applied to each neuron, and finally return the output via some linear readouts. Relying on Stone-Weierstrass theorems on weighted spaces, we can prove a \emph{global} universal approximation result on weighted spaces for continuous functions going beyond the usual approximation on compact sets. This then applies in particular to approximation of (non-anticipative) path space functionals via functional input neural networks. As a further application of the weighted Stone-Weierstrass theorem we prove a global universal approximation result for linear functions of the signature. We also introduce the viewpoint of Gaussian process regression in this setting and emphasize that the reproducing kernel Hilbert space of the signature kernels are Cameron-Martin spaces of certain Gaussian processes. This paves a way towards uncertainty quantification for signature kernel regression.
	\end{abstract}
	
	\textbf{Keywords:} Universal approximation, weighted spaces, Stone-Weierstrass theorem, functional input neural networks, machine learning, non-anticipative path functionals, rough paths, signature, Gaussian process, Cameron-Martin space, reproducing kernel Hilbert space
	
	\vspace{0.2cm}
	
	\textbf{AMS MSC 2020:} 26A16, 26E20, 41A65, 41A81, 46E40, 60L10, 68T07
 
	\tableofcontents
	
	\section{Introduction}
	\label{intro}
	
	We introduce a generalization of neural networks to infinite dimensional spaces which we call \emph{functional input neural networks (FNNs).} These neural networks can be applied as supervised learning tools in machine learning (see \cite{mitchell97,montavon12,gbc16}), when the input and possibly also the output spaces are infinite dimensional.
	
	The study of neural networks on finite-dimensional Euclidean spaces was originally initiated by Warren McCulloch and Walter Pitts in their seminal work \cite{mcculloch43}. The idea was to mimick the functionality of the human brain with a system consisting of various connections and neurons, where data is fed in, further transformed roughly as in neural connections and finally returned  as output. Mathematically, such a system can be described by a concatenation of affine and non-linear maps, where the affine maps represent the connections between the different neurons and the non-linear maps the transformation of the input data. In the well-known universal approximation theorems (UATs), first proved by George Cybenko \cite{cybenko89} and Kurt Hornik et al.~\cite{hornik89,hornik91}, it was then shown that such neural networks can approximate any continuous function arbitrarily well, uniformly on compact subsets of $\mathbb{R}^d$.
	
    One of the goals of the present work consists in replacing compact subsets of $\mathbb{R}^d$ by some infinite dimensional space, in particular spaces of functions or path spaces (being the reason for ``functional input maps'' in the title), and going beyond uniform approximation on compact sets. More precisely, we provide \emph{global UATs} on non locally compact input spaces. This naturally leads to the setting of \emph{weighted spaces}, rigorously introduced in Section \ref{sec:1}. These weighted spaces do not only qualify as infinite dimensional input spaces, but also allow to formulate a probabilistic theory of so-called generalized Feller processes together with their semigroups, as considered in \cite{roeckner06, doersek10, cuchiero20, cuchiero20b, blessing22}. The latter is for instance important for the analysis of approximation properties of neural S(P)DEs (see, e.g., \cite{salvi2022neural, kidger2021neural, cohen2021arbitrage, gierjatowicz2020robust}), or signature SDEs (see, e.g., \cite{arribas2020sig, cuchiero2022signature, primavera22}) as the corresponding paths usually do not stay in compact sets almost surely.

	In order to show the global universal approximation property of FNNs between infinite dimensional spaces, we use the structure of the weighted spaces and rely on a \emph{weighted version of the classical Stone-Weierstrass theorem} (see, e.g., \cite{stone48,buck58}) proved in Section~\ref{SecSW}. Our formulation of this weighted Stone-Weierstrass theorem is inspired by Leopoldo Nachbin's article \cite{nachbin65}, even though our setting differs in several important respects (definition of function spaces, weighted topologies, criteria for density, etc.). The crucial ingredient for global approximation is a weight function that controls the functions to be approximated outside of large compact sets and in turn allows to prove density of \emph{point separating moderate growth algebras} (introduced in Definition~\ref{DefPtSepModGrowth}) among all continuous functions which can be dominated by this weight function. Another Stone-Weierstrass theorem was proved in \cite{buck58,todd65} for the space of continuous bounded functions over a non-compact space using the so-called strict topology (see also \cite{giles71}), which can be seen as the projective limit of the weighted spaces introduced in this paper. 
	
    The weighted Stone-Weierstrass result is then used to get an UAT for FNNs, which consist of an additive family as hidden layer maps and a non-linear activation function applied to each hidden layer. The activation functions are defined on $\mathbb{R}$, allowing us to trace the universal approximation property on infinite dimensional spaces back to the one-dimensional case. By imposing classical conditions on the activation function like the discriminatory property, the sigmoidal form, or conditions on its Fourier transform (similar to the non-polynomial case), we first prove a global UAT for neural networks on $\mathbb{R}$ (see Proposition \ref{PropAct}), which can be lifted to FNNs between infinite dimensional spaces. In particular, Condition (A3) of Proposition \ref{PropAct} stating that the Fourier transform of the tempered distribution induced by the activation function has a support with zero as inner point, is a new formulation yielding another version of UAT for continuous functions on $\mathbb{R}$ and is thus interesting in its own right. Another UAT of neural networks beyond compact subsets of the Euclidean space has been recently established in \cite{vannuland24}.
    
    Let us remark that there is of course an extensive literature on neural networks with infinite dimensional input and output spaces. Early instances can be found in \cite{chen95,mhaskar97b,stinchcombe99,rossi05} for learning non-linear functionals. For more recent results we refer to \cite{kratsios20,kratsios23} for approximation on non-Euclidean spaces, to \cite{benth21} for approximation on Fr\'echet spaces, and to \cite{song23} for approximation rates for non-linear functionals on $L^p$-spaces. Moreover, echo-state network architectures were considered in \cite{grigoryeva18,gonon23} and so-called metric hypertransformers in \cite{acciaio2022metric}, which are both shown to be universal for adapted maps between suitably defined discrete-time path spaces. In addition, for learning the solution of a partial differential equation, we refer to the works on physics-informed neural networks in \cite{raissi19}, neural operators in \cite{stuart21}, DeepONets in \cite{lu21,lanthaler22}, and the references therein.

    The crucial novelty of the current work are global UATs for (generalizations of) continuous functions beyond compact sets. While classical UATs on compact domains ensure the existence of an approximation over a fixed compact set, our results yield a global approximation. These UATs on non-compact domains are highly relevant in areas like stochastic analysis or mathematical finance, where the model space is given by a generically non-compact set of paths. An important class of functional input maps in mathematical finance are so-called non-anticipative functionals introduced in \cite{dupire09,contfournie13}. We translate our general UAT to this setup and illustrate in Section~\ref{SecNE} the numerical performance of FNNs when approximating the running maximum of a standard Brownian motion. 
    
    Apart from neural networks, there are many other families which can serve as universal approximators on function spaces. One well-known example is the \emph{signature of a path} which actually serves as a linear regression basis. It plays a central role in rough path theory (see, e.g., \cite{lyons07, friz10, friz2020course}) and lately also in econometrics and mathematical finance (see, e.g., \cite{levin13, kiraly19, perez19, BHLPW:20, kalsi20, arribas2020sig, cuchiero20sign, NSSBWL:21, BHRS:21, MH:21, AGTZ:22, chevyrev22, cuchiero2022signature, primavera22, neufeld22} and the references therein). Similarly as for neural networks, UATs for linear functions of the signature have been proved on compact sets of paths (see, e.g., \cite{perez19,cuchiero2022signature} for continuous semimartingales and \cite{primavera22} for c\`adl\`ag paths). Note that in the context of (semi-)martingales the Wiener-Ito chaos decomposition with iterated stochastic integrals instead of the signature (see \cite{ito51} for Brownian motion and \cite{ito56} for processes with stationary increments) can be seen as global UAT for the $L^2$-norm with respect to the Wiener measure (see also the $L^p$-version with (random) neural networks of \cite{neufeld22}).
    
    By choosing an appropriate weight function and by applying the weighted Stone-Weierstrass theorem we obtain a global universal approximation result for linear functions of the signature of continuous geometric rough paths (see Section \ref{SecSM}). Let us remark that in \cite{chevyrev22} also a global approximation result is obtained, however not with the ``true'' signature but with a bounded normalization of it, as there the strict topology from \cite{giles71} on continuous \emph{bounded} functions (see \cite[Definition 8]{chevyrev22}) is used. The disadvantage of this normalization is that the tractability properties of the signature, like the analytic computation of its expected value, are lost. In Section \ref{SecKernels} we also introduce the viewpoint of Gaussian process regression in this setting and emphasize that the reproducing kernel Hilbert space of the ``true'' signature kernels are Cameron-Martin spaces of certain Gaussian processes, which are first time introduced in the literature. This can be useful in the context of uncertainty quantification for signature kernel regression.
    
    In the following subsection we provide three specific examples of FNNs that can be used to approximate $\alpha$-Hölder continuous functions, non-anticipative path functionals as well as monetary risk measures.

	\subsection{Motivational examples}
	\label{SubsecMotEx}
	
	In order to give  an overview of the applicability of our results, we consider three particular examples. The first one is within functional data analysis, which is a branch of statistics where every sample corresponds to a continuous function (see \cite{ramsay05}). Hence, we learn a continuous map $f: C^\alpha(S) \rightarrow Y$ with some growth conditions, where $(S,d_S)$ is a compact metric space, $C^\alpha(S)$ denotes the space of $\alpha$-H\"older continuous real-valued functions on $(S,d_S)$, and $(Y,\Vert \cdot \Vert_Y)$ is a Banach space. Then, our results show that the map $f: C^\alpha(S) \rightarrow Y$ can be approximated by learning a FNN $\varphi: C^\alpha(S) \rightarrow Y$ of the form
	\begin{equation*}
		\varphi(x) = \sum_{n=1}^N y_n \rho\left( \int_S x(s) \nu_n(ds) + b_n \right),
	\end{equation*}
	for $x \in C^\alpha(S)$, where $y_1,\ldots,y_N \in Y$ and $b_1,\ldots,b_N \in \mathbb{R}$, and where $N \in \mathbb{N}$ is the number of neurons. Moreover, $\rho: \mathbb{R} \rightarrow \mathbb{R}$ is an activation function and $\nu_1,\ldots,\nu_N$ are finite signed Radon measures defined on $S$. If $S$ is, e.g.,~a subset of the Euclidean space, the Radon measures can be replaced by classical neural networks playing the role of a density function. This together with a numerical integration allows us to implement the functional neural network on a computer.
	
	In the second example, we learn a non-anticipative functional $f(t,x)$ whose value at time $t \in [0,T]$ only depends on the path $x \in C^\alpha([0,T];\mathbb{R}^d)$ up to time $t \in [0,T]$ (see \cite{dupire09,cont10,contfournie13}). For this purpose, we define the \emph{stopped path} $x^t$ of $x$ at time $t \in [0,T]$ as $x^t(s) = x(\min(s,t))$, for $s \in [0,T]$, and consider the \emph{space of stopped $\alpha$-H\"older continuous paths} $\Lambda^\alpha_T = \left\lbrace (t,x^t): (t,x) \in [0,T] \times C^\alpha([0,T];\mathbb{R}^d) \right\rbrace$, which is equipped with the metric $d_\infty((t,x),(s,y)) = \vert t-s \vert + \sup_{u \in [0,T]} \left\Vert x^t(u) - y^s(u) \right\Vert$. Measurable maps from $\Lambda^\alpha_T$ to $\mathbb{R}$ are then called non-anticipative functionals. If they are additionally continuous and satisfy certain growth conditions, we can approximate them via FNNs $\varphi: \Lambda^\alpha_T \rightarrow \mathbb{R}$ of the form
	\begin{equation*}
		\varphi(t,x) = \sum_{n=1}^N y_n \rho\left( a_n t + \sum_{i=1}^d \int_0^t \phi_{n,i}(s) x_i(s) ds + b_n \right),
	\end{equation*}
	for $(t,x) \in \Lambda^\alpha_T$, where $y_1,\ldots,y_N, a_1,\ldots,a_N, b_1,\ldots,b_N \in \mathbb{R}$, where $N \in \mathbb{N}$ is the number of neurons, where $\rho: \mathbb{R} \rightarrow \mathbb{R}$ denotes an activation function, and where $\phi_{n,i}: \mathbb{R} \rightarrow \mathbb{R}$ are classical neural networks. Compared to the first example, the classical neural networks act here as density functions instead of Radon measures.
	
	As third example we consider monetary risk measures in the sense of \cite[Section 4]{follmer2016stochastic}, i.e.~maps $R: L^{\infty}(\Omega, \mathcal{F}, \mathbb{P}) \to \mathbb{R}$, where $(\Omega, \mathcal{F}, \mathbb{P})$ denotes a probability space. A monetary risk measure is defined to be monotone, i.e.~$R(x) \leq R(\widetilde{x})$ for any $x, \widetilde{x} \in L^\infty(\Omega, \mathcal{F}, \mathbb{P})$ with $x \geq \widetilde{x}$, and cash-invariant, i.e.~$R(x+m)=R(x)-m$ for any $x \in L^{\infty}(\Omega, \mathcal{F}, \mathbb{P})$ and $m \in \mathbb{R}$. We define such a risk measure via an acceptance set $\mathcal{A} \subseteq L^{\infty}(\Omega, \mathcal{F}, \mathbb{P})$ such that
    \begin{equation*}
        R(x) := \inf\left\lbrace m \in \mathbb{R}: x+m \in \mathcal{A} \right\rbrace,
    \end{equation*}
    with $\inf\emptyset := \infty$, where some assumptions on $\mathcal{A}$ are needed to ensure that $R$ is monotone. We can now approximate the indicator function $\mathds{1}_\mathcal{A}: L^{\infty}(\Omega, \mathcal{F}, \mathbb{P}) \to \{0,1\}$ (or smooth approximation thereof) via a FNN $\varphi: L^{\infty}(\Omega, \mathcal{F}, \mathbb{P}) \to \mathbb{R}$ of the form
    \begin{equation*}
        \varphi(x)=\sum_{n=1}^N y_n \rho\left( \mathbb{E}[xz_n] + b_n \right),
    \end{equation*}
    for $x \in L^{\infty}(\Omega,\mathcal{F},\mathbb{P})$, where $y_1, \ldots, y_N, b_1, \ldots b_N \in \mathbb{R}$, $N \in \mathbb{N}$, $\rho: \mathbb{R} \to \mathbb{R}$, and $z_1, \ldots z_N \in L^{1}(\Omega, \mathcal{F}, \mathbb{P})$. An approximation of the risk measure $R$ can then be computed by solving
    \begin{equation*}
        \inf\left\lbrace m \in \mathbb{R}: \varphi(x+m) = 1 \right\rbrace,
    \end{equation*}
	where $\inf\emptyset := \infty$. These three examples provide a first glance over the broad range of applications for FNNs. Note that the input space can violate the vector space structure, as illustrated by the second example, and can be the dual of another Banach space, as in the third example. \\
	
	The remainder of the article is structured as follows. In the next subsection we introduce relevant notation used throughout the article. Section~\ref{sec:1} is dedicated to notions related to weighted spaces and the generalization of continuous functions, called $\mathcal{B}_{\psi}$-functions, defined thereon. In Section~\ref{SecSW}, we prove the weighted Stone-Weierstrass theorem in this setting, which we then use in Section~\ref{SecUATWS} 
	to lift the global universal approximation property of classical neural networks to FNNs.
	In Section~\ref{SecSM}, we present a further application of the weighted Stone-Weierstrass theorem to prove a global UAT for linear functions of path signatures. Section \ref{SecKernels} then introduces a Gaussian process regression point of view and identifies the reproducing kernel Hilbert space of the signature kernel with the Cameron-Martin space of Gaussian processes taking values in $\mathcal{B}_{\psi}$-functions.
	Finally, we provide some numerical examples in Section~\ref{SecNE}.
	
	\subsection{Notation}
	\label{SecNotation}
	
	As usual, $\mathbb{N} := \lbrace 1, 2, 3, \ldots \rbrace$ and $\mathbb{N}_0 := \mathbb{N} \cup \lbrace 0 \rbrace$ denote the sets of natural numbers, while $\mathbb{R}$ and $\mathbb{C}$ (with imaginary unit $\mathbf{i} := \sqrt{-1} \in \mathbb{C}$) represent the sets of real and complex numbers, respectively. Furthermore, we define the functions $\lfloor r \rfloor := \max\left\lbrace n \in \mathbb{Z}: r \geq n \right\rbrace$ and $\lceil r \rceil := \min\left\lbrace n \in \mathbb{Z}: r \leq n \right\rbrace$, for $r \in \mathbb{R}$. Moreover, for $d \in \mathbb{N}$, we denote by $\mathbb{R}^d$ the $d$-dimensional Euclidean space equipped with the norm $\Vert x \Vert = \sqrt{\sum_{i=1}^d x_i^2}$, for $x \in \mathbb{R}^d$. In addition, $\mathcal{F}_X$ denotes the Borel $\sigma$-algebra of a topological space $(X,\tau_X)$.
	
	Furthermore, for a Hausdorff topological space $(X,\tau_X)$ and a Banach space $(Y,\Vert \cdot \Vert_Y)$, we denote by $C^0(X;Y)$ the vector space of continuous maps $f: X \rightarrow Y$, whereas $C^0_b(X;Y) \subseteq C^0(X;Y)$ denotes the vector subspace of bounded maps. If $(X,\tau_X)$ is additionally compact, every map $f \in C^0(X;Y)$ is bounded. In this case, the supremum norm $\Vert f \Vert_{C^0(X;Y)} := \sup_{x \in X} \Vert f(x) \Vert_Y$ turns $C^0(X;Y)$ into a Banach space.

    Moreover, for an open subset $U \subseteq \mathbb{R}$, we denote by $C^\infty_c(U;\mathbb{C})$ the vector space of compactly supported smooth functions $g: U \rightarrow \mathbb{C}$ with support $\supp(g) := \overline{\left\lbrace s \in U: g(s) \neq 0 \right\rbrace}$ contained in $U$. In addition, $\mathcal{S}(\mathbb{R};\mathbb{C})$ represents the Schwartz space of smooth functions $g: \mathbb{R} \rightarrow \mathbb{C}$ such that the seminorms $\max_{j=0,\ldots,n} \sup_{s \in \mathbb{R}} \left( 1+s^2 \right)^n \left\vert g^{(j)}(s) \right\vert$ are finite, for all $n \in \mathbb{N}_0$, which is equipped with the locally convex topology induced by these seminorms. Then, its dual space $\mathcal{S}'(\mathbb{R};\mathbb{C})$ consists of continuous linear functionals $T: \mathcal{S}(\mathbb{R};\mathbb{C}) \rightarrow \mathbb{C}$ called tempered distributions. For example, $\rho \in C^0(\mathbb{R};\mathbb{C})$ with $\sup_{z \in \mathbb{R}} \frac{\vert \rho(z) \vert}{(1+z^2)^n} < \infty$, for some $n \in \mathbb{N}_0$, induces the tempered distribution $\left( g \mapsto T_\rho(g) := \int_{\mathbb{R}} \rho(s) g(s) ds \right) \in \mathcal{S}'(\mathbb{R};\mathbb{C})$. Moreover, the support of any $T \in \mathcal{S}'(\mathbb{R};\mathbb{C})$ is defined as the complement of the largest open set $U \subseteq \mathbb{R}$ on which $T \in \mathcal{S}'(\mathbb{R};\mathbb{C})$ vanishes, i.e.~$T(g) = 0$ for all $g \in C^\infty_c(U;\mathbb{C})$. Furthermore, we define the Fourier transform of any $g \in L^1(\mathbb{R};\mathbb{C})$ as $\widehat{g}(\xi) = \int_{\mathbb{R}} e^{-i \xi s} g(s) ds$, for $\xi \in \mathbb{R}$, whereas the Fourier transform $\widehat{T} \in \mathcal{S}'(\mathbb{R};\mathbb{C})$ of any $T \in \mathcal{S}'(\mathbb{R};\mathbb{C})$ is defined by $\widehat{T}(g) = T(\widehat{g})$, for $g \in \mathcal{S}(\mathbb{R};\mathbb{C})$. For more details, we refer to \cite{folland92}.
	
	In addition, if $(X,\Vert \cdot \Vert_X)$ and $(Y,\Vert \cdot \Vert_Y)$ are normed vector spaces, we denote by $L(X;Y)$ the vector space of continuous linear maps $T: X \rightarrow Y$, which is equipped with the norm $\Vert T \Vert_{L(X;Y)} := \sup_{x \in X, \, \Vert x \Vert_X \leq 1} \Vert T(x) \Vert_Y$. If $Y = \mathbb{R}$, the space $X^* := L(X;\mathbb{R})$ is the dual space of $X$ consisting of continuous linear functionals $l: X \rightarrow \mathbb{R}$, which is endowed with the norm $\Vert l \Vert_{X^*} := \sup_{x \in X, \, \Vert x \Vert_X \leq 1} \vert l(x) \vert$. In this case, the dual pairing $(X^*,X) \ni (l,x) \mapsto \langle l,x \rangle_{X^* \times X} := l(x) \in \mathbb{R}$ is bilinear and continuous. Moreover, if $(X,\Vert \cdot \Vert_X)$ and $(Y,\Vert \cdot \Vert_Y)$ are Banach spaces, then $L(X;Y)$ and $X^*$ are also Banach spaces.
 
    Furthermore, a Banach space $(X,\Vert \cdot \Vert_X)$ is called a \emph{dual Banach space} if there exists a Banach space $(E,\Vert \cdot \Vert_E)$ such that $E^* \cong X$ are isometrically isomorphic. In this case, $(E,\Vert \cdot \Vert_E)$ is called a \emph{predual} for $(X,\Vert \cdot \Vert_X)$ and the isomorphism $\Phi: X \rightarrow E^*$ can be used to define the continuous dual pairing $(X,\Vert \cdot \Vert_X) \times (E,\Vert \cdot \Vert_E) \ni (x,e) \mapsto \langle x, e \rangle_{X \times E} := \Phi(x)(e) \in \mathbb{R}$. Hence, we can equip $X$ with a weak-$*$-topology generated by sets of the form $\left\lbrace x \in X: \langle x, e \rangle_{X \times E} \in U \right\rbrace$, for $e \in E$ and $U \subseteq \mathbb{R}$ open (see Appendix~\ref{AppBanachPredual}).

    Moreover, if $(S,d_S)$ is a metric space and $(Z,\Vert \cdot \Vert_Z)$ is a Banach space, a subset $A \subseteq C^0(S;Z)$ is called \emph{equicontinuous} if for every $\varepsilon > 0$ there exists some $\delta > 0$ such that for every $f \in A$ and $s,t \in S$ with $d_S(s,t) < \delta$ it holds that $\Vert f(s) - f(t) \Vert_Z < \varepsilon$. In addition, a subset $A \subseteq C^0(S;Z)$ is called \emph{pointwise bounded} (resp.~\emph{pointwise compact}) if for every $s \in S$ the set $\lbrace f(s): f \in A \rbrace$ is bounded (resp.~compact) in $Z$.

    One of our most important examples for a weighted space $X$, as introduced in Section~\ref{sec:1} below, are H\"older spaces defined as follows. For some $\alpha \in (0,1]$, a compact metric space $(S,d_S)$ with designated origin $0 \in S$, and a dual Banach space $(Z,\Vert \cdot \Vert_Z)$, we denote by $C^\alpha(S;Z)$ the space of $\alpha$-H\"older continuous functions $x: (S,d_S) \rightarrow (Z,\Vert \cdot \Vert_Z)$ such that
	\begin{equation*}
	    \Vert x \Vert_\alpha := \Vert x(0) \Vert_Z + \vert x \vert_\alpha < \infty,
	\end{equation*}
    where $\vert \cdot \vert_\alpha$ denotes the $\alpha$-H\"older seminorm of $x: S \rightarrow Z$ defined as
    \begin{equation*}
        \vert x \vert_\alpha := \sup_{s,t \in S \atop s \neq t} \frac{\Vert x(s) - x(t) \Vert_Z}{d_S(s,t)^\alpha}.
    \end{equation*}
    Then, the norm $\Vert \cdot \Vert_\alpha$ turns $C^\alpha(S;Z)$ into a Banach space (see \cite[Theorem~5.25~(ii)]{friz10} and \cite[Proposition~2.3(b)]{weaver99}). For $\alpha=1$, we can relate $C^\alpha(S;Z)$ to the notion of globally Lipschitz continuous functions $x: S \rightarrow Z$ considered in \cite{weaver99}. Moreover, in order to equip $C^\alpha(S;Z)$ also with weaker topologies, we assume that $(V,\Vert \cdot \Vert_V)$ is a predual for $(Z,\Vert \cdot \Vert_Z)$. Then, for $\alpha' \in [0,\alpha]$, we equip $C^\alpha(S;Z)$ with the $C^{\alpha'}$-topology $\tau_{\alpha'}$ generated by seminorms of the form
    \begin{equation*}
        \Vert x \Vert_{\alpha',v} := \vert \langle x(0), v \rangle_{Z \times V} \vert + \vert x \vert_{\alpha',v}
    \end{equation*}
    for $v \in V$, where $\vert x \vert_{\alpha',v}$ denotes the $(\alpha',v)$-H\"older seminorm of $x: S \rightarrow Z$ defined as
    \begin{equation*}
        \vert x \vert_{\alpha',v} := \sup_{s,t \in S \atop s \neq t} \frac{\vert \langle x(s) - x(t), v \rangle_{Z \times V} \vert}{d_S(s,t)^{\alpha'}}.
    \end{equation*}
    Hence, $(C^\alpha(S;Z),\tau_{\alpha'})$ forms a locally convex topological vector space. Note that for $\alpha' = 0$ the $C^0$-topology $\tau_0$ is equivalent to the $w^*$-uniform topology $\tau_\infty$ generated by seminorms of the form $\Vert x \Vert_{\infty,v} := \sup_{t \in S} \vert \langle x(t), v \rangle_{Z \times V} \vert$, for $v \in V$ (see Lemma~\ref{LemmaHoelderTop}~\ref{LemmaHoelderTop1}). Moreover, $C^\alpha(S;Z)$ is by Theorem~\ref{ThmHoelderPredual} a dual Banach space and can therefore be equipped with a weak-$*$-topology $\tau_{w^*}$. In addition, $C^\alpha_0(S;Z) \subseteq C^\alpha(S;Z)$ denotes the vector subspace of $\alpha$-H\"older continuous functions $x \in C^\alpha(S;Z)$ preserving the origin, i.e.~$x(0) = 0 \in Z$.
    
    The second important example are spaces of finite $p$-variation. For some $T > 0$, $p \in [1,\infty)$, and a dual Banach space $(Z,\Vert \cdot \Vert_Z)$, we denote by $C^{p-var}([0,T];Z)$ the space of continuous paths $x: [0,T] \rightarrow (Z,\Vert \cdot \Vert_Z)$ such that
    \begin{equation*}
        \Vert x \Vert_{p-var} := \Vert x(0) \Vert_Z + \vert x \vert_{p-var} < \infty.
    \end{equation*}
    Hereby, $\vert x \vert_{p-var}$ denotes the $p$-variation of $x: [0,T] \rightarrow Z$ defined as
    \begin{equation*}
        \vert x \vert_{p-var} := \left( \sup_{(t_i)_i \in \mathcal{D}([0,T])} \sum_i \Vert x(t_{i+1}) - x(t_i) \Vert_Z^p \right)^\frac{1}{p},
    \end{equation*}
    where $\mathcal{D}([0,T])$ is the set of all partitions of the form $0 = t_0 < \ldots < t_n = T$, with $n \in \mathbb{N}$. Then, the norm $\Vert \cdot \Vert_{p-var}$ turns $C^{p-var}([0,T];Z)$ into a Banach space (see \cite[Theorem~5.25~(i)]{friz10}). In order to equip $C^{p-var}([0,T];Z)$ also with weaker topologies, we assume that $(V,\Vert \cdot \Vert_V)$ is a predual for $(Z,\Vert \cdot \Vert_Z)$. Then, for $p' \in [p,\infty]$, we equip $C^{p-var}([0,T];Z)$ with the $C^{p'-var}$-topology $\tau_{p'-var}$ generated by seminorms of the form
    \begin{equation*}
        \Vert x \Vert_{p'-var,v} := \vert \langle x(0), v \rangle_{Z \times V} \vert + \vert x \vert_{p'-var,v}
    \end{equation*}
    for $v \in V$, where $\vert \cdot \vert_{p'-var,v}$ denotes the $(p',v)$-variation of $x: [0,T] \rightarrow Z$ defined as
    \begin{equation*}
        \vert x \vert_{p'-var,v} := \left( \sup_{(t_i)_i \in \mathcal{D}([0,T])} \sum_i \vert \langle x(t_{i+1}) - x(t_i), v \rangle_{Z \times V} \vert^p \right)^\frac{1}{p},
    \end{equation*}
    and where the $C^{\infty-var}$-topology $\tau_{\infty-var}$ is defined as the $w^*$-uniform topology $\tau_\infty$ generated by seminorms of the form $\Vert x \Vert_{\infty,v} := \sup_{t \in [0,T]} \vert \langle x(t), v \rangle_{Z \times V} \vert$, for $v \in V$. Hence, $(C^{p-var}([0,T];Z),\tau_{p'-var})$ forms a locally convex topological vector space. Moreover, we denote by $C^{p-var}_0([0,T];Z) \subseteq C^{p-var}([0,T];Z)$ the vector subspace of finite $p$-variation paths $x \in C^{p-var}([0,T];Z)$ preserving the origin, i.e.~$x(0) = 0 \in Z$.

    We shall also need to intersect spaces of finite $p$-variation with H\"older spaces. For some $T > 0$, $(p,\alpha) \in [1,\infty) \times (0,1)$ with $p \alpha < 1$, and a dual Banach space $(Z,\Vert \cdot \Vert_Z)$, we define the intersection $C^{p-var,\alpha}([0,T];Z) := C^{p-var}([0,T];Z) \cap C^\alpha([0,T];Z)$ that consists of $\alpha$-H\"older continuous paths $x: [0,T] \rightarrow (Z,\Vert \cdot \Vert_Z)$ with finite $p$-variation. For example, the sample paths of Brownian motion lie in $C^{p-var,\alpha}([0,T];\mathbb{R})$ for $p > 2$ and $\alpha \in (0,1/2)$. Moreover, note that the condition $p \alpha < 1$ ensures that $C^{p-var,\alpha}([0,T];Z) \subsetneq C^\alpha([0,T];Z)$ is a proper vector subspace. Otherwise, if $p \alpha \geq 1$, we have $C^\alpha([0,T];Z) \subseteq C^{(1/\alpha)-var}([0,T];Z) \subseteq C^{p-var}([0,T];Z)$ (see \cite[p.~77 \& Proposition~5.3]{friz10}) and thus $C^{p-var,\alpha}([0,T];Z) = C^{p-var}([0,T];Z) \cap C^\alpha([0,T];Z) = C^\alpha([0,T];Z)$. Then, we endow $C^{p-var,\alpha}([0,T];Z)$ with the norm
    \begin{equation*}
        \Vert x \Vert_{p-var,\alpha} := \Vert x(0) \Vert_Z + \vert x \vert_{p-var} + \vert x \vert_\alpha.
    \end{equation*}
    In addition, for $p \in [1,\infty)$ and $\alpha \in [0,\infty)$, we define $C^{p-var,0}([0,T];Z) := C^{p-var}([0,T];Z)$ and $C^{\infty-var,\alpha}([0,T];Z) := C^\alpha([0,T];Z)$, respectively. In order to equip $C^{p-var,\alpha}([0,T];Z)$ also with weaker topologies, we assume that $(V,\Vert \cdot \Vert_V)$ is a predual for $(Z,\Vert \cdot \Vert_Z)$. Then, for $(p',\alpha') \in (p,\infty) \times [0,\alpha)$ with $p' \alpha' < 1$, we equip $C^{p-var,\alpha}([0,T];Z)$ with the $C^{p'-var,\alpha'}$-topology $\tau_{p'-var,\alpha'}$ generated by seminorms of the form
    \begin{equation*}
        \Vert x \Vert_{p'-var,\alpha',v} := \vert \langle x(0), v \rangle_{Z \times V} \vert + \vert x \vert_{p'-var,v} + \vert x \vert_{\alpha',v},
    \end{equation*}
    for $v \in V$. Hence, $(C^{p-var,\alpha}([0,T];Z),\tau_{p'-var,\alpha'})$ defines a locally convex topological vector space. For $p' = \infty$ and $\alpha' \in [0,1)$, we define $\tau_{\infty-var,\alpha'} := \tau_{\alpha'}$ that coincides for $\alpha' = 0$ with $\tau_\infty$ (see Lemma~\ref{LemmaHoelderTop}~\ref{LemmaHoelderTop1}). Moreover, for $p' \in (p,\infty)$, we observe that $\tau_{p'-var,0}$ is identical to $\tau_{p'-var}$ (see Lemma~\ref{LemmaPVarTop}~\ref{LemmaPVarTop1}). In addition, $C^{p-var,\alpha}([0,T];Z)$ is by Theorem~\ref{ThmPVarPredual} a dual Banach space and can therefore be equipped with a weak-$*$-topology $\tau_{w^*}$. Furthermore, we denote by $C^{p-var,\alpha}_0([0,T];Z) \subseteq C^{p-var,\alpha}([0,T];Z)$ the vector subspace of paths $x \in C^{p-var,\alpha}([0,T];Z)$ preserving the origin, i.e.~$x(0) = 0 \in Z$.
    
    Furthermore, if the functions are real-valued, we use the abbreviations $C^0(S) := C^0(S;\mathbb{R})$, $C^\alpha(S) := C^\alpha(S;\mathbb{R})$, $C^{p-var}([0,T]) := C^{p-var}([0,T];\mathbb{R})$, etc.	
	
	\section{Weighted spaces and functions thereon}
	\label{sec:1}
	
	For the universal approximation results as well as the weighted Stone-Weierstrass theorems on infinite dimensional spaces, we use a weighted space as input space and a Banach space as output space. On the weighted input space, one can in turn introduce a weighted function space, which was under (slightly) different conditions also studied in \cite{bernstein24,nachbin65,summers68,prolla71b,bierstedt73,prolla77,mhaskar97,triebel06}. 
 
    Our weighted setting is in particular inspired by \cite{roeckner06,doersek10,cuchiero20b,cuchiero20}, where weighted spaces have been used for Kolmogorov equations, splitting schemes of (stochastic) partial differential equations, and generalized Feller processes. To make the current article self-contained, we shall recall all necessary definitions and provide several examples.
	
    \subsection{Definition and examples of weighted spaces}
	
	In order to define a weighted space we shall assume that $(X,\tau_X)$ is a completely regular Hausdorff topological space (i.e., a Tychonoff space, resp.~$T_{3.5}$-space, see \cite[p.~117]{kelley75}).
	
	\begin{definition}
		\label{DefWeight}
		A function $\psi: X \rightarrow (0,\infty)$ is called an \emph{admissible weight function (on $(X,\tau_X)$)} if every pre-image $K_R := \psi^{-1}((0,R]) = \left\lbrace x \in X: \psi(x) \leq R \right\rbrace$ is compact with respect to $\tau_X$, for all $R > 0$. In this case, we call the pair $(X,\psi)$ a \emph{weighted space}.
	\end{definition}
	
	\begin{remark}
	The definition of a weighted space has the following important consequences.
	    \begin{enumerate}
            \label{RemWeight}
            \item\label{RemWeight1} If $(X,\psi)$ is a weighted space, then $(X,\tau_X)$ is $\sigma$-compact (as $X = \bigcup_{R \in \mathbb{N}} K_R$) and $\psi: X \rightarrow (0,\infty)$ is necessarily lower semicontinuous (as the sublevel sets $K_R := \psi^{-1}((0,R])$ are closed) as well as bounded from below by a strictly positive constant $C > 0$. Indeed, assume by contradiction that such a constant $C > 0$ does not exist. Then, for every $n \in \mathbb{N}$, the pre-image $K_{1/n} := \psi^{-1}((0,1/n])$ is non-empty and it holds that $K_{1/(n+1)} \subseteq K_{1/n}$. Hence, by Cantor's intersection theorem, it follows that $\bigcap_{n \in \mathbb{N}} K_{1/n} \neq \emptyset$, which contradicts the assumption that $\psi(x) > 0$ for all $x \in X$.
            
        	\item\label{RemWeight2} If $(X,\tau_X)$ is a separable locally convex topological vector space and $\psi: X \rightarrow (0,\infty)$ is convex, then $\psi: X \rightarrow (0,\infty)$ is continuous on convex subsets $E \subseteq X$ if and only if $E$ is locally compact (see \cite[Remark~2.2]{cuchiero20}).
         
        	\item\label{RemWeight3} Note that $(X,\tau_X)$ is a Banach space only if $X$ is finite-dimensional. Indeed, if $X$ were an infinite-dimensional Banach space, the compact pre-images $K_R := \psi^{-1}((0,R])$ would have empty interior. Moreover, since the complete pseudo-metrizable space $(X,\psi)$ is by Baire's category theorem a Baire space, any countable union of closed sets with empty interior also has empty interior. However, the countable union $X = \bigcup_{R \in \mathbb{N}} K_R$ has interior $X$, which yields a contradiction.
         
        	\item\label{RemWeight4} One of our main examples for a weighted space (see Example \ref{ExWeightedSpaces}~\ref{ExWeightedSpaceDual} below) are spaces $X$ which are the dual of some Banach space, equipped with a weak-$*$-topology. Again, by Baire's category theorem, these spaces are metrizable if and only if they are finite dimensional. Moreover, in infinite dimensions completeness with respect to the weak-$*$-topology also fails, since the completion would consist of \emph{all} linear functions and not only of continuous ones.
    	\end{enumerate}
	\end{remark}
	
	In the following, we present some examples of weighted spaces $(X,\psi)$, where the compactness of the pre-images $K_R = \psi^{-1}((0,R])$ needs to be verified with a suitable criterion (e.g.~the Banach-Alaoglu theorem or the Arzel\`a-Ascoli theorem).
	
	\begin{example}
		\label{ExWeightedSpaces}
        The examples of admissible weight functions $\psi: X \rightarrow (0,\infty)$ that we present in the following are all of the form $\psi(x) = \eta\left( \Vert x \Vert \right)$ for a norm $\Vert \cdot \Vert$ on $X$ and a continuous, non-decreasing, and unbounded function $\eta: [0,\infty) \rightarrow (0,\infty)$. In view of the so-called \emph{moderate growth} condition defined below (see Definition~\ref{DefPtSepModGrowth}), we consider in particular the function $\eta(r) = \exp\left( \beta r^\gamma \right)$, for some $\beta > 0$ and $\gamma \geq 1$.
		
		\begin{enumerate}
			\item\label{ExWeightedSpaceRd} For $X = \mathbb{R}^d$ equipped with the norm $\Vert x \Vert = \big( \sum_{i=1}^d x_i^2 \big)^{1/2}$, we choose the weight function $\psi(x) = \eta\left( \Vert x \Vert \right)$. Then, the pre-image $K_R = \psi^{-1}((0,R])$ is closed and bounded, thus compact in $\mathbb{R}^d$, for all $R > 0$. Hence, $(\mathbb{R}^d,\psi)$ is a weighted space.
			
			\item\label{ExWeightedSpaceDual} If $(X,\Vert \cdot \Vert_X)$ is a dual Banach space equipped with the weak-$*$-topology, we choose the weight function $\psi(x) = \eta\left( \Vert x \Vert_X \right)$. Then, the pre-image $K_R = \psi^{-1}((0,R])$ is by the Banach-Alaoglu theorem compact with respect to the weak-$*$-topology, for all $R > 0$. This shows that $(X,\psi)$ is a weighted space.
			
			\item\label{ExWeightedSpaceHoelderPredual} For $\alpha > 0$, a compact metric space $(S,d_S)$, and a dual Banach space $(Z,\Vert \cdot \Vert_Z)$, let $X = C^\alpha(S;Z)$ be the space of $\alpha$-H\"older continuous functions $x: S \rightarrow Z$ introduced in Section~\ref{SecNotation}. Then, $C^\alpha(S;Z)$ is by Theorem~\ref{ThmHoelderPredual} again a dual Banach space and can be equipped with the corresponding weak-$*$-topology. In this case, $\psi(x) = \eta\left( \Vert x \Vert_\alpha \right)$ is by the Banach-Alaoglu theorem admissible, which shows that $(C^\alpha(S;Z),\psi)$ is a weighted space. If $(S,d_S)$ is additionally pointed, the same reasoning applies to the space of $\alpha$-H\"older continuous functions preserving the origin $C^\alpha_0(S;Z)$ (see Remark~\ref{RemHoelder0}~\ref{RemHoelder0Predual}).
			
			\item\label{ExWeightedSpaceHoelderWeaker} Opposed to \ref{ExWeightedSpaceHoelderPredual}, let $X = C^\alpha(S;Z)$ be the space of $\alpha$-H\"older continuous functions $x: S \rightarrow Z$, but now equipped with the $w^*$-uniform topology $\tau_\infty$. Then, we choose $\psi(x) = \eta\left( \Vert x \Vert_\alpha \right)$ and observe that the pre-image $K_R = \psi^{-1}((0,R])$ is closed with respect to $\tau_\infty$. Hence, by using Theorem~\ref{ThmHoelderEmbedding} together with Lemma~\ref{LemmaHoelderTop}~\ref{LemmaHoelderTop1}, we conclude that $K_R = \psi^{-1}((0,R])$ is compact with respect to $\tau_\infty$, for all $R > 0$, which shows that $(C^\alpha(S;Z),\psi)$ is a weighted space. Note that instead of the $w^*$-uniform topology $\tau_\infty$ we could also use any $C^{\alpha'}$-topology $\tau_{\alpha'}$, $\alpha' \in [0,\alpha)$, due to the compact embedding $(C^\alpha(S;Z),\Vert \cdot \Vert_\alpha) \hookrightarrow (C^{\alpha'}(S;Z),\tau_{\alpha'})$ in Theorem~\ref{ThmHoelderEmbedding}.
			
			\item\label{ExWeightedSpaceHoelderStoppedPaths} For $\alpha > 0$ and $T > 0$, let $X = \Lambda^\alpha_T$ be the space of stopped $\alpha$-H\"older continuous paths (as introduced in Section~\ref{SubsecMotEx}), i.e.
			\begin{equation*}
				\Lambda^\alpha_T = \left\lbrace (t,x^t): t \in [0,T], \, x \in C^\alpha([0,T];\mathbb{R}^d) \right\rbrace \cong \left( [0,T] \times C^\alpha([0,T];\mathbb{R}^d) \right) / \sim,
			\end{equation*}
			where $(t,x) \sim (s,y)$ if and only if $t = s$ and $x^t = y^s$. We equip $\Lambda^\alpha_T$ with the metric $d_\infty((t,x),(s,y)) = \vert t-s \vert + \sup_{u \in [0,T]} \left\Vert x^t(u) - y^s(u) \right\Vert$ and define the weight function $\psi(t,x) = \eta\left( \Vert x \Vert_{C^\alpha([0,T];\mathbb{R}^d)} \right)$. Since $A_R := \left\lbrace x \in C^\alpha([0,T];\mathbb{R}^d): (t,x) \in K_R \right\rbrace$ is equicontinuous, pointwise bounded, and closed in $C^0([0,T];\mathbb{R}^d)$, we can apply the Arzel\`a-Ascoli theorem to conclude that $A_R$ is compact in $C^0([0,T];\mathbb{R}^d)$. Hence, $[0,T] \times A_R$ is by Tychonoff's theorem compact in the product topology of $[0,T] \times C^0([0,T];\mathbb{R}^d)$, which implies that $K_R$ is compact with respect to the quotient topology induced by $d_\infty$. Thus, $(\Lambda_T^\alpha,\psi)$ is a weighted space.
			
			\item\label{ExWeightedSpacePVarPredual} For $T > 0$, $(p,\alpha) \in [1,\infty) \times (0,1)$ with $p \alpha < 1$, and a dual Banach space $(Z,\Vert \cdot \Vert_Z)$, let $X = C^{p-var,\alpha}([0,T];Z)$ be the space of $\alpha$-H\"older continuous paths $x: [0,T] \rightarrow Z$ with finite $p$-variation (see Section~\ref{SecNotation}). Then, $C^{p-var,\alpha}([0,T];Z)$ is by Theorem~\ref{ThmPVarPredual} again a dual Banach space and can be equipped with the corresponding weak-$*$-topology. Hence, $\psi(x) = \eta\left( \Vert x \Vert_{p-var,\alpha} \right)$ is by the Banach-Alaoglu theorem admissible, which shows that $(C^{p-var,\alpha}([0,T];Z),\psi)$ is a weighted space. Similarly, we can consider paths in $C^{p-var,\alpha}_0([0,T];Z)$ preserving the origin, where the same reasoning applies (see Remark~\ref{RemPVar0}).

            \item\label{ExWeightedSpacePVarWeaker} Opposed to \ref{ExWeightedSpacePVarPredual}, let $X = C^{p-var,\alpha}([0,T];Z)$ be the space of $\alpha$-H\"older continuous paths $x: [0,T] \rightarrow Z$ with finite $p$-variation, but now equipped with the $w^*$-uniform topology $\tau_\infty$. Then, we choose $\psi(x) = \eta\left( \Vert x \Vert_{p-var,\alpha} \right)$ and observe that the pre-image $K_R = \psi^{-1}((0,R])$ is closed with respect to $\tau_\infty$. Hence, by using Theorem~\ref{ThmPVarEmbedding} together with Lemma~\ref{LemmaPVarTop}~\ref{LemmaPVarTop1}, we conclude that $K_R = \psi^{-1}((0,R])$ is compact with respect to $\tau_\infty$, for all $R > 0$. This shows that $(C^{p-var,\alpha}([0,T];Z),\psi)$ is a weighted space. Note that instead of the $w^*$-uniform topology $\tau_\infty$ we could also use any $C^{p'-var,\alpha'}$-topology $\tau_{p'-var,\alpha'}$, with $(p',\alpha') \in (p,\infty] \times [0,\alpha)$ and $p' \alpha' < 1$, due to the compact embedding $(C^{p-var,\alpha}([0,T];Z),\Vert \cdot \Vert_{p-var,\alpha}) \hookrightarrow (C^{p'-var,\alpha'}([0,T];Z),\tau_{p'-var,\alpha'})$ in Theorem~\ref{ThmPVarEmbedding}.
			
			\item\label{ExWeightedSpaceLpPredual} For a measure space $(\Omega,\mathcal{F},\mu)$ and $p \in (1,\infty]$, let $X = L^p(\Omega) := L^p(\Omega,\mathcal{F},\mu)$ be the space of (equivalence classes of) $\mathcal{F}$-measurable functions $x: \Omega \rightarrow \mathbb{R}$ such that the norm $\Vert x \Vert_{L^p(\Omega)} = \left( \int_{\Omega} \vert x(\omega) \vert^p \mu(d\omega) \right)^{1/p}$ is finite. For $p = \infty$, we assume that $\mu: \mathcal{F} \rightarrow [0,\infty)$ is $\sigma$-finite and consider $\Vert x \Vert_{L^\infty(\Omega)} := \inf\left\lbrace c > 0: \mu\left(\left\lbrace \omega \in \Omega: \vert x(\omega) \vert > c \right\rbrace\right) = 0 \right\rbrace$. Then, $L^p(\Omega) \cong L^q(\Omega)^*$ with $1/p + 1/q = 1$ (and convention $1/\infty := 0$) is a dual Banach space and can be equipped with the corresponding weak-$*$-topology (which coincides with the weak topology for all $p \in (1,\infty)$). Hence, $\psi(x) = \eta\left( \Vert x \Vert_{L^p(\Omega)} \right)$ is by the Banach-Alaoglu theorem admissible, which shows that $(L^p(\Omega),\psi)$ is a weighted space.

            \item\label{ExWeightedSpaceMeasures} For a weighted space $(\Omega,\psi_\Omega)$, let $X = \mathcal{M}_{\psi_\Omega}(\Omega)$ be the Banach space of signed Radon measures $x: \mathcal{F}_\Omega \rightarrow \mathbb{R}$ satisfying $\int_\Omega \psi_\Omega(\omega) \vert x \vert(d\omega) < \infty$ with the norm $\Vert x \Vert_{\mathcal{M}_{\psi_\Omega}(\Omega)} := \sup\big\lbrace \left\vert \int_\Omega f(\omega) x(d\omega) \right\vert: f \in \mathcal{B}_{\psi_\Omega}(\Omega), \, \Vert f \Vert_{\mathcal{B}_{\psi_\Omega}(\Omega)} \leq 1 \big\rbrace$, where $\vert x \vert: \mathcal{F}_\Omega \rightarrow \mathbb{R}$ denotes the total variation measure, and where $(\mathcal{B}_{\psi_\Omega}(\Omega),\Vert \cdot \Vert_{\mathcal{B}_{\psi_\Omega}(\Omega)})$ is introduced in Definition~\ref{DefBpsi} below. Then, by the Riesz representation theorem in \cite[Theorem~2.4]{doersek10}, $\mathcal{M}_{\psi_\Omega}(\Omega) \cong \mathcal{B}_{\psi_\Omega}(\Omega)^*$ is a dual Banach space and can be equipped with the corresponding weak-$*$-topology. Hence, $\psi(x) = \eta\big( \Vert x \Vert_{\mathcal{M}_{\psi_\Omega}(\Omega)} \big)$ is by the Banach-Alaoglu theorem admissible.
		\end{enumerate}
	\end{example}
	
	\begin{remark}
        \label{RemWeakerTop}
    	The above examples give rise to the following observations:
    	\begin{enumerate}
            \item\label{RemWeakerTop1} Note that in all the infinite dimensional examples \ref{ExWeightedSpaceDual} to \ref{ExWeightedSpaceMeasures} the idea is always to use a weaker topology than the norm topology which renders the closed unit ball compact. In the above cases this is either achieved with the weak-$*$-topology as in Example \ref{ExWeightedSpaceDual}, \ref{ExWeightedSpaceHoelderPredual}, \ref{ExWeightedSpacePVarPredual}, \ref{ExWeightedSpaceLpPredual}, and \ref{ExWeightedSpaceMeasures}, or via the following compact embeddings:
              \begin{enumerate}
                  \item $(C^\alpha(S;Z),\Vert \cdot \Vert_\alpha) \hookrightarrow (C^{\alpha'}(S;Z),\tau_{\alpha'})$ for $0 \leq \alpha' < \alpha$ as in Example \ref{ExWeightedSpaceHoelderWeaker} and \ref{ExWeightedSpaceHoelderStoppedPaths},
                  \item $(C^{p-var,\alpha}([0,T];Z),\Vert \cdot \Vert_{p-var,\alpha}) \hookrightarrow (C^{p'-var,\alpha'}([0,T];Z),\tau_{p'-var,\alpha'})$ for $(p,\alpha) \in [1,\infty) \times (0,1)$ with $p \alpha < 1$ and $(p',\alpha') \in  (p,\infty] \times [0,\alpha)$ with $p' \alpha' < 1$ as in Example~\ref{ExWeightedSpacePVarWeaker},
                  \item versions of the Rellich–Kondrachov embedding $W^{1,p}(\mathbb{R}^d) \hookrightarrow L^p(\mathbb{R}^d)$, where $W^{1,p}(\mathbb{R}^d)$ denotes the Sobolov space of functions whose weak derivatives up to order $1$ are in $L^p$,
                  \item $L^p$-embeddings of Sobolev–Slobodeckij spaces (see, e.g., \cite{flandoli1995martingale}).
              \end{enumerate}
        	 
        	 \item\label{RemWeakerTop2} Consider the setting of Example \ref{ExWeightedSpaces}~\ref{ExWeightedSpaceHoelderPredual}. Then, as shown in Theorem~\ref{ThmHoelderPredual}, the weak-$*$-topology $\tau_{w^*}$, the $w^*$-uniform topology $\tau_\infty$, and every $C^{\alpha'}$-topology $\tau_{\alpha'}$, $0 \leq \alpha' < \alpha$, coincide on any $\Vert \cdot \Vert_\alpha$-bounded subset of $C^\alpha(S;Z)$, but are globally different. The same reasoning applies to Example \ref{ExWeightedSpaces}~\ref{ExWeightedSpacePVarPredual}, see Theorem~\ref{ThmPVarPredual}.
          
          \item\label{RemWeakerTop3} Note that the $p$-variation spaces cannot be compactly embedded into each other. For example, the sequence $(x_n)_{n \in \mathbb{N}} \subseteq C^{1-var}([0,1])$ defined by $x_n(t) = t^n$ has uniformly bounded $\Vert \cdot \Vert_{C^{1-var}}$-norm but converges pointwise to a discontinuous function. This is the reason why we consider the intersection $C^{p-var,\alpha}([0,T];Z) := C^{p-var}([0,T];Z) \cap C^\alpha([0,T];Z)$.   
          
          \item\label{RemWeakerTop4} Instead of $L^p$-spaces and Sobolev–Slobodeckij spaces, we could also consider Besov spaces $B^s_{p,q}(\Omega)$ with $\Omega \subseteq \mathbb{R}^d$, $p,q \in (1,\infty]$, and $s \in \mathbb{R}$ (see \cite[Section~1.2.5]{triebel92}). Indeed, if $\Omega = \mathbb{R}^d$ and $p,q \in (1,\infty]$ (with dual exponents $p',q' \in [1,\infty)$), then $B^s_{p,q}(\mathbb{R}^d) \cong B^{-s}_{p',q'}(\mathbb{R}^d)^*$ is by \cite[Theorem~2.11.2~(i)]{triebel10} a dual Banach space and carries a weak-$*$-topology (identical to the weak topology if $p,q \in (1,\infty)$). Hence, $B^s_{p,q}(\mathbb{R}^d)$ together with $\psi(x) = \eta\big( \Vert x \Vert_{B^s_{p,q}(\mathbb{R}^d)} \big)$ is a weighted space. Otherwise, if $\Omega \subseteq \mathbb{R}^d$ is bounded, $p',q' \in [1,\infty)$, and $s' \in (-\infty,s)$ with $s-d/p > s'-d/p'$, we can alternatively use the compact embedding $B^s_{p,q}(\Omega) \hookrightarrow B^{s'}_{p',q'}(\Omega)$ in \cite[Theorem~1.97]{triebel06} to obtain a weighted space $(B^s_{p,q}(\Omega),\psi)$, where $B^s_{p,q}(\Omega)$ is equipped with $\Vert \cdot \Vert_{B^{s'}_{p',q'}(\Omega)}$. Note that for $p=q$ we obtain the Sobolev–Slobodeckij spaces.
          
          Let us remark that Besov spaces have particular relevance in the theory of rough paths and in turn also for \emph{neural rough differential equations} (see, e.g., \cite{morrill2021neural} in the context of long time series prediction).
    	\end{enumerate}
	\end{remark}
	
    \subsection{Function space $\mathcal{B}_\psi(X;Y)$}
    \label{SecWeightedFunctions}
    
    For a given weighted space $(X,\psi)$ with admissible weight function $\psi: X \rightarrow (0,\infty)$ and a Banach space $(Y,\Vert \cdot \Vert_Y)$, we need to introduce an appropriate weighted function space that can be used for the subsequent global approximation theorems. Besides the vector space of bounded and continuous maps $C^0_b(X;Y)$, we define the vector space
	\begin{equation*}
		B_\psi(X;Y) = \left\lbrace f: X \rightarrow Y: \sup_{x \in X} \frac{\Vert f(x) \Vert_Y}{\psi(x)} < \infty \right\rbrace
	\end{equation*}
	of maps $f: X \rightarrow Y$, whose growth is controlled by the growth of the weight function $\psi: X \rightarrow (0,\infty)$. We equip $B_\psi(X;Y)$ with the weighted norm $\Vert \cdot \Vert_{\mathcal{B}_\psi(X;Y)}$ given by
	\begin{equation}
		\label{EqDefWeightedNorm}
		\Vert f \Vert_{\mathcal{B}_\psi(X;Y)} = \sup_{x \in X} \frac{\Vert f(x) \Vert_Y}{\psi(x)},
	\end{equation}
	for $f \in B_\psi(X;Y)$. Since $\psi: X \rightarrow (0,\infty)$ is bounded from below by a strictly positive constant (see Remark~\ref{RemWeight}~\ref{RemWeight1}), the embedding $C^0_b(X;Y) \hookrightarrow B_\psi(X;Y)$ is continuous.
	
	\begin{definition}
		\label{DefBpsi}
		We define $\mathcal{B}_\psi(X;Y)$ as the closure of $C^0_b(X;Y)$ with respect to $\Vert \cdot \Vert_{\mathcal{B}_\psi(X;Y)}$, which is a Banach space equipped with the weighted norm defined in \eqref{EqDefWeightedNorm}. If $Y=\mathbb{R}$, we shall only write $\mathcal{B}_\psi(X)$.
	\end{definition}
	
	Note that if $X$ is compact, the weight function $\psi(x) = 1$ is admissible. However, on general spaces, $\psi: X \rightarrow (0,\infty)$ grows on the compact pre-images $K_R := \psi^{-1}((0,R])$, which means that the elements of $\mathcal{B}_\psi(X;Y)$ are typically unbounded, but their growth is controlled by the growth of the weight function $\psi: X \rightarrow (0,\infty)$.
	
	\begin{remark}
	    \label{RemBanachOutput}
	    For simplicity, we always assume that the output space is a Banach space $(Y,\Vert \cdot \Vert_Y)$. However, all the following results (including the weighted vector-valued Stone-Weierstrass theorem in Theorem~\ref{ThmStoneWeierstrassBpsi}) still hold true with a locally convex topological vector space $(Y,\tau_Y)$ as output space. In this case, the topology on $Y$ and $\mathcal{B}_\psi(X;Y)$ is generated by families of seminorms (see \cite{schaefer99} for more details).
	\end{remark}
	
	The following result characterizes functions in $\mathcal{B}_\psi(X;Y)$, which is a slight generalization of \cite[Theorem~2.7]{doersek10} to this vector-valued setting. The reverse implication in \ref{LemmaBpsiEquivChar2} only holds true for the real-valued case $Y = \mathbb{R}$. It relies on the Tietze extension theorem for the Stone-\v{C}ech compactification of $X$, see \cite[Th\'eor\`eme~IX\S4.2]{bourbaki74}.
	
	\begin{lemma}
		\label{LemmaBpsiEquivChar}
        The following holds true for a function $f: X \rightarrow Y$:
        \begin{enumerate}
            \item\label{LemmaBpsiEquivChar1} If $f \in \mathcal{B}_\psi(X;Y)$ then $f\vert_{K_R} \in C^0(K_R;Y)$, for all $R > 0$, and
                \begin{equation}
                    \label{EqLemmaBpsiEquivChar1}
                    \lim_{R \rightarrow \infty} \sup_{x \in X \setminus K_R} \frac{\Vert f(x) \Vert_Y}{\psi(x)} = 0.
                \end{equation}
            \item\label{LemmaBpsiEquivChar2} Let $Y = \mathbb{R}$. If $f \in C^0(K_R)$, for all $R > 0$, and
                \begin{equation}
                    \label{EqLemmaBpsiEquivChar2}
                    \lim_{R \rightarrow \infty} \sup_{x \in X \setminus K_R} \frac{\vert f(x) \vert}{\psi(x)} = 0,
                \end{equation}
                then $f \in \mathcal{B}_\psi(X)$. In particular, $f \in \mathcal{B}_\psi(X)$ for every $f \in C^0(X)$ satisfying \eqref{EqLemmaBpsiEquivChar2}.
        \end{enumerate}
	\end{lemma}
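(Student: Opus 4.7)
For part \ref{LemmaBpsiEquivChar1}, I would start directly from the definition of $\mathcal{B}_\psi(X;Y)$: pick a sequence $(f_n)_{n \in \mathbb{N}} \subset C^0_b(X;Y)$ with $\Vert f - f_n \Vert_{\mathcal{B}_\psi(X;Y)} \to 0$. The key observation is that on any $K_R$ we have $\psi \leq R$, so weighted norm convergence implies uniform convergence on $K_R$ via $\Vert f(x) - f_n(x) \Vert_Y \leq R \Vert f - f_n \Vert_{\mathcal{B}_\psi(X;Y)}$, and thus $f\vert_{K_R}$ is a uniform limit of continuous functions on the compact set $K_R$, hence continuous. For the vanishing property \eqref{EqLemmaBpsiEquivChar1}, given $\varepsilon > 0$ I fix $n$ with $\Vert f - f_n \Vert_{\mathcal{B}_\psi(X;Y)} < \varepsilon/2$ and split
\begin{equation*}
  \sup_{x \in X \setminus K_R} \frac{\Vert f(x) \Vert_Y}{\psi(x)} \leq \frac{\varepsilon}{2} + \sup_{x \in X \setminus K_R} \frac{\Vert f_n(x) \Vert_Y}{\psi(x)} \leq \frac{\varepsilon}{2} + \frac{\Vert f_n \Vert_{C^0_b(X;Y)}}{R},
\end{equation*}
using that $\psi(x) > R$ on the complement of $K_R$; taking $R$ large then concludes.

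For part \ref{LemmaBpsiEquivChar2} I first observe that $\Vert f \Vert_{\mathcal{B}_\psi(X)} < \infty$: outside some $K_{R_0}$ the ratio $\vert f \vert / \psi$ is bounded by the hypothesis, and on the compact $K_{R_0}$ continuity of $f$ together with the lower bound $\psi \geq C > 0$ from Remark~\ref{RemWeight}~\ref{RemWeight1} keeps the ratio bounded. Next, I construct the approximating sequence by a Tietze-type extension: for each $n \in \mathbb{N}$, $f\vert_{K_n}$ is continuous and bounded on the compact set $K_n$, which is closed in the Stone-\v{C}ech compactification $\beta X$; by the extension result cited from \cite[Th\'eor\`eme~IX\S4.2]{bourbaki74}, one obtains $g_n \in C^0_b(X)$ with $g_n\vert_{K_n} = f\vert_{K_n}$ and $\Vert g_n \Vert_{C^0_b(X)} \leq \sup_{x \in K_n} \vert f(x) \vert$.

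The main obstacle is then verifying that $\Vert f - g_n \Vert_{\mathcal{B}_\psi(X)} \to 0$. On $K_n$ the difference vanishes, so the weighted supremum reduces to
\begin{equation*}
  \sup_{x \in X \setminus K_n} \frac{\vert f(x) - g_n(x) \vert}{\psi(x)} \leq \sup_{x \in X \setminus K_n} \frac{\vert f(x) \vert}{\psi(x)} + \frac{\sup_{y \in K_n} \vert f(y) \vert}{n},
\end{equation*}
where I use $\psi(x) > n$ on $X \setminus K_n$ and the extension bound. The first term tends to $0$ by hypothesis. The delicate part is the second: writing $\e_k := \sup_{x \in X \setminus K_k} \vert f(x) \vert / \psi(x)$, for $x \in K_n \setminus K_{n-1}$ one has $\vert f(x) \vert \leq \e_{n-1} \cdot n$, so $\sup_{K_n}\vert f \vert / n$ is controlled by a mixture of values $\e_{k-1} \cdot k / n$ for $k \leq n$ plus the bounded contribution from a fixed initial $K_{N_0}$. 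A standard split into $k < N_0$ (where the contribution is $O(1/n)$) and $k \geq N_0$ (where $\e_{k-1} \cdot k / n \leq \e_{k-1} < \delta/2$) then gives $\sup_{K_n}\vert f \vert / n \to 0$, completing the approximation. The restriction to $Y = \mathbb{R}$ in \ref{LemmaBpsiEquivChar2} is forced precisely by the Tietze extension step, which has no direct Banach-valued analogue on non-normal spaces.
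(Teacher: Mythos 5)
Your proof of part \ref{LemmaBpsiEquivChar1} is essentially the paper's argument: extract an approximant from $C^0_b(X;Y)$, exploit $\psi \leq R$ on $K_R$ for uniform convergence and hence continuity, and split the weighted supremum on $X\setminus K_R$ into the approximation error plus $\Vert g \Vert_{C^0_b}/R$.

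For part \ref{LemmaBpsiEquivChar2} the paper does not prove the claim at all — it simply cites \cite[Theorem~2.7]{doersek10} and remarks that the argument rests on the Tietze extension theorem for the Stone--\v{C}ech compactification. You instead supply a complete, self-contained proof following exactly the route the paper alludes to: extend $f\vert_{K_n}$ from the closed subset $K_n \subseteq \beta X$ to $g_n \in C^0_b(X)$ with $\Vert g_n \Vert_\infty \leq \sup_{K_n}\vert f \vert$, then show $\Vert f - g_n \Vert_{\mathcal{B}_\psi(X)} \to 0$. The one genuinely non-obvious technical step is verifying $\sup_{K_n}\vert f \vert / n \to 0$, and your argument is correct: decomposing $K_n$ into annuli $K_k \setminus K_{k-1}$ gives $\vert f \vert \leq \e_{k-1}k$ there (where $\e_k := \sup_{X\setminus K_k}\vert f\vert/\psi$), so $\sup_{K_n}\vert f\vert/n \leq \max_{k\leq n}\e_{k-1}k/n$, and splitting into $k < N_0$ (contribution $\leq \e_0 N_0/n \to 0$, with $\e_0 = \Vert f\Vert_{\mathcal{B}_\psi}$ finite as you establish first) and $k\geq N_0$ (contribution $\leq \e_{N_0-1}\cdot k/n \leq \e_{N_0-1}$) finishes the estimate. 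The payoff of your version is that the lemma becomes self-contained; the paper economizes by outsourcing to the reference. Your closing observation — that the scalar restriction in \ref{LemmaBpsiEquivChar2} is forced by the Tietze step — is also a useful remark that the paper does not spell out.
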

    \begin{proof}
        For \ref{LemmaBpsiEquivChar1}, let $f \in \mathcal{B}_\psi(X;Y)$ and fix some $\varepsilon > 0$. Then, there exists by definition of $\mathcal{B}_\psi(X;Y)$ some $g \in C^0_b(X;Y)$ such that $\Vert f - g \Vert_{\mathcal{B}_\psi(X;Y)} \leq \varepsilon/2$. Hence, by choosing $R \geq 2 \varepsilon^{-1} \sup_{x \in X} \Vert g(x) \Vert_Y$, it follows for every $x \in X \setminus K_R$ that
        \begin{equation*}
            \frac{\Vert f(x) \Vert_Y}{\psi(x)} \leq \frac{\Vert f(x) - g(x) \Vert_Y}{\psi(x)} + \frac{\Vert g(x) \Vert_Y}{\psi(x)} \leq \frac{\varepsilon}{2} + \frac{\sup_{x \in X} \Vert g(x) \Vert_Y}{R} \leq \frac{\varepsilon}{2} + \frac{\varepsilon}{2} = \varepsilon,
        \end{equation*}
        which shows that
        \begin{equation*}
            \sup_{x \in X \setminus K_R} \frac{\Vert f(x) \Vert_Y}{\psi(x)} \leq \varepsilon.
        \end{equation*}
        Since $\varepsilon > 0$ was chosen arbitrarily, we obtain \eqref{EqLemmaBpsiEquivChar1}. Moreover, with $g \in C^0_b(X;Y)$ from above, we observe for every $R > 0$  that 
        \begin{equation*}
            \sup_{x \in K_R} \Vert f(x) - g(x) \Vert_Y \leq R \sup_{x \in K_R} \frac{\Vert f(x) - g(x) \Vert_Y}{\psi(x)} \leq R \Vert f - g \Vert_{\mathcal{B}_\psi(X;Y)} < \frac{\varepsilon}{2} R.
        \end{equation*}
        This shows that $f\vert_{K_R} \in C^0(K_R;Y)$ is continuous as a uniform limit of continuous functions, for all $R > 0$. On the other hand, \ref{LemmaBpsiEquivChar2} follows from \cite[Theorem~2.7]{doersek10}.
    \end{proof}
	
	In addition, by following \cite[Theorem~2.8]{doersek10}, there exists for every $f \in \mathcal{B}_\psi(X)$ with $\sup_{x \in X} f(x) > 0$ some $x_0 \in X$ such that $f(x)/\psi(x) \leq f(x_0)/\psi(x_0)$, for all $x \in X$, which shows the analogy to functions vanishing at infinity on locally compact spaces.
	
\section{Weighted Stone-Weierstrass theorems}
\label{SecSW}

    In this section, we generalize the Stone-Weierstrass theorems to this weighted setting with non-compact input space. For this purpose, we shortly recall the classical Stone-Weierstrass theorem formulated for compact domains, which we will need for our results.

    \subsection{Classical formulation}
    
    For a subset $E \subseteq \mathbb{R}$, we denote by $\Pol(E) \subseteq C^0(E)$ the vector subspace of polynomial functions of the form $E \ni x \mapsto \sum_{n=0}^N c_n x^n \in \mathbb{R}$, with $N \in \mathbb{N}_0$ and $ c_0,\ldots,c_N \in \mathbb{R}$.

    \begin{theorem}[Weierstrass, {\cite{weierstrass85}}]
    	\label{ThmWstrass}
    	For $X \subsetneq \mathbb{R}$ compact, $\Pol(X)$ is dense in $C^0(X)$.
    \end{theorem}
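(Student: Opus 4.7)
The plan is to reduce to the case $X = [0,1]$ and then give an explicit approximation by Bernstein polynomials. Since $X \subset \mathbb{R}$ is compact, it is contained in some closed interval $[a,b]$, and by composing with the affine bijection $[0,1] \ni t \mapsto a + (b-a)t \in [a,b]$ (which maps $\Pol([a,b])$ onto $\Pol([0,1])$) it suffices to prove density of $\Pol([0,1])$ in $C^0([0,1])$ and restrict. To pass from $f \in C^0(X)$ to a function in $C^0([0,1])$, I would either invoke the Tietze extension theorem on the closed subset $X \subseteq [0,1]$, or, more concretely, define $\tilde{f}$ by piecewise linear interpolation across the complementary open intervals of $[0,1] \setminus X$ and linearly extending to the endpoints.

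For $f \in C^0([0,1])$, I would introduce the Bernstein polynomials
\begin{equation*}
  B_n(f)(x) \;=\; \sum_{k=0}^{n} f\!\left(\tfrac{k}{n}\right) \binom{n}{k} x^k (1-x)^{n-k}, \qquad x \in [0,1], \ n \in \mathbb{N},
\end{equation*}
and prove that $B_n(f) \to f$ uniformly on $[0,1]$. The key observation is the probabilistic interpretation: $B_n(f)(x) = \mathbb{E}[f(S_n/n)]$, where $S_n$ is a binomial random variable with parameters $(n,x)$, so $\mathbb{E}[S_n/n] = x$ and $\mathrm{Var}(S_n/n) = x(1-x)/n \leq 1/(4n)$.

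The main step is the uniform estimate. Given $\varepsilon > 0$, I would use uniform continuity of $f$ on the compact interval $[0,1]$ to pick $\delta > 0$ with $|f(s) - f(t)| < \varepsilon/2$ whenever $|s-t| < \delta$, and set $M := \sup_{t \in [0,1]} |f(t)|$. Splitting the sum defining $B_n(f)(x) - f(x) = \sum_k (f(k/n) - f(x)) \binom{n}{k} x^k (1-x)^{n-k}$ according to whether $|k/n - x| < \delta$ or not, the first part is bounded by $\varepsilon/2$, and the second is bounded via Chebyshev's inequality by $2M \cdot \mathbb{P}(|S_n/n - x| \geq \delta) \leq 2M/(4n\delta^2)$, uniformly in $x \in [0,1]$. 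Choosing $n$ large makes the total error $< \varepsilon$ for every $x$.

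The hard part is really just the bookkeeping of this tail estimate; conceptually everything is elementary. An alternative that avoids the probabilistic language would be Landau's convolution proof, approximating $f$ (extended by zero outside $[0,1]$ after a cutoff) by $f * K_n$ where $K_n(t) = c_n (1-t^2)^n \mathbbm{1}_{[-1,1]}(t)$ is a polynomial kernel; this also gives an explicit polynomial approximant and uses only uniform continuity of $f$ plus the concentration of $K_n$ near zero. Either route settles the theorem directly, without invoking the Stone-Weierstrass machinery developed later in the paper.
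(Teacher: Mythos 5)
Your proof is correct. The paper does not actually prove Theorem~\ref{ThmWstrass}: it is a classical result cited from Weierstrass's 1885 paper and used as an ingredient in the proofs of the weighted Stone--Weierstrass theorems. So there is no argument in the paper to compare against. That said, your Bernstein-polynomial proof is a standard, self-contained, and constructive route, and the details are in order: the reduction from a general compact $X \subset \mathbb{R}$ to $[0,1]$ via an affine change of variables and Tietze extension (or piecewise-linear interpolation across the gaps of $[0,1]\setminus X$) is legitimate, since polynomials are preserved under affine substitution and restricting a uniform approximant on $[0,1]$ to $X$ only improves the sup-norm error; the split of $B_n(f)(x) - f(x)$ into near and far terms together with Chebyshev's bound $\mathbb{P}(|S_n/n - x| \geq \delta) \leq x(1-x)/(n\delta^2) \leq 1/(4n\delta^2)$ gives the required uniformity in $x$. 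One small remark: the extension step is genuinely needed (a compact $X$ can be, e.g., a Cantor set), and both of your suggested extensions work; the piecewise-linear one has the advantage of not invoking Tietze. The Landau-kernel alternative you mention is equally valid and avoids probabilistic language.
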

    
    Subsequently, the Weierstrass theorem (Theorem~\ref{ThmWstrass}) was generalized by Marshall Harvey Stone in \cite{stone48} to more general spaces by using the notion of subalgebras. Hereby, a vector space $\mathcal{A}$ of maps $a: X \rightarrow \mathbb{R}$ is called a \emph{subalgebra} if $\mathcal{A}$ is closed under multiplication, i.e.~for every $a_1, a_2 \in \mathcal{A}$ it holds that $a_1 \cdot a_2 \in \mathcal{A}$. Moreover, $\mathcal{A}$ is called \emph{point separating} if for every distinct $x_1,x_2 \in X$ there exists some $a \in \mathcal{A}$ with $a(x_1) \neq a(x_2)$. In addition, $\mathcal{A}$ \emph{vanishes nowhere} if for every $x \in X$ there exists some $a \in \mathcal{A}$ with $a(x) \neq 0$.
    
    \begin{theorem}[Stone-Weierstrass on $C^0(X)$, {\cite{stone48} or \cite[p.~122]{rudin91}}]
    	\label{ThmStone}
    	Let $(X,\tau_X)$ be a compact Hausdorff topological space and let $\mathcal{A} \subseteq C^0(X)$ be a subalgebra. Then, $\mathcal{A}$ is dense in $C^0(X)$ if and only if $\mathcal{A}$ is point separating and vanishes nowhere.
    \end{theorem}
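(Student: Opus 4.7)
For the necessary direction, assume $\mathcal{A}$ is dense in $C^0(X)$. Given distinct $x_1, x_2 \in X$, Urysohn's lemma (applicable because compact Hausdorff spaces are normal) furnishes $f \in C^0(X)$ with $f(x_1) \neq f(x_2)$, and any element of $\mathcal{A}$ within uniform distance $\vert f(x_1) - f(x_2) \vert / 3$ of $f$ separates the two points; the vanishing-nowhere condition follows analogously by approximating a continuous function that does not vanish at a prescribed point.

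My plan for the sufficient direction is the classical Kakutani-Krein route via sublattices. First I would pass to the uniform closure $\overline{\mathcal{A}} \subseteq C^0(X)$, which remains a point-separating subalgebra vanishing nowhere, since multiplication in $C^0(X)$ is jointly continuous on bounded sets. Then I would establish the key lemma that $\vert f \vert \in \overline{\mathcal{A}}$ whenever $f \in \overline{\mathcal{A}}$: by Theorem~\ref{ThmWstrass} applied to $t \mapsto \vert t \vert$ on $[-\Vert f \Vert_{C^0(X)}, \Vert f \Vert_{C^0(X)}]$, there are polynomials $P_n$ converging uniformly to $\vert \cdot \vert$, and subtracting their constant terms yields polynomials with $P_n(0) = 0$ that still converge uniformly. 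Consequently $P_n(f) \in \overline{\mathcal{A}}$ (since each $P_n$ is a linear combination of positive powers of $f$, requiring no unit in $\mathcal{A}$) and $P_n(f) \to \vert f \vert$. The identities $f \vee g = (f + g + \vert f - g \vert)/2$ and $f \wedge g = (f + g - \vert f - g \vert)/2$ then promote $\overline{\mathcal{A}}$ to a sublattice.

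Next I would prove a two-point interpolation lemma: for distinct $x_1, x_2 \in X$ and arbitrary $c_1, c_2 \in \mathbb{R}$ there exists $h \in \mathcal{A}$ with $h(x_i) = c_i$. The image of the linear evaluation map $\mathcal{A} \ni a \mapsto (a(x_1), a(x_2)) \in \mathbb{R}^2$ is a subspace that, by the hypotheses, contains vectors off both coordinate axes and off the diagonal. If this subspace were one-dimensional, say spanned by $(1, \lambda)$, then squaring a suitable $a \in \mathcal{A}$ with $a(x_1) = 1$ would place $(1, \lambda^2)$ in the image as well, forcing $\lambda = \lambda^2$ and hence $\lambda \in \{0, 1\}$, contradicting one of the exclusions. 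Thus the image is all of $\mathbb{R}^2$.

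Finally, given $f \in C^0(X)$ and $\varepsilon > 0$, I would assemble a uniform $\varepsilon$-approximation by a double compactness argument: for each pair $(x, y)$ pick $h_{x,y} \in \overline{\mathcal{A}}$ interpolating $f$ at $x$ and $y$; fixing $x$, the open cover $\{z \in X : h_{x,y}(z) < f(z) + \varepsilon\}_{y \in X}$ admits a finite subcover, and the minimum $g_x \in \overline{\mathcal{A}}$ of the corresponding $h_{x,y}$ satisfies $g_x \leq f + \varepsilon$ and $g_x(x) = f(x)$. A second finite subcover of $X$ by $\{z \in X : g_x(z) > f(z) - \varepsilon\}_{x \in X}$ together with a maximum produces the desired element of $\overline{\mathcal{A}}$ within $\varepsilon$ of $f$. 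The subtlest point, and the one I expect requires most care, is the polynomial approximation of $\vert t \vert$ without constant term: it is precisely what allows the whole argument to go through without strengthening the hypothesis to include $1 \in \mathcal{A}$.
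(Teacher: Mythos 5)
The paper cites Stone's theorem and does not reprove it, so there is no proof in the paper against which to compare; the result is used as a black box in the weighted Stone--Weierstrass arguments that follow. Your proof is correct and follows the standard lattice-theoretic (Kakutani--Krein) route. The step you flag as subtlest is indeed the crux: because the theorem does not assume $1 \in \mathcal{A}$, you cannot absorb constant terms of the approximating polynomials into the algebra, and the device of subtracting the constant term (legitimate since $P_n(0) \to 0$) is exactly what keeps $P_n(f)$ inside $\overline{\mathcal{A}}$. One small point worth spelling out in the two-point interpolation lemma: when you say "say spanned by $(1,\lambda)$", you should first note that a spanning vector $(u,v)$ of a putative one-dimensional image must have $u \neq 0$ and $v \neq 0$ (otherwise every $a \in \mathcal{A}$ would vanish at $x_1$, respectively $x_2$, contradicting vanishing nowhere), so it can be normalized to $(1,\lambda)$ with $\lambda \neq 0$, and $\lambda \neq 1$ by point separation; then $\lambda^2 = \lambda$ from squaring gives the contradiction. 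With that clarification, all steps check out, including the double compactness argument, so this is a sound self-contained proof of a result the paper delegates to the literature.
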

    
    Next, we state the vector-valued Stone-Weierstrass theorem of R.~Creighton Buck in \cite{buck58} (see also \cite[Theorem~3]{prolla94}). For a given subalgebra $\mathcal{A} \subseteq C^0(X)$, a vector subspace $\mathcal{W} \subseteq C^0(X;Y)$ is called an \emph{$\mathcal{A}$-submodule} if $a \cdot w \in \mathcal{W}$ for all $a \in \mathcal{A}$ and $w \in \mathcal{W}$, where $X \ni x \mapsto (a \cdot w)(x) := a(x) w(x) \in Y$. For further details on vector-valued approximation, we also refer to \cite{prolla77,prolla94}.
    
    \begin{theorem}[Stone-Weierstrass on $C^0(X;Y)$, {\cite[p.~103]{buck58} or \cite[Theorem~3]{prolla94}}]
    	\label{ThmStoneWeierstrass}
    	Let $(X,\tau_X)$ be a compact Hausdorff space and let $\mathcal{A} \subseteq C^0(X)$ be a point separating subalgebra that vanishes nowhere. Moreover, let $\mathcal{W} \subseteq C^0(X;Y)$ be an $\mathcal{A}$-submodule such that $\mathcal{W}(x) := \lbrace w(x): w \in \mathcal{W} \rbrace$ is dense in $Y$, for all $x \in X$. Then, $\mathcal{W}$ is dense in $C^0(X;Y)$.
    \end{theorem}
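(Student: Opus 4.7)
The plan is to reduce the vector-valued statement to the scalar Stone-Weierstrass theorem (Theorem \ref{ThmStone}) by combining pointwise approximation in $Y$ with a partition-of-unity argument on $X$. First, I would note that because $\mathcal{A}$ is a point-separating subalgebra of $C^0(X)$ that vanishes nowhere, Theorem \ref{ThmStone} gives $\overline{\mathcal{A}} = C^0(X)$. The goal then becomes showing that $\overline{\mathcal{W}}$ is a $C^0(X)$-submodule of $C^0(X;Y)$, so that it is closed under multiplication by arbitrary continuous scalar functions.

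For the module property, given $a \in C^0(X)$ and $w \in \overline{\mathcal{W}}$, I would pick sequences $a_n \in \mathcal{A}$ with $a_n \to a$ uniformly and $w_m \in \mathcal{W}$ with $w_m \to w$ in $\Vert \cdot \Vert_{C^0(X;Y)}$. Each product $a_n w_m$ lies in $\mathcal{W}$, and the triangle inequality
\begin{equation*}
    \Vert a_n w_m - a w \Vert_{C^0(X;Y)} \leq \Vert a_n \Vert_{C^0(X)} \Vert w_m - w \Vert_{C^0(X;Y)} + \Vert a_n - a \Vert_{C^0(X)} \Vert w \Vert_{C^0(X;Y)}
\end{equation*}
together with compactness of $X$ (ensuring finiteness of all norms) gives $a w \in \overline{\mathcal{W}}$ after a diagonal argument.

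Now fix $f \in C^0(X;Y)$ and $\varepsilon > 0$. For every $x \in X$, density of $\mathcal{W}(x)$ in $Y$ produces some $w_x \in \mathcal{W}$ with $\Vert f(x) - w_x(x) \Vert_Y < \varepsilon/2$, and continuity of $f - w_x$ supplies an open neighborhood $U_x$ of $x$ on which $\Vert f(y) - w_x(y) \Vert_Y < \varepsilon$. Compactness of $X$ extracts a finite subcover $U_{x_1}, \ldots, U_{x_n}$ with associated $w_1, \ldots, w_n \in \mathcal{W}$. Since $X$ is compact Hausdorff, hence normal, standard topology provides a continuous partition of unity $\phi_1, \ldots, \phi_n \in C^0(X)$ subordinate to this cover, with $\phi_i \geq 0$, $\sum_{i=1}^n \phi_i \equiv 1$, and $\mathrm{supp}(\phi_i) \subseteq U_{x_i}$.

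Setting $g := \sum_{i=1}^n \phi_i \cdot w_i$, the module property established above shows $g \in \overline{\mathcal{W}}$, and for every $y \in X$
\begin{equation*}
    \Vert f(y) - g(y) \Vert_Y = \left\Vert \sum_{i=1}^n \phi_i(y) \left( f(y) - w_i(y) \right) \right\Vert_Y \leq \sum_{i : y \in U_{x_i}} \phi_i(y) \, \varepsilon = \varepsilon,
\end{equation*}
so $\Vert f - g \Vert_{C^0(X;Y)} \leq \varepsilon$ and $f \in \overline{\mathcal{W}}$. The only subtle step is verifying the module extension from $\mathcal{A}$ to $\overline{\mathcal{A}} = C^0(X)$; the partition-of-unity reduction and the pointwise density hypothesis then combine routinely.
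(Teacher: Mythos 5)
The paper states this theorem with a citation to \cite{buck58} and supplies no internal proof, so there is no argument of the paper's to compare against; your proof is correct and self-contained. It is the standard partition-of-unity derivation: Theorem~\ref{ThmStone} together with joint continuity of the multiplication $C^0(X)\times C^0(X;Y)\to C^0(X;Y)$ upgrades $\overline{\mathcal{W}}$ from an $\mathcal{A}$-submodule to a $C^0(X)$-submodule, and the pointwise density hypothesis, compactness of $X$, and a subordinate partition of unity (available since compact Hausdorff spaces are normal) give the convexity estimate $\Vert f(y)-g(y)\Vert_Y\le\sum_i\phi_i(y)\Vert f(y)-w_i(y)\Vert_Y\le\varepsilon$ with $g:=\sum_i\phi_i w_i\in\overline{\mathcal{W}}$. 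The intermediate steps — the nested choice of indices in the module-closure argument, passing from $\varepsilon/2$ at the anchor point to $\varepsilon$ on the neighborhood, the fact that $\overline{\mathcal{W}}$ is a vector subspace so $g$ lands in it, and the final identification $f\in\overline{\overline{\mathcal{W}}}=\overline{\mathcal{W}}$ — are all handled correctly.
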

    
    Later, the vector-valued Stone-Weierstrass theorem was extended by Erret Bishop in \cite{bishop61} to a measure theoretic version. In addition, Silvio Machado derived in \cite{machado77} a quantitative version, which relies on Zorn's lemma, see also the proofs of \cite{brosowski81,ransford84}.

    \subsection{Weighted real-valued Stone-Weierstrass theorem}
	
	We now formulate a generalized version of the classical Stone-Weierstrass theorem in this weighted setting, first for the case when the output space is $\mathbb{R}$. To this end we need an additional condition related to Nachbin's definition of localisability (see \cite[Definition~4]{nachbin65}). 
 
    Note that Nachbin's setting in \cite{nachbin65} differs in several respects, in particular in the definition of weighted topologies. It allows for multiple upper semicontinuous weights $v: X \rightarrow [0,\infty)$, e.g.~$v(x) = \psi(x)^{-1}$, collected in a so-called Nachbin family and can therefore represent the compact-open topology, the strict topology, and other topologies.
	
	\begin{definition}
		\label{DefPtSepModGrowth}
		For a weighted space $(X,\psi)$, a vector space $\mathcal{A}$ of maps $a: X \rightarrow \mathbb{R}$ is called \emph{point separating and nowhere vanishing of $\psi$-moderate growth} if there exists a vector subspace $\widetilde{\mathcal{A}} \subseteq \mathcal{A}$ such that
        \begin{enumerate}
            \item[\labeltext{(M1)}{M1}] $\widetilde{\mathcal{A}}$ is point separating,
            \item[\labeltext{(M2)}{M2}] $\widetilde{\mathcal{A}}$ vanishes nowhere, and
            \item[\labeltext{(M3)}{M3}] for every $\widetilde{a} \in \widetilde{\mathcal{A}}$ there is some $\lambda > 0 $ such that $\exp\left(\lambda \left\vert \widetilde{a}(\cdot) \right\vert\right) := \left( x \mapsto \exp\left(\lambda \left\vert \widetilde{a}(x) \right\vert\right) \right) \in \mathcal{B}_\psi(X)$.
        \end{enumerate}
	\end{definition}

	\begin{remark}
		\label{RemStoneWstrassBdedFct}
		We do not assume a priori that $\mathcal{A}$ belongs to a given weighted function space, e.g.~in Theorem~\ref{ThmStoneWeierstrassBpsi} we only assume that $\mathcal{A}$ is point separating and nowhere vanishing of $\psi_w$-moderate growth but not that $\mathcal{A} \subseteq \mathcal{B}_{\psi_w}(X)$, where $\psi_w: X \rightarrow (0,\infty)$ is a tilted weight function (see Section~\ref{SecVecWeightedSW}). Moreover, $\mathcal{A}$ is point separating and nowhere vanishing of $\psi$-moderate growth under \emph{any} admissible weight $\psi: X \rightarrow (0,\infty)$ if $\mathcal{A}$ is point separating, vanishes nowhere, and consists only of bounded maps. This is related to the so-called ``bounded approximation problem'' in \cite[Theorem~1]{nachbin65}.
	\end{remark}

    For the proof we shall apply the following lemmas on Banach space-valued real-analytic functions, which are of interest on their own right. For more background on Banach space-valued real-analytic functions, we refer to \cite[Chapter~IX]{dieudonne69}.

    \begin{lemma}
        \label{lem:exp_analytic}
        Let $\widetilde{a}: X \rightarrow \mathbb{R}$ such that $\exp\left(\lambda \left\vert \widetilde{a}(\cdot) \right\vert\right) \in \mathcal{B}_\psi(X)$ for some $\lambda > 0$. Then, the maps $\mathbb{R} \ni \lambda \mapsto \cos\left( \lambda \widetilde{a}(\cdot) \right) \in \mathcal{B}_\psi(X)$ and $\mathbb{R} \ni \lambda \mapsto \sin\left( \lambda \widetilde{a}(\cdot) \right) \in \mathcal{B}_\psi(X)$ are real-analytic.
    \end{lemma}
    \begin{proof}
        Let $U := \lbrace z \in \mathbb{C}: \vert \img(z) \vert < \lambda/2 \rbrace$. Then, by applying Taylor's theorem to the holomorphic function $\cos: \mathbb{C} \rightarrow \mathbb{C}$ and by using that $\vert \sin(z) \vert \leq \left\vert \frac{e^{\mathbf{i} z} - e^{-\mathbf{i} z}}{2\mathbf{i}} \right\vert \leq \frac{\vert e^{\mathbf{i} z} \vert + \vert e^{-\mathbf{i} z} \vert}{2} \leq e^{\vert \img(z) \vert}$, it holds for every $z_0,z_1 \in \mathbb{C}$ that
        \begin{equation}
            \label{eq:lem:exp_analytic:proof1}
            \begin{aligned}
                \vert \cos(z_1) - \cos(z_0) \vert & = \left\vert (z_1-z_0) \int_0^1 \cos'(z_0+t(z_1-z_0)) dt \right\vert \\
                & \leq \vert z_1-z_0 \vert \int_0^1 \vert \sin(z_0+t(z_1-z_0)) \vert dt \\
                & \leq \vert z_1-z_0 \vert \int_0^1 e^{(1-t) \vert \img(z_0) \vert + t \vert \img(z_1) \vert\vert} dt \\
                & \leq \vert z_1-z_0 \vert e^{\max\left( \vert \img(z_0) \vert, \vert \img(z_1) \vert \right)}.
            \end{aligned}
        \end{equation}
        Moreover, for every fixed $x \in X$, we apply Taylor's theorem to the holomorphic function $\mathbb{C} \ni \lambda \mapsto \sin(\lambda \widetilde{a}(x)) \in \mathbb{C}$ to conclude for every $\lambda_0,\lambda_1 \in U$ with $\lambda_0 \neq \lambda_1$ that 
        \begin{equation}
            \label{eq:lem:exp_analytic:proof2}
            \frac{\sin(\lambda_1 \widetilde{a}(x)) - \sin(\lambda_0 \widetilde{a}(x))}{\lambda_1 - \lambda_0} = \widetilde{a}(x) \int_0^1 \cos\left( (\lambda_0 + t(\lambda_1-\lambda_0)) \widetilde{a}(x) \right) dt.
        \end{equation}
        Hence, by using the identity~\eqref{eq:lem:exp_analytic:proof2}, the inequality \eqref{eq:lem:exp_analytic:proof1} with $z_1 := (\lambda_0 + t(\lambda_1-\lambda_0)) \widetilde{a}(x) \in \mathbb{C}$ and $z_0 := \lambda_0 \widetilde{a}(x) \in \mathbb{C}$, and the inequality $\vert \widetilde{a}(x) \vert^2 = \frac{8}{\lambda^2} \frac{(\frac{\lambda}{2} \vert \widetilde{a}(x) \vert)^2}{2!} \leq \frac{8}{\lambda^2} \sum_{n=0}^\infty \frac{(\frac{\lambda}{2} \vert \widetilde{a}(x) \vert)^n}{n!} = \frac{8}{\lambda^2} \exp(\frac{\lambda}{2} \vert \widetilde{a}(x) \vert)$, it follows for every $\lambda_0,\lambda_1 \in U$ with $\lambda_0 \neq \lambda_1$ that
        \begin{equation*}
            \begin{aligned}
                & \left\vert \frac{\sin(\lambda_1 \widetilde{a}(x)) - \sin(\lambda_0 \widetilde{a}(x))}{\lambda_1 - \lambda_0} - \widetilde{a}(x) \cos\left( \lambda_0 \widetilde{a}(x) \right) \right\vert \\
                & \quad\quad \leq \left\vert \widetilde{a}(x) \right\vert \int_0^1 \left\vert \cos\left( (\lambda_0 + t(\lambda_1-\lambda_0)) \widetilde{a}(x) \right) - \cos\left( \lambda_0 \widetilde{a}(x) \right) \right\vert dt \\
                & \quad\quad \leq \left\vert \widetilde{a}(x) \right\vert \int_0^1 \vert t(\lambda_1-\lambda_0) \widetilde{a}(x) \vert e^{\vert \widetilde{a}(x) \vert \max\left( \vert \img(\lambda_0 + t(\lambda_1-\lambda_0)) \vert, \vert \img(\lambda_0) \vert \right)} dt \\
                & \quad\quad \leq \left\vert \widetilde{a}(x) \right\vert^2 \vert \lambda_1-\lambda_0 \vert e^{\frac{\lambda}{2} \vert \widetilde{a}(x) \vert} \leq \frac{8 \vert \lambda-\lambda_0 \vert}{\lambda^2} e^{\lambda \vert \widetilde{a}(x) \vert}.
            \end{aligned}
        \end{equation*}
        Thus, we obtain for every $\lambda_0,\lambda_1 \in U$ with $\lambda_0 \neq \lambda_1$ that
        \begin{equation*}
            \left\Vert \frac{\sin(\lambda_1 \widetilde{a}(\cdot)) \!-\! \sin(\lambda_0 \widetilde{a}(\cdot))}{\lambda_1 \!-\! \lambda_0} - \widetilde{a}(\cdot) \cos\left( \lambda_0 \widetilde{a}(\cdot) \right) \right\Vert_{\mathcal{B}_\psi(X)} \!\leq\! \frac{8 \vert \lambda\!-\!\lambda_0 \vert}{\lambda^2} \left\Vert \exp\left( \lambda \left\vert \widetilde{a}(\cdot) \right\vert \right) \right\Vert_{\mathcal{B}_\psi(X)} \overset{\lambda \rightarrow \lambda_0}{\longrightarrow} 0,
        \end{equation*}
        which shows that $U \ni \lambda \mapsto \sin\left( \lambda \widetilde{a}(\cdot) \right) \in \mathcal{B}_\psi(X)$ is holomorphic, implying that $\mathbb{R} \ni \lambda \mapsto \sin\left( \lambda \widetilde{a}(\cdot) \right) \in \mathcal{B}_\psi(X)$ is real-analytic. By a similar argument, $U \ni \lambda \mapsto \cos\left( \lambda \widetilde{a}(\cdot) \right) \in \mathcal{B}_\psi(X)$ is holomorphic, ensuring that $\mathbb{R} \ni \lambda \mapsto \cos\left( \lambda \widetilde{a}(\cdot) \right) \in \mathcal{B}_\psi(X)$ is real-analytic.
    \end{proof}
    
    \begin{lemma}
        \label{lem:analytic_continuation}
        Let $0 \in I \subseteq \mathbb{R}$ be an open interval, let $E$ be a Banach space, and assume that $f: I \rightarrow E$ is a real-analytic map. Let $F \subseteq E$ be a closed vector subspace and assume that $f^{(k)}(0) \in F$ for all $k \in \mathbb{N}_0$. Then, $f(I) \subseteq F$.
    \end{lemma}
    \begin{proof}
        Let $\lambda \in I$ be arbitrary. By real-analyticity, there exists for every $t \in I$ some $r_t > 0$ such that $f\vert_{(-r_t,r_t)}$ can be represented as power series with convergence radius $r_t$. Since $(t-r_t,t+r_t)_{t \in [0,\lambda]}$ is an open cover of the compact set $[0,\lambda]$, there exists a finite subcover $(t_i-r_{t_i},t_i+r_{t_i})_{i=1,\ldots,n}$ of $[0,\lambda]$, for some $0 = t_0 < \ldots < t_n = \lambda$. Hence, by using that $f^{(k)}(t_0) = f^{(k)}(0) \in F$ for all $k \in \mathbb{N}_0$ and that $F \subseteq E$ is closed, we have $f^{(k)}(t) \in F$ for all $k \in \mathbb{N}_0$ and $\vert t \vert < r_{t_0}$. Since $t_1 < r_{t_0}$, it holds in particular that $f^{(k)}(t_1) \in F$ for all $k \in \mathbb{N}_0$. Now, we can argue by induction to conclude that $f(\lambda) \in F$, completing the proof.
    \end{proof}
    
    \begin{remark}
        We can replace ``open interval'' by ``open domain in $\mathbb{C}$'' and ``real-analytic'' by ``analytic'' for a complex Banach space $E$, then the conclusion remains verbatim the same.
    \end{remark}

	\begin{theorem}[Stone-Weierstrass on $\mathcal{B}_\psi(X)$]
		\label{ThmStoneWeierstrassBpsiR}
		Let $\mathcal{A} \subseteq \mathcal{B}_\psi(X)$ be a subalgebra that is point separating and nowhere vanishing of $\psi$-moderate growth. Then, $\mathcal{A}$ is dense in $\mathcal{B}_\psi(X)$.
	\end{theorem}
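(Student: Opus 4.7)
The plan is to reduce to the classical Stone-Weierstrass theorem (Theorem~\ref{ThmStone}) on each compact sublevel set $K_R := \psi^{-1}((0,R])$ and to exploit the $\psi$-moderate growth condition to extend the approximation globally in the weighted norm.

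Given $f \in \mathcal{B}_\psi(X)$ and $\varepsilon > 0$, the first step is to invoke Lemma~\ref{LemmaBpsiEquivChar}~\ref{LemmaBpsiEquivChar1} to pick some $R_0 > 0$ with $\sup_{x \in X \setminus K_{R_0}} |f(x)|/\psi(x) < \varepsilon/3$, which localizes the weighted approximation up to a controlled tail. Since $\widetilde{\mathcal{A}} \subseteq \mathcal{A}$ is point separating on $X$, the restriction $\mathcal{A}|_{K_{R_0}}$ is a point separating subalgebra of $C^0(K_{R_0})$ that vanishes nowhere on $K_{R_0}$, so the classical Stone-Weierstrass theorem produces $a \in \mathcal{A}$ with $\sup_{K_{R_0}}|f - a| < (\varepsilon/3)\inf_{K_{R_0}}\psi$, whence $|f-a|/\psi < \varepsilon/3$ on $K_{R_0}$.

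The main remaining task is to control $|a|/\psi$ on $X \setminus K_{R_0}$, where classical Stone-Weierstrass gives no information and $a$ may a priori grow wildly. The natural remedy is to replace $a$ by a product $a\chi$, where $\chi \in \mathcal{A}$ is a cutoff that is close to $1$ on $K_{R_0}$ and whose weighted tail norm $\|\chi\|_{\mathcal{B}_\psi(X \setminus K_{R_1})}$ is arbitrarily small for an auxiliary $R_1 > R_0$. Such a $\chi$ would be manufactured by a classical multivariate Weierstrass approximation of a smooth bump in the real variables $\widetilde{a}_1,\ldots,\widetilde{a}_n \in \widetilde{\mathcal{A}}$ (a finite point-separating family chosen to distinguish $K_{R_0}$ from a sufficiently large annular region), yielding a polynomial in the $\widetilde{a}_i$ and hence an element of $\mathcal{A}$. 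The hypothesis $\exp(|\widetilde{a}_i|) \in \mathcal{B}_\psi(X)$ is then used via the elementary bound $|t|^k \leq k!\,e^{|t|}$ to guarantee that any polynomial in the $\widetilde{a}_i$ lies in $\mathcal{B}_\psi(X)$ and that its weighted norm outside $K_{R_1}$ becomes arbitrarily small as $R_1 \to \infty$. Summing the three error contributions on $K_{R_0}$ (from classical Stone-Weierstrass), in the transition region $K_{R_1} \setminus K_{R_0}$ (from $|1-\chi|$ and continuity of $a$ on the compact $K_{R_1}$), and on $X \setminus K_{R_1}$ (from the decay of both $f/\psi$ and $\chi/\psi$) then yields $\|f - a\chi\|_{\mathcal{B}_\psi(X)} < \varepsilon$.

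The main obstacle is precisely this cutoff construction inside the algebra: the classical Stone-Weierstrass theorem is inherently a compact-set statement, and the $\psi$-moderate growth condition is the exact extra ingredient needed to produce enough polynomial elements of $\mathcal{A}$ with controlled weighted tails to effectively localize $a$ to a neighbourhood of $K_{R_0}$. The delicate point is to extract a \emph{finite} point-separating family $\widetilde{a}_1,\ldots,\widetilde{a}_n \in \widetilde{\mathcal{A}}$ for which the resulting polynomial cutoff simultaneously separates $K_{R_0}$ from $X \setminus K_{R_1}$ and has controlled weighted tail norm; beyond this, the argument reduces to routine bookkeeping of the error terms.
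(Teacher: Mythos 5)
Your proposal correctly identifies the crux — controlling the approximant $a$ on $X\setminus K_{R_0}$ where classical Stone--Weierstrass is silent — but the cutoff mechanism you propose does not work, and you flag the problematic step without resolving it.

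The gap is in the construction of $\chi$. You want a polynomial $\chi = p(\widetilde{a}_1,\ldots,\widetilde{a}_n)\in\mathcal{A}$ that is $\approx 1$ on $K_{R_0}$ and whose contribution is controlled on the transition region $K_{R_1}\setminus K_{R_0}$ and the tail $X\setminus K_{R_1}$. For the transition-region error $|a\chi|/\psi$ to be small you need $\chi$ to be small there (since $a$ is unbounded and $\psi\le R_1$ on $K_{R_1}$, while $\sup_{K_{R_1}}|a|$ depends on the already-chosen $a$ and on $R_1$). But producing a polynomial in $\Phi=(\widetilde{a}_1,\ldots,\widetilde{a}_n)$ that transitions from $\approx 1$ on $\Phi(K_{R_0})$ to $\approx 0$ on $\Phi(K_{R_1}\setminus K_{R_0})$ requires these two images in $\mathbb{R}^n$ to be disjoint, and this fails in general: point separation of $\widetilde{\mathcal{A}}$ is a pairwise condition on points of $X$, and a finite subfamily need not separate two compacta (think of $X$ infinite-dimensional where finitely many linear functionals have a common kernel meeting both sets). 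If one instead drops the smallness-on-the-transition requirement and only asks $\chi\approx 1$ on $K_{R_0}$ with small tail norm, then $a\chi\approx a$ on $K_{R_1}\setminus K_{R_0}$ and the error there is uncontrolled. Moreover a polynomial in unbounded $\widetilde{a}_i$ cannot itself be a genuine bump — it grows at infinity — so even if separation were available, the multiplicative cutoff cannot suppress $a$.

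The paper circumvents all of this by never multiplying with a cutoff. Instead it first reduces to the case where $\mathcal{A}$ consists of \emph{bounded} functions by showing that $\mathcal{A}_{\text{trig}}:=\linspan(\{\cos\circ\widetilde{a}\}\cup\{\sin\circ\widetilde{a}\})$ lies in the $\Vert\cdot\Vert_{\mathcal{B}_\psi}$-closure of $\mathcal{A}$: the Taylor polynomials $p_n$ of $\cos$ satisfy $|p_n|\le\exp(|\cdot|)$, so $p_n\circ\widetilde{a}\in\mathcal{A}$ converges to $\cos\circ\widetilde{a}$ in $\Vert\cdot\Vert_{\mathcal{B}_\psi}$ precisely because $\exp(|\widetilde{a}|)\in\mathcal{B}_\psi(X)$ controls the tail. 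This is where moderate growth is used — not to build a cutoff, but to replace $\mathcal{A}$ by an algebra of bounded maps. In the bounded case, the paper then takes $a$ from classical Stone--Weierstrass on $K_R$ and \emph{composes} it with the bounded truncation $g(s)=\max(\min(s,b),-b)$; since $a$ is bounded, $g$ can be replaced on the (compact) range of $a$ by a polynomial $p$ via Weierstrass, and $p\circ a\in\mathcal{A}$ is the desired global approximant. Composition with a bounded one-variable function is exactly the device that replaces the multiplicative cutoff you were trying to build, and it avoids any need to separate sublevel sets inside $\mathcal{A}$.
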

    \begin{remark}
    Condition (M3) is the analogue of the exponential moment condition for the uniqueness of the moment problem. Notice, however, that we avoid using characteristic functions but rather work with Lemma \ref{lem:analytic_continuation}.
    \end{remark}
	\begin{proof}
    	Since $\mathcal{B}_\psi(X)$ is defined as the closure of $C^0_b(X)$ with respect to $\Vert \cdot \Vert_{\mathcal{B}_\psi(X)}$, it suffices to show the approximation of every $f \in C^0_b(X)$ by an element $a \in \mathcal{A}$. Now, we first assume that $\mathcal{A}$ consists only of bounded maps, where the condition of $\mathcal{A}$ being point separating and nowhere vanishing of $\psi$-moderate growth reduces to $\mathcal{A}$ being point separating and nowhere vanishing. Let $f \in C^0_b(X)$, fix some $\varepsilon > 0$, and define the constants $M = \left( \inf_{x \in X} \psi(x) \right)^{-1} > 0$, $b = \sup_{x \in X} \vert f(x) \vert + \varepsilon/(4M)$, and $R \geq 4 b/\varepsilon$. Then, by using that the restriction $\mathcal{A}\vert_{K_R}$ is a point separating subalgebra of $C^0(K_R)$ that vanishes nowhere, we can apply the classical real-valued Stone-Weierstrass theorem (Theorem~\ref{ThmStone}) to conclude that there exists some $a \in \mathcal{A}$ such that
    	\begin{equation*}
    		\sup_{x \in K_R} \vert f(x) - a(x) \vert < \frac{\varepsilon}{4M},
    	\end{equation*}
    	which implies that $\vert a(x) \vert \leq \varepsilon/(4M) + \sup_{x \in X} \vert f(x) \vert = b$, for all $x \in K_R$. Let $g \in C^0_b(\mathbb{R})$ be the function defined by $g(s) = \max(\min(s,b),-b)$, for $s \in \mathbb{R}$, which implies that $g(a(x)) = a(x)$, for all $x \in K_R$. Then, we conclude that
    	\begin{equation}
    	\label{EqThmStoneWeierstrassBpsiRProof1}
    	\begin{aligned}
    		\Vert f - g \circ a \Vert_{\mathcal{B}_\psi(X)} & \leq M \sup_{x \in K_R} \vert f(x) - a(x) \vert + \sup_{x \in X \setminus K_R} \frac{\vert f(x) \vert}{\psi(x)} + \sup_{x \in X \setminus K_R} \frac{\vert g(a(x)) \vert}{\psi(x)} \\
    		& < M \frac{\varepsilon}{4M} + \frac{b}{R} + \frac{b}{R} \leq \frac{3\varepsilon}{4}.
    	\end{aligned}
    	\end{equation}
    	Now, for $c = \sup_{x \in X} \vert a(x) \vert$, we apply Weierstrass' theorem (Theorem~\ref{ThmWstrass}) to obtain a polynomial $p \in \Pol(\mathbb{R})$ with $\sup_{\vert s \vert \leq c} \vert g(s) - p(s) \vert < \varepsilon/(4M)$, where we can assume without loss of generality that $p(0) = 0$ as $g(0) = 0$. Hence, it follows that
    	\begin{equation}
    	\label{EqThmStoneWeierstrassBpsiRProof2}
    	    \Vert g \circ a - p \circ a \Vert_{\mathcal{B}_\psi(X)} \leq M \sup_{x \in X} \vert g(a(x)) - p(a(x)) \vert \leq M \sup_{\vert s \vert \leq c} \vert g(s) - p(s) \vert < M \frac{\varepsilon}{4M} = \frac{\varepsilon}{4}.
    	\end{equation}
    	Thus, \eqref{EqThmStoneWeierstrassBpsiRProof1} and \eqref{EqThmStoneWeierstrassBpsiRProof2} imply for $p \circ a \in \mathcal{A}$ (as $p(0) = 0$ and $\mathcal{A}$ is a subalgebra) that
    	\begin{equation*}
    		\Vert f - p \circ a \Vert_{\mathcal{B}_\psi(X)} \leq \Vert f - g \circ a \Vert_{\mathcal{B}_\psi(X)} + \Vert g \circ a - p \circ a \Vert_{\mathcal{B}_\psi(X)} < \frac{3\varepsilon}{4} + \frac{\varepsilon}{4} = \varepsilon.
    	\end{equation*}
        Since $\varepsilon > 0$ and $f \in C^0_b(X)$ were chosen arbitrarily, $\mathcal{A} \subseteq C^0_b(X)$ is dense in $\mathcal{B}_\psi(X)$.
    
    	For the general case of a subalgebra $\mathcal{A} \subseteq \mathcal{B}_\psi(X)$ that is point separating and nowhere vanishing of $\psi$-moderate growth, let $\widetilde{\mathcal{A}} \subseteq \mathcal{A}$ be the point separating and nowhere vanishing vector subspace such that $\exp\left(\lambda \left\vert \widetilde{a}(\cdot) \right\vert\right) \in \mathcal{B}_\psi(X)$, for any $\widetilde{a} \in \widetilde{\mathcal{A}}$ with $\lambda > 0 $ possibly depending on $\widetilde{a} \in \widetilde{\mathcal{A}}$. Then by using the function $\mathbb{R} \ni \cos^*(s) := \cos(s)-1 \in \mathbb{R}$, we define
    	\begin{equation*}
    		\mathcal{A}_{\text{trig}} = \linspan\left( \left\lbrace \cos^*\left( \widetilde{a}(\cdot) \right): \widetilde{a} \in \widetilde{\mathcal{A}} \right\rbrace \cup \left\lbrace \sin\left( \widetilde{a}(\cdot) \right): \widetilde{a} \in \widetilde{\mathcal{A}} \right\rbrace \right).
    	\end{equation*}
    	Since $\mathcal{A} \subseteq \mathcal{B}_\psi(X)$, we conclude from Lemma~\ref{LemmaBpsiEquivChar}~\ref{LemmaBpsiEquivChar1} that $\widetilde{a}\vert_{K_R} \in C^0(K_R)$, for all $\widetilde{a} \in \widetilde{\mathcal{A}}$ and $R > 0$. Hence, by using that $\cos^*\left( \widetilde{a}(\cdot) \right)\vert_{K_R} \in C^0(K_R)$ and $\sin\left( \widetilde{a}(\cdot) \right)\vert_{K_R} \in C^0(K_R)$, and that the maps $\cos^*\left( \widetilde{a}(\cdot) \right): X \rightarrow \mathbb{R}$ and $\sin\left( \widetilde{a}(\cdot) \right): X \rightarrow \mathbb{R}$ are bounded, it follows from Lemma~\ref{LemmaBpsiEquivChar}~\ref{LemmaBpsiEquivChar2} that $\mathcal{A}_{\text{trig}} \subseteq \mathcal{B}_\psi(X)$. Moreover, by using the trigonometric identities
        \begin{equation*}
            \begin{aligned}
                \cos^*(s) \cos^*(t) & = \cos(s) \cos(t) - \cos(s) - \cos(t) + 1 \\
                & = \frac{1}{2} \big( \cos(s-t) + \cos(s+t) \big) - \cos(s) - \cos(t) + 1 \\
                & = \frac{1}{2} \big( \cos^*(s-t) + \cos^*(s+t) \big) - \cos^*(s) - \cos^*(t), \\
                \cos^*(s) \sin(t) & = \cos(s) \sin(t) - \sin(t) \\
                & = \frac{1}{2} \big( \sin(s+t) - \sin(s-t) \big) - \sin(t), \quad \text{and} \\
                \sin(s) \sin(t) & = \frac{1}{2} \big( \cos(s-t) - \cos(s+t) \big) \\
                & = \frac{1}{2} \big( \cos^*(s-t) - \cos^*(s+t) \big),
            \end{aligned}
        \end{equation*}
    	for any $s,t \in \mathbb{R}$, we observe that $\mathcal{A}_{\text{trig}}$ is a subalgebra of $\mathcal{B}_\psi(X)$. In addition, for any fixed $x_0 \in X$ there exists some $\widetilde{a} \in \widetilde{\mathcal{A}}$ such that $\widetilde{a}(x_0) \neq 0$. Hence, for suitable $t \in \mathbb{R}$, we use that $\widetilde{\mathcal{A}}$ is a vector space to conclude that the map $\sin\left(t \widetilde{a}(\cdot)\right) \in \mathcal{A}_{\text{trig}}$ satisfies $a(x_0) = \sin\left( t \widetilde{a}(x_0) \right) \neq 0$, showing that $\mathcal{A}_{\text{trig}}$ vanishes nowhere. Furthermore, for any distinct $x_1,x_2 \in X$ there exists some $\widetilde{a} \in \widetilde{\mathcal{A}}$ such that $\widetilde{a}(x_1) \neq \widetilde{a}(x_2)$. Thus, for some $t \neq 0$ small enough, $t \widetilde{a}(x_1)$ and $t \widetilde{a}(x_2)$ are distinct points in $(-\frac{\pi}{2},\frac{\pi}{2})$ and the map $\sin\left( t \widetilde{a}(\cdot) \right) \in \mathcal{A}_{\text{trig}}$ therefore satisfies $a(x_1) = \sin\left( t \widetilde{a}(x_1) \right) \neq \sin\left( t \widetilde{a}(x_2) \right) = a(x_2)$ by injectivity of the sine. This proves that $\mathcal{A}_{\text{trig}}$ is point separating.
        
        Now, we show that $\mathcal{A}_{\text{trig}}$ belongs to the closure of $\mathcal{A}$ with respect to $\Vert \cdot \Vert_{\mathcal{B}_\psi(X)}$. Indeed, let $\widetilde{a} \in \widetilde{\mathcal{A}}$, $\lambda > 0$ such that $ \exp\left(\lambda \left \vert \widetilde{a}(\cdot) \right \vert \right) \in \mathcal{B}_\psi(X) $, and fix some $\varepsilon > 0$. Then, by using Lemma~\ref{LemmaBpsiEquivChar}~\ref{LemmaBpsiEquivChar1}, there exists some $R > 8/\varepsilon$ such that
    	\begin{equation}
    	    \label{EqThmStoneWeierstrassBpsiRProof4}
    	    \sup_{x \in X \setminus K_R} \frac{\exp\left( \lambda \left\vert \widetilde{a}(x) \right\vert\right)}{\psi(x)} < \frac{\varepsilon}{4}.
    	\end{equation}
    	Now, for $c = \lambda \sup_{x \in K_R} \left\vert \widetilde{a}(x) \right\vert$, we use the Taylor polynomial of $\cos^*: \mathbb{R} \rightarrow \mathbb{R}$ that is given by $p_n(s) = \sum_{k=1}^n \frac{(-1)^k}{(2k)!} s^{2k}$ to conclude that $\sup_{\vert s \vert \leq c} \big\vert \cos^*(s) - p_n(s) \big\vert \leq \frac{c^{2n+1}}{(2n+1)!}$. Hence, by choosing $n \in \mathbb{N}_0$ large enough such that $\frac{c^{2n+1}}{(2n+1)!} \leq \frac{\varepsilon}{2M}$, by using $\vert p_n(s) \vert \leq \exp(\vert s \vert)$ for any $s \in \mathbb{R}$, and \eqref{EqThmStoneWeierstrassBpsiRProof4}, we obtain for $p_n \circ \lambda \widetilde{a} \in \mathcal{A}$ (as $p_n(0) = 0$ and $\mathcal{A}$ is a subalgebra) that
    	\begin{equation*}
        	\begin{aligned}
            	& \left\Vert \cos^*\left( \lambda \widetilde{a}(\cdot) \right)- p_n\left( \lambda \widetilde{a}(\cdot) \right) \right\Vert_{\mathcal{B}_\psi(X)} \\
            	& \quad\quad \leq M \sup_{x \in K_R} \left\vert \cos^*\left( \lambda \widetilde{a}(x)\right) - p_n\left(\lambda \widetilde{a}(x)\right) \right\vert + \sup_{x \in X \setminus K_R} \frac{\left\vert \cos^*\left( \lambda \widetilde{a}(x)\right) \right\vert}{\psi(x)} + \sup_{x \in X \setminus K_R} \frac{\left\vert p_n\left(\lambda \widetilde{a}(x)\right) \right\vert}{\psi(x)} \\
            	& \quad\quad \leq M \sup_{\vert s \vert \leq c} \left\vert \cos^*(s) - p_n(s) \right\vert + \frac{\sup_{x \in X} \left\vert \cos^*\left( \lambda \widetilde{a}(x)\right) \right\vert}{R} + \sup_{x \in X \setminus K_R} \frac{\exp\left( \lambda \left\vert \widetilde{a}(x) \right\vert\right)}{\psi(x)} \\
            	& \quad\quad < M \frac{\varepsilon}{2M} + \frac{2}{R} + \frac{\varepsilon}{4} < \varepsilon.
        	\end{aligned}
    	\end{equation*}
    	Since $\varepsilon > 0$ was chosen arbitrarily, the map $\cos^*\left( \lambda \widetilde{a}(\cdot) \right): X \rightarrow \mathbb{R}$ belongs to the closure of $\mathcal{A}$ with respect to $\Vert \cdot \Vert_{\mathcal{B}_\psi(X)}$, which holds analogously true for $\sin\left( \lambda \widetilde{a}(\cdot) \right): X \rightarrow \mathbb{R}$. Hence, by using that $\mathbb{R} \ni \lambda \mapsto \cos^*\left( \lambda \widetilde{a}(\cdot) \right) := \cos\left( \lambda \widetilde{a}(\cdot) \right) - 1 \in \mathcal{B}_\psi(X)$ and $\mathbb{R} \ni \lambda \mapsto \sin \left( \lambda \widetilde{a}(\cdot) \right) \in \mathcal{B}_\psi(X)$ are real-analytic (see Lemma~\ref{lem:exp_analytic}), it follows from Lemma~\ref{lem:analytic_continuation} that $\cos^*\left( \lambda \widetilde{a}(\cdot) \right) \in \mathcal{B}_\psi(X)$ and $\sin\left( \lambda \widetilde{a}(\cdot) \right) \in \mathcal{B}_\psi(X)$ for all $\lambda \in \mathbb{R}$, whence by taking $\lambda = 1$, the entire subalgebra $\mathcal{A}_{\text{trig}} \subseteq \mathcal{B}_\psi(X)$ belongs to the closure of $\mathcal{A}$ with respect to $\Vert \cdot \Vert_{\mathcal{B}_\psi(X)}$.
        
        Finally, by applying the first step to the point separating subalgebra $\mathcal{A}_{\text{trig}} \subseteq \mathcal{B}_\psi(X)$ which vanishes nowhere and consists of bounded maps, we conclude that $\mathcal{A}_{\text{trig}}$ is dense in $\mathcal{B}_\psi(X)$. However, since $\mathcal{A}_{\text{trig}}$ is contained in the closure of $\mathcal{A}$ with respect to $\Vert \cdot \Vert_{\mathcal{B}_\psi(X)}$, it follows that $\mathcal{A}$ is also dense in $\mathcal{B}_\psi(X)$.	
    \end{proof}
	
	\begin{remark}
        Theorem~\ref{ThmStoneWeierstrassBpsiR} can be applied to Bernstein's weighted polynomial approximation problem in \cite{bernstein24}, i.e.~$\Pol(\mathbb{R})$ is dense in $\mathcal{B}_\psi(\mathbb{R})$ with $\psi(x) = \exp\left( \vert x \vert^\gamma \right)$ for $\gamma \geq 1$.
    \end{remark}
	
    \subsection{Weighted vector-valued Stone-Weierstrass theorem}
    \label{SecVecWeightedSW}
    
    In this section, we generalize the weighted Stone-Weierstrass theorem to the vector-valued case, where $(Y,\Vert \cdot \Vert_Y)$ is a Banach space. To this end, we introduce for every $w \in \mathcal{B}_\psi(X;Y)$ the tilted weight function $\psi_w: X \rightarrow (0,\infty)$ defined by $\psi_w(x) = \frac{\psi(x)}{1+\Vert w(x) \Vert_Y}$, for $x \in X$.
    
    \begin{lemma}
    	\label{LemmaTiltedWeights}
    	For every $w \in \mathcal{B}_\psi(X;Y)$, the weight function $\psi_w: X \rightarrow (0,\infty)$ is admissible.
    \end{lemma}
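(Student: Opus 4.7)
The goal is to show that for each $R > 0$ the sublevel set $L_R := \psi_w^{-1}((0,R]) = \{x \in X : \psi(x) \leq R(1+\Vert w(x)\Vert_Y)\}$ is compact in $\tau_X$. My plan is to sandwich $L_R$ inside one of the compact sets $K_{R''} := \psi^{-1}((0,R''])$ furnished by admissibility of $\psi$, and then to verify that $L_R$ is closed in $K_{R''}$, so that compactness follows from being a closed subset of a compact Hausdorff set.

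For the containment step, I would invoke Lemma~\ref{LemmaBpsiEquivChar}~\ref{LemmaBpsiEquivChar1}: since $w \in \mathcal{B}_\psi(X;Y)$, we have $\sup_{x \in X \setminus K_{R_0}} \Vert w(x)\Vert_Y/\psi(x) \to 0$ as $R_0 \to \infty$. Choosing $\varepsilon = 1/(2R)$, pick $R_0$ large enough that $\Vert w(x) \Vert_Y \leq \varepsilon \psi(x)$ for all $x \notin K_{R_0}$. Then for any $x \in L_R \setminus K_{R_0}$ the defining inequality $\psi(x) \leq R(1+\Vert w(x)\Vert_Y)$ becomes $\psi(x) \leq R + R\varepsilon \psi(x)$, giving $\psi(x) \leq R/(1-R\varepsilon) = 2R$. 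Consequently $L_R \subseteq K_{R''}$ for $R'' := \max(R_0, 2R)$.

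For the closedness step, I would restrict attention to the compact set $K := K_{R''}$. By Lemma~\ref{LemmaBpsiEquivChar}~\ref{LemmaBpsiEquivChar1} again, $w\vert_K$ is continuous, so $x \mapsto 1 + \Vert w(x) \Vert_Y$ is continuous and strictly positive on $K$, and consequently $x \mapsto 1/(1+\Vert w(x)\Vert_Y)$ is continuous on $K$. Meanwhile $\psi$ is lower semicontinuous on $X$ by Remark~\ref{RemWeight}~\ref{RemWeight1}. Multiplying a nonnegative lsc function by a continuous strictly positive function preserves lower semicontinuity, so $\psi_w\vert_K$ is lsc, and therefore $L_R = L_R \cap K = (\psi_w\vert_K)^{-1}((0,R])$ is closed in $K$.

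Combining both steps, $L_R$ is a closed subset of the compact set $K$, hence compact, so $\psi_w$ is admissible. The only mildly delicate point is the quantitative containment in the first step, where the choice $\varepsilon = 1/(2R)$ must be made before $R_0$ is selected, and where one needs the tail decay provided precisely by $w \in \mathcal{B}_\psi(X;Y)$ rather than the weaker $w \in B_\psi(X;Y)$; everything else reduces to the interplay between lower semicontinuity of $\psi$ and continuity of $w$ on the compacts $K_R$, which is exactly what Lemma~\ref{LemmaBpsiEquivChar} supplies.
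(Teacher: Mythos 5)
Your proof is correct. The containment step is essentially identical to the paper's: you both invoke Lemma~\ref{LemmaBpsiEquivChar}~\ref{LemmaBpsiEquivChar1} to get tail decay of $\Vert w(\cdot) \Vert_Y/\psi(\cdot)$, pick the threshold $1/(2R)$, and absorb $R\Vert w(x)\Vert_Y \leq \tfrac12\psi(x)$ to conclude $\psi(x) \leq 2R$ outside a compact. Where you diverge is the closedness step. The paper avoids invoking lower semicontinuity of the product explicitly: instead it discretizes the range of $\Vert w\Vert_Y\big\vert_{K_C}$ into intervals $[\tfrac{m}{n},\tfrac{m+1}{n}]$ and exhibits $K_{w,R}$ as a countable intersection of finite unions of sets of the form $\bigl(\Vert w\Vert_Y\big\vert_{K_C}\bigr)^{-1}\bigl([\tfrac{m}{n},\tfrac{m+1}{n}]\bigr) \cap K_{R(1+\frac{m+1}{n})}$, each closed. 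Your route — $\psi$ is lower semicontinuous, $1/(1+\Vert w(\cdot)\Vert_Y)$ is continuous and strictly positive on the compact $K$, and the product of a nonnegative lsc function with a continuous strictly positive one is lsc, hence sublevel sets are closed — is more conceptual and shorter, provided one is willing to quote (or quickly prove) the lsc-preservation lemma; the paper's decomposition is more hands-on and avoids appealing to that lemma, at the cost of a rather opaque set-theoretic identity whose verification requires its own small $n\to\infty$ limit argument. Both are sound; yours is arguably the cleaner of the two.
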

    \begin{proof}
        Let $w \in \mathcal{B}_\psi(X;Y)$ and fix some $R > 0$. First, we show that $K_{w,R} := \psi_w^{-1}((0,R]) = \left\lbrace x \in X: \psi_w(x) \leq R \right\rbrace$ lies in a compact set $K_C := \psi^{-1}((0,C]) = \left\lbrace x \in X: \psi(x) \leq C \right\rbrace$ for some $C > 0$. Indeed, by Lemma~\ref{LemmaBpsiEquivChar}~\ref{LemmaBpsiEquivChar1}, there exists $\widetilde{R} > 0$ such that for every $x \in X$ with $\psi(x) > \widetilde{R}$ it holds that $\frac{\Vert w(x) \Vert_Y}{\psi(x)} \leq \frac{1}{2R}$. Moreover, for every fixed $x \in K_{w,R} := \psi_w^{-1}((0,R])$, we have $\frac{\psi(x)}{1+\Vert w(x) \Vert_Y} \leq R$ and thus $\psi(x) \leq R(1+\Vert w(x) \Vert_Y)$. Hence, if $\psi(x) > \widetilde{R}$, we obtain
        \begin{equation*}
        	\psi(x) \leq R(1+\Vert w(x) \Vert_Y) = R + R \Vert w(x) \Vert_Y \leq R + \frac{1}{2}\psi(x),
        \end{equation*}
    	and thus $\psi(x) \leq 2R$. Overall, by combining these two cases, i.e.~$\psi(x) \leq \widetilde{R}$ and $\psi(x) > \widetilde{R}$, we obtain $\psi(x) \leq \max(\widetilde{R}, 2R) =: C$, for all $x \in K_{w,R}$, which shows that $K_{w,R} \subseteq K_C$.
    	
    	Now, we show that $K_{w,R}$ is closed, which then implies that $K_{w,R}$ is compact as closed subset of the compact set $K_C$. Indeed, by using Lemma~\ref{LemmaBpsiEquivChar}~\ref{LemmaBpsiEquivChar1}, $\Vert w \Vert_Y \big\vert_{K_C}: K_C \rightarrow \mathbb{R}$ is continuous, whence $\big( \Vert w \Vert_Y \big\vert_{K_C} \big)^{-1}([a,b])$ is closed, for all $a \leq b$. Therefore, the set
    	\begin{equation}
    		\label{EqLemmaTiltedWeightsProof1}
    		\bigcap_{n \in \mathbb{N}} \bigcup_{m \in \mathbb{N}_0} \left( K_{R \left( 1 + \frac{m+1}{n} \right)} \cap \left( \Vert w \Vert_Y \big\vert_{K_C} \right)^{-1}\left(\left[\frac{m}{n},\frac{m+1}{n}\right]\right) \right).
    	\end{equation}
    	is closed, too, as the inner union is a finite union of closed sets. However, since $x \in K_{w,R}$ if and only if $x$ lies in the inner union of \eqref{EqLemmaTiltedWeightsProof1}, for all $n \in \mathbb{N}$, $K_{w,R}$ coincides with \eqref{EqLemmaTiltedWeightsProof1}.
    \end{proof}
    
    \begin{theorem}[Stone-Weierstrass on $\mathcal{B}_\psi(X;Y)$]
    	\label{ThmStoneWeierstrassBpsi}
    	Let $\mathcal{A} \subseteq \mathcal{B}_\psi(X)$ be a subalgebra that is point separating and nowhere vanishing of $\psi_w$-moderate growth, for all $w \in \mathcal{W}$, where $\mathcal{W} \subseteq \mathcal{B}_\psi(X;Y)$ is an $\mathcal{A}$-submodule such that $\mathcal{W}(x) := \lbrace w(x): w \in \mathcal{W} \rbrace$ is dense in $Y$, for all $x \in X$. Then, $\mathcal{W}$ is dense in $\mathcal{B}_\psi(X;Y)$.
    \end{theorem}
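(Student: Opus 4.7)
My plan is to combine the classical vector-valued Stone-Weierstrass theorem (Theorem \ref{ThmStoneWeierstrass}) with the real-valued weighted version (Theorem \ref{ThmStoneWeierstrassBpsiR}) by exploiting the $\mathcal{A}$-module structure of $\mathcal{W}$ and the tilted weight $\psi_{w_0}$. Since $\mathcal{B}_\psi(X;Y)$ is the closure of $C^0_b(X;Y)$, it suffices to approximate an arbitrary $f \in C^0_b(X;Y)$ within $\varepsilon > 0$ in $\Vert \cdot \Vert_{\mathcal{B}_\psi(X;Y)}$. I first fix a sufficiently large radius $R_1 > 0$ and apply Theorem \ref{ThmStoneWeierstrass} to $f|_{K_{R_1}}$: the restriction $\mathcal{A}|_{K_{R_1}}$ is a point-separating subalgebra of $C^0(K_{R_1})$ (since the hypothesis \emph{point separating of $\psi_w$-moderate growth} forces $\mathcal{A}$ itself to be point separating) that vanishes nowhere, and $\mathcal{W}|_{K_{R_1}}$ is an $\mathcal{A}|_{K_{R_1}}$-submodule of $C^0(K_{R_1};Y)$ with sections dense in $Y$. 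This yields $w_0 \in \mathcal{W}$ with $\sup_{x \in K_{R_1}} \Vert f(x) - w_0(x)\Vert_Y$ arbitrarily small.

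To control the tail of $w_0$ outside $K_{R_1}$ I multiply by a scalar $\alpha \in \mathcal{A}$, staying inside $\mathcal{W}$ by the module property. The central observation is the estimate
\[
\Vert a w_0 \Vert_{\mathcal{B}_\psi(X;Y)} \leq \Vert a \Vert_{\mathcal{B}_{\psi_{w_0}}(X)} \qquad \text{for every } a \in \mathcal{B}_{\psi_{w_0}}(X),
\]
which follows from $\Vert a(x) w_0(x)\Vert_Y / \psi(x) \leq |a(x)| / \psi_{w_0}(x)$, noting that $\psi_{w_0}$ is admissible by Lemma \ref{LemmaTiltedWeights}. Since $X$ is $\sigma$-compact and completely regular Hausdorff, it is Lindelöf and hence normal, so Urysohn's lemma yields a continuous cutoff $\chi \in C^0_b(X;[0,1])$ with $\chi = 1$ on some $K_R$ ($R < R_1$) and $\chi = 0$ on $X \setminus K_{R_1}$. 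Splitting $f - \chi w_0 = (1-\chi) f + \chi(f - w_0)$ and using that $\Vert f\Vert_\infty/\psi(x) \leq \Vert f\Vert_\infty/R$ on $X \setminus K_R$ together with the approximation on $K_{R_1}$ from the previous step, one sees that $\Vert f - \chi w_0\Vert_{\mathcal{B}_\psi(X;Y)}$ is arbitrarily small by choosing $R,R_1$ large.

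The remaining and central task is to approximate $\chi \in C^0_b(X) \subseteq \mathcal{B}_{\psi_{w_0}}(X)$ by some $\alpha \in \mathcal{A}$ in the $\Vert \cdot \Vert_{\mathcal{B}_{\psi_{w_0}}(X)}$-norm; this is the main obstacle, because $\mathcal{A}$ is a priori only contained in $\mathcal{B}_\psi(X)$, whereas Theorem \ref{ThmStoneWeierstrassBpsiR} applied with weight $\psi_{w_0}$ requires membership in $\mathcal{B}_{\psi_{w_0}}(X)$. I resolve this by passing to the subalgebra $\mathcal{A} \cap \mathcal{B}_{\psi_{w_0}}(X) \subseteq \mathcal{A}$, which contains the point-separating subset $\widetilde{\mathcal{A}}_{w_0}$ provided by the $\psi_{w_0}$-moderate growth hypothesis (since $\exp(|\widetilde{a}|) \in \mathcal{B}_{\psi_{w_0}}(X)$ dominates all polynomial compositions via $|p(s)| \leq C_p \exp(|s|)$). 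Mimicking the trigonometric Taylor-approximation argument in the proof of Theorem \ref{ThmStoneWeierstrassBpsiR} with the weight $\psi_{w_0}$, one shows that the bounded trigonometric span $\mathcal{A}_{\mathrm{trig}}^{w_0} := \linspan(\{\cos \circ \widetilde{a},\, \sin \circ \widetilde{a}: \widetilde{a} \in \widetilde{\mathcal{A}}_{w_0}\})$ lies in the $\mathcal{B}_{\psi_{w_0}}$-closure of $\mathcal{A} \cap \mathcal{B}_{\psi_{w_0}}(X)$ and is itself dense in $\mathcal{B}_{\psi_{w_0}}(X)$, being a point-separating, nowhere-vanishing subalgebra of bounded functions to which the bounded case of Theorem \ref{ThmStoneWeierstrassBpsiR} applies. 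Combining yields $\alpha \in \mathcal{A}$ with $\Vert \chi - \alpha\Vert_{\mathcal{B}_{\psi_{w_0}}(X)}$ small, whence
\[
\Vert f - \alpha w_0\Vert_{\mathcal{B}_\psi(X;Y)} \leq \Vert f - \chi w_0\Vert_{\mathcal{B}_\psi(X;Y)} + \Vert \chi - \alpha\Vert_{\mathcal{B}_{\psi_{w_0}}(X)}
\]
is arbitrarily small with $\alpha w_0 \in \mathcal{W}$, establishing the density of $\mathcal{W}$ in $\mathcal{B}_\psi(X;Y)$.
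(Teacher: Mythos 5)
Your overall strategy mirrors the paper's: apply the classical vector-valued Stone--Weierstrass theorem (Theorem~\ref{ThmStoneWeierstrass}) on a compact sublevel set to obtain $w_0 \in \mathcal{W}$, then tame the tail of $w_0$ by multiplying with a bounded scalar cutoff, and finally approximate that cutoff by some $\alpha \in \mathcal{A}$ via Theorem~\ref{ThmStoneWeierstrassBpsiR} applied with the tilted weight $\psi_{w_0}$, using the key estimate $\Vert a\cdot w_0\Vert_{\mathcal{B}_\psi(X;Y)} \leq \Vert a\Vert_{\mathcal{B}_{\psi_{w_0}}(X)}$. This is exactly the paper's mechanism. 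However, your construction of the cutoff $\chi$ has a genuine gap.

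You invoke Urysohn's lemma to produce $\chi \in C^0_b(X;[0,1])$ with $\chi = 1$ on $K_R$ and $\chi = 0$ on $X\setminus K_{R_1}$. Normality of $X$ is fine, but Urysohn only applies to \emph{disjoint closed} sets; here one would need to separate $K_R$ from the closed set $X\setminus \operatorname{int}(K_{R_1})$, which requires $K_R \subseteq \operatorname{int}(K_{R_1})$. Since $\psi$ is merely lower semicontinuous, the sublevel sets $K_{R_1} = \psi^{-1}((0,R_1])$ are closed but may have empty interior. In the infinite-dimensional examples driving the theorem (a dual Banach space with the weak-$*$-topology and $\psi = \eta(\Vert\cdot\Vert_X)$ as in Example~\ref{ExWeightedSpaces}~\ref{ExWeightedSpaceDual}), every $K_{R_1}$ is norm-bounded and weak-$*$-compact, hence has empty weak-$*$-interior, so no such $\chi$ exists for any $R<R_1$. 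Even in $\mathbb{R}$ one can build admissible $\psi$ with a point $x_0$ where $\liminf_{x\to x_0}\psi(x)=\infty > \psi(x_0)$, so that $x_0\in K_R$ but $x_0\notin\operatorname{int}(K_{R_1})$ for every $R_1$. Without $\chi$ vanishing outside $K_{R_1}$, your estimate of $\chi(f-w_0)$ there breaks down, because $\Vert w_0(x)\Vert_Y/\psi(x)$ is only bounded on $X\setminus K_{R_1}$, not small.

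The paper's remedy is to build the cutoff from $\Vert w(x)\Vert_Y$ rather than from the topology of $X$: namely $g(x) = \frac{\min\left(1+c,\,1+\Vert w(x)\Vert_Y\right)}{1+\Vert w(x)\Vert_Y}$ with $c = \sup_{K_R}\Vert w\Vert_Y$. This $g$ equals $1$ on $K_R$, is never zero, and instead guarantees that $g\cdot w$ is \emph{globally bounded} by $1+c$, which is exactly what the tail estimate needs; moreover $g$ is bounded and continuous on each $K_R$ (via Lemma~\ref{LemmaBpsiEquivChar}), hence lies in the bounded part of $\mathcal{B}_\psi(X)$ and can be fed into the tilted-weight approximation. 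Replacing your Urysohn $\chi$ by this $g$ repairs the argument and reproduces the paper's proof. One further minor remark: you restrict attention to $\mathcal{A}\cap\mathcal{B}_{\psi_{w_0}}(X)$ out of concern that $\mathcal{A}$ might not sit inside $\mathcal{B}_{\psi_{w_0}}(X)$, but the $\mathcal{A}$-module property of $\mathcal{W}$ already forces $a\cdot w_0 \in \mathcal{B}_\psi(X;Y)$ for every $a\in\mathcal{A}$, and together with $a\in\mathcal{B}_\psi(X)$ and Lemma~\ref{LemmaBpsiEquivChar} this yields $\mathcal{A}\subseteq\mathcal{B}_{\psi_{w_0}}(X)$ outright, so the restriction is vacuous.
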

    \begin{proof}
    	Let $\mathcal{B}_b(X) \subseteq \mathcal{B}_\psi(X)$ be the vector subspace of bounded maps in $\mathcal{B}_\psi(X)$. Then, we first show that the closure of $\mathcal{W}$ with respect to $\Vert \cdot \Vert_{\mathcal{B}_\psi(X;Y)}$ is a $\mathcal{B}_b(X)$-submodule. For this purpose, we fix some $g \in \mathcal{B}_b(X)$ and assume that $\overline{w}: X \rightarrow Y$ belongs to the closure of $\mathcal{W}$ with respect to $\Vert \cdot \Vert_{\mathcal{B}_\psi(X;Y)}$. Moreover, we fix some $\varepsilon > 0$ and define $c = \sup_{x \in X} \vert g(x) \vert < \infty$. Then, there exists some $w \in \mathcal{W}$ with $\Vert \overline{w} - w \Vert_{\mathcal{B}_\psi(X;Y)} < \varepsilon/(1+2c)$, which implies that
    	\begin{equation}
    	    \label{EqThmStoneWeierstrassBpsiProof1}
            \Vert g \cdot \overline{w} - g \cdot w \Vert_{\mathcal{B}_\psi(X;Y)} = \sup_{x \in X} \frac{\vert g(x) \vert \Vert \overline{w}(x)-w(x) \Vert_Y}{\psi(x)} \leq c \sup_{x \in X} \Vert \overline{w} - w \Vert_{\mathcal{B}_\psi(X;Y)} < \frac{\varepsilon}{2}.
    	\end{equation}
    	Now, we observe that $\psi_w: X \rightarrow (0,\infty)$ is by Lemma~\ref{LemmaTiltedWeights} admissible and claim that $\mathcal{A} \subseteq \mathcal{B}_{\psi_w}(X)$. To see this, we fix some $a \in \mathcal{A}$ and recall that $K_{w,R} := \psi_w^{-1}((0,R])$. Then, by using that $X \setminus K_{w,R} \subseteq X \setminus K_R$ for any $R > 0$ (as $\psi_w(x) \leq \psi(x)$ for any $x \in X$, and thus $K_R \subseteq K_{w,R}$ for any $R > 0$) and Lemma~\ref{LemmaBpsiEquivChar}~\ref{LemmaBpsiEquivChar1} (with $a \in \mathcal{B}_\psi(X)$ and $a \cdot w \in \mathcal{W} \subseteq \mathcal{B}_\psi(X;Y)$ as $\mathcal{W}$ is an $\mathcal{A}$-submodule of $\mathcal{B}_\psi(X;Y)$), it follows that
        \begin{equation*}
            \begin{aligned}
                \lim_{R \rightarrow \infty} \sup_{x \in X \setminus K_{w,R}} \frac{\vert a(x) \vert}{\psi_w(x)} & \leq \lim_{R \rightarrow \infty} \sup_{x \in X \setminus K_R} \frac{\vert a(x) \vert (1 + \Vert w(x) \Vert_Y)}{\psi(x)} \\
                & = \lim_{R \rightarrow \infty} \sup_{x \in X \setminus K_R} \frac{\vert a(x) \vert}{\psi(x)} + \lim_{R \rightarrow \infty} \sup_{x \in X \setminus K_R} \frac{\Vert a(x) w(x) \Vert_Y}{\psi(x)} = 0.
            \end{aligned}
        \end{equation*}
    	In addition, by using that for every fixed $R > 0$ there exists a constant $C > 0$ (depending on $R > 0$ and $w \in \mathcal{W}$) such that $K_{w,R} \subseteq K_C$ (see the proof of Lemma~\ref{LemmaTiltedWeights}), we conclude from $a\vert_{K_C} \in C^0(K_C)$ that $a\vert_{K_{w,R}} \in C^0(K_{w,R})$. Hence, Lemma~\ref{LemmaBpsiEquivChar}~\ref{LemmaBpsiEquivChar2} implies that $a \in \mathcal{B}_{\psi_w}(X)$, which completes the proof of the inclusion $\mathcal{A} \subseteq \mathcal{B}_{\psi_w}(X)$. Thus, by applying the weighted real-valued Stone-Weierstrass theorem (Theorem~\ref{ThmStoneWeierstrassBpsiR}) on $\mathcal{B}_{\psi_w}(X)$, there exists some $a \in \mathcal{A}$ with $\Vert g-a \Vert_{\mathcal{B}_{\psi_w}(X)} < \varepsilon/2$, which implies that
    	\begin{equation}
        	\label{EqThmStoneWeierstrassBpsiProof2}
        	\begin{aligned}
        		\Vert g \cdot w - a \cdot w \Vert_{\mathcal{B}_\psi(X;Y)} & = \sup_{x \in X} \frac{\Vert g(x) w(x) - a(x) w(x) \Vert_Y}{\psi(x)} \leq \sup_{x \in X} \frac{\vert g(x) - a(x) \vert (1 + \Vert w(x) \Vert_Y)}{\psi(x)} \\
        		& = \Vert g-a \Vert_{\mathcal{B}_{\psi_w}(X)} < \frac{\varepsilon}{2}.
        	\end{aligned}
    	\end{equation}
    	By combining \eqref{EqThmStoneWeierstrassBpsiProof1} with \eqref{EqThmStoneWeierstrassBpsiProof2}, we conclude for $a \cdot w \in \mathcal{W}$ (as $\mathcal{W}$ is an $\mathcal{A}$-submodule) that
        \begin{equation*}
            \Vert g \cdot \overline{w} - a \cdot w \Vert_{\mathcal{B}_\psi(X;Y)} \leq \Vert g \cdot \overline{w} - g \cdot w \Vert_{\mathcal{B}_\psi(X;Y)} + \Vert g \cdot w - a \cdot w \Vert_{\mathcal{B}_\psi(X;Y)} < \frac{\varepsilon}{2} + \frac{\varepsilon}{2} = \varepsilon.
        \end{equation*}
        Since $\varepsilon > 0$ was chosen arbitrarily, this shows that $g \cdot \overline{w}: X \rightarrow Y$ belongs to the closure of $\mathcal{W}$ with respect to $\Vert \cdot \Vert_{\mathcal{B}_\psi(X;Y)}$. Hence, by using that $g \in \mathcal{B}_b(X)$ and $\overline{w}: X \rightarrow Y$ (belonging to the closure of $\mathcal{W}$ with respect $\Vert \cdot \Vert_{\mathcal{B}_\psi(X;Y)}$) were also chosen arbitrarily, we have shown that the closure of $\mathcal{W}$ with respect to $\Vert \cdot \Vert_{\mathcal{B}_\psi(X;Y)}$ is a $\mathcal{B}_b(X)$-submodule.
    	
    	Finally, by using that $\mathcal{B}_\psi(X;Y)$ is defined as the closure of $C^0_b(X;Y)$ with respect to $\Vert \cdot \Vert_{\mathcal{B}_\psi(X;Y)}$, it suffices to show the approximation of any $f \in C^0_b(X;Y)$ by an element $w \in \mathcal{W}$. Let $f \in C^0_b(X;Y)$, fix some $\varepsilon > 0$, define $b = \sup_{x \in X} \Vert f(x) \Vert_Y$ as well as $M = \max\left(1, \left( \inf_{x \in X} \psi(x) \right)^{-1}\right)$ and choose $R > \max(4(b+1)/\varepsilon,1)$. Then, by using that the restricted subalgebra $\mathcal{A}\vert_{K_R} \subseteq C^0(K_R)$ and the restricted $\mathcal{A}\vert_{K_R}$-submodule $\mathcal{W}\vert_{K_R} \subseteq C^0(K_R;Y)$ satisfy the assumptions of the classical vector-valued Stone-Weierstrass theorem (Theorem~\ref{ThmStoneWeierstrass}), we can apply Theorem~\ref{ThmStoneWeierstrass} to conclude that there exists some $w \in \mathcal{W}$ such that
    	\begin{equation}
        	\label{EqThmStoneWeierstrassBpsiProof3}
        	\Vert f - w \Vert_{C^0(K_R;Y)} = \sup_{x \in K_R} \Vert f(x) - w(x) \Vert_Y < \frac{\varepsilon}{4M}.
    	\end{equation}
    	Now, for $c = \sup_{x \in K_R} \Vert w(x) \Vert_Y$, \eqref{EqThmStoneWeierstrassBpsiProof3} implies $c \leq \varepsilon/(4M) + b$. Moreover, we define $X \ni x \mapsto g(x) := \frac{\min(1+c,1+\Vert w(x) \Vert_Y)}{1+ \Vert w(x) \Vert_Y} \in \mathbb{R}$ satisfying $g(x) = 1$, for all $x \in K_R$. Then, by using Lemma~\ref{LemmaBpsiEquivChar}~\ref{LemmaBpsiEquivChar1}, we conclude that $\Vert w \Vert_Y \big\vert_{K_{\widetilde{R}}} \in C^0(K_{\widetilde{R}})$ and thus $g\vert_{K_{\widetilde{R}}} \in C^0(K_{\widetilde{R}})$, for all $\widetilde{R} > 0$. Hence, by using that $g: X \rightarrow \mathbb{R}$ is bounded together with Lemma~\ref{LemmaBpsiEquivChar}~\ref{LemmaBpsiEquivChar2}, we have $g \in \mathcal{B}_b(X)$. In addition, $g \cdot w: X \rightarrow Y$ is bounded with $\sup_{x \in X} \Vert g(x) w(x) \Vert_Y \leq 1+c \leq 1 + b + \varepsilon/(4M)$. Hence, by using \eqref{EqThmStoneWeierstrassBpsiProof3} and $M, R \geq 1$, we conclude for $g \cdot w: X \rightarrow Y$ (belonging by the previous step to the closure of $\mathcal{W}$ with respect to $\Vert \cdot \Vert_{\mathcal{B}_\psi(X;Y)}$) that
    	\begin{equation*}
        	\begin{aligned}
            	\Vert f - g \cdot w \Vert_{\mathcal{B}_\psi(X;Y)} & \leq M \sup_{x \in K_R} \Vert f(x) - w(x) \Vert_Y + \sup_{x \in X \setminus K_R} \frac{\Vert f(x) \Vert_Y}{\psi(x)} + \sup_{x \in X \setminus K_R} \frac{\Vert g(x) w(x) \Vert_Y}{\psi(x)} \\
            	& < M \frac{\varepsilon}{4M} + \frac{\sup_{x \in X} \Vert f(x) \Vert_Y}{R} + \frac{\sup_{x \in X} \Vert g(x) w(x) \Vert_Y}{R} \\
            	& \leq \frac{\varepsilon}{4} + \frac{b}{R} + \frac{1+b+\varepsilon/(4M)}{R} < \varepsilon.
        	\end{aligned}
    	\end{equation*}
    	Since $\varepsilon > 0$ and $f \in C^0_b(X;Y)$ were chosen arbitrarily, $\mathcal{W}$ is dense in $\mathcal{B}_\psi(X;Y)$.
    \end{proof}
    
    \begin{remark}
        \label{RemBanachOutput2}
        Theorem~\ref{ThmStoneWeierstrassBpsi} can be generalized to the setting with a locally convex topological vector space $(Y,\tau_Y)$ as output space (see \cite{schaefer99} and Remark~\ref{RemBanachOutput}).
    \end{remark}
    
    \begin{remark}
    	\label{RemStoneWstrassLocPTSep}
		To ensure that $\mathcal{A} \subseteq \mathcal{B}_\psi(X)$ is point separating and nowhere vanishing of $\psi_w$-moderate growth in Theorem~\ref{ThmStoneWeierstrassBpsi}, for all $w \in \mathcal{W}$, it is enough to assume that $\mathcal{A} \subseteq \mathcal{B}_\psi(X)$ is point separating, nowhere vanishing, and consists only of bounded functions (see Remark~\ref{RemStoneWstrassBdedFct}) or that $\mathcal{A} \subseteq \mathcal{B}_{\sqrt{\psi}}(X)$ is point separating and nowhere vanishing of $\sqrt{\psi}$-moderate growth and $\mathcal{W} \subseteq \mathcal{B}_{\sqrt{\psi}}(X;Y)$.
    \end{remark}
    
    The weighted vector-valued Stone-Weierstrass theorem (Theorem~\ref{ThmStoneWeierstrassBpsi}) extends Nachbin's weighted approximation result in \cite[Theorem~6]{nachbin65} to functions that are globally not continuous, however \cite[Theorem~6]{nachbin65} allows for more general topologies by using several weights of the form $v := 1/\psi$ (in their notation). Moreover, Jo\~{a}o B.~Prolla extended in \cite{prolla71} the measure theoretic version of Erret Bishop in \cite{bishop61} to weighted spaces, see also \cite{chen19}.
	
	\section{Universal approximation on weighted spaces}
	\label{SecUATWS}
	
	We now consider the \emph{global} universal approximation property of neural networks between infinite dimensional spaces, where we apply the weighted Stone-Weierstrass theorem. As above and in contrast to the classical literature (see \cite{mcculloch43,cybenko89,hornik91}), we assume that the input space $(X,\psi)$ and the hidden layer space $(\mathbb{R},\psi_1)$ are both weighted spaces, whereas the output space is a Banach space $(Y,\Vert \cdot \Vert_Y)$. In this setting, we call neural networks between these infinite dimensional spaces functional input neural networks (FNNs).
	
	\subsection{Functional input neural networks}
	
	First, we introduce the infinite dimensional analogue of affine maps, which are applied in classical neural networks on the hidden layer.
	
	\begin{definition}
		\label{DefAddFam}
		A subset $\mathcal{H} \subseteq \mathcal{B}_\psi(X)$ is called an \emph{additive family (on $X$)} if
		\begin{enumerate}
			\item[\labeltext{(H1)}{H1}] $\mathcal{H}$ is closed under addition, i.e.~for every $h_1, h_2 \in \mathcal{H}$ it holds that $h_1 + h_2 \in \mathcal{H}$,
			\item[\labeltext{(H2)}{H2}] $\mathcal{H}$ is point separating, i.e.~for distinct $x_1, x_2 \in X$ there is $h \in \mathcal{H}$ with $h(x_1) \neq h(x_2)$.
			\item[\labeltext{(H3)}{H3}] $\mathcal{H}$ contains the constant functions, i.e.~for every $b \in \mathbb{R}$ we have $\left( x \mapsto b \right) \in \mathcal{H}$.
		\end{enumerate}
	\end{definition}
	
	\begin{example}
		\label{ExAddFamFinDim}
		For $(X,\psi)$ with the Euclidean space $X = \mathbb{R}^m$, an additive family is given by $\mathcal{H} = \left\lbrace x \mapsto a^\top x + b: a \in \mathbb{R}^m, \, b \in \mathbb{R} \right\rbrace$, which shows the analogy to linear maps and bias vectors, whereas $\mathcal{H} = \left\lbrace x \mapsto a^\top x + b: a \in \mathbb{N}_0^m, \, b \in \mathbb{R} \right\rbrace$ is a smaller additive family.
	\end{example}
	
	\begin{definition}
	    \label{DefAct}
		For a given admissible weight function $\psi_1: \mathbb{R} \rightarrow (0,\infty)$, a function $\rho \in C^0(\mathbb{R})$ is called \emph{$\psi_1$-activating} if the set of neural networks
		\begin{equation*}
			\mathcal{NN}^\rho_{\mathbb{R},\mathbb{R}} = \left\lbrace \mathbb{R} \ni z \mapsto \sum_{n=1}^N w_n \rho\left( a_n z + b_n \right) \in \mathbb{R}: N \in \mathbb{N}, \, a_n \in \mathbb{N}_0, \, w_n, b_n \in \mathbb{R} \right\rbrace
		\end{equation*}
		is a dense subset of $\mathcal{B}_{\psi_1}(\mathbb{R})$.
	\end{definition}
	
	In the next proposition we derive sufficient conditions for a function $\rho \in C^0(\mathbb{R})$ to be $\psi_1$-activating. The proof can be found in Appendix \ref{sec:proof}.
	
	\begin{proposition}
	    \label{PropAct}
	    Let $\psi_1: \mathbb{R} \rightarrow (0,\infty)$ be an admissible weight function and assume that $\rho \in C^0(\mathbb{R})$ satisfies $\lim_{z \rightarrow \pm \infty} \frac{\vert \rho(az+b) \vert}{\psi_1(z)} = 0$, for all $a \in \mathbb{N}_0$ and $b \in \mathbb{R}$. Then, $\rho \in C^0(\mathbb{R})$ is $\psi_1$-activating if (at least) one of the following sufficient conditions holds true:
	    \begin{enumerate}
	        \item[\labeltext{(A1)}{A1}] $\rho \in C^0(\mathbb{R})$ is $\psi_1$-discriminatory, i.e.~for every signed Radon measure $\mu \in \mathcal{M}_{\psi_1}(\mathbb{R})$ satisfying $\int_{\mathbb{R}} \rho(a z + b) \mu(dz) = 0$, for all $a \in \mathbb{N}_0$ and $b \in \mathbb{R}$, it follows that $\mu = 0$.
	        \item[\labeltext{(A2)}{A2}] $\rho \in C^0(\mathbb{R})$ is sigmoidal, i.e.~$\lim_{z \rightarrow -\infty} \rho(z) = 0$ and $\lim_{z \rightarrow \infty} \rho(z) = 1$.
	        \item[\labeltext{(A3)}{A3}] $\rho \in C^0(\mathbb{R})$ induces the tempered distribution $\left( g \mapsto T_\rho(g) := \int_{\mathbb{R}} \rho(s) g(s) ds \right) \in \mathcal{S}'(\mathbb{R};\mathbb{C})$ whose Fourier transform $\widehat{T_\rho} \in \mathcal{S}'(\mathbb{R};\mathbb{C})$ has a support with $0 \in \mathbb{R}$ as inner point\footnote{We refer to Section \ref{SecNotation} for the definition of $\widehat{T_\rho} \in \mathcal{S}'(\mathbb{R};\mathbb{C})$ and its support.}, and assume additionally that $\lim_{(z,b) \rightarrow (\pm \infty,b_0)} \frac{\vert \rho(az+b) \vert}{\psi_1(z)} = 0$, for all $a \in \mathbb{N}_0$ and $b_0 \in \mathbb{R}$.
	    \end{enumerate}
	\end{proposition}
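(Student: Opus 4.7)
The plan is to convert density of $\mathcal{NN}^\rho_{\mathbb{R},\mathbb{R}}$ in $\mathcal{B}_{\psi_1}(\mathbb{R})$ into an annihilation statement about signed measures via Hahn--Banach duality, and then to verify that annihilation separately under each of the three conditions. The hypothesis $\lim_{z\to\pm\infty}|\rho(az+b)|/\psi_1(z)=0$ combined with Lemma~\ref{LemmaBpsiEquivChar}~\ref{LemmaBpsiEquivChar2} ensures $\mathcal{NN}^\rho_{\mathbb{R},\mathbb{R}}\subseteq\mathcal{B}_{\psi_1}(\mathbb{R})$. Via the Riesz-type duality $\mathcal{B}_{\psi_1}(\mathbb{R})^{*}\cong\mathcal{M}_{\psi_1}(\mathbb{R})$ recalled in Example~\ref{ExWeightedSpaces}~\ref{ExWeightedSpaceMeasures}, Hahn--Banach reduces the claim to
\[
    \mu\in\mathcal{M}_{\psi_1}(\mathbb{R}),\quad \int_{\mathbb{R}}\rho(az+b)\,\mu(dz)=0\ \forall a\in\mathbb{N}_0,\,b\in\mathbb{R}\ \Longrightarrow\ \mu=0. \qquad (\star)
\]
Since $\psi_1$ is bounded below by a positive constant (Remark~\ref{RemWeight}~\ref{RemWeight1}), any such $\mu$ is automatically a finite signed Radon measure, which opens the door to standard measure- and Fourier-analytic tools.

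Condition \ref{A1} is literally $(\star)$, so there is nothing to prove. For \ref{A2} I would follow the classical Cybenko-style argument: for each $y\in\mathbb{R}$ and $a\in\mathbb{N}$, the function $z\mapsto\rho(az-ay)$ lies in $\mathcal{NN}^\rho_{\mathbb{R},\mathbb{R}}$ and converges pointwise as $a\to\infty$ to $\mathbf{1}_{(y,\infty)}(z)+\rho(0)\mathbf{1}_{\{y\}}(z)$, uniformly bounded by $\|\rho\|_\infty$. Dominated convergence (valid since $|\mu|$ is finite) then yields $\mu((y,\infty))+\rho(0)\mu(\{y\})=0$ for every $y\in\mathbb{R}$; since $\mu$ has at most countably many atoms, $\mu((y,\infty))=0$ on a dense subset of $\mathbb{R}$, and right-continuity of $y\mapsto\mu((y,\infty))$ forces $\mu=0$.

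For \ref{A3} the proof is Fourier-analytic. Setting $f_a(b):=\int_{\mathbb{R}}\rho(az+b)\,\mu(dz)$, a change of variables identifies $f_a$ with the convolution $T_\rho\ast\check{\mu}_a$ in $\mathcal{S}'(\mathbb{R};\mathbb{C})$, where $\mu_a$ denotes the pushforward of $\mu$ under $z\mapsto az$ and $\check{\mu}_a(E):=\mu_a(-E)$. Taking Fourier transforms yields the factorization $\widehat{f_a}=\widehat{T_\rho}\cdot g_a$ with $g_a(\xi)=\widehat{\mu}(-a\xi)$ continuous and bounded. From $f_a\equiv 0$ I would conclude that $\widehat{T_\rho}$ must vanish on the open set $\{g_a\neq 0\}$, i.e.~$\supp(\widehat{T_\rho})\cap\{g_a\neq 0\}=\emptyset$. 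Picking $\varepsilon>0$ with $(-\varepsilon,\varepsilon)\subseteq\supp(\widehat{T_\rho})$ from the interior-point hypothesis forces $g_a\equiv 0$ on $(-\varepsilon,\varepsilon)$, equivalently $\widehat{\mu}\equiv 0$ on $(-a\varepsilon,a\varepsilon)$. Letting $a\to\infty$ through $\mathbb{N}$ yields $\widehat{\mu}\equiv 0$ on $\mathbb{R}$, hence $\mu=0$. The main technical obstacle is making the distributional product $\widehat{T_\rho}\cdot g_a$ rigorous: since $g_a$ is only continuous and bounded, it is not a priori an admissible multiplier in $\mathcal{S}'(\mathbb{R};\mathbb{C})$, and the same issue dogs the convolution $T_\rho\ast\check{\mu}_a$. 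A natural remedy is to first regularize $\rho$ by convolution with a Schwartz mollifier $\phi_\varepsilon$, noting that $\int\rho_\varepsilon(az+b)\,\mu(dz)=0$ still holds by Fubini, that $\rho_\varepsilon$ is smooth with polynomial growth, and that the interior-point property of $\supp(\widehat{T_\rho})$ persists for $\widehat{T_{\rho_\varepsilon}}=\widehat{T_\rho}\cdot\widehat{\phi_\varepsilon}$ on a possibly smaller but still neighborhood-of-zero set; then the support/multiplier arithmetic can be executed cleanly for the smoothed problem before passing to the limit $\varepsilon\downarrow 0$ using $\int\psi_1\,d|\mu|<\infty$.
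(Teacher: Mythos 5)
Your reduction to the measure-theoretic annihilation claim $(\star)$ via Hahn--Banach and the Riesz representation for $\mathcal{B}_{\psi_1}(\mathbb{R})^*\cong\mathcal{M}_{\psi_1}(\mathbb{R})$ is exactly the paper's setup, and your handling of \ref{A1} and \ref{A2} is correct. For \ref{A2} you take a slightly different route than the paper: where the paper introduces the free offset $\xi$ via $b=-\lambda\theta+\xi$ and then sends $\xi\to-\infty$ so that $\rho(\xi)\to 0$ kills the atom contribution $\rho(\xi)\mu(\{\theta\})$, you instead observe that $\mu$ has only countably many atoms and use right-continuity of $y\mapsto\mu((y,\infty))$. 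Both are valid; the paper's variant avoids any appeal to the structure of the distribution function.

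For \ref{A3} your overall strategy -- read the annihilation as a convolution identity, pass to Fourier transforms, and use the interior-point hypothesis to force $\widehat{\mu}$ to vanish on intervals that expand to all of $\mathbb{R}$ as $a\to\infty$ -- is the same idea as the paper's. But the proposed fix for the distributional-product obstacle does not close the gap. Mollifying $\rho$ replaces $\widehat{T_\rho}$ by $\widehat{T_\rho}\widehat{\phi_\varepsilon}$ while leaving $g_a=\widehat{\check\mu_a}$ untouched; $g_a$ is still only bounded and continuous, so neither the product $\widehat{T_{\rho_\varepsilon}}\cdot g_a$ nor the division step on $\{g_a\neq 0\}$ becomes legitimate -- both require $g_a$ to be a $C^\infty$-multiplier of $\mathcal{S}'(\mathbb{R};\mathbb{C})$, and convolving $\rho$ with a Schwartz mollifier cannot furnish that. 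The object that must be regularized is $\mu$, not $\rho$, and regularization alone is not enough: one also needs a Wiener-algebra inversion. The paper's route is to first form the nonzero $L^1$-function $f=(h*\mu)(-\cdot)$ for a suitable $h\in C^\infty_c(\mathbb{R};\mathbb{C})$, pick $\zeta$ with $\widehat f(\zeta)\neq 0$ and modulate to $f_\zeta$ so that $\widehat{f_\zeta}(0)\neq 0$, and then invoke the Koorevar/Wiener--Tauberian argument to obtain $w\in L^1(\mathbb{R};\mathbb{C})$ and $n\in\mathbb{N}$ with $w*f_\zeta=\phi_{2n}$, where $\phi_{2n}$ is a Schwartz function whose Fourier transform is compactly supported and identically $1$ near $0$. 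It is this construction that makes the Fourier transform of $0=w*f_\zeta*\rho_{a,\zeta}=\phi_{2n}*T_{\rho_{a,\zeta}}$ rigorous: $\widehat{\phi_{2n}}$ is a compactly supported smooth multiplier, so the resulting identity $\widehat{\phi_{2n}}\,\widehat{T_{\rho_{a,\zeta}}}=0$ is well-defined in $\mathcal{S}'$ and forces $\widehat{T_{\rho_{a,\zeta}}}$ to vanish on $(-\tfrac{1}{2n},\tfrac{1}{2n})$, i.e.~$\widehat{T_\rho}$ to vanish on $(\tfrac{\zeta-1/(2n)}{a},\tfrac{\zeta+1/(2n)}{a})$ for every $a\in\mathbb{N}$, contradicting the interior-point hypothesis as $a\to\infty$. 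This inversion step is the key ingredient absent from your argument; without it, the factorization $\widehat{f_a}=\widehat{T_\rho}\cdot g_a$ is not available in any sense that supports the support-arithmetic you want to perform.
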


    \begin{remark}
    	Let us point out the following remarks concerning Proposition~\ref{PropAct}:
    	\begin{enumerate}
    		\item The condition $\lim_{z \rightarrow \pm \infty} \frac{\vert \rho(az+b) \vert}{\psi_1(z)} = 0$ is satisfied if, e.g.,~$\rho \in C^0(\mathbb{R})$ is bounded.
    		\item For \ref{A1} and \ref{A2}, we adapt the original UATs of George Cybenko in \cite{cybenko89} to this weighted setting. For \ref{A3}, we generalize Norbert Wiener's Tauberian theorem, i.e.~that $\linspan\left\lbrace \mathbb{R} \ni z \mapsto \rho(z+b) \in \mathbb{R}: b \in \mathbb{R} \right\rbrace$ is dense in $L^1(\mathbb{R})$ if and only if the Fourier transform $\widehat{\rho}$ (in the classical sense) does not have any zeros (see \cite{wiener32}). To this end we follow the distributional extension in \cite{korevaar65}.
            \item Since $\mathcal{M}_{\psi_1}(\mathbb{R})$ is a subset of the set of all finite signed Radon measures, the condition~\ref{A1} is actually less restrictive than the usual ``discriminatory'' property (up to the assertion that $a \in \mathbb{N}_0$ instead of $a \in \mathbb{R}$).
    		\item  With respect to related results in the literature on neural networks (on compact domains and) with parameters $a_1,\ldots,a_N \in \mathbb{R}$ instead of $\mathbb{N}_0$, note that the condition $\rho \in C^0(\mathbb{R})$ being non-polynomial (see, e.g., \cite{leshno93,chen95}) is no longer sufficient. Indeed, if $\rho \in C^0(\mathbb{R})$ is non-polynomial, the support of $\widehat{T_\rho} \in \mathcal{S}'(\mathbb{R};\mathbb{C})$ contains by \cite[Examples~7.16]{rudin91} a non-zero point, which can be reached by continuously shifting the parameter $a_n \in \mathbb{R}$. However, this is no longer possible with $a_n \in \mathbb{N}_0$.
    		\item An example satisfying \ref{A3} is the ReLU function $\rho(z) = \max(z,0)$. Indeed, its Fourier transform $\widehat{T_\rho} \in \mathcal{S}'(\mathbb{R};\mathbb{C})$ is given by $\widehat{T}_\rho(g) = i \pi g'(0) - \text{p.v.} \int_{\mathbb{R}} \frac{g'(s)-g'(0)}{s} ds$ for all $g \in \mathcal{S}(\mathbb{R};\mathbb{C})$ (see \cite[Exercise~9.4.11 \& Equation~9.19]{folland92}), where ``p.v.'' denotes the principal value of the integral $\int_{\mathbb{R}} \frac{g'(s)-g'(0)}{s} ds$ (see \cite[p.~324]{folland92}).
    	\end{enumerate}
    \end{remark}
	
	\begin{definition}
		\label{DefANN}
		For a given additive family $\mathcal{H} \subseteq \mathcal{B}_\psi(X)$, a function $\rho \in C^0(\mathbb{R})$, and a vector subspace $\mathcal{L} \subseteq Y$, we define a \emph{functional input neural network (FNN)} $\varphi: X \rightarrow Y$ as
		\begin{equation}
			\label{EqDefANN1}
			\varphi(x) = \sum_{n=1}^N y_n \rho(h_n(x)),
		\end{equation}
		for $x \in X$, where $N \in \mathbb{N}$ denotes the \emph{number of neurons}, where $h_1,\ldots,h_N \in \mathcal{H}$ are the \emph{hidden layer maps}, and where $y_1,\ldots,y_N \in \mathcal{L}$ represent the \emph{linear readouts}. Moreover, we denote by $\mathcal{NN}^{\mathcal{H},\rho,\mathcal{L}}_{X,Y}$ the set of FNNs of the form \eqref{EqDefANN1}.
	\end{definition}
	
	\begin{figure}[ht]
		\centering
		\begin{tikzpicture}[
			inputnode/.style={circle, draw=green!60, fill=green!5, very thick, minimum size=5mm},
			hiddennode/.style={circle, draw=blue!60, fill=blue!5, very thick, minimum size=5mm},
			outputnode/.style={circle, draw=red!60, fill=red!5, very thick, minimum size=5mm},
			node distance=7mm,
			]
			\node[inputnode] (x1) {};
			\node[hiddennode] (y2) [right = 2cm of x1] {};
			\node[hiddennode] (y1) [above of = y2] {};
			\node[hiddennode] (y3) [below of = y2] {};
			\node[outputnode] (o1) [right = 2cm of y2] {};
			
			\draw[shorten >=0.1cm,shorten <=0.1cm, ->] (x1.east) -- (y1.west);	
			\draw[shorten >=0.1cm,shorten <=0.1cm, ->] (x1.east) -- (y2.west);
			\draw[shorten >=0.1cm,shorten <=0.1cm, ->] (x1.east) -- (y3.west);
			\draw[shorten >=0.1cm,shorten <=0.1cm, ->] (y1.east) -- (o1.west);
			\draw[shorten >=0.1cm,shorten <=0.1cm, ->] (y2.east) -- (o1.west);
			\draw[shorten >=0.1cm,shorten <=0.1cm, ->] (y3.east) -- (o1.west);
			\draw[shorten >=0.2cm,shorten <=0.2cm, ->] (y3) to[out=-120,in=-60,loop] ();
			
			\draw[] (0,1.0) node[anchor=center, align=center] {\footnotesize Input Layer};
			\draw[] (0,0.7) node[anchor=center, align=center] {\footnotesize $(X,\psi)$};
			\draw[] (2.53,1.6) node[anchor=center, align=center] {\footnotesize Hidden Layer};
			\draw[] (2.53,1.3) node[anchor=center, align=center] {\footnotesize $(\mathbb{R},\psi_1)$};
			\draw[] (5.1,1.0) node[anchor=center, align=center] {\footnotesize Output Layer};
			\draw[] (5.1,0.7) node[anchor=center, align=center] {\footnotesize $(Y,\Vert \cdot \Vert_Y)$};
			\draw[] (-0.9,0) node[anchor=center, align=center] {\footnotesize $x \in X$};
			\draw[] (6.2,0) node[anchor=center, align=center] {\footnotesize $\varphi(x) \in Y$};
			\draw[] (1.2,-0.8) node[anchor=center, align=center] {\footnotesize $\mathcal{H}$};
			\draw[] (3.8,-0.8) node[anchor=center, align=center] {\footnotesize $\mathcal{L}$};
			\draw[] (2.53,-1.26) node[anchor=center, align=center] {\footnotesize $\rho$};
		\end{tikzpicture}
		\caption{A functional input neural network $\varphi: X \rightarrow Y$ with additive family $\mathcal{H}$, activation function $\rho \in C^0(\mathbb{R})$, linear readout $\mathcal{L} \subseteq Y$, and $N = 3$ number of neurons.}
		\label{FigANN}
	\end{figure}
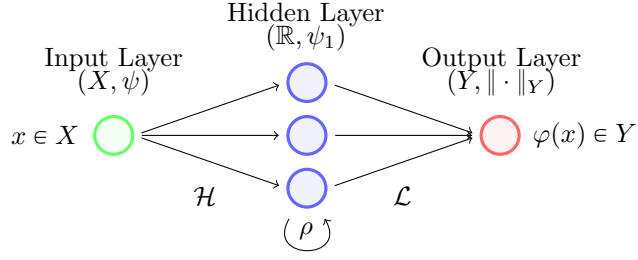
	
	\begin{remark}
	    Definition~\ref{DefANN} extends the notion of classical shallow neural networks between Euclidean spaces. Indeed, let $\varphi: \mathbb{R}^m \rightarrow \mathbb{R}^d$ be a classical neural network of the form
	    \begin{equation*}
		    \varphi(x) = W \rho(Ax + b) = \sum_{n=1}^N y_n \rho\left( a_n^\top x + b_n \right),
		\end{equation*}
	    for some $W \in \mathbb{R}^{d \times N}$, $A \in \mathbb{R}^{N \times m}$, and $b = (b_1,\ldots,b_N)^\top \in \mathbb{R}^N$, where $y_1,\ldots,y_N \in \mathbb{R}^d$ denote the columns of $W \in \mathbb{R}^{d \times N}$, and $a_1,\ldots,a_N \in \mathbb{R}^m$ are the rows of $A \in \mathbb{R}^{N \times m}$. 
	    Note that by a slight abuse of notation $\rho$ is applied componentwise to $Ax +b$ in the first equality. If we choose $\mathcal{H}$ as in Example~\ref{ExAddFamFinDim} and $\mathcal{L} = \mathbb{R}^d$, we can conclude that $\varphi \in \mathcal{NN}^{\mathcal{H},\rho,\mathcal{L}}_{\mathbb{R}^m,\mathbb{R}^d}$. 
	\end{remark}
	
	\begin{remark}
		\label{RemDeepANN}
		For two weighted spaces $(\mathbb{R},\psi_1)$ and $(\mathbb{R},\psi_2)$, let $\mathcal{H} \subseteq \mathcal{B}_\psi(X)$ be an additive family on $X$ and let $\rho_1, \rho_2 \in C^0(\mathbb{R})$ be $\psi_1$-activating and $\psi_2$-activating, respectively. Then, $\mathcal{NN}^{\mathcal{H},\rho_1,\mathbb{R}}_{X,\mathbb{R}}$ satisfies \ref{H1}-\ref{H3} and is thus (up to the requirement $\mathcal{NN}^{\mathcal{H},\rho_1,\mathbb{R}}_{X,\mathbb{R}} \subseteq \mathcal{B}_\psi(X)$) an additive family on $X$. Hence, a functional input neural network with two hidden layers $\varphi: X \rightarrow Y$ is of the form
		\begin{equation*}
			\varphi(x) = \sum_{n_2=1}^{N_2} y_{n_2} \rho_2 \left( \varphi_{n_2}(x) \right) = \sum_{n_2=1}^{N_2} y_{n_2} \rho_2 \left( \sum_{n_1=1}^{N_1} w_{n_2, n_1} \rho_1(h_{n_2, n_1}(x)) \right),
		\end{equation*}
		for $x \in X$, where $y_{n_2} \in Y$ and $\big( x \mapsto \varphi_{n_2}(x) = \sum_{n_1=1}^{N_1} w_{n_2, n_1} \rho_1(h_{n_2, n_1}(x)) \big) \in \mathcal{NN}^{\mathcal{H},\rho_1,\mathbb{R}}_{X,\mathbb{R}}$, with $w_{n_2, n_1} \in \mathbb{R}$ and $h_{n_2, n_1} \in \mathcal{H}$, see Figure~\ref{FigANN2}. By analogous concatenation, it is possible to construct deep functional input neural networks with finitely many hidden layers.
	\end{remark}
	
	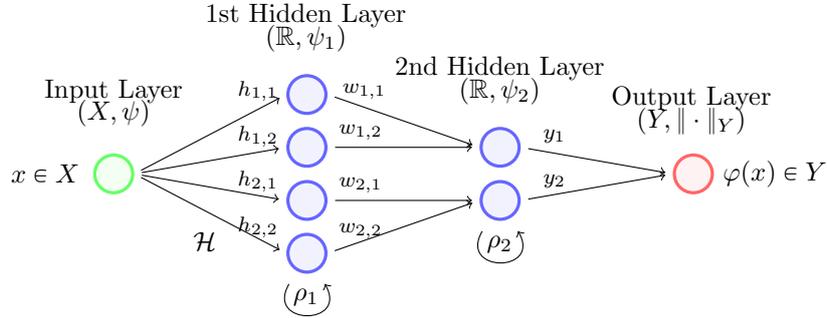
\begin{figure}[ht]
		\centering
		\begin{tikzpicture}[
			inputnode/.style={circle, draw=green!60, fill=green!5, very thick, minimum size=5mm},
			hiddennode/.style={circle, draw=blue!60, fill=blue!5, very thick, minimum size=5mm},
			outputnode/.style={circle, draw=red!60, fill=red!5, very thick, minimum size=5mm},
			invisiblenode/.style={circle, draw=blue!0, fill=blue!0, very thick, minimum size=5mm},
			node distance=7mm
			]
			\node[inputnode] (x1) {};
			\node[invisiblenode] (y3) [right = 2cm of x1] {};
			\node[hiddennode] (y2) [above = -1.9mm of y3] {};
			\node[hiddennode] (y1) [above of = y2] {};
			\node[hiddennode] (y4) [below = -1.9mm of y3] {};
			\node[hiddennode] (y5) [below of = y4] {};
			\node[invisiblenode] (z2) [right = 2cm of y3] {};
			\node[hiddennode] (z1) [above = -1.9mm of z2] {};
			\node[hiddennode] (z3) [below = -1.9mm of z2] {};
			\node[outputnode] (o1) [right = 2cm of z2] {};
			
			\draw[shorten >=0.1cm,shorten <=0.1cm, ->] (x1.east) -- (y1.west);	
			\draw[shorten >=0.1cm,shorten <=0.1cm, ->] (x1.east) -- (y2.west);
			\draw[shorten >=0.1cm,shorten <=0.1cm, ->] (x1.east) -- (y4.west);
			\draw[shorten >=0.1cm,shorten <=0.1cm, ->] (x1.east) -- (y5.west);
			\draw[shorten >=0.1cm,shorten <=0.1cm, ->] (y1.east) -- (z1.west);
			\draw[shorten >=0.1cm,shorten <=0.1cm, ->] (y2.east) -- (z1.west);
			\draw[shorten >=0.1cm,shorten <=0.1cm, ->] (y4.east) -- (z3.west);
			\draw[shorten >=0.1cm,shorten <=0.1cm, ->] (y5.east) -- (z3.west);
			\draw[shorten >=0.1cm,shorten <=0.1cm, ->] (z1.east) -- (o1.west);
			\draw[shorten >=0.1cm,shorten <=0.1cm, ->] (z3.east) -- (o1.west);
			\draw[shorten >=0.2cm,shorten <=0.2cm, ->] (y5) to[out=-120,in=-60,loop] ();
			\draw[shorten >=0.2cm,shorten <=0.2cm, ->] (z3) to[out=-120,in=-60,loop] ();
			
			\draw[] (0,1.1) node[anchor=center, align=center] {\footnotesize Input Layer};
			\draw[] (0,0.8) node[anchor=center, align=center] {\footnotesize $(X,\psi)$};
			\draw[] (2.53,2.1) node[anchor=center, align=center] {\footnotesize 1st Hidden Layer};
			\draw[] (2.53,1.8) node[anchor=center, align=center] {\footnotesize $(\mathbb{R},\psi_1)$};
			\draw[] (5.08,1.4) node[anchor=center, align=center] {\footnotesize 2nd Hidden Layer};
			\draw[] (5.08,1.1) node[anchor=center, align=center] {\footnotesize $(\mathbb{R},\psi_2)$};
			\draw[] (7.6,1.0) node[anchor=center, align=center] {\footnotesize Output Layer};
			\draw[] (7.6,0.7) node[anchor=center, align=center] {\footnotesize $(Y,\Vert \cdot \Vert_Y)$};
			\draw[] (-0.9,0) node[anchor=center, align=center] {\footnotesize $x \in X$};
			\draw[] (8.7,0) node[anchor=center, align=center] {\footnotesize $\varphi(x) \in Y$};
			\draw[] (1.2,-0.9) node[anchor=center, align=center] {\footnotesize $\mathcal{H}$};
			\draw[] (2.53,-1.63) node[anchor=center, align=center] {\footnotesize $\rho_1$};
			\draw[] (5.08,-0.93) node[anchor=center, align=center] {\footnotesize $\rho_2$};
			
			\draw[] (1.9,1.1) node[anchor=center, align=center] {\scriptsize $h_{1,1}$};
			\draw[] (1.9,0.5) node[anchor=center, align=center] {\scriptsize $h_{1,2}$};
			\draw[] (1.9,-0.1) node[anchor=center, align=center] {\scriptsize $h_{2,1}$};
			\draw[] (1.9,-0.7) node[anchor=center, align=center] {\scriptsize $h_{2,2}$};
			\draw[] (3.3,1.1) node[anchor=center, align=center] {\scriptsize $w_{1,1}$};
			\draw[] (3.25,0.55) node[anchor=center, align=center] {\scriptsize $w_{1,2}$};
			\draw[] (3.25,-0.15) node[anchor=center, align=center] {\scriptsize $w_{2,1}$};
			\draw[] (3.25,-0.75) node[anchor=center, align=center] {\scriptsize $w_{2,2}$};
			\draw[] (5.8,0.5) node[anchor=center, align=center] {\scriptsize $y_1$};
			\draw[] (5.8,-0.1) node[anchor=center, align=center] {\scriptsize $y_2$};
		\end{tikzpicture}
		\caption{A deep two-hidden-layer FNN $\varphi: X \rightarrow Y$ with $(N_1,N_2) = (2,2)$.}
		\label{FigANN2}
	\end{figure}
	
	We now give some examples of additive families. If the weighted space $(X,\psi)$ is a vector space equipped with a norm, we can use the dual space of the normed vector space.
	
	\begin{lemma}
		\label{LemmaAddWeightFam}
      	Let $X_0 := (X,\Vert \cdot \Vert_0)$ be a normed vector space and let $X_0 \ni x \mapsto \psi(x) = \eta\left( \Vert x \Vert_1 \right) \in (0,\infty)$ be an admissible weight function, for some continuous and non-decreasing function $\eta: [0,\infty) \rightarrow (0,\infty)$ with $\lim_{r \rightarrow \infty} \frac{r}{\eta(r)} = 0$ and a norm $\Vert \cdot \Vert_1$ on $X$ such that the identity $\id: (X,\Vert \cdot \Vert_1) \hookrightarrow (X,\Vert \cdot \Vert_0) =: X_0$ is continuous. Then, 
       \begin{equation*}
           \mathcal{H} := \left\lbrace X \ni x \mapsto l(x) + b \in \mathbb{R}: l \in X_0^*, \, b \in \mathbb{R} \right\rbrace \subseteq \mathcal{B}_\psi(X_0)
       \end{equation*}
       is an additive family.
	\end{lemma}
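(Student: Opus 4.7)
The plan is to verify one by one the three defining properties (H1), (H2), (H3) of an additive family, and then separately verify the containment $\mathcal{H} \subseteq \mathcal{B}_\psi(X)$ using the characterization in Lemma~\ref{LemmaBpsiEquivChar}~\ref{LemmaBpsiEquivChar2}.

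The properties (H1) and (H3) are almost immediate: since $X_0^*$ is a vector space, for $h_i(x) = l_i(x) + b_i \in \mathcal{H}$ with $i = 1,2$, the sum $h_1 + h_2 = (l_1 + l_2) + (b_1 + b_2)$ belongs to $\mathcal{H}$; and choosing $l = 0 \in X_0^*$ shows $(x \mapsto b) \in \mathcal{H}$ for every $b \in \mathbb{R}$. For the point separation property (H2), I would invoke the Hahn--Banach theorem on the normed space $X_0$: if $x_1 \neq x_2$ in $X$, then $x_1 - x_2 \neq 0$ in $X_0$, so there exists $l \in X_0^*$ with $l(x_1 - x_2) = \Vert x_1 - x_2 \Vert_0 > 0$, hence $l(x_1) \neq l(x_2)$.

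For the containment $\mathcal{H} \subseteq \mathcal{B}_\psi(X)$, I would use Lemma~\ref{LemmaBpsiEquivChar}~\ref{LemmaBpsiEquivChar2}. Fix $h(x) = l(x) + b$ with $l \in X_0^*$, $b \in \mathbb{R}$. Since the topology $\tau_X$ on $X$ with respect to which the compact sublevel sets $K_R = \psi^{-1}((0,R])$ are compact is (at worst) the norm topology induced by $\Vert \cdot \Vert_0$ (note that $l$ is $\Vert\cdot\Vert_0$-continuous by definition), the restriction $h\vert_{K_R}$ is continuous for every $R > 0$. To verify the decay condition at infinity, I would use the continuity of the inclusion $\id : (X,\Vert\cdot\Vert_1) \hookrightarrow (X,\Vert\cdot\Vert_0)$, which furnishes a constant $C > 0$ with $\Vert x \Vert_0 \leq C \Vert x \Vert_1$ for all $x \in X$. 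This gives the pointwise estimate
\begin{equation*}
  \frac{|h(x)|}{\psi(x)} \leq \frac{C \Vert l \Vert_{X_0^*} \Vert x \Vert_1 + |b|}{\eta(\Vert x \Vert_1)}.
\end{equation*}

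The main and only delicate point is then to exploit the growth assumption $\lim_{r \to \infty} r/\eta(r) = 0$, which forces $\eta(r) \to \infty$, hence both $r/\eta(r) \to 0$ and $|b|/\eta(r) \to 0$ as $r \to \infty$. Defining $r_R := \sup\{r \geq 0 : \eta(r) \leq R\}$ (with $r_R \to \infty$ as $R \to \infty$ by the monotonicity and unboundedness of $\eta$), one sees that $x \notin K_R$ implies $\Vert x \Vert_1 \geq r_R$, so
\begin{equation*}
  \sup_{x \in X \setminus K_R} \frac{|h(x)|}{\psi(x)} \leq \sup_{t \geq r_R} \frac{C \Vert l \Vert_{X_0^*}\, t + |b|}{\eta(t)} \xrightarrow[R \to \infty]{} 0.
\end{equation*}
By Lemma~\ref{LemmaBpsiEquivChar}~\ref{LemmaBpsiEquivChar2} this yields $h \in \mathcal{B}_\psi(X)$, completing the proof. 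No serious obstacle is expected; the only subtlety is extracting the decay rate from the mild growth assumption on $\eta$, which is exactly what the condition $r/\eta(r) \to 0$ is designed to provide.
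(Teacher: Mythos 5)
Your proof is correct and follows essentially the same route as the paper's: verify (H1) and (H3) directly, use Hahn--Banach on $X_0$ for (H2), and deduce $\mathcal{H}\subseteq\mathcal{B}_\psi(X)$ from Lemma~\ref{LemmaBpsiEquivChar}~\ref{LemmaBpsiEquivChar2} by bounding $|l(x)|\leq\Vert l\Vert_{X_0^*}\Vert\id\Vert_{L(X_1;X_0)}\Vert x\Vert_1$ and exploiting $\lim_{r\to\infty}r/\eta(r)=0$. The only cosmetic difference is that the paper bounds the constant term simply by $|b|/R$ on $X\setminus K_R$, whereas you unfold the argument $r/\eta(r)\to 0\Rightarrow\eta(r)\to\infty$ explicitly; both are equivalent.
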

	\begin{proof}
        First, we show that $\mathcal{H} \subseteq \mathcal{B}_\psi(X_0)$. For every fixed $l \in X_0^*$ and $b \in \mathbb{R}$, it holds that $\left( x \mapsto h(x) := l(x) + b \right) \in C^0(X_0)$. Moreover, since the identity $\id: X_1 := (X,\Vert \cdot \Vert_1) \hookrightarrow (X,\Vert \cdot \Vert_0) =: X_0$ is continuous, we have $\Vert \id \Vert_{L(X_1;X_0)} := \sup_{x \in X, \, \Vert x \Vert_1 \leq 1} \Vert x \Vert_0 < \infty$ and thus
        \begin{equation*}
            \begin{aligned}
                \lim_{R \rightarrow \infty} \sup_{x \in X \setminus K_R} \frac{\vert h(x) \vert}{\psi(x)} & \leq \lim_{R \rightarrow \infty} \sup_{x \in X \setminus K_R} \frac{\vert l(x) \vert}{\eta\left( \Vert x \Vert_1 \right)} + \lim_{R \rightarrow \infty} \sup_{x \in X \setminus K_R} \frac{\vert b \vert}{\eta\left( \Vert x \Vert_1 \right)} \\
                & \leq \Vert l \Vert_{X_0^*} \Vert \id \Vert_{L(X_1;X_0)} \lim_{R \rightarrow \infty} \sup_{x \in X \setminus K_R} \frac{\Vert x \Vert_1}{\eta\left( \Vert x \Vert_1 \right)} + \lim_{R \rightarrow \infty} \frac{\vert b \vert}{R} = 0.
            \end{aligned}
        \end{equation*}
     Hence, Lemma~\ref{LemmaBpsiEquivChar}~\ref{LemmaBpsiEquivChar2} shows that $h \in \mathcal{B}_\psi(X_0)$ and thus $\mathcal{H} \subseteq \mathcal{B}_\psi(X_0)$.

      Now, we show that $\mathcal{H}$ is an additive family on $X_0$. By definition, $\mathcal{H}$ is closed under addition and contains the constant functions. Concerning point separation, let $x_1, x_2 \in X_0$ be distinct. Since the dual space $X_0^*$ is by the Hahn-Banach theorem point separating on $X$, there exists some $l \in X_0^*$ such that $h := l \in \mathcal{H}$ satisfies $h(x_1) = l(x_1) \neq l(x_2) = h(x_2)$. This shows that $\mathcal{H}$ is point separating on $X_0$.
	\end{proof}

    \begin{remark}
        If $X_0 := (X,\Vert \cdot \Vert_0)$ is a Banach space such that $\psi: X_0 \rightarrow (0,\infty)$ is admissible, then $X_0$ is by Remark~\ref{RemWeight}~\ref{RemWeight3} finite dimensional. Hence, for an infinite dimensional weighted space $(X_0,\psi)$, the normed vector space $X_0 := (X,\Vert \cdot \Vert_0)$ is necessarily incomplete.
    \end{remark}
	
	With the additive family $\mathcal{H}$ from Lemma~\ref{LemmaAddWeightFam}, a corresponding FNN $\varphi: X \rightarrow Y$ over a weighted normed vector space $(X,\psi)$ is of the form
	\begin{equation*}
		\varphi(x) = \sum_{n=1}^N y_n \rho\left( l_n(x) + b_n \right),
	\end{equation*}
	where $l_1,\ldots,l_N \in X_0^*$, $b_1,\ldots,b_N \in \mathbb{R}$, $y_1,\ldots,y_N \in \mathcal{L} \subseteq Y$ and $\rho \in C^0(\mathbb{R})$. 
 
    On the other hand, if the weighted space $(X,\psi)$ is a dual Banach space equipped with the weak-$*$-topology as in Example~\ref{ExWeightedSpaces}~\ref{ExWeightedSpaceDual}, we obtain the following result.
	
	\begin{lemma}
		\label{LemmaAddWeightFam2}
      Let $X$ be a dual Banach space equipped with the weak-$*$-topology (and with predual $(E,\Vert \cdot \Vert_E)$) and let $X \ni x \mapsto \psi(x) := \eta\left( \Vert x \Vert_X \right) \in (0,\infty)$ be an admissible weight function, for some continuous and non-decreasing function $\eta: [0,\infty) \rightarrow (0,\infty)$ with $\lim_{r \rightarrow \infty} \frac{r}{\eta(r)} = 0$. Then,
      \begin{equation*}
          \mathcal{H} = \left\lbrace X \ni x \mapsto \langle x, e \rangle_{X \times E} + b \in \mathbb{R}: e \in E, \, b \in \mathbb{R} \right\rbrace \subseteq \mathcal{B}_\psi(X)
      \end{equation*}
      is an additive family.
	\end{lemma}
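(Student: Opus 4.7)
The plan is to mirror the proof strategy of Lemma~\ref{LemmaAddWeightFam}, replacing the use of the Hahn–Banach theorem by the fact that the predual $E$ separates points on $X = E^*$ via the canonical pairing, and replacing the duality bound $|l(x)| \leq \Vert l \Vert_{X_0^*} \Vert x \Vert_0$ by the isometry $\Vert \Phi(x) \Vert_{E^*} = \Vert x \Vert_X$. Concretely, I will first establish the inclusion $\mathcal{H} \subseteq \mathcal{B}_\psi(X)$ via Lemma~\ref{LemmaBpsiEquivChar}~\ref{LemmaBpsiEquivChar2}, and then verify the three axioms (H1)--(H3) of an additive family.

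For a fixed $e \in E$ and $b \in \mathbb{R}$, set $h(x) := \Phi(x)(e) + b$. By the very definition of the weak-$*$-topology on $X$, the evaluation map $x \mapsto \Phi(x)(e)$ is continuous, hence $h \in C^0(X;\mathbb{R})$ and in particular $h\vert_{K_R} \in C^0(K_R)$ for every $R > 0$. For the growth condition, the isometry $\Phi: X \rightarrow E^*$ yields $|\Phi(x)(e)| \leq \Vert \Phi(x) \Vert_{E^*} \Vert e \Vert_E = \Vert x \Vert_X \Vert e \Vert_E$, so
\begin{equation*}
    \sup_{x \in X \setminus K_R} \frac{|h(x)|}{\psi(x)} \leq \Vert e \Vert_E \sup_{x \in X \setminus K_R} \frac{\Vert x \Vert_X}{\eta(\Vert x \Vert_X)} + \sup_{x \in X \setminus K_R} \frac{|b|}{\eta(\Vert x \Vert_X)}.
\end{equation*}
Using that $\psi = \eta(\Vert \cdot \Vert_X)$ is admissible (so $\Vert x \Vert_X \to \infty$ on $X \setminus K_R$ as $R \to \infty$, which may be justified by noting $K_R \supseteq \{x : \Vert x \Vert_X \leq \eta^{-1}(R)\}$-type sublevel behaviour or more directly that $\psi$ is bounded below) together with $\lim_{r \to \infty} r/\eta(r) = 0$ and $\eta \geq \inf \psi > 0$, both suprema vanish as $R \to \infty$. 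Lemma~\ref{LemmaBpsiEquivChar}~\ref{LemmaBpsiEquivChar2} then gives $h \in \mathcal{B}_\psi(X)$, hence $\mathcal{H} \subseteq \mathcal{B}_\psi(X)$.

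It remains to check (H1)--(H3). Axiom (H1) follows because $\Phi(x) \in E^*$ is linear in $e$: $(\Phi(x)(e_1) + b_1) + (\Phi(x)(e_2) + b_2) = \Phi(x)(e_1 + e_2) + (b_1 + b_2) \in \mathcal{H}$. Axiom (H3) is immediate by taking $e = 0$, which gives the constant function $x \mapsto b$. For the point separation axiom (H2), let $x_1 \neq x_2$ in $X$. Since $\Phi$ is an (isometric) isomorphism it is injective, so $\Phi(x_1) \neq \Phi(x_2)$ in $E^*$, which means there exists $e \in E$ with $\Phi(x_1)(e) \neq \Phi(x_2)(e)$. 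Taking $h(x) := \Phi(x)(e) \in \mathcal{H}$ separates $x_1$ and $x_2$, completing the proof.

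The only subtle point is the growth verification: unlike the normed-space setting of Lemma~\ref{LemmaAddWeightFam} where one worked with two norms and relied on a continuous embedding, here the topology on $X$ is strictly weaker than the norm topology (and not even metrizable in infinite dimensions, cf.\ Remark~\ref{RemWeight}~\ref{RemWeight4}). Nevertheless, the admissibility of $\psi$ ensures the sublevel sets $K_R$ are weak-$*$ compact, and the isometry of $\Phi$ with $\lim_{r \to \infty} r/\eta(r) = 0$ suffices to conclude moderate growth—no additional compactness criterion (such as Arzelà–Ascoli) is required beyond what is already encoded in the admissibility hypothesis on $\psi$.
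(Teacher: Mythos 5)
Your proof is correct and follows essentially the same route as the paper's: verify $\mathcal{H} \subseteq \mathcal{B}_\psi(X)$ via Lemma~\ref{LemmaBpsiEquivChar}~\ref{LemmaBpsiEquivChar2} using the bound $|\Phi(x)(e)| \leq \Vert x \Vert_X \Vert e \Vert_E$ and the hypothesis $\lim_{r\to\infty} r/\eta(r) = 0$, then check (H1)--(H3), with point separation coming from injectivity of $\Phi$. The only cosmetic difference is that the paper writes $\Vert\Phi\Vert_{L(X,E^*)}$ where you invoke the isometry directly — both reduce to the same estimate — and your injectivity argument for (H2) is slightly cleaner than the paper's ``WLOG $X \neq \{0\}$'' phrasing.
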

	\begin{proof}
      First, we show that $\mathcal{H} \subseteq \mathcal{B}_\psi(X)$. Indeed, for every fixed $e \in E$ and $b \in \mathbb{R}$, it follows that $x \mapsto h(x) := \langle x, e \rangle_{X \times E} + b \in C^0(X)$. Moreover, we observe that
        \begin{equation*}
            \begin{aligned}
                \lim_{R \rightarrow \infty} \sup_{x \in X \setminus K_R} \frac{\vert h(x) \vert}{\psi(x)} & \leq \lim_{R \rightarrow \infty} \sup_{x \in X \setminus K_R} \frac{\vert \langle x, e \rangle_{X \times E} \vert}{\eta\left( \Vert x \Vert_X \right)} + \lim_{R \rightarrow \infty} \sup_{x \in X \setminus K_R} \frac{\vert b \vert}{\eta\left( \Vert x \Vert_X \right)} \\
                & \leq \Vert e \Vert_E \lim_{R \rightarrow \infty} \sup_{x \in X \setminus K_R} \frac{\Vert x \Vert_X}{\eta\left( \Vert x \Vert_X \right)} + \lim_{R \rightarrow \infty} \frac{\vert b \vert}{R} = 0.
            \end{aligned}
        \end{equation*}
      Hence, Lemma~\ref{LemmaBpsiEquivChar}~\ref{LemmaBpsiEquivChar2} shows that $h \in \mathcal{B}_\psi(X)$ and thus $\mathcal{H} \subseteq \mathcal{B}_\psi(X)$.
        
	   Now, we show that $\mathcal{H}$ is an additive family on $X$. By definition, $\mathcal{H}$ is closed under addition and contains the constant functions. Concerning point separation, we can assume without loss of generality that $X \neq \lbrace 0 \rbrace$, which implies that $E \neq \lbrace 0 \rbrace$. Then, for distinct points $x_1, x_2 \in X$ there exists some $e \in E$ such that $\left( x \mapsto h(x) := \langle x, e \rangle_{X \times E} \right) \in \mathcal{H}$ satisfies $h(x_1) = \langle x_1, e \rangle_{X \times E} \neq \langle x_2, e \rangle_{X \times E}= h(x_2)$. Hence, $\mathcal{H}$ is point separating on $X$.
	\end{proof}

    With the additive family $\mathcal{H}$ from Lemma~\ref{LemmaAddWeightFam2}, a corresponding FNN $\varphi: X \rightarrow Y$ over a weighted dual Banach space $(X,\psi)$ equipped with the weak-$*$-topology is given by
	\begin{equation*}
		\varphi(x) = \sum_{n=1}^N y_n \rho\left( \langle x, e_n \rangle_{X \times E} + b_n \right),
	\end{equation*}
	where $e_1,\ldots,e_N \in E$, $b_1,\ldots,b_N \in \mathbb{R}$, $y_1,\ldots,y_N \in \mathcal{L}$, and $\rho \in C^0(\mathbb{R})$.
	
	\begin{example}
		\label{ExANN}
		Let us give some examples of additive families, where we revisit among others the examples from Section~\ref{SubsecMotEx}. Hereby, $\eta: [0,\infty) \rightarrow (0,\infty)$ is a continuous and non-decreasing function satisfying $\lim_{r \rightarrow \infty} \frac{r}{\eta(r)} = 0$.
		\begin{enumerate}
			\item\label{ExANN1} For the space of $\alpha$-H\"older continuous functions $X = C^\alpha(S)$ as in Example~\ref{ExWeightedSpaces}~\ref{ExWeightedSpaceHoelderWeaker} with weight function $\psi(x) = \eta\left( \Vert x \Vert_{C^\alpha(S)} \right)$, an additive family is given by
			\begin{equation*}
				\mathcal{H} = \left\lbrace C^\alpha(S) \ni x \mapsto \int_S x(s) \nu(ds) + b \in \mathbb{R}: \, 
                \begin{matrix}
					b \in \mathbb{R}, \\
					\nu: \mathcal{F}_S \rightarrow \mathbb{R}
				\end{matrix} 
                \right\rbrace \subseteq \mathcal{B}_\psi(C^\alpha(S)),
			\end{equation*}
			where $\nu: \mathcal{F}_S \rightarrow \mathbb{R}$ runs through the set of all finite signed Radon measures on $S$.
			
			\item\label{ExANN2} For the space of stopped $\alpha$-H\"older continuous paths $X = \Lambda^\alpha_T$ in Example~\ref{ExWeightedSpaces}~\ref{ExWeightedSpaceHoelderStoppedPaths} with weight function $\psi(t,x) = \eta\left( \Vert x \Vert_{C^\alpha([0,T];\mathbb{R}^d)} \right)$, an additive family is given by
			\begin{equation*}
				\mathcal{H} = \left\lbrace \Lambda_T^\alpha \ni (t,x) \mapsto at + \sum_{i=1}^d \int_0^t \phi_i(s) x_i(s) ds + b \in \mathbb{R}: \, 
                \begin{matrix}
					a,b \in \mathbb{R} \\
					\phi_i \in \mathcal{NN}^{\widetilde{\rho}}_{\mathbb{R},\mathbb{R}}
				\end{matrix} 
				\right\rbrace \subseteq \mathcal{B}_\psi(\Lambda^\alpha_T),
			\end{equation*}
			where $\mathcal{NN}^{\widetilde{\rho}}_{\mathbb{R},\mathbb{R}}$ is the set of classical neural networks $\phi: \mathbb{R} \rightarrow \mathbb{R}$ with activation function $\widetilde{\rho}: \mathbb{R} \rightarrow \mathbb{R}$ such that $\mathcal{NN}^{\widetilde{\rho}}_{\mathbb{R},\mathbb{R}} \big\vert_{[0,T]}$ is a dense subset of $L^1([0,T])$.
   
            \item\label{ExANN3} For $X = L^\infty(\Omega,\mathcal{F},\mathbb{P})$ with probability space $(\Omega,\mathcal{F},\mathbb{P})$ as in Example~\ref{ExWeightedSpaces}~\ref{ExWeightedSpaceLpPredual} and weight function $\psi(x) = \eta\left( \Vert x \Vert_{L^\infty(\Omega,\mathcal{F},\mathbb{P})} \right)$, an additive family is given by
            \begin{equation*}
                \mathcal{H} = \left\lbrace L^\infty(\Omega,\mathcal{F},\mathbb{P}) \ni x \mapsto \mathbb{E}[z x] + b \in \mathbb{R}: \, 
                \begin{matrix}
                    b \in \mathbb{R}, \\
					z \in L^1(\Omega,\mathcal{F},\mathbb{P})
				\end{matrix} 
                \right\rbrace \subseteq \mathcal{B}_\psi(L^\infty(\Omega,\mathcal{F},\mathbb{P})).
            \end{equation*}

            \item\label{ExANN4} For the space of signed Radon measures $X = \mathcal{M}_{\psi_\Omega}(\Omega)$ on a weighted space $(\Omega,\psi_\Omega)$ as in Example~\ref{ExWeightedSpaces}~\ref{ExWeightedSpaceMeasures} with weight function $\psi(x) = \eta\big( \Vert x \Vert_{\mathcal{M}_{\psi_\Omega}(\Omega)} \big)$, an additive family is given by
    		\begin{equation*}
    			\mathcal{H} = \left\lbrace \mathcal{M}_{\psi_\Omega}(\Omega) \ni x \mapsto \int_\Omega f(\omega) x(d\omega) + b \in \mathbb{R}: \, 
                \begin{matrix}
					b \in \mathbb{R}, \\
					f \in \mathcal{B}_{\psi_\Omega}(\Omega)
				\end{matrix}
                \right\rbrace \subseteq \mathcal{B}_\psi(\mathcal{M}_{\psi_\Omega}(\Omega)).
    		\end{equation*}
            This can be used to learn a map with a (probability) measure as input.
		\end{enumerate}
	\end{example}
	\begin{proof}
		For \ref{ExANN1}, we first observe that $C^\alpha(S) \subseteq C^0(S)$ and that every continuous linear functional $l \in (C^\alpha(S),\Vert \cdot \Vert_{C_0(S)})^*$ can be extended by the Hahn-Banach theorem to a continuous linear functional $l_0 \in C^0(S)^*$. Since $C^0(S)^*$ is by the Riesz representation theorem isomorphic to the space of finite signed Radon measures $\nu: \mathcal{F}_S \rightarrow \mathbb{R}$, the conclusion follows from Lemma~\ref{LemmaAddWeightFam}.
		
		For \ref{ExANN2}, we cannot apply Lemma~\ref{LemmaAddWeightFam} nor Lemma~\ref{LemmaAddWeightFam2} directly. In order to show that $\mathcal{H} \subseteq \mathcal{B}_\psi(\Lambda^\alpha_T)$, we first observe for every fixed $a,b \in \mathbb{R}$ and $\phi_1,\ldots,\phi_d \in \mathcal{NN}^{\widetilde{\rho}}_{\mathbb{R},\mathbb{R}}$ that $\big( (t,x) \mapsto h(t,x) := at + \sum_{i=1}^d \int_0^t \phi_i(s) x_i(s) ds + b \big) \in C^0(\Lambda^\alpha_T)$. Moreover, by using that $\Vert x \Vert_{C^0([0,T];\mathbb{R}^d)} \leq \left( 1 + T^\alpha \right) \Vert x \Vert_{C^\alpha([0,T];\mathbb{R}^d)}$ for any $(t,x) \in \Lambda^\alpha_T$ (see \eqref{EqThmHoelderEmbeddingProof1}), we conclude that
        \begin{equation*}
            \begin{aligned}
                & \lim_{R \rightarrow \infty} \sup_{(t,x) \in \Lambda^\alpha_T \setminus K_R} \frac{\vert h(t,x) \vert}{\psi(x)} \leq \lim_{R \rightarrow \infty} \sup_{(t,x) \in \Lambda^\alpha_T \setminus K_R} \frac{\vert at \vert + \vert b \vert}{\eta\left( \Vert x \Vert_{C^\alpha([0,T];\mathbb{R}^d)} \right)} \\
                & \quad\quad\quad\quad\quad\quad\quad\quad\quad\quad\quad\quad + \lim_{R \rightarrow \infty} \sup_{(t,x) \in \Lambda^\alpha_T \setminus K_R} \frac{\left\vert \sum_{i=1}^d \int_0^t \phi_i(s) x_i(s) ds \right\vert}{\eta\left( \Vert x \Vert_{C^\alpha([0,T];\mathbb{R}^d)} \right)} \\
                & \quad\quad \leq \lim_{R \rightarrow \infty} \frac{\vert a \vert T + \vert b \vert}{R} + d \max_{i=1,\ldots,d} \Vert \phi_i \Vert_{L^1([0,T])} \lim_{R \rightarrow \infty} \sup_{(t,x) \in \Lambda^\alpha_T \setminus K_R} \frac{\left( 1 + T^\alpha \right) \Vert x \Vert_{C^\alpha([0,T];\mathbb{R}^d)}}{\eta\left( \Vert x \Vert_{C^\alpha([0,T];\mathbb{R}^d)} \right)} = 0.
            \end{aligned}
        \end{equation*}
        Hence, Lemma~\ref{LemmaBpsiEquivChar}~\ref{LemmaBpsiEquivChar2} shows that $h \in \mathcal{B}_\psi(\Lambda^\alpha_T)$ and thus $\mathcal{H} \subseteq \mathcal{B}_\psi(\Lambda^\alpha_T)$. In addition, we observe that $\mathcal{H}$ is closed under addition and contains the constant functions. For point separation, let $(t_1,x_1^{t_1}), (t_2,x_2^{t_2}) \in \Lambda^\alpha_T$ be distinct. If $t_1 \neq t_2$, then $\left( (t,x) \mapsto t \right) \in \mathcal{H}$ separates $(t_1,x_1^{t_1})$ from $(t_2,x_2^{t_2})$. Otherwise, if $t_1 = t_2 =: t$ and the paths $x_1^t, x_2^t \in C^0([0,T];\mathbb{R}^d)$ differ, there exist some $i = 1,\ldots,d$ and $s \in [0,t)$ such that $x_{1,i}(s) \neq x_{2,i}(s)$. Recall that we assume that $\mathcal{NN}^{\widetilde{\rho}}_{\mathbb{R},\mathbb{R}}\big\vert_{[0,T]}$ is a dense subset of $L^1([0,T])$ and note that $C^0([0,T]) \subsetneq L^\infty([0,T]) \cong L^1([0,T])^*$. Hence, there exists some $\phi_i \in \mathcal{NN}^{\widetilde{\rho}}_{\mathbb{R},\mathbb{R}}$ such that $\int_0^t \phi_i(s) x_{1,i}(s) ds \neq \int_0^t \phi_i(s) x_{2,i}(s) ds$. Thus, by defining $\phi_j := 0 \in \mathcal{NN}^{\widetilde{\rho}}_{\mathbb{R},\mathbb{R}}$ for $j \neq i$, the map $\big( (t,x) \mapsto h(t,x) := \sum_{i=1}^d \int_0^t \phi_i(s) x_i(s) ds \big) \in \mathcal{H}$ separates $(t_1,x_1^{t_1})$ from $(t_2,x_2^{t_2})$.

        For \ref{ExANN3} and \ref{ExANN4}, we have $L^1(\Omega,\mathcal{F},\mathbb{P})^* \cong L^\infty(\Omega,\mathcal{F},\mathbb{P})$ and $\mathcal{B}_{\psi_\Omega}(\Omega)^* \cong \mathcal{M}_{\psi_\Omega}(\Omega)$, respectively (see \cite[Theorem~2.4]{doersek10} for the latter). Hence, Lemma~\ref{LemmaAddWeightFam2} shows the results.
	\end{proof}
	
	These results give an overview of the broad variety of functional input neural networks, for which we show the universal approximation property in the following subsection.
	
	\subsection{Global universal approximation}
	
	Universal approximation results were first proved for neural networks between Euclidean spaces in \cite{cybenko89,hornik91} using functional analytical arguments such as the Hahn-Banach theorem and Fourier theory. Later, further universal approximation results were found for different types of activation functions, such as in \cite{pinkus99}, while the results of e.g.~\cite{barron93,candes98,bgkp17} are concerned with proving quantitative approximation rates under more restrictive assumptions on the involved functions.
	
	In the following, we prove a universal approximation result for functional input neural networks by lifting the universal approximation property of one-dimensional neural networks encoded in Definition~\ref{DefAct} to this infinite dimensional setting. For this purpose, we assume that the input space $(X,\psi)$ and the hidden layer space $(\mathbb{R},\psi_1)$ are weighted spaces, and the output space is a Banach space $(Y,\Vert \cdot \Vert_Y)$.
	
	\begin{theorem}[Universal approximation on $\mathcal{B}_\psi(X;Y)$]
		\label{ThmUAT}
		Let $\mathcal{H} \subseteq \mathcal{B}_\psi(X)$ be an additive family with $\sup_{x \in X} \frac{\psi_1(h(x))}{\psi(x)} < \infty$, for all $h \in \mathcal{H}$, let $\rho \in C^0(\mathbb{R})$ be $\psi_1$-activating, and let $\mathcal{L} \subseteq Y$ be a vector subspace that is dense in $Y$. Then, $\mathcal{NN}^{\mathcal{H},\rho,\mathcal{L}}_{X,Y}$ is dense in $\mathcal{B}_\psi(X;Y)$, i.e.~for every $f \in \mathcal{B}_\psi(X;Y)$ and $\varepsilon > 0$ there exists some $\varphi \in \mathcal{NN}^{\mathcal{H},\rho,\mathcal{L}}_{X,Y}$ such that
		\begin{equation*}
			\Vert f - \varphi \Vert_{\mathcal{B}_\psi(X;Y)} = \sup_{x \in X} \frac{\Vert f(x) - \varphi(x) \Vert_Y}{\psi(x)} < \varepsilon.
		\end{equation*}
	\end{theorem}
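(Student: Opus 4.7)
The plan is to apply the weighted vector-valued Stone-Weierstrass theorem (Theorem~\ref{ThmStoneWeierstrassBpsi}) to a subalgebra $\mathcal{A}\subseteq\mathcal{B}_\psi(X)$ and an $\mathcal{A}$-submodule $\mathcal{W}\subseteq\mathcal{B}_\psi(X;Y)$ chosen so that both sit inside the $\mathcal{B}_\psi(X;Y)$-closure of $\mathcal{NN}^{\mathcal{H},\rho,\mathcal{L}}_{X,Y}$. The density of $\mathcal{W}$ given by Theorem~\ref{ThmStoneWeierstrassBpsi} will then force this closure to equal the full space. The crucial ingredient is a \emph{transfer estimate}: for every $h\in\mathcal{H}$ and $g\in\mathcal{B}_{\psi_1}(\mathbb{R})$, the composition $g\circ h$ lies in $\mathcal{B}_\psi(X)$ with
\begin{equation*}
    \|g\circ h\|_{\mathcal{B}_\psi(X)}\;\le\;C_h\,\|g\|_{\mathcal{B}_{\psi_1}(\mathbb{R})},\qquad C_h:=\sup_{x\in X}\frac{\psi_1(h(x))}{\psi(x)}<\infty.
\end{equation*}
The norm bound is the pointwise factorization $|g(h(x))|/\psi(x)=(|g(h(x))|/\psi_1(h(x)))(\psi_1(h(x))/\psi(x))$, while membership in $\mathcal{B}_\psi(X)$ is checked through Lemma~\ref{LemmaBpsiEquivChar}: the restriction $h|_{K_R}$ is continuous and maps into the compact set $\psi_1^{-1}((0,C_hR])\subseteq\mathbb{R}$, and the tail decay is obtained by splitting $X\setminus K_R$ according to whether $\psi_1(h(x))$ is large or small.

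Because $\rho$ is $\psi_1$-activating and $\phi\circ h\in\mathcal{NN}^{\mathcal{H},\rho,\mathbb{R}}_{X,\mathbb{R}}$ for every $\phi\in\mathcal{NN}^\rho_{\mathbb{R},\mathbb{R}}$ (since $\mathcal{H}$ is closed under $\mathbb{N}_0$-addition and contains constants, so that $a_n h+b_n\in\mathcal{H}$ for $a_n\in\mathbb{N}_0$, $b_n\in\mathbb{R}$), the transfer estimate yields $\{g\circ h:g\in\mathcal{B}_{\psi_1}(\mathbb{R}),\,h\in\mathcal{H}\}\subseteq\overline{\mathcal{NN}^{\mathcal{H},\rho,\mathbb{R}}_{X,\mathbb{R}}}$ (closure in $\mathcal{B}_\psi(X)$). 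The same reasoning extends to multivariate compositions: a classical FNN $\widetilde{F}(z)=\sum_n w_n\rho(a_n\cdot z+b_n)$ on $\mathbb{R}^m$ with $a_n\in\mathbb{N}_0^m$ satisfies $\widetilde{F}\circ(h_1,\ldots,h_m)\in\mathcal{NN}^{\mathcal{H},\rho,\mathbb{R}}_{X,\mathbb{R}}$ by additivity, and the transfer estimate applied with the multivariate weight $\psi_m(z):=\max_i\psi_1(z_i)$ places $\{F\circ(h_1,\ldots,h_m):F\in C^0_b(\mathbb{R}^m),\,h_i\in\mathcal{H},\,m\in\mathbb{N}\}$ inside $\overline{\mathcal{NN}^{\mathcal{H},\rho,\mathbb{R}}_{X,\mathbb{R}}}$. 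I then take $\mathcal{A}$ to be this collection of multivariate compositions; it is a subalgebra of $\mathcal{B}_\psi(X)$ because $F_1(H_1)F_2(H_2)=(F_1\otimes F_2)(H_1,H_2)$, consists of bounded functions, is point-separating (from $\mathcal{H}$ via bounded continuous functions on $\mathbb{R}$), vanishes nowhere (contains $1$), and is automatically point-separating of $\psi_w$-moderate growth for every $w$ by boundedness (Remark~\ref{RemStoneWstrassBdedFct}).

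Setting $\mathcal{W}:=\mathcal{A}\cdot\mathcal{L}$ gives an $\mathcal{A}$-submodule of $\mathcal{B}_\psi(X;Y)$ with $\mathcal{W}(x)\supseteq\mathcal{L}$ dense in $Y$, so Theorem~\ref{ThmStoneWeierstrassBpsi} yields density of $\mathcal{W}$ in $\mathcal{B}_\psi(X;Y)$. For each $a\in\mathcal{A}$ and $y\in\mathcal{L}$, approximating $a$ by $a_k\in\mathcal{NN}^{\mathcal{H},\rho,\mathbb{R}}_{X,\mathbb{R}}$ in $\mathcal{B}_\psi(X)$ and rewriting $a_k\cdot y=\sum_n(w_n y)\rho(h_n)\in\mathcal{NN}^{\mathcal{H},\rho,\mathcal{L}}_{X,Y}$ (because $\mathcal{L}$ is a vector subspace) shows $a\cdot y\in\overline{\mathcal{NN}^{\mathcal{H},\rho,\mathcal{L}}_{X,Y}}$; taking spans, $\mathcal{W}\subseteq\overline{\mathcal{NN}^{\mathcal{H},\rho,\mathcal{L}}_{X,Y}}$, and combined with the density of $\mathcal{W}$ this proves the theorem.

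The main technical obstacle is the multivariate step: the one-dimensional $\psi_1$-activating property does not directly imply the density of multivariate FNNs $\mathcal{NN}^\rho_{\mathbb{R}^m,\mathbb{R}}$ with hidden-layer weights restricted to $\mathbb{N}_0^m$ in $\mathcal{B}_{\psi_m}(\mathbb{R}^m)$. A clean remedy is a further application of the real-valued weighted Stone-Weierstrass theorem (Theorem~\ref{ThmStoneWeierstrassBpsiR}) on $\mathbb{R}^m$ to the subalgebra generated by $\{\rho(z_i+b):i=1,\ldots,m,\,b\in\mathbb{R}\}$ together with its one-dimensional integer linear combinations: this subalgebra is point-separating (through coordinate projections combined with the 1D density), vanishes nowhere, and is of $\psi_m$-moderate growth on bounded pieces; the transfer estimate then carries the resulting $\mathcal{B}_{\psi_m}(\mathbb{R}^m)$-approximations back to the required statement on $X$.
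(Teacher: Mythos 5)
Your overall strategy — exhibit a subalgebra $\mathcal{A}\subseteq\mathcal{B}_\psi(X)$ and an $\mathcal{A}$-submodule $\mathcal{W}=\mathcal{A}\cdot\mathcal{L}$ inside $\overline{\mathcal{NN}^{\mathcal{H},\rho,\mathcal{L}}_{X,Y}}$, then invoke Theorem~\ref{ThmStoneWeierstrassBpsi} — matches the paper, and your transfer estimate $\Vert g\circ h\Vert_{\mathcal{B}_\psi(X)}\le C_h\Vert g\Vert_{\mathcal{B}_{\psi_1}(\mathbb{R})}$ is exactly the mechanism used there. But the obstacle you flag at the end is a genuine gap, and the remedy you sketch does not close it.

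Your $\mathcal{A}$ is $\{F\circ(h_1,\ldots,h_m):F\in C^0_b(\mathbb{R}^m),\,h_i\in\mathcal{H},\,m\in\mathbb{N}\}$, so the inclusion $\mathcal{A}\subseteq\overline{\mathcal{NN}^{\mathcal{H},\rho,\mathbb{R}}_{X,\mathbb{R}}}$ requires a \emph{multivariate} universal approximation statement: networks $\sum_n w_n\rho(a_n^\top z+b_n)$ with $a_n\in\mathbb{N}_0^m$ must be dense in $\mathcal{B}_{\psi_m}(\mathbb{R}^m)$. Applying Theorem~\ref{ThmStoneWeierstrassBpsiR} to the \emph{algebra} generated by $\{\rho(z_i+b)\}$ proves density of that algebra, which is strictly larger than the network class: the algebra contains products such as $\rho(z_1+b_1)\,\rho(z_2+b_2)$, and after composing with $(h_1,h_2)$ this becomes $\rho(h_1+b_1)\,\rho(h_2+b_2)$, which is not an element of $\mathcal{NN}^{\mathcal{H},\rho,\mathbb{R}}_{X,\mathbb{R}}$. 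Stone--Weierstrass yields a dense algebra, not density of the linear span of ridge functions; the multivariate density you need does not follow from the one-dimensional $\psi_1$-activating hypothesis by this route, and you would end up needing to prove another universal approximation theorem inside your proof.

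The paper sidesteps the multivariate step entirely by choosing $\mathcal{A}=\linspan\left(\{\cos\circ h:h\in\mathcal{H}\}\cup\{\sin\circ h:h\in\mathcal{H}\}\right)$. This is a subalgebra via the product-to-sum trigonometric identities together with the additive closure of $\mathcal{H}$, and each generator already lies in $\overline{\mathcal{NN}^{\mathcal{H},\rho,\mathbb{R}}_{X,\mathbb{R}}}$ by your own one-dimensional transfer estimate, since $\cos,\sin\in C^0_b(\mathbb{R})\subseteq\mathcal{B}_{\psi_1}(\mathbb{R})$. You had already established exactly the inclusion needed, $\{g\circ h:g\in\mathcal{B}_{\psi_1}(\mathbb{R}),\,h\in\mathcal{H}\}\subseteq\overline{\mathcal{NN}^{\mathcal{H},\rho,\mathbb{R}}_{X,\mathbb{R}}}$; the one missing move is to specialize $g$ to $\cos$ and $\sin$ so that the resulting linear span is already an algebra, bypassing any multivariate approximation step. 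With that choice of $\mathcal{A}$ (and keeping $\mathcal{W}=\mathcal{A}\cdot\mathcal{L}$ as you have it), the rest of your argument goes through.
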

	\begin{proof}
        First, we show that $\mathcal{NN}^{\mathcal{H},\rho,\mathcal{L}}_{X,Y} \subseteq \mathcal{B}_\psi(X;Y)$. Since $\mathcal{NN}^{\mathcal{H},\rho,\mathcal{L}}_{X,Y}$ is defined as the linear span of maps of the form $X \ni x \mapsto y \rho(h(x)) \in Y$, with $h \in \mathcal{H}$ and $y \in \mathcal{L}$, and the mapping $\mathcal{B}_\psi(X) \ni g \mapsto y \cdot g \in \mathcal{B}_\psi(X;Y)$ is continuous and well-defined, for all $y \in Y$, it suffices to show that $\rho \circ h \in \mathcal{B}_\psi(X)$, for all $h \in \mathcal{H}$. To this end, we fix some $h \in \mathcal{H}$ as well as $\varepsilon > 0$, and define the constant $C_h := \sup_{x \in X} \frac{\psi_1(h(x))}{\psi(x)} + 1 > 0$. Then, by using that $\rho \in \mathcal{B}_{\psi_1}(\mathbb{R})$ (as $\rho \in C^0(\mathbb{R})$ is $\psi_1$-activating and thus $(z \mapsto y \rho(az+b)) \in \mathcal{NN}^\rho_{\mathbb{R},\mathbb{R}} \subseteq \mathcal{B}_{\psi_1}(\mathbb{R})$ with $a = y = 1$ and $b = 0$) there exists by definition of $\mathcal{B}_{\psi_1}(\mathbb{R})$ some $\widetilde{\rho} \in C^0_b(\mathbb{R})$ such that
        \begin{equation}
            \label{EqThmUATProof1}
            \Vert \rho - \widetilde{\rho} \Vert_{\mathcal{B}_{\psi_1}(\mathbb{R})} = \sup_{z \in \mathbb{R}} \frac{\left\vert \rho(z) - \widetilde{\rho}(z) \right\vert}{\psi_1(z)} < \frac{\varepsilon}{2C_h}.
        \end{equation}
        Moreover, since $h \in \mathcal{H} \subseteq \mathcal{B}_\psi(X)$, i.e.~$h\vert_{K_R} \in C^0(K_R)$, for all $R > 0$ (see Lemma~\ref{LemmaBpsiEquivChar}~\ref{LemmaBpsiEquivChar1}), and $\widetilde{\rho} \in C^0_b(\mathbb{R})$, we observe that $\left( \widetilde{\rho} \circ h \right)\vert_{K_R} \in C^0(K_R)$, for all $R > 0$, and that $\widetilde{\rho} \circ h: X \rightarrow \mathbb{R}$ is bounded, whence $\widetilde{\rho} \circ h \in \mathcal{B}_\psi(X)$ by Lemma~\ref{LemmaBpsiEquivChar}~\ref{LemmaBpsiEquivChar2}. Thus, by definition of $\mathcal{B}_\psi(X)$, there exists some $g \in C^0_b(X)$ such that $\Vert \widetilde{\rho} \circ h - g \Vert_{\mathcal{B}_\psi(X)} < \varepsilon/2$. Hence, by combining this with \eqref{EqThmUATProof1}, it follows that
        \begin{equation*}
            \begin{aligned}
                \Vert \rho \circ h - g \Vert_{\mathcal{B}_\psi(X)} & \leq \Vert \rho \circ h - \widetilde{\rho} \circ h \Vert_{\mathcal{B}_\psi(X)} + \Vert \widetilde{\rho} \circ h - g \Vert_{\mathcal{B}_\psi(X)} \\
                & < \sup_{x \in X} \frac{\left\vert \rho(h(x)) - \widetilde{\rho}(h(x)) \right\vert}{\psi(x)} + \frac{\varepsilon}{2} \\
                & \leq \sup_{x \in X} \frac{\psi_1(h(x))}{\psi(x)} \sup_{x \in X} \frac{\left\vert \rho(h(x)) - \widetilde{\rho}(h(x)) \right\vert}{\psi_1(h(x))} + \frac{\varepsilon}{2} \\
                & \leq C_h \sup_{z \in \mathbb{R}} \frac{\left\vert \rho(z) - \widetilde{\rho}(z) \right\vert}{\psi_1(z)} + \frac{\varepsilon}{2} \\
                & < C_h \frac{\varepsilon}{2C_h} + \frac{\varepsilon}{2} = \varepsilon.
            \end{aligned}
        \end{equation*}
        Since $\varepsilon > 0$ was chosen arbitrarily and $g \in C^0_b(X)$, this shows that $\rho \circ h \in \mathcal{B}_\psi(X)$.
        
		Next, we define $\mathcal{A} = \linspan\left( \left\lbrace \cos(h(\cdot)): h \in \mathcal{H} \right\rbrace \cup \left\lbrace \sin(h(\cdot)): h \in \mathcal{H} \right\rbrace \right)$. Then, by using that $\mathcal{H} \subseteq \mathcal{B}_\psi(X)$, i.e.~$h\vert_{K_R} \in C^0(K_R)$, for all $h \in \mathcal{H}$ and $R > 0$ (see Lemma~\ref{LemmaBpsiEquivChar}~\ref{LemmaBpsiEquivChar1}), and that $\cos, \sin \in C^0_b(\mathbb{R})$, we observe that $\cos(h(\cdot))\vert_{K_R}, \sin(h(\cdot))\vert_{K_R} \in C^0(K_R)$, for all $h \in \mathcal{H}$ and $R > 0$, and that $\cos(h(\cdot)), \sin(h(\cdot)): X \rightarrow \mathbb{R}$ are bounded, for all $h \in \mathcal{H}$, whence $\mathcal{A} \subseteq \mathcal{B}_\psi(X)$ by Lemma~\ref{LemmaBpsiEquivChar}~\ref{LemmaBpsiEquivChar2}. Moreover, by using the trigonometric identities
        \begin{equation}
            \label{EqThmUATProof3}
            \begin{aligned}
    			\cos(s) \cos(t) & = \frac{1}{2} \big( \cos(s-t)+\cos(s+t) \big), \\
    			\cos(s) \sin(t) & = \frac{1}{2} \big( \sin(s+t)-\sin(s-t) \big), \quad \text{and} \\
                \sin(s) \sin(t) & = \frac{1}{2} \big( \cos(s-t)-\cos(s+t) \big),
    		\end{aligned}
        \end{equation}
        for any $s,t \in \mathbb{R}$, we observe that $\mathcal{A}$ is a subalgebra of $\mathcal{B}_\psi(X)$. In addition, we define the vector subspace $\mathcal{W} = \linspan\left\lbrace X \ni x \mapsto a(x) y_0 \in Y: \, a \in \mathcal{A}, \, y_0 \in \mathcal{L} \right\rbrace \subseteq \mathcal{B}_\psi(X;Y)$, which is by definition an $\mathcal{A}$-submodule. In the following step, we first show that $\mathcal{W}$ is contained in the closure of $\mathcal{NN}^{\mathcal{H},\rho,\mathcal{L}}_{X,Y}$ with respect to $\Vert \cdot \Vert_{\mathcal{B}_\psi(X;Y)}$, from which the conclusion follows by applying the weighted vector-valued Stone-Weierstrass theorem (Theorem~\ref{ThmStoneWeierstrassBpsi}); see below for the details.
	    
	    Next, we prove that $\mathcal{W}$ is contained in the closure of $\mathcal{NN}^{\mathcal{H},\rho,\mathcal{L}}_{X,Y}$ with respect to $\Vert \cdot \Vert_{\mathcal{B}_\psi(X;Y)}$. To this end, we fix some $\varepsilon > 0$, $h \in \mathcal{H}$, and $y_0 \in \mathcal{L}$, and define the constant $C_h := \sup_{x \in X} \frac{\psi_1(h(x))}{\psi(x)} + 1 > 0$. Then, by using that $\rho \in C^0(\mathbb{R})$ is $\psi_1$-activating, i.e.~$\mathcal{NN}^\rho_{\mathbb{R},\mathbb{R}}$ is a dense subset of $\mathcal{B}_{\psi_1}(\mathbb{R})$, there exists some $\varphi_1 \in \mathcal{NN}^\rho_{\mathbb{R},\mathbb{R}}$ of the form $\mathbb{R} \ni z \mapsto \varphi_1(z) = \sum_{n=1}^N w_n \rho(a_n z + b_n) \in \mathbb{R}$, with $N \in \mathbb{N}$, $a_n \in \mathbb{N}_0$, and $b_n, w_n \in \mathbb{R}$, such that
		\begin{equation}
			\label{EqThmUATProof4}
			\Vert \cos - \varphi_1 \Vert_{\mathcal{B}_{\psi_1}(\mathbb{R})} = \sup_{z \in \mathbb{R}} \frac{\vert \cos(z) - \varphi_1(z) \vert}{\psi_1(z)} < \frac{\varepsilon}{C_h (1+\Vert y_0 \Vert_Y)}.
		\end{equation}
		Now, for the function $\left( x \mapsto w(x) := \cos(h(x)) y_0 \right) \in \mathcal{W}$, we define the corresponding FNN $\left( x \mapsto \varphi(x) := y_0 \varphi_1(h(x)) \right) \in \mathcal{NN}^{\mathcal{H},\rho,\mathcal{L}}_{X,Y}$. Hereby, we have used that for every $n = 1,\ldots,N$ it holds that $a_n h + b_n \in \mathcal{H}$ (as $\mathcal{H}$ is closed under addition and contains the constants) and that $w_n y_0 \in \mathcal{L}$ (as $\mathcal{L} \subseteq Y$ is a vector subspace). Then, we conclude from \eqref{EqThmUATProof4} that
		\begin{equation*}
		    \begin{aligned}
		        \Vert w - \varphi \Vert_{\mathcal{B}_\psi(X;Y)} & = \sup_{x \in X} \frac{\Vert y_0 \cos(h(x)) - y_0 \varphi_1(h(x)) \Vert_Y}{\psi(x)} \\
		        & \leq \Vert y_0 \Vert_Y \sup_{x \in X} \frac{\psi_1(h(x))}{\psi(x)} \sup_{x \in X} \frac{\vert \cos(h(x)) - \varphi_1(h(x)) \vert}{\psi_1(h(x))} \\
		        & \leq C_h \Vert y_0 \Vert_Y \sup_{z \in \mathbb{R}} \frac{\vert \cos(z) - \varphi_1(z) \vert}{\psi_1(z)} \\
		        & < C_h \Vert y_0 \Vert_Y \frac{\varepsilon}{C_h (1+\Vert y_0 \Vert_Y)} \leq \varepsilon.
		    \end{aligned}
		\end{equation*}
		Since $\varepsilon > 0$ was chosen arbitrarily, the map $\left( x \mapsto w(x) = \cos(h(x)) y_0 \right) \in \mathcal{W}$ belongs to the closure of $\mathcal{NN}^{\mathcal{H},\rho,\mathcal{L}}_{X,Y}$ with respect to $\Vert \cdot \Vert_{\mathcal{B}_\psi(X;Y)}$, which holds analogously true for $\left( x \mapsto \sin(h(x)) y_0 \right) \in \mathcal{W}$. Hence, we conclude from the triangle inequality that the entire $\mathcal{A}$-submodule $\mathcal{W}$ is contained in the closure of $\mathcal{NN}^{\mathcal{H},\rho,\mathcal{L}}_{X,Y}$ with respect to $\Vert \cdot \Vert_{\mathcal{B}_\psi(X;Y)}$.
		
		Now, we apply the weighted vector-valued Stone-Weierstrass theorem (Theorem~\ref{ThmStoneWeierstrassBpsi}) to show that $\mathcal{NN}^{\mathcal{H},\rho,\mathcal{L}}_{X,Y}$ is dense in $\mathcal{B}_\psi(X;Y)$. To this end, we observe that $\mathcal{A} \subseteq \mathcal{B}_\psi(X)$ is point separating, vanishes nowhere, and consists only of bounded maps, which shows that $\mathcal{A} \subseteq \mathcal{B}_\psi(X)$ is point separating and nowhere vanishing of $\psi_w$-moderate growth, for all $w \in \mathcal{W}$ (see Remark~\ref{RemStoneWstrassBdedFct}). In addition, by using that $\mathcal{L}$ is dense in $Y$, the set $\mathcal{W}(x) = \left\lbrace w(x): w \in \mathcal{W} \right\rbrace = \mathcal{L}$ is dense in $Y$, for all $x \in X$. Hence, by applying the weighted vector-valued Stone-Weierstrass theorem (Theorem~\ref{ThmStoneWeierstrassBpsi}), we conclude that $\mathcal{W}$ is dense in $\mathcal{B}_\psi(X;Y)$. Finally, since $\mathcal{W}$ is by the previous step contained in the closure of $\mathcal{NN}^{\mathcal{H},\rho,\mathcal{L}}_{X,Y}$ with respect to $\Vert \cdot \Vert_{\mathcal{B}_\psi(X;Y)}$, $\mathcal{NN}^{\mathcal{H},\rho,\mathcal{L}}_{X,Y}$ is also dense in $\mathcal{B}_\psi(X;Y)$.
	\end{proof}
	
	\begin{remark}
	    The global universal approximation result in Theorem~\ref{ThmUAT} can be generalized to locally convex topological vector space as output spaces (see \cite{schaefer99} for a study of these infinite dimensional vector spaces and also Remark~\ref{RemBanachOutput} and Remark~\ref{RemBanachOutput2}).
	\end{remark}
	
	\subsection{Global universal approximation for non-anticipative functionals}
	\label{SecUANAF}
	
	The space of stopped paths was originally introduced in \cite{dupire09,cont10,contfournie13} to generalize F\"ollmer's pathwise Ito calculus \cite{foellmer81} to non-anticipative functionals. By introducing a special type of FNNs, so-called \emph{non-anticipative functional input neural networks}, we now present the universal approximation result for this kind of functionals.
 
    We consider again the space of stopped $\alpha$-H\"older continuous paths given by
	\begin{equation*}
		\Lambda^\alpha_T = \lbrace (t,x^t): (t,x) \in [0,T] \times C^\alpha([0,T];\mathbb{R}^d) \rbrace \cong \left([0,T] \times C^\alpha([0,T];\mathbb{R}^d)\right) / \sim,
	\end{equation*}
	which is equipped with the metric $d_\infty((t,x),(s,y)) = \vert t-s \vert + \sup_{u \in [0,T]} \Vert x^t(u) - y^s(u) \Vert$, for $(t,x), (s,y) \in \Lambda^\alpha_T$. Then, we define $\psi: \Lambda^\alpha_T \rightarrow (0,\infty)$ again by $\psi(t,x) = \eta\left( \Vert x \Vert_{C^\alpha([0,T];\mathbb{R}^d)} \right)$, for $(t,x) \in \Lambda^\alpha_T$, where the function $\eta: [0,\infty) \rightarrow (0,\infty)$ is continuous, non-decreasing, and unbounded. Hence, $(\Lambda^\alpha_T,\psi)$ is a weighted space (see Example~\ref{ExWeightedSpaces}~\ref{ExWeightedSpaceHoelderStoppedPaths}).
	
	Now, we introduce non-anticipative functionals as defined in \cite{dupire09,cont10,contfournie13}, which are measurable maps $f: \Lambda^\alpha_T \rightarrow \mathbb{R}$ and can therefore include the history of a path.
	
	\begin{definition}
        \label{DefNAFNN}
		A map $f: \Lambda^\alpha_T \rightarrow \mathbb{R}$ is called a \emph{non-anticipative functional} if $f: \Lambda^\alpha_T \rightarrow \mathbb{R}$ is measureable. Moreover, $f: \Lambda^\alpha_T \rightarrow \mathbb{R}$ is called \emph{continuous} if $f: \Lambda^\alpha_T \rightarrow \mathbb{R}$ is continuous with respect to the metric $d_\infty$.
	\end{definition}
	
	By Lemma~\ref{LemmaBpsiEquivChar}~\ref{LemmaBpsiEquivChar2}, we observe that a non-anticipative functional $f: \Lambda^\alpha_T \rightarrow \mathbb{R}$ belongs to $\mathcal{B}_\psi(\Lambda^\alpha_T)$ if $f: \Lambda^\alpha_T \rightarrow \mathbb{R}$ is continuous and it holds that
	\begin{equation*}
	    \lim_{R \rightarrow \infty} \sup_{x \in \Lambda^\alpha_T \setminus K_R} \frac{\vert f(x) \vert}{\psi(x)} = 0.
	\end{equation*}
	Hence, we can apply the global universal approximation result below at least for continuous non-anticipative functionals, whose growth is controlled by $\psi: \Lambda^\alpha_T \rightarrow (0,\infty)$.
	
	Motivated by Example~\ref{ExANN}~\ref{ExANN2}, we introduce a specific type of functional input neural network defined on $\Lambda^\alpha_T$, which preserve the non-anticipative behaviour.
	
	\begin{definition}
        \label{DefFuncNN}
		For $\rho, \widetilde{\rho} \in C^0(\mathbb{R})$ and classical neural networks $\phi_{n,i} \in \mathcal{NN}^{\widetilde{\rho}}_{\mathbb{R},\mathbb{R}}$, we define a \emph{non-anticipative functional input neural network} $\varphi: \Lambda^\alpha_T \rightarrow \mathbb{R}$ by
		\begin{equation}
			\label{EqDefFuncNN}
			\varphi(t,x) = \sum_{n=1}^N y_n \rho\left( a_n t + \sum_{i=1}^d \int_0^t \phi_{n,i}(s) x_i(s) ds + b_n \right),
		\end{equation}
		for $(t,x) \in \Lambda^\alpha_T$, where $N \in \mathbb{N}$, $a_1 \ldots a_N, b_1, \ldots b_N, y_1 \ldots y_N \in \mathbb{R}$. We denote by $\mathcal{FN}^{\rho,\widetilde{\rho}}_{\Lambda^\alpha_T}$ the set of functional input neural networks of the form \eqref{EqDefFuncNN}.
	\end{definition}
	
	Note that a non-anticipative FNN $\varphi: \Lambda^\alpha_T \rightarrow \mathbb{R}$ satisfies $\varphi(t,x) = \varphi(t,x^t)$ by construction, for all $(t,x) \in \Lambda^\alpha_T$, which shows that $\varphi: \Lambda^\alpha_T \rightarrow \mathbb{R}$ is indeed a non-anticipative functional. Moreover, $\varphi: \Lambda^\alpha_T \rightarrow \mathbb{R}$ is continuous, as $\rho, \widetilde{\rho} \in C^0(\mathbb{R})$.
	
	We now apply the global universal approximation result in Theorem~\ref{ThmUAT} to show that every $f \in \mathcal{B}_\psi(\Lambda^\alpha_T)$ can be approximated by some non-anticipative FNN.
	
	\begin{corollary}[Universal approximation on $\mathcal{B}_\psi(\Lambda^\alpha_T)$]
		\label{CorUAT}
        For a continuous non-decreasing unbounded function $\eta: [0,\infty) \rightarrow (0,\infty)$, let $\Lambda^\alpha_T \ni (t,x) \mapsto \psi(t,x) := \eta\left( \Vert x \Vert_{C^\alpha([0,T];\mathbb{R}^d)} \right) \in (0,\infty)$ be an admissible weight function. Moreover, let $\psi_1: \mathbb{R} \rightarrow (0,\infty)$ be an admissible weight function such that $\psi_1: \mathbb{R} \rightarrow (0,\infty)$ is even, $\psi_1\vert_{[0,\infty)}: [0,\infty) \rightarrow (0,\infty)$ is increasing, and $\sup_{r > 0} \frac{\psi_1(C+cr)}{\eta(r)} < \infty$ for all $C,c > 0$. In addition, let $\rho \in C^0(\mathbb{R})$ be $\psi_1$-activating and let $\widetilde{\rho}: \mathbb{R} \rightarrow \mathbb{R}$ be such that $\mathcal{NN}^{\widetilde{\rho}}_{\mathbb{R},\mathbb{R}} \big\vert_{[0,T]}$ is a dense subset of $L^1([0,T])$.\footnote{We refer to e.g.~\cite[Theorem~4]{cybenko89}, \cite[Theorem~1]{hornik91}, and \cite[Proposition~1]{leshno93} for sufficient conditions.} Then, $\mathcal{FN}^{\rho,\widetilde{\rho}}_{\Lambda^\alpha_T}$ is dense in $\mathcal{B}_\psi(\Lambda^\alpha_T)$, i.e.~for every $f \in \mathcal{B}_\psi(\Lambda^\alpha_T)$ and $\varepsilon > 0$ there exists $\varphi \in \mathcal{FN}^{\rho,\widetilde{\rho}}_{\Lambda^\alpha_T}$ such that
		\begin{equation*}
			\Vert f - \varphi \Vert_{\mathcal{B}_\psi(\Lambda^\alpha_T)} < \varepsilon.
		\end{equation*}
	\end{corollary}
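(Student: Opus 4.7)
The plan is to apply the general universal approximation result in Theorem~\ref{ThmUAT} with input space $X = \Lambda^\alpha_T$, output Banach space $Y = \mathbb{R}$, linear readout $\mathcal{L} = \mathbb{R}$ (trivially dense in itself), activation $\rho$, and hidden layer weight $\psi_1$, using as additive family the set $\mathcal{H} \subseteq \mathcal{B}_\psi(\Lambda^\alpha_T)$ from Example~\ref{ExANN}\ref{ExANN2}. The crucial observation is that, with this choice, every element of $\mathcal{NN}^{\mathcal{H},\rho,\mathbb{R}}_{\Lambda^\alpha_T,\mathbb{R}}$ takes precisely the form \eqref{EqDefFuncNN}, so $\mathcal{NN}^{\mathcal{H},\rho,\mathbb{R}}_{\Lambda^\alpha_T,\mathbb{R}} = \mathcal{FN}^{\rho,\widetilde{\rho}}_{\Lambda^\alpha_T}$, and density of the former in $\mathcal{B}_\psi(\Lambda^\alpha_T)$ is exactly what we need.

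Of the hypotheses of Theorem~\ref{ThmUAT}, three are immediate: the property that $\mathcal{H}$ is an additive family contained in $\mathcal{B}_\psi(\Lambda^\alpha_T)$ is established in the proof of Example~\ref{ExANN}\ref{ExANN2}; $\rho$ is $\psi_1$-activating by assumption; and $\mathcal{L} = \mathbb{R}$ is dense in $\mathbb{R}$. The only substantive condition to verify is the growth estimate
\begin{equation*}
    \sup_{(t,x) \in \Lambda^\alpha_T} \frac{\psi_1(h(t,x))}{\psi(t,x)} < \infty \qquad \text{for every } h \in \mathcal{H}.
\end{equation*}
For this I would fix $h(t,x) = at + \sum_{i=1}^d \int_0^t \phi_i(s) x_i(s)\, ds + b$ and combine the H\"older embedding $\Vert x \Vert_{C^0([0,T];\mathbb{R}^d)} \leq (1+T^\alpha)\Vert x\Vert_{C^\alpha([0,T];\mathbb{R}^d)}$ (as used in the proof of Example~\ref{ExANN}\ref{ExANN2}) with the trivial bound $t \leq T$ to obtain
\begin{equation*}
    |h(t,x)| \;\leq\; C_h + c_h \Vert x \Vert_{C^\alpha([0,T];\mathbb{R}^d)}, \qquad C_h := |a|T + |b|, \quad c_h := (1+T^\alpha)\max_{i} \Vert \phi_i \Vert_{L^1([0,T])}.
\end{equation*}

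The main obstacle, and the only real technical step, is to pass from this pointwise bound on $|h(t,x)|$ to the uniform control of the ratio $\psi_1(h(t,x))/\psi(t,x)$. Using that $\psi_1\vert_{[0,\infty)}$ is increasing (so that $\psi_1(h(t,x)) \leq \psi_1(|h(t,x)|) \leq \psi_1(C_h + c_h \Vert x \Vert_{C^\alpha})$ under the mild symmetry implicit in the hypothesis), together with $\psi(t,x) = \eta(\Vert x \Vert_{C^\alpha})$, the assumption $\sup_{r > 0} \psi_1(C_h + c_h r)/\eta(r) < \infty$ directly yields the bound outside a bounded $C^\alpha$-ball, while inside such a ball the ratio is trivially bounded by continuity and positivity of $\psi_1$ and $\eta$. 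With the growth condition established, Theorem~\ref{ThmUAT} produces the density of $\mathcal{NN}^{\mathcal{H},\rho,\mathbb{R}}_{\Lambda^\alpha_T,\mathbb{R}} = \mathcal{FN}^{\rho,\widetilde{\rho}}_{\Lambda^\alpha_T}$ in $\mathcal{B}_\psi(\Lambda^\alpha_T)$, completing the proof.
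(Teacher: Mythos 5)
Your proposal is correct and takes essentially the same route as the paper: both invoke Theorem~\ref{ThmUAT} with the additive family $\mathcal{H}$ from Example~\ref{ExANN}~\ref{ExANN2}, the trivial readout $\mathcal{L} = \mathbb{R}$, and reduce matters to the growth bound $\sup_{(t,x)}\psi_1(h(t,x))/\psi(t,x) < \infty$, which both establish via the estimate $|h(t,x)| \le C_h + c_h\Vert x \Vert_{C^\alpha([0,T];\mathbb{R}^d)}$ (using $\Vert x \Vert_{C^0} \le (1+T^\alpha)\Vert x \Vert_{C^\alpha}$ and $t \le T$), the monotonicity of $\psi_1\vert_{[0,\infty)}$, and the hypothesis $\sup_{r>0}\psi_1(C+cr)/\eta(r) < \infty$. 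The paper's chain of inequalities is identical in substance, so there is nothing to add.
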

	\begin{proof}
		The result follows from Theorem~\ref{ThmUAT} with additive family $\mathcal{H} \subseteq \mathcal{B}_\psi(\Lambda^\alpha_T)$ as in Example~\ref{ExANN}~\ref{ExANN2}, activation function $\rho \in C^0(\mathbb{R})$, and $\mathcal{L} = \mathbb{R}$ for the linear readouts. Hence, we only need to show that $\sup_{(t,x) \in \Lambda^\alpha_T} \frac{\psi_1(h(t,x))}{\psi(t,x)} < \infty$, for all $h \in \mathcal{H}$. Indeed, for every $h \in \mathcal{H}$ of the form $h(t,x) = at + \sum_{i=1}^d \int_0^t \phi_i(s) x_i(s) ds + b \in \mathcal{H}$, with $a,b \in \mathbb{R}$ and $\phi_1,\ldots,\phi_d \in \mathcal{NN}^{\widetilde{\rho}}_{\mathbb{R},\mathbb{R}}$, we use that $\psi_1: \mathbb{R} \rightarrow (0,\infty)$ is even and the inequality $\Vert x \Vert_{C^0([0,T];\mathbb{R}^d)} \leq \left( 1 + T^\alpha \right) \Vert x \Vert_{C^\alpha([0,T];\mathbb{R}^d)}$ for any $(t,x) \in \Lambda^\alpha_T$ (see \eqref{EqThmHoelderEmbeddingProof1}) to conclude that
        \begin{equation*}
            \begin{aligned}
                & \sup_{(t,x) \in \Lambda^\alpha_T} \frac{\psi_1(h(t,x))}{\psi(t,x)} = \sup_{(t,x) \in \Lambda^\alpha_T} \frac{\psi_1(\vert h(t,x) \vert)}{\psi(t,x)} \leq \sup_{(t,x) \in \Lambda^\alpha_T} \frac{\psi_1\left( \vert at \vert + \left\vert \sum_{i=1}^d \int_0^t \phi_i(s) x_i(s) ds \right\vert + \vert b \vert \right)}{\eta\left( \Vert x \Vert_{C^\alpha([0,T];\mathbb{R}^d)} \right)} \\
                & \quad\quad \leq \sup_{(t,x) \in \Lambda^\alpha_T} \frac{\psi_1\left( \vert a \vert T + \vert b \vert + d \max_{i=1,\ldots,d} \Vert \phi_i \Vert_{L^1([0,T])} \Vert x \Vert_{C^0([0,T];\mathbb{R}^d)} \right)}{\eta\left( \Vert x \Vert_{C^\alpha([0,T];\mathbb{R}^d)} \right)} \\
                & \quad\quad \leq \sup_{(t,x) \in \Lambda^\alpha_T} \frac{\psi_1\left( \vert a \vert T + \vert b \vert + \left( 1 + T^\alpha \right) d \max_{i=1,\ldots,d} \Vert \phi_i \Vert_{L^1([0,T])} \Vert x \Vert_{C^\alpha([0,T];\mathbb{R}^d)} \right)}{\eta\left( \Vert x \Vert_{C^\alpha([0,T];\mathbb{R}^d)} \right)} \\
                & \quad\quad \leq \sup_{r > 0} \frac{\psi_1\left( \vert a \vert T + \vert b \vert + \left( 1 + T^\alpha \right) d \max_{i=1,\ldots,d} \Vert \phi_i \Vert_{L^1([0,T])} r \right)}{\eta(r)} < \infty.
            \end{aligned}
        \end{equation*}
        Hence, the conclusion follows from Theorem~\ref{ThmUAT}.
	\end{proof}
	
	In Section~\ref{SecNE}, we provide some numerical examples, which illustrate how the theoretical approximation result in Corollary~\ref{CorUAT} can be applied to learn a non-anticipative functional by a non-anticipative FNN. In particular for applications in finance, non-anticipative FNNs can be used within the framework of Stochastic Portfolio Theory (SPT) introduced by R. Fernholz in \cite{fernholz02} to learn optimal path-dependent portfolios, see \cite{schmocker22}.
	
	\section{Universal approximation on weighted spaces for linear functions of the signature}
	\label{SecSM}
	
	In this section, we give another application of the weighted real-valued Stone-Weierstrass theorem (Theorem~\ref{ThmStoneWeierstrassBpsiR}), which is related to the theory of rough paths introduced by Terry Lyons in \cite{lyons07} (we also refer to the textbooks \cite{friz10,friz2020course}). More precisely, we will show that a path space functional can be approximated \emph{globally} by using linear functions of the paths' signature, a notion which goes back to the work of Chen in \cite{chen57}. This is in contrast to the usual approximation results on compact sets of the path space, e.g.~for finite variation paths or for continuous functions depending on the whole signature (instead of the rough path), see \cite[Theorem~3.1]{levin13}, \cite[Theorem~1]{kiraly19}, \cite[Proposition~4.5]{lyons20} and \cite[Section~3]{cuchiero2022signature}. Moreover, we refer to \cite{BHRS:21,moeller22} for the situation involving not just the approximation of a functional at a fixed \emph{final value} $T > 0$, but a uniform approximation on the whole time interval $[0,T]$.
	
	In order to make the current article self-contained, we briefly introduce the main ideas of the signature. In many cases, the data is ordered sequentially and can be seen as a discretization of a path $X: [0,T] \rightarrow Z$ with values in some Banach space $(Z,\Vert \cdot \Vert_Z)$, where we assume in the following that $Z = \mathbb{R}^d$ is finite dimensional. For example, the path $X: [0,T] \rightarrow Z$ may describe the air temperature at a given location (depending on time), the motion of a pen on a paper, or the stock price processes in a financial market. Now, given a continuous path $X: [0,T] \rightarrow \mathbb{R}^d$ of finite variation, we may define its \emph{signature} (at terminal time) as the infinite collection of iterated integrals of the path, i.e.
	\begin{equation*}
		\mathbb{X}_T := \left( 1, \int_{0 < u_1 < T} dX_{u_1}, \ldots, \int_{0 < u_1 < \ldots < u_N < T} dX_{u_1} \otimes \cdots \otimes dX_{u_N}, \ldots \right) \in T((\mathbb{R}^d)),
	\end{equation*}
	where $T((\mathbb{R}^d)) := \prod_{n=0}^{\infty} (\mathbb{R}^d)^{\otimes n}$ denotes the extended tensor algebra (see Section~\ref{SubsecConcepts}). Notice that the iterated integrals are well-defined as the path $X: [0,T] \rightarrow \mathbb{R}^d$ is of finite variation. For more irregular paths (e.g.~$\alpha$-H\"older continuous paths $X: [0,T] \rightarrow \mathbb{R}^d$), one relies on the theory of rough paths to define its signature.
	
	In this case, a linear function of the signature (at terminal time $T$) is a map of the form $\mathbb{X}_T \mapsto l(\mathbb{X}_T) := \sum_{I \in \lbrace 1,\ldots,d \rbrace^N} a_I \langle e_I, \mathbb{X}_T \rangle$, for some $N \in \mathbb{N}_0$, where $a_I \in \mathbb{R}$, and where $e_I \in T((\mathbb{R}^d))$ denotes the $I$-th basis element of $T((\mathbb{R}^d))$ so that $\langle e_I, \mathbb{X}_T \rangle$ returns the $I$-th entry of the signature  $\mathbb{X}_T$. In the following, we show that these linear functions of the signature are able to approximate any sufficiently regular path functional.
	
	In order to derive the universal approximation results for linear functions of the signature, one relies in particular on two key properties. First, that the signature at the terminal time determines uniquely the path (up to so-called tree-like equivalences, see \cite{hambly10,boedihardjo16}), which ensures point separation. Second, that every polynomial of the signature can be realized as a linear function via the so-called shuffle product, which yields the algebra property of linear functions of the signature. While the classical Stone-Weierstrass theorem then only guarantees the approximation on compact subsets of paths (e.g.~with respect to a certain $p$-variation metric), there have already been some attempts to go beyond this rather restrictive setting. As already mentioned in the introduction, \cite{chevyrev22} used the strict topology originating from \cite{giles71} to formulate a global approximation result, however not with the ``true'' signature but with a bounded normalization implying that the tractability properties of the signature (like computing the expected signature analytically as e.g.~done in~\cite{cuchiero2023signature} for generic classes of stochastic processes) are lost.

	In contrast, our weighted space setup allows to work with the ``true'' signature and thus to preserve its tractability properties, while we can still provide a global approximation result. To this end, we consider the path space of $\alpha$-H\"older continuous paths as a weighted space, which we either equip with an $0 \leq \alpha' < \alpha$-Hölder topology or the weak-$*$-topology as considered in Appendix~\ref{AppBanachPredual}.
 
	\subsection{Concepts and notation related to the signature of (rough) paths}
	\label{SubsecConcepts}
	
	We recall in this section important notions needed to deal with the signature of rough paths. For some fixed $d \in \mathbb{N}$, the \emph{extended tensor algebra (over $\mathbb{R}^d$)} is defined as
    \begin{equation*}
        T((\mathbb{R}^d)) := \prod_{n=0}^{\infty} (\mathbb{R}^d)^{\otimes n}
    \end{equation*}
    where $(\mathbb{R}^d)^{\otimes n}$ denotes the $n$-th tensor product of $\mathbb{R}^d$ (with convention $(\mathbb{R}^d)^{\otimes 0} := \mathbb{R}$), which is equipped with the Euclidean norm $\Vert \mathbf{a} \Vert_{(\mathbb{R}^d)^{\otimes n}} := \big( \sum_{i_1,\ldots,i_n=1}^d \vert \mathbf{a}_{i_1,\ldots,i_n} \vert^2 \big)^{1/2}$ for $\mathbf{a} := (\mathbf{a}_{i_1,\ldots,i_n})_{i_1,\ldots,i_n\in\lbrace 1,\ldots,d \rbrace}$. We endow $T((\mathbb{R}^d))$ with the standard addition ``$+$'', tensor multiplication ``$\otimes$'', and scalar multiplication ``$\cdot$'' defined by
	\begin{equation*}
		\mathbf{a} + \mathbf{b} = \left( \mathbf{a}^{(n)} + \mathbf{b}^{(n)} \right)_{n=0}^\infty, \quad \mathbf{a} \otimes \mathbf{b} = \left( \sum_{k=0}^n \mathbf{a}^{(n-k)} \otimes \mathbf{b}^{(k)} \right)_{n=0}^\infty, \quad \lambda \cdot \mathbf{a} = \left( \lambda \mathbf{a}^{(n)} \right)_{n=0}^\infty,
	\end{equation*}
	for $\mathbf{a} = (\mathbf{a}^{(n)})_{n=0}^\infty \in T((\mathbb{R}^d))$, $\mathbf{b} = (\mathbf{b}^{(n)})_{n=0}^\infty \in T((\mathbb{R}^d))$, and $\lambda \in \mathbb{R}$. Moreover, for $N \in \mathbb{N}$, the \emph{truncated tensor algebra} is defined as
    \begin{equation*}
        T^N(\mathbb{R}^d) := \bigoplus_{n=0}^N (\mathbb{R}^d)^{\otimes n},
    \end{equation*}
    where addition ``$+$'', tensor multiplication ``$\otimes$'', and scalar multiplication are defined by
	\begin{equation*}
		\mathbf{a} + \mathbf{b} = \left( \mathbf{a}^{(n)} + \mathbf{b}^{(n)} \right)_{n=0}^N, \quad \mathbf{a} \otimes \mathbf{b} = \left( \sum_{k=0}^n \mathbf{a}^{(n-k)} \otimes \mathbf{b}^{(k)} \right)_{n=0}^N, \quad \lambda \cdot \mathbf{a} = \left( \lambda \mathbf{a}^{(n)} \right)_{n=0}^N,
	\end{equation*}
	for $\mathbf{a} = (\mathbf{a}^{(n)})_{n=0}^N \in T^N(\mathbb{R}^d)$, $\mathbf{b} = (\mathbf{b}^{(n)})_{n=0}^N \in T^N(\mathbb{R}^d)$, and $\lambda \in \mathbb{R}$. In addition, we equip $T^N(\mathbb{R}^d)$ with the norm $\Vert \mathbf{a} \Vert_{T^N(\mathbb{R}^d)} := \max_{n=0,\ldots,N} \Vert \mathbf{a}^{(n)} \Vert_{(\mathbb{R}^d)^{\otimes n}}$, for $\mathbf{a} = (\mathbf{a}^{(n)})_{n=0}^N \in T^N(\mathbb{R}^d)$. Furthermore, we consider the subsets $T^N_0(\mathbb{R}^d)$ and $T^N_1(\mathbb{R}^d)$ of $T^N(\mathbb{R}^d)$ consisting of elements $\mathbf{a} \in T^N(\mathbb{R}^d)$ with $\mathbf{a}^{(0)} = 0$ and $\mathbf{a}^{(0)} = 1$, respectively.

    In order to adapt the Lie group point of view on weakly geometric rough paths, we observe that $T_1^N(\mathbb{R}^d)$ is a Lie group under $\otimes$, truncated at level $N$, with unit element $\mathbf{1} := (1,0,\ldots,0) \in T_1^N(\mathbb{R}^d)$. Moreover, we define the free step-$N$ nilpotent Lie algebra $\mathfrak{g}^N(\mathbb{R}^d)$ via
	\begin{equation*}
	    \mathfrak{g}^N(\mathbb{R}^d) = \mathbb{R}^d \oplus [\mathbb{R}^d,\mathbb{R}^d] \oplus \cdots \oplus [\mathbb{R}^d,[\mathbb{R}^d,\ldots[\mathbb{R}^d,\mathbb{R}^d]]],
	\end{equation*}
	where $(\mathbf{g},\mathbf{h}) \mapsto [\mathbf{g},\mathbf{h}] := \mathbf{g} \otimes \mathbf{h} - \mathbf{h} \otimes \mathbf{g} \in T_0^N(\mathbb{R}^d)$ denotes the Lie bracket, for $\mathbf{g},\mathbf{h} \in T^N(\mathbb{R}^d)$. In addition, we define the exponential map by
	\begin{equation*}
        T_0^N(\mathbb{R}^d) \ni \mathbf{b} \quad \mapsto \quad \exp^{(N)}(\mathbf{b}) := \mathbf{1} + \sum_{n=1}^N \frac{1}{n!} \mathbf{b}^{\otimes n} \in T_1^N(\mathbb{R}^d),
	\end{equation*}
    whose inverse is the logarithm
    \begin{equation*}
        T_1^N(\mathbb{R}^d) \ni \mathbf{1} + \mathbf{a} \quad \mapsto \quad \log^{(N)}(\mathbf{1} + \mathbf{a}) := \sum_{n=1}^N \frac{(-1)^{n+1}}{n} \mathbf{a}^{\otimes n} \in T_0^N(\mathbb{R}^d).
	\end{equation*}
    Then, the image $G^N(\mathbb{R}^d) := \exp^{(N)}\left( \mathfrak{g}^N(\mathbb{R}^d) \right)$ is a subgroup of $T_1^N(\mathbb{R}^d)$ with respect to $\otimes$, which is called the \emph{free step-$N$ nilpotent Lie group} (see also \cite[Definition~7.30]{friz10}).
 
    Moreover, on $G^N(\mathbb{R}^d)$, we consider the Carnot-Carath\'eodory norm $\Vert \cdot \Vert_{cc}$ defined by
	\begin{equation*}
	    \Vert \mathbf{g} \Vert_{cc} := \inf\left\lbrace \int_0^T \Vert dX_t \Vert: X \in C^{1-var}([0,T];\mathbb{R}^d) \text{ such that } \mathbb{X}^N_T = \mathbf{g} \right\rbrace,
	\end{equation*}
	for $\mathbf{g} \in G^N(\mathbb{R}^d)$, see \cite[Theorem~7.32]{friz10}. Hereby, $\mathbb{X}^N$ denotes the \emph{signature truncated at level $N$} of a path $X \in C^0([0,T];\mathbb{R}^d)$ of finite variation, which is given by
	\begin{equation*}
	    \mathbb{X}^N_{s,t} := \left( 1, \int_{s < u < t} dX_u, \ldots, \int_{s < u_1 < \ldots < u_N < t} dX_{u_1} \otimes \cdots \otimes dX_{u_N} \right) \in T^N(\mathbb{R}^d),
	\end{equation*}
	for $0 \leq s \leq t \leq T$. Moreover, for $s=0$, we just write $\mathbb{X}^N_t$. The Carnot-Carath\'eodory norm $\Vert \cdot \Vert_{cc}$ induces a metric via
	\begin{equation*}
	    d_{cc}(\mathbf{g}, \mathbf{h}) := \left\Vert \mathbf{g}^{-1} \otimes \mathbf{h} \right\Vert_{cc},
	\end{equation*}
	for $\mathbf{g}, \mathbf{h} \in G^N(\mathbb{R}^d)$. We equip $G^N(\mathbb{R}^d)$ with this Carnot-Carath\'eodory metric $d_{cc}$, which turns $(G^N(\mathbb{R}^d),d_{cc})$ into a metric space. In addition, we define the set of group-like elements as $G(\mathbb{R}^d) := \lbrace \mathbf{g} \in T((\mathbb{R}^d)): (\mathbf{g}^{(0)},\ldots,\mathbf{g}^{(N)}) \in G^N(\mathbb{R}^d) \text{ for all } N \in \mathbb{N} \rbrace$.

    For a basis $\lbrace e_1,\ldots,e_d \rbrace$ of $\mathbb{R}^d$ and a multi-index\footnote{Note that this differs from the usual notion of multi-indices $\alpha := (\alpha_1,\ldots,\alpha_m) \in \mathbb{N}^m_0$.} $I = (i_1,\ldots,i_n) \in \lbrace 1,\ldots,d \rbrace^n$, we use the notations $\vert I \vert = n$ and $e_I := e_{i_1} \otimes \cdots \otimes e_{i_n} \in (\mathbb{R}^d)^{\otimes n}$. Then, $(e_I)_{\vert I \vert = n}$ forms a basis of $(\mathbb{R}^d)^{\otimes n}$. Moreover, we denote by $e_\varnothing$ the basis element of $(\mathbb{R}^d)^{\otimes 0} = \mathbb{R}$.
    
    For two multi-indices $I = (i_1,\ldots,i_n) \in \lbrace 1,\ldots,d \rbrace^n$ and $J = (j_1,\ldots,j_m) \in \lbrace 1,\ldots,d \rbrace^m$, $m,n \in \mathbb{N}$, and two indices $a,b \in \lbrace 1,\ldots,d \rbrace$, we define the \emph{shuffle product} $\shuffle$ recursively by
	\begin{equation*}
		(I,a) \shuffle (J,b) := ((I \shuffle (J,b)),a) + (((I,a) \shuffle J), b),
	\end{equation*}
	with $I \shuffle \varnothing = \varnothing \shuffle I = I$, where $(I,a)$ denotes the concatenation of multi-indices. In addition, we denote by $\langle e_I, \mathbf{g} \rangle$ the component of any $\mathbf{g} \in G((\mathbb{R}^d))$ corresponding to the $I$-th basis element of $T((\mathbb{R}^d))$. Then, for every $\mathbf{g} \in G((\mathbb{R}^d))$ and multi-indices $I,J$, it holds that
	\begin{equation}
		\label{EqDefShuffleProperty}
		\langle e_I, \mathbf{g} \rangle \, \langle e_J, \mathbf{g} \rangle = \langle e_I \shuffle e_J, \mathbf{g} \rangle,
	\end{equation}
	where $e_I \shuffle e_J := \sum_{k=1}^K e_{I_k}$, with $K$ and $(I_k)_{k=1,\ldots,K}$ being determined via $I \shuffle J = \sum_{k=1}^K I_k$.	

    \subsection{Global universal approximation of $\alpha$-H\"older rough paths}
    \label{sec:hoelder}

    In this section, we consider weakly geometric $\alpha$-H\"older rough paths, which are defined as $\alpha$-H\"older continuous paths with values in $G^{\lfloor 1/\alpha \rfloor}(\mathbb{R}^d)$, see \cite[Definition~9.15~(iii)]{friz10}.
	
	\begin{definition}
	    For $\alpha \in (0,1]$, a continuous path $\mathbf{X}: [0,T] \rightarrow G^{\lfloor 1/\alpha \rfloor}(\mathbb{R}^d)$ of the form
	    \begin{equation*}
	        [0,T] \ni t \quad \mapsto \quad \mathbf{X}_t := (1,X_t,\mathbb{X}^{(2)}_t,\ldots,\mathbb{X}^{(\lfloor 1/\alpha \rfloor)}_t) \in G^{\lfloor 1/\alpha \rfloor}(\mathbb{R}^d)
	    \end{equation*}
	    with $\mathbf{X}_0 = \mathbf{1} := (1, 0 \ldots, 0) \in G^{\lfloor 1/\alpha \rfloor}(\mathbb{R}^d)$ is called a \emph{weakly geometric $\alpha$-H\"older rough path} if the $\alpha$-H\"older norm
	    \begin{equation*}
	        \Vert \mathbf{X} \Vert_{cc,\alpha} := \sup_{s,t \in [0,T] \atop s < t} \frac{d_{cc}(\mathbf{X}_s,\mathbf{X}_t)}{\vert s-t \vert^\alpha}
	    \end{equation*}
	    is finite. We denote by $C^\alpha_0([0,T];G^{\lfloor 1/\alpha \rfloor}(\mathbb{R}^d))$ the space of such \emph{weakly geometric $\alpha$-H\"older rough paths}, which we equip with the metric
        \begin{equation}
            \label{EqDefWGRPHoelder1}
            d_{cc,\alpha'}(\mathbf{X},\mathbf{Y}) := \sup_{s,t \in [0,T] \atop s < t} \frac{d_{cc}(\mathbf{X}_{s,t},\mathbf{Y}_{s,t})}{\vert s-t \vert^{\alpha'}},
        \end{equation}
       for $\mathbf{X},\mathbf{Y} \in C^\alpha_0([0,T];G^{\lfloor 1/\alpha \rfloor}(\mathbb{R}^d))$ and $0 \leq \alpha' \leq \alpha$, where $\mathbf{X}_{s,t} := \mathbf{X}_s^{-1} \otimes \mathbf{X}_t \in G^{\lfloor 1/\alpha \rfloor}(\mathbb{R}^d)$, see \cite[Definition 8.1]{friz10}. Moreover, we define the metric
       \begin{equation*}
            d_{cc,\infty}(\mathbf{X},\mathbf{Y}) = \sup_{t \in [0,T]} d_{cc}(\mathbf{X}_t,\mathbf{Y}_t),
        \end{equation*}
        for $\mathbf{X},\mathbf{Y} \in C^\alpha_0([0,T];G^{\lfloor 1/\alpha \rfloor}(\mathbb{R}^d))$.
    \end{definition}

    In order to relate the rough path metrics $d_{cc,\alpha'}$ and $d_{cc,\infty}$ to the $C^{\alpha'}$-topologies on H\"older spaces (see Section~\ref{SecNotation}), we introduce for every $\alpha' \in [0,\alpha] \cup \lbrace \infty \rbrace$ the embedding
    \begin{equation}
        \label{EqRPHoelderEmb}
        (C^\alpha_0([0,T];G^{\lfloor 1/\alpha \rfloor}(\mathbb{R}^d)),d_{cc,\alpha'}) \ni \mathbf{X} \,\,\, \mapsto \,\,\, \big( \mathbf{X}^{(1)}, \ldots, \mathbf{X}^{(\lfloor 1/\alpha \rfloor)} \big) \in \bigoplus_{n=1}^{\lfloor 1/\alpha \rfloor} (C^{n \alpha}_0([0,T];(\mathbb{R}^d)^{\otimes n}),\tau_{n\alpha'}),
    \end{equation}
    where $\bigoplus_{n=1}^{\lfloor 1/\alpha \rfloor} C^{n \alpha}_0([0,T];(\mathbb{R}^d)^{\otimes n})$ is a Banach space under the norm $(\mathbf{X}^{(1)}, \ldots, \mathbf{X}^{(\lfloor 1/\alpha \rfloor)}) \mapsto \max_{n=1,\ldots,\lfloor 1/\alpha \rfloor} \Vert \mathbf{X}^{(n)} \Vert_{C^{n \alpha}_0([0,T];(\mathbb{R}^d)^{\otimes n})}$. Now, we show that the map \eqref{EqRPHoelderEmb} is continuous.

    \begin{lemma}
        \label{LemmaRPHoelderEmb}
        For every $\alpha' \in [0,\alpha] \cup \lbrace \infty \rbrace$, the embedding \eqref{EqRPHoelderEmb} is continuous and admits a continuous left-inverse.
    \end{lemma}
    \begin{proof}
        Fix some $0 \leq \alpha' \leq \alpha$. Then, for every $n = 1,\ldots,\lfloor 1/\alpha \rfloor$, we first observe that $(\mathbb{R}^d)^{\otimes n}$ is a finite dimensional vector space, whence a dual Banach space, where the norm topology induced by $\Vert \cdot \Vert_{(\mathbb{R}^d)^{\otimes n}}$ coincides with the weak-$*$-topology. Moreover, by using \cite[Theorem~7.44]{friz10}, the homogeneous norm $G^{\lfloor 1/\alpha \rfloor}(\mathbb{R}^d) \ni \mathbf{g} \mapsto \max_{n=1,\ldots,\lfloor 1/\alpha \rfloor} \Vert \mathbf{g}^{(n)} \Vert_{(\mathbb{R}^d)^{\otimes n}}^{1/n} \in \mathbb{R}$ is equivalent to the Carnot-Carath\'eodory norm $\Vert \cdot \Vert_{cc}$, i.e.~there exists a constant $C_\alpha > 0$ such that for every $\mathbf{g},\mathbf{h} \in G^{\lfloor 1/\alpha \rfloor}(\mathbb{R}^d)$ it holds that
        \begin{equation}
            \label{EqLemmaRPHoelderEmbProof1}
            \frac{1}{C_\alpha} \max_{n=1,\ldots,\lfloor 1/\alpha \rfloor} \Vert \mathbf{g}^{(n)} \Vert_{(\mathbb{R}^d)^{\otimes n}}^\frac{1}{n} \leq \Vert \mathbf{g} \Vert_{cc} \leq C_\alpha \max_{n=1,\ldots,\lfloor 1/\alpha \rfloor} \Vert \mathbf{g}^{(n)} \Vert_{(\mathbb{R}^d)^{\otimes n}}^\frac{1}{n}.
        \end{equation}
        Hence, for every $\varepsilon \in (0,1)$, we can choose some $\delta \in (0,\varepsilon/C_\alpha)$ to conclude from \eqref{EqLemmaRPHoelderEmbProof1} that for every $\mathbf{X},\mathbf{Y} \in C^\alpha_0([0,T];G^{\lfloor 1/\alpha \rfloor}(\mathbb{R}^d))$ with $d_{cc,\alpha'}(\mathbf{X},\mathbf{Y}) < \delta$ it holds that
        \begin{equation*}
            \begin{aligned}
                & \max_{n=1,\ldots,\lfloor 1/\alpha \rfloor} \Vert \mathbf{X}^{(n)} - \mathbf{Y}^{(n)} \Vert_{C^{n\alpha'}([0,T];(\mathbb{R}^d)^{\otimes n})} = \sup_{s,t \in [0,T] \atop s \neq t} \frac{\Vert ((\mathbf{X}-\mathbf{Y})_s - (\mathbf{X}-\mathbf{Y})_t)^{(n)} \Vert_{(\mathbb{R}^d)^{\otimes n}}}{\vert t-s \vert^{n\alpha'}} \\
                & \quad\quad \leq \max_{n=1,\ldots,\lfloor 1/\alpha \rfloor} \left( \sup_{s,t \in [0,T] \atop s \neq t} \frac{\Vert ((\mathbf{X}-\mathbf{Y})_s - (\mathbf{X}-\mathbf{Y})_t)^{(n)} \Vert_{(\mathbb{R}^d)^{\otimes n}}^\frac{1}{n}}{\vert t-s \vert^{\alpha'}} \right)^n \\
                & \quad\quad \leq \max_{n=1,\ldots,\lfloor 1/\alpha \rfloor} \left( \sup_{s,t \in [0,T] \atop s \neq t} \frac{\max_{m=1,\ldots,\lfloor 1/\alpha \rfloor} \Vert ((\mathbf{X}-\mathbf{Y})_s - (\mathbf{X}-\mathbf{Y})_t)^{(m)} \Vert_{(\mathbb{R}^d)^{\otimes m}}^\frac{1}{m}}{\vert t-s \vert^{\alpha'}} \right)^n \\
                & \quad\quad \leq \max_{n=1,\ldots,\lfloor 1/\alpha \rfloor} \left( C_\alpha \sup_{s,t \in [0,T] \atop s \neq t} \frac{d_{cc}(\mathbf{X}_{s,t},\mathbf{Y}_{s,t})}{\vert t-s \vert^{\alpha'}} \right)^n = \max_{n=1,\ldots,\lfloor 1/\alpha \rfloor} \left( C_\alpha d_{cc,\alpha'}(\mathbf{X},\mathbf{Y}) \right)^n \\
                & \quad\quad < \max_{n=1,\ldots,\lfloor 1/\alpha \rfloor} \left( C_\alpha \delta \right)^n < \max_{n=1,\ldots,\lfloor 1/\alpha \rfloor} \varepsilon^n \leq \varepsilon.
            \end{aligned}
        \end{equation*}
        Thus, the images of $\mathbf{X}, \mathbf{Y}$ under \eqref{EqRPHoelderEmb} have at most distance $\varepsilon$ in $\bigoplus_{n=1}^{\lfloor 1/\alpha \rfloor} C^{n \alpha'}_0([0,T];(\mathbb{R}^d)^{\otimes n})$, which shows that \eqref{EqRPHoelderEmb} is continuous.

        On the other hand, by using that \eqref{EqRPHoelderEmb} is injective, there exists a left-inverse of \eqref{EqRPHoelderEmb}. Then, for every $\varepsilon > 0$, we can choose some $\delta \in (0,\min(1,(\varepsilon/C_\alpha)^{\lfloor 1/\alpha \rfloor}))$ to conclude from \eqref{EqLemmaRPHoelderEmbProof1} that for every $\mathbf{X},\mathbf{Y} \in C^\alpha_0([0,T];G^{\lfloor 1/\alpha \rfloor}(\mathbb{R}^d))$ with $\max_{n=1,\ldots,\lfloor 1/\alpha \rfloor} \vert \mathbf{X}-\mathbf{Y} \vert_{C^{n\alpha'}([0,T];(\mathbb{R}^d){\otimes n})} < \delta$ it holds that
        \begin{equation*}
            \begin{aligned}
                d_{cc,\alpha'}(\mathbf{X},\mathbf{Y}) & = \sup_{s,t \in [0,T] \atop s \neq t} \frac{d_{cc}(\mathbf{X}_{s,t},\mathbf{Y}_{s,t})}{\vert t-s \vert^{\alpha'}} \\
                & \leq C_\alpha \sup_{s,t \in [0,T] \atop s \neq t} \frac{\max_{n=1,\ldots,\lfloor 1/\alpha \rfloor} \Vert ((\mathbf{X}-\mathbf{Y})_s - (\mathbf{X}-\mathbf{Y})_t)^{(n)} \Vert_{(\mathbb{R}^d)^{\otimes n}}^\frac{1}{n}}{\vert t-s \vert^{\alpha'}} \\
                & \leq C_\alpha \max_{n=1,\ldots,\lfloor 1/\alpha \rfloor} \left( \sup_{s,t \in [0,T] \atop s \neq t} \frac{\Vert ((\mathbf{X}-\mathbf{Y})_s - (\mathbf{X}-\mathbf{Y})_t)^{(n)} \Vert_{(\mathbb{R}^d)^{\otimes n}}}{\vert t-s \vert^{n\alpha'}} \right)^\frac{1}{n} \\
                & \leq C_\alpha \max_{n=1,\ldots,\lfloor 1/\alpha \rfloor} \left( \max_{m=1,\ldots,\lfloor 1/\alpha \rfloor} \Vert \mathbf{X}-\mathbf{Y} \Vert_{C^{m\alpha'}([0,T];(\mathbb{R}^d)^{\otimes m})} \right)^\frac{1}{n} \\
                & < C_\alpha \max_{n=1,\ldots,\lfloor 1/\alpha \rfloor} \delta^\frac{1}{n} < C_\alpha \delta^\frac{1}{\lfloor 1/\alpha \rfloor} \leq \varepsilon.
            \end{aligned}
        \end{equation*}
        This shows that the left-inverse of \eqref{EqRPHoelderEmb} is continuous, for all $0 \leq \alpha' \leq \alpha$. Moreover, a similar argument also holds true for the case $\alpha' = \infty$.
    \end{proof}

    Moreover, by using that each H\"older space $C^{n \alpha}_0([0,T];(\mathbb{R}^d)^{\otimes n})$, $n = 1,\ldots,\lfloor 1/\alpha \rfloor$, carries a weak-$*$-topology, we can equip $C^\alpha_0([0,T];G^{\lfloor 1/\alpha \rfloor}(\mathbb{R}^d))$ also with a weak-$*$-topology.

    \begin{remark}
        \label{RemRPHoelderWeakStar}
        Since each space $(C^{n \alpha}_0([0,T];(\mathbb{R}^d)^{\otimes n}),\Vert \cdot \Vert_{C^{n \alpha}_0([0,T];(\mathbb{R}^d)^{\otimes n})})$, $n = 1,\ldots,\lfloor 1/\alpha \rfloor$, is by Theorem~\ref{ThmHoelderPredual} a dual Banach space carrying a weak-$*$-topology $\tau_{w^*}^{(n)}$, we can equip $\bigoplus_{n=1}^{\lfloor 1/\alpha \rfloor} C^{n \alpha}([0,T];(\mathbb{R}^d)^{\otimes n})$ with a weak-$*$-topology $\tau_{w^*}$ defined as the product topology of $\tau_{w^*}^{(n)}$, $n = 1,\ldots,\lfloor 1/\alpha \rfloor$. Hence, the space of weakly geometric $\alpha$-H\"older rough paths $C^\alpha_0([0,T];G^{\lfloor 1/\alpha \rfloor}(\mathbb{R}^d))$ carries a weak-$*$-topology $\tau_{w^*}$ defined as the initial topology with respect to the inclusion $C^\alpha_0([0,T];G^{\lfloor 1/\alpha \rfloor}(\mathbb{R}^d)) \hookrightarrow (\bigoplus_{n=1}^{\lfloor 1/\alpha \rfloor} C^{n \alpha}([0,T];(\mathbb{R}^d)^{\otimes n}),\tau_{w^*})$.  
    \end{remark}
	
	Next, we define the truncated signature of order $N > \lfloor 1/\alpha \rfloor$ of a weakly geometric $\alpha$-H\"older rough path $\mathbf{X} \in C^\alpha_0([0,T];G^{\lfloor 1/\alpha \rfloor}(\mathbb{R}^d))$ as the unique Lyons extension yielding a path $\mathbb{X}^N: [0,T] \rightarrow G^N(\mathbb{R}^d)$, see \cite[Corollary~9.11~(ii)]{friz10}. Then, $\mathbb{X}^N$ starts from the unit element $\mathbf{1} \in G^N(\mathbb{R}^d)$ and has finite $\alpha$-H\"older norm $\Vert \cdot \Vert_{cc,\alpha}$. The components of $\mathbb{X}^N$ in $(\mathbb{R}^d)^{\otimes n}$ are denoted by $\mathbb{X}^{(n)}$ and the signature of $\mathbb{X}$ is given by
	\begin{equation*}
	    [0,T] \ni t \quad \mapsto \quad \mathbb{X}_t := \big( 1, X_t, \mathbb{X}^{(2)}_t, \ldots, \mathbb{X}^{(\lfloor 1/\alpha \rfloor)}_t,\ldots,\mathbb{X}^{(N)}_t,\ldots \big).
	\end{equation*}
	A linear function of the signature (at time $T$) is given as $l(\mathbb{X}_T) = \sum_{0 \leq \vert I \vert \leq N} a_I \langle e_I, \mathbb{X}_T \rangle$, for some $N \in \mathbb{N}_0$ and $a_I \in \mathbb{R}$. 
 
    To ensure point separation (without tree-like equivalences), we shall always add an additional component to $X$ representing time. More precisely, we define the subset
    \begin{equation*}
		\widehat{C}^\alpha_{d,T} := \left\lbrace \widehat{\mathbf{X}} \in C^\alpha_0([0,T];G^{\lfloor 1/\alpha \rfloor}(\mathbb{R}^{d+1})): \widehat{X}_t = (t,X_t), \text{ for all } t \in [0,T] \right\rbrace
	\end{equation*}
	of $C^\alpha_0([0,T];G^{\lfloor 1/\alpha \rfloor}(\mathbb{R}^{d+1}))$, where the $0$-th coordinate denotes running time. 
    
    Now, we equip $\widehat{C}^\alpha_{d,T}$ either with the $\alpha'$-H\"older metric $d_{cc,\alpha'}$ defined in \eqref{EqDefWGRPHoelder1}, for some $0 \leq \alpha' < \alpha$, or with the weak-$*$-topology described in Remark~\ref{RemRPHoelderWeakStar}. Then, in order to get a weighted space, we choose the weight function $\psi: \widehat{C}^\alpha_{d,T} \rightarrow (0,\infty)$ defined by
	\begin{equation}
        \label{eq:weight_holder}
		\psi(\widehat{\mathbf{X}}) = \exp\left( \beta \Vert \widehat{\mathbf{X}} \Vert_{cc,\alpha}^\gamma \right),
	\end{equation}
	for $\widehat{\mathbf{X}} \in \widehat{C}^\alpha_{d,T}$ and some $\beta > 0$ and $\gamma \geq \lfloor 1/\alpha \rfloor$. Then, the pre-image $K_R := \psi^{-1}((0,R])$ is a bounded subset of $(\widehat{C}^\alpha_{d,T},\Vert \cdot \Vert_{cc,\alpha})$, whence by using the embedding \eqref{EqRPHoelderEmb} also bounded in $\bigoplus_{n=1}^{\lfloor 1/\alpha \rfloor} C^{n \alpha}([0,T];(\mathbb{R}^{d+1})^{\otimes n})$. Thus, by using Remark~\ref{RemHoelder0}~\ref{RemHoelder0Embedding}, the pre-image $K_R := \psi^{-1}((0,R])$ is compact in $\bigoplus_{n=1}^{\lfloor 1/\alpha \rfloor} (C^{n \alpha}_0([0,T];(\mathbb{R}^{d+1})^{\otimes n}),\tau_{n\alpha'})$, for all $0 \leq \alpha' < \alpha$, whence by Lemma~\ref{LemmaRPHoelderEmb} also compact in $(\widehat{C}^\alpha_{d,T},d_{cc,\alpha'})$, for all $0 \leq \alpha' < \alpha$. On the other hand, for the weak-$*$-topology $\tau_{w^*}$ on $\widehat{C}^\alpha_{d,T}$, the Banach-Alaoglu theorem ensures that $K_R := \psi^{-1}((0,R])$ is compact in $(\bigoplus_{n=0}^{\lfloor 1/\alpha \rfloor} (C^{n \alpha}([0,T];(\mathbb{R}^{d+1})^{\otimes n}),\tau_{w^*})$, whence by Lemma~\ref{LemmaRPHoelderEmb} also compact in $(\widehat{C}^\alpha_{d,T},\tau_{w^*})$. This is in line with the observation that the $C^{\alpha'}$-topology and the weak-$*$-topology coincide on the norm-bounded subset $K_R \subseteq \widehat{C}^\alpha_{d,T}$ (see also Remark~\ref{RemHoelder0}~\ref{RemHoelder0Predual}). Hence, the weight function $\psi: \widehat{C}^\alpha_{d,T} \rightarrow (0,\infty)$ is on both spaces admissible.
	
	Now, we present the global universal approximation theorem for linear functions of the signature, which can approximate any path space functional in $\mathcal{B}_\psi(\widehat{C}^\alpha_{d,T})$. Even though the proof looks involved, the most important ingredients can be summarized as follows: certain linear functions of the signature are given by the integrals
	\begin{equation*}
		\int_0^T \langle e_I, \widehat{\mathbb{X}}_{t} \rangle \frac{t^k}{k!} dt
	\end{equation*}
 	for $k \in \mathbb{N}_0$, $0 \leq \vert I \vert \leq \lfloor 1/\alpha \rfloor$, which determine the rough path uniquely (see definition of $\widetilde{\mathcal{A}}$ in \eqref{EqThmUATHoelderProof1} below), and those linear functions are of $\psi$-moderate growth, i.e.~also their exponential lies in $\mathcal{B}_\psi(\widehat{C}^\alpha_{d,T})$.
	
	\begin{theorem}[Universal approximation on $\mathcal{B}_\psi(\widehat{C}^\alpha_{d,T})$]
		\label{ThmUATHoelder}
		The linear span of the set
        \begin{equation*}
            \left\lbrace \widehat{\mathbf{X}} \mapsto \langle e_I, \widehat{\mathbb{X}}_T \rangle: I \in \lbrace 0,\ldots,d \rbrace^N, \, N \in \mathbb{N}_0 \right\rbrace
        \end{equation*}
        is dense in $\mathcal{B}_\psi(\widehat{C}^\alpha_{d,T})$, i.e.~for every map $f \in \mathcal{B}_\psi(\widehat{C}^\alpha_{d,T})$ and every $\varepsilon > 0$ there exists a linear function of the form $\widehat{\mathbb{X}}_T \mapsto l(\widehat{\mathbb{X}}_T) := \sum_{0 \leq \vert I \vert \leq N} a_I \langle e_I, \widehat{\mathbb{X}}_T \rangle$, with some $N \in \mathbb{N}_0$ and $a_I \in \mathbb{R}$, such that
		\begin{equation*}
			\sup_{\widehat{\mathbf{X}} \in \widehat{C}^\alpha_{d,T}} \frac{\left\vert f(\widehat{\mathbf{X}}) - l(\widehat{\mathbb{X}}_T) \right\vert}{\psi(\widehat{\mathbf{X}})} < \varepsilon.
		\end{equation*}
	\end{theorem}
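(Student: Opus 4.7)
The plan is to apply the weighted real-valued Stone--Weierstrass theorem (Theorem~\ref{ThmStoneWeierstrassBpsiR}) to the subalgebra $\mathcal{A} \subseteq \mathcal{B}_\psi(\widehat{C}^\alpha_{d,T})$ generated as the linear span of the family $\{\widehat{\mathbf{X}} \mapsto \langle e_I, \widehat{\mathbb{X}}_T\rangle : I \in \{0,\ldots,d\}^N,\, N \in \mathbb{N}_0\}$. First, $\mathcal{A}$ is a subalgebra by the shuffle identity \eqref{EqDefShuffleProperty}, which gives $\langle e_I, \widehat{\mathbb{X}}_T\rangle \cdot \langle e_J, \widehat{\mathbb{X}}_T\rangle = \langle e_I \shuffle e_J, \widehat{\mathbb{X}}_T\rangle$ and keeps us inside linear functions of the signature; it vanishes nowhere since the constant $1 = \langle e_\varnothing, \widehat{\mathbb{X}}_T\rangle$ lies in $\mathcal{A}$. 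To verify $\mathcal{A} \subseteq \mathcal{B}_\psi(\widehat{C}^\alpha_{d,T})$, I would observe that each $\widehat{\mathbf{X}} \mapsto \langle e_I, \widehat{\mathbb{X}}_T\rangle$ is continuous on $\widehat{C}^\alpha_{d,T}$ in both the weak-$*$ topology and any $d_{cc,\alpha'}$-topology with $0 \leq \alpha' < \alpha$ (which coincide on $\|\cdot\|_{C^\alpha_0}$-bounded sets by Remark~\ref{RemHoelder0}~\ref{RemHoelder0Predual}) and satisfies the standard rough-path bound $|\langle e_I, \widehat{\mathbb{X}}_T\rangle| \leq C_I \|\widehat{\mathbf{X}}\|_{cc,\alpha}^{|I|} T^{\alpha |I|}$, which is polynomial in $\|\widehat{\mathbf{X}}\|_{cc,\alpha}$ and hence dominated by the superpolynomial weight $\psi(\widehat{\mathbf{X}}) = \exp(\beta \|\widehat{\mathbf{X}}\|_{cc,\alpha}^\gamma)$. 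Lemma~\ref{LemmaBpsiEquivChar}~\ref{LemmaBpsiEquivChar2} then gives the inclusion.

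The crucial technical step is constructing a point separating subspace $\widetilde{\mathcal{A}} \subseteq \mathcal{A}$ whose elements have exponentials still in $\mathcal{B}_\psi$. Following the hint in the theorem I would take
\begin{equation*}
\widetilde{\mathcal{A}} := \linspan\left\{ \widehat{\mathbf{X}} \mapsto \int_0^T \langle e_J, \widehat{\mathbb{X}}_t\rangle \frac{t^k}{k!}\, dt : k \in \mathbb{N}_0,\; 0 \leq |J| \leq \lfloor 1/\alpha\rfloor \right\}.
\end{equation*}
Each such integral is genuinely a linear function of $\widehat{\mathbb{X}}_T$: since time is the $0$-th coordinate of $\widehat{X}$, one has $t = \langle e_0, \widehat{\mathbb{X}}_t\rangle$, so by the shuffle identity $t^k/k! = \langle e_{(0,\ldots,0)}, \widehat{\mathbb{X}}_t\rangle$ (with $k$ zeros), giving $\langle e_J, \widehat{\mathbb{X}}_t\rangle \cdot t^k/k! = \langle e_J \shuffle e_{(0,\ldots,0)}, \widehat{\mathbb{X}}_t\rangle$; integrating against $dt = d\widehat{X}^{(0)}_t$ then produces $\langle (e_J \shuffle e_{(0,\ldots,0)}) \otimes e_0, \widehat{\mathbb{X}}_T\rangle$, an element of $\mathcal{A}$. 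For point separation, if $\widehat{\mathbf{X}}, \widehat{\mathbf{Y}} \in \widehat{C}^\alpha_{d,T}$ agree on all of $\widetilde{\mathcal{A}}$, then by Weierstrass' theorem the polynomials $\{t^k/k!\}_{k \geq 0}$ are dense in $C^0([0,T])$, so the continuous maps $t \mapsto \langle e_J, \widehat{\mathbb{X}}_t\rangle$ and $t \mapsto \langle e_J, \widehat{\mathbf{Y}}_t\rangle$ coincide on $[0,T]$ for every $|J| \leq \lfloor 1/\alpha\rfloor$; this forces $\widehat{\mathbb{X}}_t = \widehat{\mathbf{Y}}_t$ in $G^{\lfloor 1/\alpha\rfloor}(\mathbb{R}^{d+1})$ for all $t$, i.e., $\widehat{\mathbf{X}} = \widehat{\mathbf{Y}}$.

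For the moderate-growth condition, the bound $|\langle e_J, \widehat{\mathbb{X}}_t\rangle| \leq C \|\widehat{\mathbf{X}}\|_{cc,\alpha}^{|J|} t^{\alpha |J|}$ with $|J| \leq \lfloor 1/\alpha\rfloor$ propagates through the time integral to yield $|\widetilde{a}(\widehat{\mathbf{X}})| \leq C_{\widetilde{a}}(1 + \|\widehat{\mathbf{X}}\|_{cc,\alpha}^{\lfloor 1/\alpha\rfloor})$ for every $\widetilde{a} \in \widetilde{\mathcal{A}}$. Because $\gamma > \lfloor 1/\alpha\rfloor$, we obtain $\exp(|\widetilde{a}(\widehat{\mathbf{X}})|)/\psi(\widehat{\mathbf{X}}) \to 0$ as $\|\widehat{\mathbf{X}}\|_{cc,\alpha} \to \infty$, and Lemma~\ref{LemmaBpsiEquivChar}~\ref{LemmaBpsiEquivChar2} places $\exp(|\widetilde{a}(\cdot)|)$ in $\mathcal{B}_\psi(\widehat{C}^\alpha_{d,T})$. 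All hypotheses of Theorem~\ref{ThmStoneWeierstrassBpsiR} being met, $\mathcal{A}$ is dense in $\mathcal{B}_\psi(\widehat{C}^\alpha_{d,T})$, which is the claim. The main obstacles I anticipate are the clean identification of the time-weighted integrals as bona fide linear functions of the terminal signature (where shuffle bookkeeping and the Lyons extension must be handled with care) and the tight calibration between the maximal polynomial order $\lfloor 1/\alpha\rfloor$ produced by $\widetilde{\mathcal{A}}$ and the chosen exponent $\gamma$ in the weight, which is precisely what the hypothesis $\gamma > \lfloor 1/\alpha\rfloor$ guarantees.
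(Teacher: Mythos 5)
Your proposal is correct and takes essentially the same route as the paper: apply Theorem~\ref{ThmStoneWeierstrassBpsiR} to the shuffle subalgebra $\mathcal{A}$ of linear signature functionals, verifying $\psi$-moderate growth through the point-separating subspace $\widetilde{\mathcal{A}}$ of time-weighted integrals $\int_0^T \langle e_J, \widehat{\mathbb{X}}_t\rangle \frac{t^k}{k!}\,dt$ and exploiting $\gamma > \lfloor 1/\alpha\rfloor$ to keep $\exp(|\widetilde{a}(\cdot)|)$ inside $\mathcal{B}_\psi(\widehat{C}^\alpha_{d,T})$. Your choice of $k \in \mathbb{N}_0$ unbounded (rather than the paper's $k \in \{1,\ldots,\lfloor 1/\alpha\rfloor\}$) is in fact the clean way to justify point separation via density of polynomials, and costs nothing in the moderate-growth estimate since the polynomial degree in $\|\widehat{\mathbf{X}}\|_{cc,\alpha}$ is controlled by $|J| \leq \lfloor 1/\alpha\rfloor$ alone, not by $k$.
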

	\begin{proof}
        First, we observe that the function space $\mathcal{B}_\psi(\bigoplus_{n=1}^{\lfloor 1/\alpha \rfloor} (C^{n \alpha}_0([0,T];(\mathbb{R}^d)^{\otimes n}))$ is by Remark~\ref{RemHoelder0}~\ref{RemHoelder0BPsi} the same for each underlying space $(\bigoplus_{n=1}^{\lfloor 1/\alpha \rfloor} C^{n \alpha}_0([0,T];(\mathbb{R}^d)^{\otimes n}),\tau_{w^*})$ and $\bigoplus_{n=1}^{\lfloor 1/\alpha \rfloor} (C^{n \alpha}_0([0,T];(\mathbb{R}^d)^{\otimes n}),\tau_{n\alpha'})$, with $\alpha' \in [0,\alpha) \cup \lbrace \infty \rbrace$. Hence, by using Lemma~\ref{LemmaRPHoelderEmb}, the space $\mathcal{B}_\psi(\widehat{C}^\alpha_{d,T})$ is the same for each underlying space $(\widehat{C}^\alpha_{d,T},\tau_{w^*})$ and $(\widehat{C}^\alpha_{d,T},d_{cc,\alpha'})$, with $\alpha' \in [0,\alpha) \cup \lbrace \infty \rbrace$. Thus, it suffices to show the conclusion only for the space $(\widehat{C}^\alpha_{d,T},d_{cc,\infty})$. To this end, we apply the weighted real-valued Stone-Weierstrass theorem (Theorem~\ref{ThmStoneWeierstrassBpsiR}) to
		\begin{equation*}
			\mathcal{A} = \linspan\left\lbrace \widehat{\mathbf{X}} \mapsto \langle e_I, \widehat{\mathbb{X}}_T \rangle : I \in \lbrace 0,\ldots,d \rbrace^N, \, N \in \mathbb{N}_0 \right\rbrace.
		\end{equation*}
		Therefore, we need to prove that $\mathcal{A} \subseteq \mathcal{B}_\psi(\widehat{C}^\alpha_{d,T})$ is a vector subspace and a subalgebra that is point separating and nowhere vanishing of $\psi$-moderate growth, where
		\begin{equation}
		    \label{EqThmUATHoelderProof1}
			\widetilde{\mathcal{A}} = \linspan\left\lbrace \widehat{\mathbf{X}} \mapsto \langle \left( e_I \shuffle e_0^{\otimes k} \right) \otimes e_0, \widehat{\mathbb{X}}_T \rangle: \, 
            \begin{matrix}
                k \in \mathbb{N}_0, \, N \in \lbrace 0,\ldots,\lfloor 1/\alpha \rfloor \rbrace, \\
                I \in \lbrace 0,\ldots,d \rbrace^N
            \end{matrix}
            \right\rbrace \subseteq \mathcal{A}
		\end{equation}
		is a possible candidate for the point separating and nowhere vanishing vector subspace of $\psi$-moderate growth.
		
		In order to prove that $\mathcal{A} \subseteq \mathcal{B}_\psi(\widehat{C}^\alpha_{d,T})$ is a vector subspace, we fix some $a \in \mathcal{A}$ of the form $\widehat{C}^\alpha_{d,T} \ni \widehat{\mathbf{X}} \mapsto a(\widehat{\mathbf{X}}) := \langle e_I, \widehat{\mathbb{X}}_T \rangle \in \mathbb{R}$, for some $I \in \lbrace 0,\ldots,d \rbrace^N$ and $N \in \mathbb{N}_0$. Moreover, we fix some $R > 0$ and observe that the pre-image $K_R := \psi^{-1}((0,R])$ is bounded with respect to $\Vert \cdot \Vert_{cc,\alpha}$. Then, it follows from \cite[Corollary~10.40]{friz10} that the map
		\begin{equation*}
			(K_R,d_{cc,\infty}) \ni \widehat{\mathbf{X}} \quad \mapsto \quad \widehat{\mathbb{X}}^N \in (C^\alpha_0([0,T];G^N(\mathbb{R}^{d+1})),d_{cc,\infty})
		\end{equation*}
		is continuous on $K_R$ with respect to $d_{cc,\infty}$. This together with the continuity of the evaluation map
		\begin{equation*}
			(C^\alpha_0([0,T];G^N(\mathbb{R}^{d+1})),d_{cc,\infty}) \ni \widehat{\mathbb{X}}^N \quad \mapsto \quad \widehat{\mathbb{X}}^N_T \in (G^N(\mathbb{R}^{d+1}),d_{cc})
		\end{equation*}
		shows that the map
		\begin{equation*}
			(K_R,d_{cc,\infty}) \ni \widehat{\mathbf{X}} \quad \mapsto \quad \widehat{\mathbb{X}}^N_T \in (G^N(\mathbb{R}^{d+1}),d_{cc})
		\end{equation*}
		is continuous on $K_R$ with respect to $d_{cc,\infty}$. Furthermore, since linear functions on the finite dimensional space $G^N(\mathbb{R}^{d+1})$ are continuous, it follows that the map
		\begin{equation}
		    \label{EqThmUATHoelderProof2}
			(K_R,d_{cc,\infty}) \ni \widehat{\mathbf{X}} \quad \mapsto \quad a(\widehat{\mathbf{X}}) = \langle e_I, \widehat{\mathbb{X}}_T \rangle \in \mathbb{R}
		\end{equation}
		is continuous on $K_R$ with respect to $d_{cc,\infty}$. Since $R > 0$ was chosen arbitrarily, this shows that $a\vert_{K_R} \in C^0(K_R)$, for all $R > 0$. Moreover, by using the inequality
        \begin{equation*}
            \Vert g-h \Vert_{T^N(\mathbb{R}^{d+1})} \leq C_1 \max\left( d_{cc}(g,h) \max\left(1, \Vert g \Vert_{cc}^{N-1} \right), d_{cc}(g,h)^N\right)
        \end{equation*}
        (see \cite[Proposition 7.49]{friz10}) for each $g,h \in G^N(\mathbb{R}^{d+1})$ and some constant $C_1 \geq 1$ and by choosing $g = \widehat{\mathbb{X}}^N_0$ and $h = \widehat{\mathbb{X}}^N_T$ we obtain for every $\widehat{\mathbf{X}} \in \widehat{C}^\alpha_{d,T}$ that
        \begin{equation*}
            \vert a(\widehat{\mathbf{X}}) \vert = \vert \langle e_I, \widehat{\mathbb{X}}_T \rangle \vert \leq \Vert \widehat{\mathbb{X}}^N_T \Vert_{T^N(\mathbb{R}^{d+1})} \leq \Vert \widehat{\mathbb{X}}^N_T - \widehat{\mathbb{X}}^N_0 \Vert_{T^N(\mathbb{R}^{d+1})} + 1 \leq C_1\left( d_{cc}(\widehat{\mathbb{X}}^N_T, \widehat{\mathbb{X}}^N_0)^N + 2 \right).
        \end{equation*}
    	This can be further estimated by applying the inequality $d_{cc}(\widehat{\mathbb{X}}^N_s,\widehat{\mathbb{X}}^N_t) \leq C_{N,\alpha} d_{cc}(\widehat{\mathbf{X}}_s, \widehat{\mathbf{X}}_t)$ for all $\mathbf{X} \in \widehat{C}^\alpha_{d,T}$ and some constant $C_{N,\alpha} > 0$ (see \cite[Theorem~9.5]{friz10} for the $p$-variation case, which also holds for the $\alpha$-H\"older case by \cite[p.~182]{friz10}), so that
		\begin{equation}
            \label{EqThmUATHoelderProof3}
            \begin{aligned}
                \vert a(\widehat{\mathbf{X}}) \vert & \leq C_1 \left( d_{cc}(\widehat{\mathbb{X}}^N_T, \widehat{\mathbb{X}}^N_0)^N+2 \right) \leq C_1 \left( T^{\alpha N} \left(\sup_{s,t \in [0,T] \atop s < t} \frac{d_{cc}(\widehat{\mathbb{X}}^N_s, \widehat{\mathbb{X}}^N_t)}{\vert s-t \vert^\alpha}\right)^N + 2 \right) \\
                & \leq C_1 \left( C^N_{N,\alpha} T^{\alpha N}\left( \sup_{s,t \in [0,T] \atop s < t} \frac{d_{cc}(\widehat{\mathbf{X}}_s, \widehat{\mathbf{X}}_t)}{\vert s-t \vert^\alpha}\right)^N + 2 \right) = C_1 \left( C^N_{N,\alpha} T^{\alpha N} \Vert \widehat{\mathbf{X}} \Vert_{cc,\alpha}^N + 2 \right).
            \end{aligned}
		\end{equation}
		Since the exponential function dominates any polynomial, we conclude that
		\begin{equation}
            \label{EqThmUATHoelderProof4}
    		\lim_{R \rightarrow \infty} \sup_{\widehat{\mathbf{X}} \in \widehat{C}^\alpha_{d,T} \setminus K_R} \frac{\vert a(\widehat{\mathbf{X}}) \vert}{\psi(\widehat{\mathbf{X}})} \leq C_1 \lim_{R \rightarrow \infty} \sup_{\widehat{\mathbf{X}} \in \widehat{C}^\alpha_{d,T} \setminus K_R} \frac{C^N_{N,\alpha} T^{\alpha N} \Vert \widehat{\mathbf{X}} \Vert^N_{cc,\alpha} + 2}{\exp\left( \beta \Vert \widehat{\mathbf{X}} \Vert_{cc,\alpha}^{\gamma} \right)} = 0.
		\end{equation}
		Hence, it follows from Lemma~\ref{LemmaBpsiEquivChar}~\ref{LemmaBpsiEquivChar2} that $a \in \mathcal{B}_\psi(\widehat{C}^\alpha_{d,T})$, which shows that $\mathcal{A} \subseteq \mathcal{B}_\psi(\widehat{C}^\alpha_{d,T})$.
		
		Next, we observe that $\mathcal{A}$ is by the shuffle property in \eqref{EqDefShuffleProperty} a subalgebra of $\mathcal{B}_\psi(\widehat{C}^\alpha_{d,T})$. In order to show that $\mathcal{A}$ is point separating and nowhere vanishing of $\psi$-moderate growth, we claim that the vector subspace $\widetilde{\mathcal{A}} \subseteq \mathcal{A}$ defined in \eqref{EqThmUATHoelderProof1} is point separating, nowhere vanishing, and for every $\widetilde{a} \in \widetilde{\mathcal{A}}$ there exists some $\lambda > 0$ such that $\exp( \lambda \vert \widetilde{a}(\cdot) \vert) \in \mathcal{B}_\psi(\widehat{C}^\alpha_{d,T})$. For the latter, we fix some $\big( \widehat{\mathbf{X}} \mapsto \widetilde{a}(\widehat{\mathbf{X}}) = l(\widehat{\mathbb{X}}_T) \big) \in \widetilde{\mathcal{A}}$ with linear function $l(\widehat{\mathbb{X}}_T) = \sum_{0 \leq \vert I \vert \leq N} \sum_{k=0}^K a_{I,k} \langle \left( e_I \shuffle e_0^{\otimes k} \right) \otimes e_0, \widehat{\mathbb{X}}_T \rangle$, for some $K \in \mathbb{N}_0$ and $N \in \lbrace 0,\ldots,\lfloor 1/\alpha \rfloor \rbrace$ and $a_{I,k} \in \mathbb{R}$. Then, by similar arguments as for \eqref{EqThmUATHoelderProof2}, we have $\exp\left( \left\vert \lambda \widetilde{a}(\cdot) \right\vert \right)\vert_{K_R} \in C^0(K_R)$, for all $\lambda, R > 0$. Again by a similar reasoning as for \eqref{EqThmUATHoelderProof3} and using the form of the elements in $\widetilde{\mathcal{A}}$ as given in \eqref{eq:formAtilde} below, it follows for every $\widehat{\mathbf{X}} \in \widehat{C}^\alpha_{d,T}$ that
		\begin{equation}
            \label{EqThmUATHoelderProof5}
		    \begin{aligned}
		        \vert \widetilde{a}(\widehat{\mathbf{X}}) \vert & = \vert l(\widehat{\mathbb{X}}_T) \vert \leq C_1 \Vert l \Vert_{T^{N+K+1}(\mathbb{R}^{d+1})^*} \left( T^{\alpha (K+1) N} \sup_{s,t \in [0,T] \atop s < t} \left(\frac{d_{cc}(\widehat{\mathbb{X}}^{N}_s,\widehat{\mathbb{X}}^{N}_t)}{\vert s-t \vert^\alpha}\right)^N +1\right) \\
                & \leq C_1 \Vert l \Vert_{T^{N+K+1}(\mathbb{R}^{d+1})^*}\left( C^N_{N,\alpha}  T^{\alpha (K+1) N} \left(\sup_{s,t \in [0,T] \atop s < t} \frac{d_{cc}(\widehat{\mathbf{X}}_s,\widehat{\mathbf{X}}_t)}{\vert s-t \vert^\alpha}\right)^N +1 \right) \\
		        & = C_1 \Vert l \Vert_{T^{N+K+1}(\mathbb{R}^{d+1})^*} \left( C^N_{N,\alpha}  T^{\alpha (K+1)N} \Vert \widehat{\mathbf{X}} \Vert_{cc,\alpha}^N +1 \right).
		    \end{aligned}
		\end{equation}
		Then, for $C_2 := \max(C_1 \Vert l \Vert_{T^{N+K+1}(\mathbb{R}^{d+1})^*} C^N_{N,\alpha} T^{\alpha(K+1)N}, C_1 \Vert l \Vert_{T^{N+K+1}(\mathbb{R}^{d+1})^*}) > 0$, we have
		\begin{equation}
            \label{EqThmUATHoelderProof6}
            \lim_{R \rightarrow \infty} \sup_{\widehat{\mathbf{X}} \in \widehat{C}^\alpha_{d,T} \setminus K_R} \frac{\exp( \lambda \vert \widetilde{a}(\widehat{\mathbf{X}}) \vert)}{\psi(\widehat{\mathbf{X}})} \leq \lim_{R \rightarrow \infty} \sup_{\widehat{\mathbf{X}} \in \widehat{C}^\alpha_{d,T} \setminus K_R} \frac{\exp\left( \lambda C_2 (\Vert \widehat{\mathbf{X}} \Vert_{cc,\alpha}^N +1) \right)}{\exp\left( \beta \Vert \widehat{\mathbf{X}} \Vert_{cc,\alpha}^\gamma \right)} = 0,
		\end{equation}
		where the last equality follows by choosing $\lambda < \beta/C_2$ small enough ensuring that the denominator tends faster to infinity than the nominator (as $\gamma \geq \lfloor 1/\alpha \rfloor \geq N$). Hence, Lemma~\ref{LemmaBpsiEquivChar}~\ref{LemmaBpsiEquivChar2} shows that $\exp( \lambda \vert \widetilde{a}(\cdot) \vert) \in \mathcal{B}_\psi(\widehat{C}^\alpha_{d,T})$ which holds true for any $\widetilde{a} \in \widetilde{\mathcal{A}}$.
  
        Now, to show that $\widetilde{\mathcal{A}}$ is point separating, let $\widehat{\mathbf{Y}}, \widehat{\mathbf{Z}} \in \widehat{C}^\alpha_{d,T}$ be distinct. By contradiction, let us assume that for every $k \in \mathbb{N}_0$, $N \in \lbrace 0,\ldots,\lfloor 1/\alpha \rfloor \rbrace$, and $I \in \lbrace 0,\ldots,d \rbrace^N$ it holds that
		\begin{equation*}
		    \langle \left( e_I \shuffle e_0^{\otimes k} \right) \otimes e_0, \widehat{\mathbb{Y}}_T \rangle = \langle \left( e_I \shuffle e_0^{\otimes k} \right) \otimes e_0, \widehat{\mathbb{Z}}_T \rangle,
		\end{equation*}
        where we observe by \eqref{EqDefShuffleProperty} that
        \begin{equation}
        	\label{eq:formAtilde}
            \langle \left( e_I \shuffle e_0^{\otimes k} \right) \otimes e_0, \widehat{\mathbb{X}}_T \rangle = \int_0^T \langle e_I \shuffle e_0^{\otimes k}, \widehat{\mathbb{X}}_t \rangle dt = \int_0^T \langle e_I, \widehat{\mathbb{X}}_t \rangle \langle e_0^{\otimes k}, \widehat{\mathbb{X}}_t \rangle dt = \int_0^T \langle e_I, \widehat{\mathbb{X}}_t \rangle \frac{t^k}{k!} dt,
        \end{equation}
		for all $\widehat{\mathbf{X}} \in \widehat{C}^\alpha_{d,T}$. Thus, we conclude for every $k \in \mathbb{N}_0$, $N \in \lbrace 0,\ldots,\lfloor 1/\alpha \rfloor \rbrace$, and $I \in \lbrace 0,\ldots,d \rbrace^N$ that
		\begin{equation*}
		    \int_0^T \langle e_I, \widehat{\mathbb{Y}}_t - \widehat{\mathbb{Z}}_t \rangle \frac{t^k}{k!} dt = 0.
		\end{equation*}
		Hence, by using \cite[Corollary 4.24]{brezis11} and that $\Pol([0,T])$ are dense in $C^\infty_c([0,T])$ with respect to the supremum norm (see Theorem~\ref{ThmWstrass}), we have $\langle e_I, \widehat{\mathbb{Y}}_t - \widehat{\mathbb{Z}}_t \rangle = 0$. Since the Hahn-Banach theorem implies that continuous linear functions separate points, we conclude that $\widehat{\mathbf{Y}}_t=\widehat{\mathbb{Y}}^{\lfloor 1/\alpha \rfloor}_t = \widehat{\mathbb{Z}}^{\lfloor 1/\alpha \rfloor}_t=\widehat{\mathbf{Z}}_t$, for all $t \in [0,T]$. This however contradicts the assumption that $\widehat{\mathbf{Y}}, \widehat{\mathbf{Z}} \in \widehat{C}^\alpha_{d,T}$ are distinct, which shows that $\widetilde{\mathcal{A}}$ is point separating.

        Finally, we observe that $\widetilde{\mathcal{A}}$ vanishes nowhere. Indeed, by using the map $\big( \widehat{\mathbf{X}} \mapsto \widetilde{a}(\widehat{\mathbf{X}}) := \langle \left( e_\varnothing \shuffle e_0^{\otimes 0} \right) \otimes e_0, \widehat{\mathbb{X}}_T \rangle \big) \in \widetilde{\mathcal{A}}$, we observe that $\widetilde{a}(\widehat{\mathbf{X}}) = \int_0^T t dt = \frac{T^2}{2} \neq 0$, for all $\widehat{\mathbf{X}} \in \widehat{C}^\alpha_{d,T}$. Hence, we can apply the weighted real-valued Stone-Weierstrass theorem (Theorem~\ref{ThmStoneWeierstrassBpsiR}) to conclude that $\mathcal{A}$ is dense in $\mathcal{B}_\psi(\widehat{C}^\alpha_{d,T})$.
	\end{proof}

    \begin{remark}
        \label{RemSigChar}
        Let us point out the following remarks concerning Theorem~\ref{ThmUATHoelder}:
        \begin{enumerate}
	        \item\label{RemSigChar1} Recall that a feature map $\Phi$ with input space $X$ is called \emph{universal} to $\mathcal{B}_\psi(X)$ if the set of linear functionals of $\Phi$ is dense in $\mathcal{B}_\psi(X)$ (see \cite[Definition 6]{chevyrev22}). Thus, Theorem~\ref{ThmUATHoelder} shows that the feature map $\Phi(\widehat{\mathbf{X}}) = \widehat{\mathbb{X}}_T$ is universal to $\mathcal{B}_\psi(\widehat{C}^\alpha_{d,T})$.
	        
	  		\item By \cite[Theorem~7]{chevyrev22}, Theorem~\ref{ThmUATHoelder} thus also implies that the signature is \emph{characteristic} for the measure space $\mathcal{M}_\psi(\widehat{C}^\alpha_{d,T}) \cong \mathcal{B}_\psi(\widehat{C}^\alpha_{d,T})^*$ introduced in Example~\ref{ExWeightedSpaces}~\ref{ExWeightedSpaceMeasures}. Indeed, the map
	    	\begin{equation*}
	    	    \mathcal{M}_\psi(\widehat{C}^\alpha_{d,T}) \ni \mu \quad \mapsto \quad \left( a \mapsto \int_{\widehat{C}^\alpha_{d,T}} a(\widehat{\mathbf{X}}) \mu(d\widehat{\mathbf{X}}) \right) \in \mathbb{R},
	    	\end{equation*}
	    	is injective, with $a \in \mathcal{A} := \linspan\big\lbrace \widehat{\mathbf{X}} \mapsto \langle e_I, \widehat{\mathbb{X}}_T \rangle : I \in \lbrace 0,\ldots,d \rbrace^N, \, N \in \mathbb{N}_0 \big\rbrace$. Note that the latter is exactly the definition of being characteristic (see \cite[Definition~6]{chevyrev22}). 
	    	
	    	Laws of stochastic processes $\widehat{\mathbf{X}}$ on path space are thus characterized by their expected signature if the following (super-) exponential moment condition is satisfied:
	    	\begin{equation*}
	    		\mathbb{E}\left[ \psi(\widehat{\mathbf{X}}) \right] =\mathbb{E}\left[ \exp\left( \beta \Vert \widehat{\mathbf{X}} \Vert_{cc,\alpha}^\gamma \right) \right] < \infty,
	    	\end{equation*}
	    	for $\gamma \geq \lfloor 1/\alpha \rfloor$. In particular, this holds true for $G^2(\mathbb{R}^{d+1})$-enhanced (time-extended) Brownian motions $\widehat{\mathbf{X}}$ due to Fernique's theorem in \cite[Theorem~13.14]{friz10}. Let us also remark that an alternative sufficient condition stating when the law of a group-valued random variable is characterized by its expected signature is given in \cite[Corollary~6.6]{Chevyrev16}.

            \item An alternative proof for Theorem~\ref{ThmUATHoelder} is given by the following dual consideration: assume that the assertion is wrong, then by the Hahn-Banach theorem there is $0 \neq \mu \in \mathcal{M}_\psi(\widehat{C}^\alpha_{d,T})$ such that all $l(\widehat{\mathbb{X}}_T)$ are annihilated by $\mu$, i.e.
            $$
            \int_{\widehat{C}^\alpha_{d,T}} l(\widehat{\mathbb{X}}_T) \mu(d\widehat{\mathbf{X}}) = 0
            $$
            for all $l$. Now consider the $\sigma$-algebra generated by the signature components (seen as a map from $\widehat{C}^\alpha_{d,T}$ to $\mathbb{R}$, both equipped with the Borel-$\sigma$-algebra) and notice that it is identical to the $\sigma$-algebra generated by the signature increments up to order $ \lfloor 1/\alpha \rfloor $ by construction of higher order signatures.
            
            Let $\mathcal{F}_n$ be the $\sigma$-algebra generated by $\langle e_I, \widehat{\mathbb{X}}_T \rangle$ for $|I| \leq n$ and let $Q$ be the probability measure constructed by normalizing $|\mu|$. Clearly $\mu$ has a Radon-Nikodym derivative with respect to $Q$ which takes two non-zero values almost surely (see, e.g., \cite[Theorem~6.12]{rudin87}). Fix $ f \in \mathcal{B}_\psi(\widehat{C}^\alpha_{d,T}) $ with $\int f d \mu \neq 0 $ and consider the martingale convergence $ E[f \, | \mathcal{F}_n ] \to f$ in $L^1(Q)$. Then, on the one hand $E[f \, | \mathcal{F}_n ]$ can be approximated by polynomials in $\langle e_I, \widehat{\mathbb{X}}_T \rangle $ for $|I| \leq n$ due to the assumption of exponential integrability, and, on the other hand, those polynomials are actually some $l(\widehat{\mathbb{X}}_T)$ converging to $f$ in $L^1(Q)$, whence also when multiplied by $\frac{d \mu}{d Q}$. But this yields a contradiction since $\mu$ annihilates all $ l(\widehat{\mathbb{X}}_T)$ but not $f$.
            
	 		\item Alternatively to linear functions of the signature we could also consider the linear span of characteristic functions with complex coefficients given by
	 		\begin{equation*}
	 			\mathcal{W} := \linspan_{\mathbb{C}}\left\lbrace \widehat{\mathbf{X}} \mapsto \exp\left( \mathbf{i} u \langle e_I, \widehat{\mathbb{X}}_T \rangle \right) : I \in \lbrace 0,\ldots,d \rbrace^N, \, N \in \mathbb{N}_0, u \in \mathbb{R} \right\rbrace.
	 		\end{equation*}
            For an appropriate weight function $\psi: \widehat{C}^\alpha_{d,T} \rightarrow (0,\infty)$, we observe that
            \begin{equation*}
                \begin{aligned}
                    \mathcal{A} & := \linspan\Big( \left\lbrace \widehat{\mathbf{X}} \mapsto \cos\left( u \langle e_I, \widehat{\mathbb{X}}_T \rangle \right): I \in \lbrace 0,\ldots,d \rbrace^N, \, N \in \mathbb{N}_0, u \in \mathbb{R} \right\rbrace \\
                    & \quad\quad\quad\quad\quad \cup \left\lbrace \widehat{\mathbf{X}} \mapsto \sin\left( u \langle e_I, \widehat{\mathbb{X}}_T \rangle \right): I \in \lbrace 0,\ldots,d \rbrace^N, \, N \in \mathbb{N}_0, u \in \mathbb{R} \right\rbrace \Big)
                \end{aligned}
            \end{equation*}
            is a subalgebra of $\mathcal{B}_\psi(\widehat{C}^\alpha_{d,T})$ (using the  trigonometric identities \eqref{EqThmUATProof3}) that is point separating and nowhere vanishing of $\psi$-moderate growth (as each $a \in \mathcal{A}$ is bounded). Moreover, since every $w \in \mathcal{W}$ is also bounded, $\mathcal{A}$ is also point separating and nowhere vanishing of $\psi_w$-moderate growth, for all $w \in \mathcal{W}$. In addition, by using
            \begin{equation*}
                \begin{aligned}
                    \cos(s) e^{\mathbf{i} s} & = \cos(s) \cos(s) + \mathbf{i} \cos(s) \sin(s) = \frac{1}{2} + \frac{1}{2} \cos(2s) + \frac{\mathbf{i}}{2} \sin(2s) = \frac{1}{2} + \frac{1}{2} e^{2 \mathbf{i} s}, \\
                    \sin(s) e^{\mathbf{i} s} & = \sin(s) \cos(s) + \mathbf{i} \sin(s) \sin(s) = \frac{1}{2} \sin(2s) + \frac{\mathbf{i}}{2} - \frac{\mathbf{i}}{2} \cos(2s) = \frac{\mathbf{i}}{2} - \frac{\mathbf{i}}{2} e^{2 \mathbf{i} s},
                \end{aligned}
            \end{equation*}
            for any $s \in \mathbb{R}$, we observe that $\mathcal{W}$ is an $\mathcal{A}$-submodule. Hence, $\mathcal{W}$ is by the weighted $\mathbb{C}$-valued Stone-Weierstrass theorem (Theorem \ref{ThmStoneWeierstrassBpsi}) dense in $\mathcal{B}_\psi(\widehat{C}^\alpha_{d,T};\mathbb{C})$. We therefore have universality of the corresponding feature map and that
	 		\begin{equation*}
	 			\left\lbrace \mathbb{E}\left[ \exp\left( \mathbf{i} u \langle e_I, \widehat{\mathbb{X}}_T \rangle \right) \right] : I \in \lbrace 0,\ldots,d \rbrace^N, \, N \in \mathbb{N}_0, u \in \mathbb{R} \right\rbrace
	 		\end{equation*}
			characterizes the law of a stochastic process $\widehat{\mathbf{X}}$ on path space without exponential moment conditions. Observe that for the generic class of signature SDEs, methods from affine  processes can be used to compute $\mathbb{E}\big[ \exp\big( \mathbf{i} u \langle e_I, \widehat{\mathbb{X}}_T \rangle \big) \big]$ analytically (see \cite{cuchiero2023signature}). Note that this characteristic function differs from the notion used in \cite{Chevyrev16}, where other characteristic functions for probability measures on the signatures of geometric rough paths are considered with the main example of expected signature.
    	\end{enumerate}
    \end{remark}

    \subsection{Global universal approximation of $p$-variation rough paths}
	\label{sec:prough}

    In this section, we provide the universal approximation result for weakly geometric $p$-variation rough paths (see \cite[Definition~9.15~(i)]{friz10}). However, in order to get a weighted space, we need to consider the subset of weakly geometric $p$-variation rough paths which are also H\"older continuous (see Example~\ref{ExWeightedSpaces}~\ref{ExWeightedSpacePVarPredual} and \ref{ExWeightedSpacePVarWeaker} as well as Remark~\ref{RemWeakerTop}).
	
	\begin{definition}
	    For $(p,\alpha) \in (1,\infty) \times (0,1)$ with $p \alpha < 1$, a continuous path $\mathbf{X}: [0,T] \rightarrow G^{\lfloor 1/\alpha \rfloor}(\mathbb{R}^d)$ of the form
	    \begin{equation*}
	        [0,T] \ni t \quad \mapsto \quad \mathbf{X}_t := (1,X_t,\mathbb{X}^{(2)}_t,\ldots,\mathbb{X}^{(\lfloor 1/\alpha \rfloor)}_t) \in G^{\lfloor 1/\alpha \rfloor}(\mathbb{R}^d)
	    \end{equation*}
	    with $\mathbf{X}_0 = \mathbf{1} := (1, 0 \ldots, 0) \in G^{\lfloor 1/\alpha \rfloor}(\mathbb{R}^d)$ is called a \emph{weakly geometric $(p,\alpha)$-rough path} if the $(p,\alpha)$-norm
	    \begin{equation*}
	        \Vert \mathbf{X} \Vert_{cc,p-var,\alpha} := \sup_{s,t \in [0,T] \atop s < t} \frac{d_{cc}(\mathbf{X}_s,\mathbf{X}_t)}{\vert s-t \vert^\alpha} + \left( \sup_{(t_i) \in \mathcal{D}([0,T])} \sum_i d_{cc}(\mathbf{X}_{t_i},\mathbf{X}_{t_{i+1}})^p \right)^\frac{1}{p}
	    \end{equation*}
	    is finite. We denote by $C^{p-var,\alpha}_0([0,T];G^{\lfloor 1/\alpha \rfloor}(\mathbb{R}^d))$ the space of such \emph{weakly geometric $(p,\alpha)$-rough paths}, which we equip with the metric
        \begin{equation}
            \label{EqDefWGRPPVar1}
            d_{cc,p'-var,\alpha'}(\mathbf{X},\mathbf{Y}) := \sup_{s,t \in [0,T] \atop s < t} \frac{d_{cc}(\mathbf{X}_{s,t},\mathbf{Y}_{s,t})}{\vert s-t \vert^{\alpha'}} + \left( \sup_{(t_i) \in \mathcal{D}([0,T])} \sum_i d_{cc}(\mathbf{X}_{t_i,t_{i+1}},\mathbf{Y}_{t_i,t_{i+1}})^{p'} \right)^\frac{1}{p'},
        \end{equation}
       for $\mathbf{X}, \mathbf{Y} \in C^{p-var,\alpha}_0([0,T];G^{\lfloor 1/\alpha \rfloor}(\mathbb{R}^d))$ and $(p',\alpha') \in [p,\infty] \times [0,\alpha]$ with $p' \alpha' < 1$, where $\mathbf{X}_{s,t} := \mathbf{X}_s^{-1} \otimes \mathbf{X}_t \in G^{\lfloor 1/\alpha \rfloor}(\mathbb{R}^d)$. For $p' = \infty$, we omit the $p'$-variation part in \eqref{EqDefWGRPPVar1}, and for $(p',\alpha') = (\infty,0)$, the metric $d_{cc,\infty-var,0}$ is equivalent to $d_{cc,\infty}$.
	\end{definition}

    Note that if $\mathbf{X} \in C^{p-var,\alpha}_0([0,T];G^{\lfloor 1/\alpha \rfloor}(\mathbb{R}^d))$ is a weakly geometric $(p,\alpha)$-rough path, then $\mathbb{X}^{\lfloor p \rfloor}$ is a weakly geometric $p$-variation rough path (see \cite[Definition~9.15~(i)]{friz10}). Moreover, the truncated signature of order $N > \lfloor 1/\alpha \rfloor$ is defined as the unique Lyons extension to a path $\mathbb{X}^N: [0,T] \rightarrow G^N(\mathbb{R}^d)$, see \cite[Corollary~9.11]{friz10}. 

    In order to relate the rough path metrics $d_{cc,p'-var,\alpha'}$ to the $C^{p'-var,\alpha'}$-topologies on $C^{p-var,\alpha}$-spaces (see Section~\ref{SecNotation}), we introduce for every $(p',\alpha') \in (p,\infty] \times [0,\alpha)$ with $p' \alpha' < 1$ the embedding
    \begin{equation}
        \label{EqRPPVarEmb}
        \begin{aligned}
            & (C^{p-var,\alpha}_0([0,T];G^{\lfloor 1/\alpha \rfloor}(\mathbb{R}^d)),d_{cc,p'-var,\alpha'}) \ni \mathbf{X} \quad \mapsto \\
            & \quad\quad \big( \mathbf{X}^{(1)}, \ldots, \mathbf{X}^{(\lfloor 1/\alpha \rfloor)} \big) \in \bigoplus_{n=1}^{\lfloor 1/\alpha \rfloor} (C^{p-var,n \alpha}_0([0,T];(\mathbb{R}^d)^{\otimes n}),\tau_{p'-var,n\alpha'}).
        \end{aligned}
    \end{equation}
    By following the arguments of Lemma~\ref{LemmaRPHoelderEmb}, the map \eqref{EqRPPVarEmb} is continuous with continuous left-inverse. Moreover, following Remark~\ref{RemRPHoelderWeakStar}, we equip $C^{p-var,\alpha}_0([0,T];G^{\lfloor 1/\alpha \rfloor}(\mathbb{R}^d))$ with a weak-$*$-topology defined as the product topology on $\bigoplus_{n=1}^{\lfloor 1/\alpha \rfloor} (C^{p-var,n \alpha}_0([0,T];(\mathbb{R}^d)^{\otimes n}),\tau_{w^*}^{(n)})$, where each space $C^{p-var,n \alpha}_0([0,T];(\mathbb{R}^d)^{\otimes n})$ is a dual Banach space carrying a weak-$*$-topology (see Appendix~\ref{AppPVarPredual}).
    
    In addition, we add again an additional time component by defining the subset
	\begin{equation*}
		\widehat{C}^{p-var,\alpha}_{d,T} := \left\lbrace \widehat{\mathbf{X}} \in C^{p-var,\alpha}_0([0,T];G^{\lfloor 1/\alpha \rfloor}(\mathbb{R}^{d+1})): \widehat{X}_t = (t,X_t), \text{ for all } t \in [0,T] \right\rbrace,
	\end{equation*}
	of $C^{p-var,\alpha}_0([0,T];G^{\lfloor 1/\alpha \rfloor}(\mathbb{R}^{d+1}))$, where the $0$-th coordinate denotes running time. Now, in order to get a weighted space, we choose similarly as above the weight function $\psi: \widehat{C}^{p-var,\alpha}_{d,T} \rightarrow (0,\infty)$ defined by
	\begin{equation}
        \label{eq:weight_pvar}
		\psi(\widehat{\mathbf{X}}) = \exp\left( \beta \Vert \widehat{\mathbf{X}} \Vert_{cc,p-var,\alpha}^\gamma \right),
	\end{equation}
	for $\widehat{\mathbf{X}} \in \widehat{C}^{p-var,\alpha}_{d,T}$ and some $\beta > 0$ and $\gamma \geq \lfloor 1/\alpha \rfloor$. Then, by the arguments as for $\widehat{C}^\alpha_{d,T}$ but now using Remark~\ref{RemPVar0}, we conclude that $\psi: \widehat{C}^{p-var,\alpha}_{d,T} \rightarrow (0,\infty)$ is admissible on both spaces $(\widehat{C}^{p-var,\alpha}_{d,T},\tau_{w^*})$ and $(\widehat{C}^{p-var,\alpha}_{d,T},d_{cc,p'-var,\alpha'})$, with $(p',\alpha') \in (p,\infty] \times [0,\alpha)$ and $p' \alpha' < 1$. Moreover, the space $\mathcal{B}_\psi(\widehat{C}^{p-var,\alpha}_{d,T})$ is independent of the choice of topology on the underlying space $\widehat{C}^{p-var,\alpha}_{d,T}$ (i.e., the weak-$*$-topology or any $C^{p'-var,\alpha'}$-topology induced by the metric $d_{cc,p'-var,\alpha'}$, with $(p',\alpha') \in (p,\infty] \times [0,\alpha)$ and $p' \alpha' < 1$) in the sense that it always contains the same functions.

	Now, we present our global universal approximation theorem for linear functions of the signature in $\mathcal{B}_\psi(\widehat{C}^{p-var,\alpha}_{d,T})$.

    \begin{corollary}[Universal approximation on $\mathcal{B}_\psi(\widehat{C}^{p-var,\alpha}_{d,T})$]
		\label{CorUATPVar}
		The linear span of the set
        \begin{equation*}
            \left\lbrace \widehat{\mathbf{X}} \mapsto \langle e_I, \widehat{\mathbb{X}}_T \rangle: I \in \lbrace 0,\ldots,d \rbrace^N, \, N \in \mathbb{N}_0 \right\rbrace
        \end{equation*}
        is dense in $\mathcal{B}_\psi(\widehat{C}^{p-var,\alpha}_{d,T})$, i.e.~for every map $f \in \mathcal{B}_\psi(\widehat{C}^{p-var,\alpha}_{d,T})$ and for every $\varepsilon > 0$ there exists a linear function of the form $\widehat{\mathbb{X}}_T \mapsto l(\widehat{\mathbb{X}}_T) := \sum_{0 \leq \vert I \vert \leq N} a_I \langle e_I, \widehat{\mathbb{X}}_T \rangle$, with some $N \in \mathbb{N}_0$ and $a_I \in \mathbb{R}$, such that
		\begin{equation*}
			\sup_{\widehat{\mathbf{X}} \in \widehat{C}^{p-var,\alpha}_{d,T}} \frac{\left\vert f(\widehat{\mathbf{X}}) - l(\widehat{\mathbb{X}}_T) \right\vert}{\psi(\widehat{\mathbf{X}})} < \varepsilon.
		\end{equation*}
	\end{corollary}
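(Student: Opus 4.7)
The plan is to mimic the proof of Theorem~\ref{ThmUATHoelder} essentially verbatim, replacing the $\alpha$-H\"older norm $\Vert \cdot \Vert_{cc,\alpha}$ by the $(p,\alpha)$-norm $\Vert \cdot \Vert_{cc,p-var,\alpha}$, and then applying the weighted real-valued Stone-Weierstrass theorem (Theorem~\ref{ThmStoneWeierstrassBpsiR}) to
\begin{equation*}
    \mathcal{A} = \linspan\left\lbrace \widehat{\mathbf{X}} \mapsto \langle e_I, \widehat{\mathbb{X}}_T \rangle : I \in \lbrace 0,\ldots,d \rbrace^N,\, N \in \mathbb{N}_0 \right\rbrace,
\end{equation*}
with candidate point separating subalgebra of $\psi$-moderate growth
\begin{equation*}
    \widetilde{\mathcal{A}} = \linspan\left\lbrace \widehat{\mathbf{X}} \mapsto \langle \left( e_I \shuffle e_0^{\otimes k} \right) \otimes e_0, \widehat{\mathbb{X}}_T \rangle : I \in \lbrace 0,\ldots,d \rbrace^N,\, k,N \in \lbrace 1,\ldots,\lfloor 1/\alpha \rfloor \rbrace \right\rbrace.
\end{equation*}
Since $\mathcal{B}_\psi(\widehat{C}^{p-var,\alpha}_{d,T})$ does not depend on the choice of topology on $\widehat{C}^{p-var,\alpha}_{d,T}$ (as pointed out just before the statement, using Remark~\ref{RemPVar0}), I would fix the metric $d_{cc,\infty-var,0}$, equivalent to $d_{cc,\infty}$, for which the Lyons extension and terminal-time evaluation are continuous on bounded sets by \cite[Corollary~10.40]{friz10} and Lemma~\ref{LemmaHoelderd0dinf}.

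First I would show $\mathcal{A} \subseteq \mathcal{B}_\psi(\widehat{C}^{p-var,\alpha}_{d,T})$. Continuity on each compact $K_R := \psi^{-1}((0,R])$ follows exactly as in \eqref{EqThmUATHoelderProof2}, since $K_R$ is $\Vert \cdot \Vert_{cc,p-var,\alpha}$-bounded and hence in particular $\Vert \cdot \Vert_{cc,\alpha}$-bounded, so the relevant continuous dependence of the Lyons extension applies. For the growth estimate I would run the chain of inequalities \eqref{EqThmUATHoelderProof3}: starting from $\vert \langle e_I,\widehat{\mathbb{X}}_T \rangle \vert \leq C_1 (d_{cc}(\widehat{\mathbb{X}}^N_T,\widehat{\mathbb{X}}^N_0)^N+2)$ via \cite[Proposition~7.49]{friz10}, and then bounding $d_{cc}(\widehat{\mathbb{X}}^N_T,\widehat{\mathbb{X}}^N_0)$ by $C_{N,\alpha} T^\alpha \Vert \widehat{\mathbf{X}} \Vert_{cc,\alpha} \leq C_{N,\alpha} T^\alpha \Vert \widehat{\mathbf{X}} \Vert_{cc,p-var,\alpha}$ (using \cite[Theorem~9.5]{friz10}, observing that the $\alpha$-H\"older seminorm is dominated by the $(p,\alpha)$-norm). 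The exponential weight with $\gamma > \lfloor 1/\alpha \rfloor \geq N$ dominates any such polynomial, giving \eqref{EqThmUATHoelderProof4} with $\psi$ replaced by the $(p,\alpha)$-version, so Lemma~\ref{LemmaBpsiEquivChar}~\ref{LemmaBpsiEquivChar2} yields $\mathcal{A} \subseteq \mathcal{B}_\psi(\widehat{C}^{p-var,\alpha}_{d,T})$.

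Next, $\mathcal{A}$ is a subalgebra by the shuffle identity \eqref{EqDefShuffleProperty} and vanishes nowhere because it contains the constant function $\widehat{\mathbf{X}} \mapsto \langle e_\varnothing, \widehat{\mathbb{X}}_T \rangle = 1$. For moderate growth of $\widetilde{\mathcal{A}}$, each element of $\widetilde{\mathcal{A}}$ has the integral representation \eqref{eq:formAtilde} with $K,N \leq \lfloor 1/\alpha \rfloor$, so the analogue of \eqref{EqThmUATHoelderProof5} gives $\vert \widetilde{a}(\widehat{\mathbf{X}}) \vert \leq C_2 (\Vert \widehat{\mathbf{X}} \Vert_{cc,p-var,\alpha}^N + 1)$, and the choice $\gamma > \lfloor 1/\alpha \rfloor \geq N$ ensures the exponential weight dominates $\exp(\vert \widetilde{a}(\widehat{\mathbf{X}}) \vert)$ as in \eqref{EqThmUATHoelderProof6}, so $\exp(\vert \widetilde{a}(\cdot) \vert) \in \mathcal{B}_\psi(\widehat{C}^{p-var,\alpha}_{d,T})$ by Lemma~\ref{LemmaBpsiEquivChar}~\ref{LemmaBpsiEquivChar2}.

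Finally, point separation of $\widetilde{\mathcal{A}}$ follows verbatim as in Theorem~\ref{ThmUATHoelder}: if two distinct $\widehat{\mathbf{Y}},\widehat{\mathbf{Z}} \in \widehat{C}^{p-var,\alpha}_{d,T}$ gave the same values for all elements of $\widetilde{\mathcal{A}}$, then by \eqref{eq:formAtilde} and \cite[Corollary~4.24]{brezis11} one would obtain $\langle e_I, \widehat{\mathbb{Y}}_t - \widehat{\mathbb{Z}}_t \rangle = 0$ for every multi-index $I$ of length at most $\lfloor 1/\alpha \rfloor$ and every $t \in [0,T]$, hence $\widehat{\mathbf{Y}}_t = \widehat{\mathbf{Z}}_t$ by Hahn-Banach, contradicting distinctness. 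Applying Theorem~\ref{ThmStoneWeierstrassBpsiR} then yields density of $\mathcal{A}$ in $\mathcal{B}_\psi(\widehat{C}^{p-var,\alpha}_{d,T})$. The only (mild) obstacle compared to the H\"older case is checking that the compactness and continuity inputs from Appendix~\ref{AppPVarPredual} and \cite[Corollary~10.40]{friz10} transfer to the $(p,\alpha)$-variation setting; once this is in place, all the growth bounds work because $\Vert \cdot \Vert_{cc,\alpha} \leq \Vert \cdot \Vert_{cc,p-var,\alpha}$ and $\gamma > \lfloor 1/\alpha \rfloor$ is retained.
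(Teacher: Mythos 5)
Your proposal matches the paper's proof almost verbatim: the paper likewise argues by reduction to the proof of Theorem~\ref{ThmUATHoelder}, using the containment $\widehat{C}^{p-var,\alpha}_{d,T} \subset \widehat{C}^\alpha_{d,T}$ for the continuity of $\widehat{\mathbf{X}} \mapsto \langle e_I, \widehat{\mathbb{X}}_T \rangle$ on $K_R$ and observing that the added $p$-variation term only enlarges the norm entering the growth estimates \eqref{EqThmUATHoelderProof3} and \eqref{EqThmUATHoelderProof5}, so that the limits \eqref{EqThmUATHoelderProof4} and \eqref{EqThmUATHoelderProof6} persist; the subalgebra structure, the nowhere-vanishing property, the point separation via \eqref{eq:formAtilde}, and the final application of Theorem~\ref{ThmStoneWeierstrassBpsiR} are all identical. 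Your inequality $\Vert \cdot \Vert_{cc,\alpha} \leq \Vert \cdot \Vert_{cc,p-var,\alpha}$ is precisely the paper's remark that the $p$-variation term can just be added, so there is no genuine gap.
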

    \begin{proof}
      The proof follows along the lines of the proof of Theorem~\ref{ThmUATHoelder}. In particular,  we use that $\widehat{C}^{p-var,\alpha}_{d,T} \subsetneq \widehat{C}^\alpha_{d,T}$ to conclude that $(K_R,d_{cc,\infty-var,0}) \ni \widehat{\mathbf{X}} \mapsto a(\widehat{\mathbf{X}}) = \langle e_I, \widehat{\mathbb{X}}_T \rangle $ is continuous. Moreover, the additional $p$-variation term in \eqref{EqDefWGRPPVar1} can be just added to the $\alpha$-H\"older norm for the estimates in \eqref{EqThmUATHoelderProof3} and \eqref{EqThmUATHoelderProof5}, which imply the analogous limits as in \eqref{EqThmUATHoelderProof4} and \eqref{EqThmUATHoelderProof6}. The remainder of the proof is identical.
    \end{proof}

    Corollary~\ref{CorUATPVar} shows that every path space functional in $\mathcal{B}_\psi(\widehat{C}^{p-var,\alpha}_{d,T})$ can be approximated with linear functions of the signature. Moreover, by following Remark~\ref{RemSigChar}, we observe that the signature is characteristic for the dual of $\mathcal{B}_\psi(\widehat{C}^{p-var,\alpha}_{d,T})$.

    \section{Gaussian process regression with applications to signature kernels}
    \label{SecKernels}
    
    As an important counterpart to the density results established so far, which make linear regressions feasible, we introduce a Gaussian process perspective, or, equivalently, the perspective of reproducing kernel Hilbert spaces on regression. Even though this can be done for general input spaces and general Banach spaces of functions thereon, we specialize here to weighted spaces as input spaces and $\mathcal{B}_\psi$-spaces as spaces of functions thereon. An important application is given through signature kernel regression on path spaces. The corresponding (general) signature kernels have already been considered in \cite{kiraly19, salvi2021signature, cass2021general} and are of the form
 	\begin{equation}
 		\label{eq:sigkernel}
    	k(\widehat{\mathbf{X}},\widehat{\mathbf{Y}})=\sum_{I} a^2_{I} \langle e_I, \widehat{\mathbb{X}}_T \rangle \langle e_I, \widehat{\mathbb{Y}}_T \rangle,
    \end{equation}
	for appropriate choices of $a_I$. We provide here a novel Gaussian process perspective in terms of $\mathcal{B}_\psi$-spaces, which together with the above results allows to treat approximation with the true signature kernels (without a normalization procedure as considered in \cite{chevyrev22}) rigorously. Consequently, we expect these results to be useful for uncertainty quantification and a theory of regularization for regressions on signature components.

    Let $(X,\psi)$ be a general weighted space. A reproducing kernel Hilbert space $H$ on $(X,\psi)$ is a Hilbert space $H$ consisting of maps $f: X \rightarrow \mathbb{R}$, which is continuously embedded into $\mathcal{B}_\psi(X)$, i.e.~$H \hookrightarrow \mathcal{B}_\psi(X)$. Its kernel $k: X \times X \rightarrow \mathbb{R}$ is uniquely defined through $k(x,\cdot) \in H$ with $\int_X f(y) \delta_x(dy) = f(x) = \langle k(x,\cdot),f \rangle$ for all $f \in H$, i.e.~the functional representation in $H$ of the Dirac measure $\delta_x$ located at $x$ via Riesz representation. Whence the kernel is symmetric, positive semi-definite and $k(x,\cdot) \in \mathcal{B}_\psi(X) $ for all $x \in X$. If, on the other hand, we are given a symmetric, positive semi-definite function $k$ such that $k(x,\cdot) \in \mathcal{B}_\psi(X)$ for all $x \in X$, and such that the linear span of $k(x,\cdot)$ equipped with the pre-inner product $\langle k(x_1,\cdot), k(x_2,\cdot) \rangle := k(x_1,x_2)$ is continuously embedded in $\mathcal{B}_\psi(X)$, then the span's closure is a reproducing kernel Hilbert space on $(X,\psi)$ with kernel $k$. 

    \begin{definition}
        A kernel $k: X \times X \rightarrow \mathbb{R}$ is called \emph{universal} with respect to $\mathcal{B}_{\psi}(X)$ if the corresponding reproducing kernel Hilbert space $H$ on $(X,\psi)$ is dense in $\mathcal{B}_\psi(X)$ in its topology.
    \end{definition}
    
    \begin{example}
        Consider the setting $X := \widehat{C}^\alpha_{d,T}$ or $X := \widehat{C}^{p-var,\alpha}_{d,T}$ of Section~\ref{sec:hoelder} or \ref{sec:prough} with weight function $\psi: X \rightarrow (0,\infty)$ defined in \eqref{eq:weight_holder} or \eqref{eq:weight_pvar}, respectively. Then, the signature kernel $k: X \times X \rightarrow \mathbb{R}$ of the form \eqref{eq:sigkernel} is symmetric and positive semi-definite. Moreover, since $\big(\widehat{\mathbf{X}} \mapsto \langle e_I, \widehat{\mathbb{X}}_T \rangle\big) \in \mathcal{B}_\psi(X)$ for all $I$ (see the proof of Theorem~\ref{ThmUATHoelder}), we have $k(\widehat{\mathbf{X}},\cdot) \in \mathcal{B}_\psi(X)$ for all $\widehat{\mathbf{X}} \in X$. In addition, for every $\sum_{n=1}^N c_n k(\widehat{\mathbf{X}}^{(n)},\cdot) \in H_0 := \linspan\big(\big\lbrace k(\widehat{\mathbf{X}},\cdot): \widehat{\mathbf{X}} \in X \big\rbrace\big)$ with $c_n \in \mathbb{R}$ and $\widehat{\mathbf{X}}^{(n)} \in X$, we apply the Cauchy-Schwarz inequality and assume that $(a_I)_I$ are such that $C := \sup_{\widehat{\mathbf{Y}} \in X} \frac{\left\vert \sum_I a_I^2 \langle e_I, \widehat{\mathbb{Y}}_T \rangle^2 \right\vert}{\psi(\widehat{\mathbf{Y}})^2} < \infty$ to obtain
        \begin{equation*}
            \begin{aligned}
                \left\Vert \sum_{n=1}^N c_n k(\widehat{\mathbf{X}}^{(n)},\cdot) \right\Vert_{\mathcal{B}_\psi(X)}^2 & = \sup_{\widehat{\mathbf{Y}} \in X} \frac{\left\vert \sum_I a_I^2 \sum_{n=1}^N c_n \langle e_I, \widehat{\mathbb{X}}^{(n)}_T \rangle \langle e_I, \widehat{\mathbb{Y}}_T \rangle \right\vert^2}{\psi(\widehat{\mathbf{Y}})^2} \\
                & \leq \sup_{\widehat{\mathbf{Y}} \in X} \frac{\left\vert \sum_I a_I^2 \langle e_I, \widehat{\mathbb{Y}}_T \rangle^2 \right\vert}{\psi(\widehat{\mathbf{Y}})^2} \sum_{n,m=1}^N c_n c_m \sum_I a_I^2 \langle e_I, \widehat{\mathbb{X}}^{(n)}_T \rangle \langle e_I, \widehat{\mathbb{X}}^{(m)}_T \rangle \\
                & = C \Big\langle \sum_{n=1}^N c_n k(\widehat{\mathbf{X}}^{(n)},\cdot), \sum_{m=1}^N c_m k(\widehat{\mathbf{X}}^{(m)},\cdot) \Big\rangle = C \left\Vert \sum_{n=1}^N c_n k(\widehat{\mathbf{X}}^{(n)},\cdot) \right\Vert^2,
            \end{aligned}
        \end{equation*}
        where the norm $\Vert \cdot \Vert$ on $H_0$ is induced by the pre-inner product $\langle k(\widehat{\mathbf{X}}_1,\cdot), k(\widehat{\mathbf{X}}_2,\cdot) \rangle := k(\widehat{\mathbf{X}}_1,\widehat{\mathbf{X}}_2)$. This shows that $H_0$ equipped with this pre-inner product is continuously embedded in $\mathcal{B}_\psi(X)$, whence $H := \overline{H_0}$ is the corresponding reproducing kernel Hilbert space on $(X,\psi)$. Thus, by using that the feature map $\Phi(\widehat{\mathbf{X}}) := \widehat{\mathbb{X}}_T$ is universal with respect to $\mathcal{B}_\psi(X)$ (see Remark \ref{RemSigChar}~\ref{RemSigChar1}), we can apply \cite[Proposition 29+43]{chevyrev22} to conclude that the signature kernel $k: X \times X \rightarrow \mathbb{R}$ of the form \eqref{eq:sigkernel} is universal with respect to $\mathcal{B}_{\psi}(X)$.
	\end{example}

    Now, we assume that $(\Omega,\mathcal{F},\mathbb{P})$ is a probability space and that $\mathcal{B}_\psi(X)$ is separable. Notice that separability is never a restriction if we are given a countable set of functions (as signature components) whose span (built with rational coefficients) is dense in $\mathcal{B}_\psi(X)$, which is the typical situation for linear regression. Moreover, we assume that $Z$ is a $\mathcal{B}_\psi(X)$-valued centered Gaussian process, i.e.~a measurable map $Z: \Omega \rightarrow \mathcal{B}_\psi(X)$ such that for every continuous linear functional $l \in \mathcal{B}_\psi(X)^*$ the real-valued random variable $l \circ Z: \Omega \rightarrow \mathbb{R}$ follows a centered normal distribution (see, e.g., \cite[Chapter~3]{ledoux91}).
    
    Next, we construct the \emph{Cameron-Martin space} (see \cite{cameron44,cameron45}). Indeed, we consider the continuous covariance operator $\mathcal{M}_{\psi}(X) \ni \mu \mapsto C_Z(\mu) := \mathbb{E}\left[ Z \, \int_X Z(x) \mu(dx) \right] \in \mathcal{B}_{\psi}(X)$, where the latter is understood in the sense of Bochner integration. Then, we define the \emph{Cameron-Martin space of $Z$}, denoted by $H_Z$, as the closure of $\left\lbrace C_Z(\mu): \mu \in \mathcal{M}_{\psi}(X) \right\rbrace \subseteq \mathcal{B}_{\psi}(X)$ under the inner product $\langle C_Z(\mu_1),C_Z(\mu_2) \rangle := \mathbb{E}\left[ \int_X Z(x)\mu_1(dx)\int_X Z(x)\mu_2(dx) \right]$. 
    
    Now, by Fernique's theorem (see \cite{fernique70} or \cite[Theorem~2.8.5]{bogachev98}), there exists some $\beta > 0$ with $\mathbb{E}\big[ \exp\big( \beta \Vert Z \Vert_{\mathcal{B}_\psi(X)}^2 \big) \big] < \infty$, which implies that $\mathbb{E}\big[ \Vert Z \Vert_{\mathcal{B}_{\psi}(X)}^2 \big] < \infty$. Hence, we have
    \begin{equation*}
    	\Vert C_Z(\mu) \Vert_{\mathcal{B}_\psi(X)} = \Bigg\Vert \mathbb{E}\left[ Z \, \int_X Z(x) \mu(dx) \right] \Bigg\Vert_{\mathcal{B}_\psi(X)} \leq \mathbb{E}\left[ \Vert Z \Vert_{\mathcal{B}_{\psi}(X)}^2 \right]^\frac{1}{2} \Vert C_Z(\mu) \Vert_{H_Z},
    \end{equation*}
    which shows that $H_Z$ is continuously embedded in $\mathcal{B}_\psi(X)$ and forms therefore a reproducing kernel Hilbert space on $(X,\psi)$. Its kernel $k$ corresponds precisely to the point covariances of $Z$, i.e.~$k(x_1,x_2) = \langle C_Z(\delta_{x_1}),C_Z(\delta_{x_2}) \rangle = \mathbb{E}[Z(x_1)Z(x_2)]$ for $x_1,x_2 \in X$, which in turn determines $C_Z$ via the formula $C_Z(\mu) = \int_X k(x,\cdot)\mu(dx)$. 
    
    Notice also that we can define a (real-valued) \emph{stochastic integral} $H_Z \ni f \mapsto (f \bullet Z) \in L^2(\Omega)$ by isometrically extending $\mathbb{E}\big[ Z \int_X Z(x) \mu(dx) \big] \mapsto \int_X Z(x) \mu(dx)$ to its completion $H_Z$ (Ito's isometry). Indeed, for elements $g$ of the form $\mathbb{E}[Z \int_X Z(x) \mu(dx)]$, we have $\mathbb{E}\left[ (g \bullet Z)^2 \right]= \Vert C_Z(\mu) \Vert_{H_Z}^2$, which can be extended to elements $f$ in the completion $H_Z$.
    
    Then, the Cameron-Martin theorem (see, e.g., \cite[Theorem~4.44]{hairer09}) states for $f \in \mathcal{B}_\psi(X)$ that $f \in H_Z$ if and only if the law of $f+Z$ is absolutely continuous to the law of $Z$. The well-known Radon-Nikodym derviative is given by $\exp\left( (f \bullet Z) - \frac{1}{2} \Vert f \Vert_{H_Z}^2 \right)$ (see \cite[Equation~4.14]{hairer09}).

    \begin{remark}
    	In this context the meaning of the Gaussian process $Z$ also corresponds to  the generation of a prior distribution of functions, which are added to the searched function in question.
    \end{remark}

    \begin{example}
    	For the purpose of illustration we consider $X = [0,1]$ with $\psi = 1$. Then $\mathcal{B}_{\psi}(X) = C^0([0,1])$. If $k(s,t) = s \wedge t := \min(s,t)$ for $s,t \in [0,1]$, then a corresponding Gaussian process $Z$ with values in $C^0([0,1])$ can be constructed, namely the Wiener process. Its reproducing kernel Hilbert space is $H^1_0([0,1])$ ($H^1$-functions being $0$ at $0$ with inner product $\langle f,g \rangle := \int_0^1 f'g'$) as it is the closure of elements of the form $s \mapsto \mathbb{E}[Z(s) \int_{0}^1 Z(t)\mu(dt)] = \int_0^1 (s \wedge t) \mu(dt)$, for $\mu$ being a Radon measure, with respect to the inner product $\langle C_Z(\mu_1), C_Z(\mu_2) \rangle := \int_0^1 \int_0^1 (s \wedge t) \mu_1(ds) \mu_2(dt)$. Apparently this coincides with the closure of the span of $s \mapsto s \wedge t$ for different $t$, which connects it to the reproducing kernel Hilbert space generated by $k$.
    \end{example}

    With these preparations we can now formulate the main theorem of this section.

    \begin{theorem}
    	\label{ThmGaussianProcess}
    	Given a kernel $k$ on $(X,\psi)$ and a countable system of functions $(h_i)_i$ in $\mathcal{B}_\psi(X)$ such that $k = \sum_i h_i \otimes h_i $ as function on $X \times X$ and $\sum_i \Vert h_i \Vert_{\mathcal{B}_\psi(X)} < \infty$ with $\mathcal{B}_\psi(X)$ separable. Then the random series $\sum_i Z_i h_i$, for an i.i.d.~sequence of standard Gaussian random variables $(Z_i)_i$, converges in the $\mathcal{B}_{\psi}(X)$-norm to a Gaussian process $Z$ with values in $\mathcal{B}_\psi(X)$ and with kernel $k$. In particular its Cameron-Martin space coincides with the reproducing kernel Hilbert space generated by $k$.
    \end{theorem}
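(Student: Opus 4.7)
The plan is to proceed in four steps: first, to establish almost sure convergence of the partial sums of $\sum_i Z_i h_i$ in $\mathcal{B}_\psi(X)$; second, to verify that the limit is a centered $\mathcal{B}_\psi(X)$-valued Gaussian process; third, to compute its pointwise covariance and covariance operator; and fourth, to identify the Cameron--Martin space with the RKHS of $k$ via uniqueness of reproducing kernels.

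For the first step, I would use $\mathbb{E}[|Z_i|] = \sqrt{2/\pi}$ and Tonelli's theorem to obtain
\begin{equation*}
\mathbb{E}\Bigl[\sum_i |Z_i|\,\Vert h_i\Vert_{\mathcal{B}_\psi(X)}\Bigr] = \sqrt{2/\pi}\sum_i \Vert h_i\Vert_{\mathcal{B}_\psi(X)} < \infty,
\end{equation*}
so $\sum_i |Z_i|\,\Vert h_i\Vert_{\mathcal{B}_\psi(X)} < \infty$ almost surely; by completeness of $\mathcal{B}_\psi(X)$ this yields absolute norm convergence of $Z := \sum_i Z_i h_i$ almost surely, and separability of $\mathcal{B}_\psi(X)$ makes $Z$ a Borel-measurable random variable. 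For the second step, I would fix any $l \in \mathcal{B}_\psi(X)^*$ and use continuity of $l$ to write $l(Z) = \sum_i Z_i l(h_i)$ almost surely. The bound $\sum_i l(h_i)^2 \leq \Vert l\Vert^2 \sum_i \Vert h_i\Vert_{\mathcal{B}_\psi(X)}^2 < \infty$, where the latter sum is finite because the $\Vert h_i\Vert_{\mathcal{B}_\psi(X)}$ are summable and hence bounded, guarantees the partial sums converge also in $L^2(\Omega)$ to a centered Gaussian, so $Z$ is a centered $\mathcal{B}_\psi(X)$-valued Gaussian process in the sense of the previous subsection. Applying this to the evaluation functional $\delta_x$, which is continuous since $|f(x)| \leq \psi(x)\Vert f\Vert_{\mathcal{B}_\psi(X)}$, and invoking independence of the $Z_i$ gives
\begin{equation*}
\mathbb{E}[Z(x_1)Z(x_2)] = \sum_i h_i(x_1)h_i(x_2) = k(x_1,x_2),
\end{equation*}
so the pointwise kernel of $Z$ is $k$.

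For the third step, Fernique's theorem supplies $\mathbb{E}[\Vert Z\Vert_{\mathcal{B}_\psi(X)}^2] < \infty$, and combined with the fact that each $\mu \in \mathcal{M}_\psi(X)$ integrates $\psi$ this justifies pulling the continuous functional $\delta_x$ inside the Bochner integral defining $C_Z(\mu)$, yielding
\begin{equation*}
C_Z(\mu)(x) = \int_X \mathbb{E}[Z(x)Z(y)]\,\mu(dy) = \int_X k(x,y)\,\mu(dy);
\end{equation*}
in particular $C_Z(\delta_x) = k(x,\cdot)$, and the Cameron--Martin scalar product then satisfies
\begin{equation*}
\langle C_Z(\delta_x),\, C_Z(\mu)\rangle_{H_Z} = \int_X k(x,y)\,\mu(dy) = C_Z(\mu)(x),
\end{equation*}
which shows that $k(x,\cdot)$ reproduces point evaluation in $H_Z$. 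For the fourth step, I would invoke the uniqueness of the reproducing kernel Hilbert space attached to a symmetric positive semi-definite kernel -- it is the closure of $\linspan\{k(x,\cdot):x\in X\}$ under the pre-scalar product $\langle k(x_1,\cdot),k(x_2,\cdot)\rangle := k(x_1,x_2)$ -- to conclude that $H_Z$ coincides with the RKHS of $k$. I expect the main delicate point to be the Bochner--Fubini interchange used in computing $C_Z(\mu)(x)$, where one must control the scalar integrand $\vert k(x,y)\vert$ by something like $\psi(x)\psi(y)\mathbb{E}[\Vert Z\Vert_{\mathcal{B}_\psi(X)}^2]$ in order to apply dominated convergence; the uniqueness-of-RKHS argument in the last step then bypasses any need to directly approximate arbitrary $C_Z(\mu)$ by finite combinations of $k(x_i,\cdot)$ in the Cameron--Martin norm.
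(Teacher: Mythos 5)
Your proof is correct and reaches the same conclusion, but it takes a more elementary route at the key step where convergence of the random series is established. The paper invokes the It\^{o}--Nisio theorem (cited from Ledoux--Talagrand) to conclude that $\sum_i Z_i h_i$ converges; under the stated assumption $\sum_i \Vert h_i \Vert_{\mathcal{B}_\psi(X)} < \infty$, this is actually overkill, and your argument via $\mathbb{E}\bigl[\sum_i |Z_i| \Vert h_i\Vert_{\mathcal{B}_\psi(X)}\bigr] = \sqrt{2/\pi}\sum_i \Vert h_i\Vert_{\mathcal{B}_\psi(X)} < \infty$, giving almost sure absolute convergence in norm (and Borel measurability of the limit from separability), is simpler and self-contained. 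It\^{o}--Nisio would become necessary only under a weaker summability hypothesis such as $\sum_i \Vert h_i\Vert_{\mathcal{B}_\psi(X)}^2 < \infty$, which is not what is assumed here. Your steps two through four also spell out in more detail what the paper compresses into a single sentence (``the pointwise covariances determine $C_Z$ and in turn the Cameron--Martin space''): you verify that the a.s.\ limit is Gaussian against each $l \in \mathcal{B}_\psi(X)^*$ by an $L^2$ argument using $\sum_i \Vert h_i\Vert_{\mathcal{B}_\psi(X)}^2 \leq \bigl(\sup_i \Vert h_i\Vert_{\mathcal{B}_\psi(X)}\bigr)\sum_i \Vert h_i\Vert_{\mathcal{B}_\psi(X)} < \infty$, then compute $C_Z(\mu)(x) = \int_X k(x,y)\,\mu(dy)$ via Fernique and a dominated-convergence/Fubini interchange controlled by $\psi(x)\psi(y)\,\mathbb{E}[\Vert Z\Vert_{\mathcal{B}_\psi(X)}^2]$, and finally invoke uniqueness of the RKHS attached to a positive semi-definite kernel. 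These are exactly the ingredients the paper's preamble sets up, so your expansion is faithful; the net gain of your version is that it avoids an unnecessary appeal to a Banach-space-valued convergence theorem and makes the RKHS identification explicit.
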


    \begin{proof}
        Let $(Z_i)_i$ be an i.i.d.~sequence of standard Gaussian random variables. Then, 
        \begin{equation*}
            \mathbb{E}\left[ \sum_i \vert Z_i \vert \Vert h_i \Vert_{\mathcal{B}_\psi(X)} \right] \simeq \sum_i \Vert h_i \Vert_{\mathcal{B}_\psi(X)} < \infty,
        \end{equation*}
        implies that $\sum_i \vert Z_i \vert \Vert h_i \Vert_{\mathcal{B}_\psi(X)} < \infty$, $\mathbb{P}$-a.s., where ``$\simeq$'' means that inequality holds true in both ways up to a (different) multiplicative constant. Hence, we can apply the Ito-Nisio theorem (see, e.g., \cite[Theorem~2.4]{ledoux91} or \cite[Theorem~3.5.1]{bogachev98}) to conclude that $\sum_i Z_i h_i$ is again a Gaussian random variable. Its kernel can be easily calculated by pointwise covariances
    	\begin{equation*}
    		\mathbb{E}[Z(x_1)Z(x_2)]=\sum_i h_i(x_1) h_i(x_2) 
    	\end{equation*}
    	and coincides with $k$ by assumption. The pointwise covariances, however, determine $C_Z$ and in turn the Cameron-Martin space as stated above.
    \end{proof}
    
    With Theorem~\ref{ThmGaussianProcess} uncertainty quantification for regression with signature kernels is feasible, since for reasonable choices of $\psi$ the main condition of Theorem~\ref{ThmGaussianProcess} is satisfied.
    
    \begin{remark}
    	Let us illustrate Theorem~\ref{ThmGaussianProcess} in the setting of Section~\ref{sec:prough}:
    	\begin{enumerate}
    		\item Take $X = \widehat{C}^{p-var,\alpha}_{d,T}$ together with 
	    	\begin{equation*}
				\psi(\widehat{\mathbf{X}}) = \exp\left( \beta \Vert \widehat{\mathbf{X}} \Vert_{cc,p-var,\alpha}^\gamma \right),
			\end{equation*}
    		for $\beta > 0 $ and $\gamma > 1$.\footnote{Note that in contrast to Section \ref{sec:prough}, where we had to take $\gamma \geq \lfloor 1/\alpha \rfloor$, it here suffices to take $\gamma >1$ for all $\alpha \in (0,1]$ since we do not apply the weighted Stone-Weierstrass theorem here.} As countable system of functions $(h_I)_I$ for multi-indices $I= \lbrace 0,1, \ldots, d \rbrace^N$ and $N \in \mathbb{N}_0$, we choose
    		\begin{equation*}
    			h_I(\widehat{\mathbf{X}}) = a_{I} \langle e_I, \widehat{\mathbb{X}}_T \rangle,
    		\end{equation*}
    		where $a_I$ are real numbers. More precisely, for $0 < \eta < 1$ satisfying $\gamma(1-\eta) > 1$, we require for $a_I$ that
    		\begin{equation}
    			\label{eq:conda}
    			a_k:=\max_{|I|=k} |a_I| \leq M (k!)^{\delta}
    		\end{equation}    
     		with $\delta < \eta$ and some constant $M>0$. To prove that $\sum_I \vert a_{I} \vert \sup_{\widehat{\mathbf{X}}} \frac{\left\vert \langle e_I, \widehat{\mathbb{X}}_T \rangle \right\vert}{\psi(\widehat{\mathbf{X}})} < \infty$, note that by \cite[Theorem~3.7]{lyons07} (with control $\omega(0,T) := \Vert \widehat{\mathbf{X}} \Vert_{cc,p-var}^p$) we have
     		\begin{equation*}
     			\Vert \widehat{\mathbb{X}}^{(k)}_T \Vert_{(\mathbb{R}^{d+1})^{\otimes k}} \leq \frac{C_p \Vert \widehat{\mathbf{X}} \Vert_{cc,p-var}^k}{k!},
     		\end{equation*}
     		where $C_p > 0$ is some constant depending on $p$ and 
     		\begin{equation*}
     			\Vert \widehat{\mathbf{X}} \Vert_{cc,p-var} = \left( \sup_{(t_i) \in \mathcal{D}([0,T])} \sum_i d_{cc}(\widehat{\mathbf{X}}_{t_i},\widehat{\mathbf{X}}_{t_{i+1}})^p \right)^\frac{1}{p},
     		\end{equation*}
     		whence also
     		\begin{equation*}
     			\Vert \widehat{\mathbb{X}}^{(k)}_T \Vert_{(\mathbb{R}^{d+1})^{\otimes k}} \leq \frac{C_p \Vert \widehat{\mathbf{X}} \Vert_{cc,p-var,\alpha}^k}{k!}
     		\end{equation*}
			holds. Using $a_k := \max_{|I|=k} \vert a_{I} \vert$, H\"older's inequality with $1/r+1/\gamma=1$, and $0 < \eta < 1$ with $\gamma(1-\eta) > 1$, we can estimate
		    \begin{equation*}
		    	\begin{aligned}
		    		\sum_I \vert a_{I} \vert \sup_{\widehat{\mathbf{X}}} \frac{\left\vert \langle e_I, \widehat{\mathbb{X}}_T \rangle \right\vert}{\psi(\widehat{\mathbf{X}})} & \leq 
		    		\sum_{k=0}^{\infty} a_k \sup_{\widehat{\mathbf{X}}} \frac{(d+1)^k C_p \Vert \widehat{\mathbf{X}} \Vert_{cc,p-var,\alpha}^k}{k! \psi(\widehat{\mathbf{X}})}\\
		    		&\leq \left( \sum_{k=0}^{\infty} \frac{a_k^r (d+1)^{rk} C_p^r}{k!^{r \eta}}\right)^{\frac{1}{r}} \left(\sum_{k=0}^{\infty} \frac{\Vert \widehat{\mathbf{X}} \Vert_{cc,p-var,\alpha}^{\gamma k}}{k!^{\gamma (1- \eta)} \exp\left( \beta \gamma \Vert \widehat{\mathbf{X}} \Vert_{cc,p-var,\alpha}^\gamma \right)}\right)^{\frac{1}{\gamma}}\\
		    		&\leq \left( \sum_{k=0}^{\infty} \frac{a_k^r (d+1)^{rk} C_p^r}{k!^{r \eta}}\right)^{\frac{1}{r}} \left(\sum_{k=0}^{\infty} \frac{1}{k!^{\gamma(1- \eta)-1} \beta^k \gamma^k} \right)^{\frac{1}{\gamma}} < \infty,
		    	\end{aligned}
		    \end{equation*}
   			where the second last inequality follows from the estimate $\exp(x)\geq \frac{x^k}{k!}$ for $x \geq 0$. The root test then implies that the expression is finite. Indeed, noticing that for $\epsilon > 0$, we have $(k!)^{\epsilon/k} \to \infty$ as $k \to \infty$, it follows that the first term is finite due to the assumption that $a_k=\max_{|I|=k} \vert a_I \vert \leq M (k!)^{\delta}$ with $\delta < \eta$ and some constant $M>0$. Similarly the second term is finite as well. 
     
     		Notice also that one could replace $\psi$ by a function of an absolutely converging sum of the form
     		\begin{equation*}
     			\psi(\widehat{\mathbf{X}}) = \left(\sum_{k = 0}^{\infty} \frac{b_k \Vert \widehat{\mathbf{X}} \Vert_{cc,p-var,\alpha}^{\gamma k}}{k!}\right)^{\frac{1}{\gamma}}
     		\end{equation*}
    		with positive $b_k$ such that $b_k \geq \frac{M \epsilon ^k}{k!^\kappa}$ for $k \geq 0$, $M>0$, $\epsilon > 0$ and $\kappa < \gamma (1-\eta)$.
    		
    		\item Under the above conditions on $a_I$ and $\psi$, the kernel
    		\begin{equation*}
    			k(\widehat{\mathbf{X}},\widehat{\mathbf{Y}})=\sum_{I} a^2_{I} \langle e_I, \widehat{\mathbb{X}}_T \rangle \langle e_I, \widehat{\mathbb{Y}}_T \rangle 
    		\end{equation*}
    		is thus the covariance of a Gaussian process with values in $\mathcal{B}_\psi(X)$.
    
			Choosing $a_I=a_{|I|}$ identical for all $I$ of the same length and defining
			\begin{equation*}
				K_{a^2}(s,t):=\sum_I a^2_{|I|} \langle e_I, \widehat{\mathbb{X}}_s \rangle \langle e_I, \widehat{\mathbb{Y}}_t \rangle 
			\end{equation*} 
			for bounded variation paths $ \widehat{\mathbf{X}},\widehat{\mathbf{Y}}$, we then get by the same arguments as in \cite[Proposition 2.8]{cass2021general} the following integral equation
			\begin{equation*}
				K_{a^2}(s,t)= a_{\emptyset}^2+ \sum_{i=1}^{d+1} \int_0^s \int_0^t K_{(a^2)_{+1}}(u,v)  d\widehat{\mathbf{X}}^i_u d\widehat{\mathbf{Y}}^i_v, 
			\end{equation*}
			where $(a^2)_{+1}$ denotes the sequence of $a^2$ shifted by $1$, i.e. 
			\begin{equation*}
				K_{(a^2)_{+1}}(s,t) :=\sum_{I} a^2_{\vert I \vert+1} \langle e_I, \widehat{\mathbb{X}}_s \rangle \langle e_I, \widehat{\mathbb{Y}}_t \rangle.
			\end{equation*}
			Note that for both $a^2$ and $a^2_{+1}$ Condition 1 of \cite{cass2021general} which is required in  \cite[Proposition 2.8]{cass2021general} is satisfied due to \eqref{eq:conda}. In particular,  $k(\widehat{\mathbf{X}},\widehat{\mathbf{Y}})$ satisfies
			\begin{equation*}
				k(\widehat{\mathbf{X}},\widehat{\mathbf{Y}})= K_{a^2}(T,T) = a_{\emptyset}^2 + \sum_{i=1}^{d+1} \int_0^T \int_0^T K_{(a^2)_{+1}}(u,v)  d\widehat{\mathbf{X}}^i_u d\widehat{\mathbf{Y}}^i_v. 
			\end{equation*}
  			If $a_{|I|} \equiv 1$ and $\widehat{\mathbf{X}},\widehat{\mathbf{Y}}$ are differentiable, then the original signature kernel satisfies the so-called Goursat PDE
  			\begin{equation*}
  				\frac{\partial^2 K(s,t)}{\partial s \partial t}=\sum_{i=1}^{d+1}  K(s,t) d\widehat{\mathbf{X}}^i_s d \widehat{\mathbf{X}}^i_t,
  			\end{equation*}
			as shown in \cite{salvi2021signature}.
		\end{enumerate}    
   	\end{remark}
   
	\section{Numerical examples}
	\label{SecNE}
	
	In this section, we illustrate with two examples\footnote{The experiments are implemented in \texttt{Python} using \texttt{tensorflow} (for FNN) and \texttt{iisignature} (for signatures) on a Lenovo ThinkPad X13 Gen2a with AMD Ryzen 7 PRO 5850U processor and Radeon Graphics (1901 Mhz, 8 Cores, 16 Logical Processors), see \url{https://github.com/psc25/GlobalUAT}.} how path space functionals can be learned using non-anticipative functional input neural networks (see Section~\ref{SecUANAF}) or a linear function of the signature (see Section~\ref{SecSM}).

    As input data we generate $M = 50 000$ sample paths of a one-dimensional Brownian motion $x^{(m)} = (x^{(m)}(t))_{t \in [0,T]}$, for $m = 1,\ldots,M$, with $T = 1$, which are discretized over $K = 100$ equidistant time points $(t_k)_{k=1,\ldots,K}$. Since the sample paths of Brownian motion are a.s.~$\alpha$-H\"older continuous, for all $\alpha \in (0,1/2)$, we consider the weighted space of stopped $\alpha$-H\"older continuous paths $\Lambda^\alpha_T$ in Example~\ref{ExWeightedSpaces}~\ref{ExWeightedSpaceHoelderStoppedPaths}, with $\alpha < 1/2$. 
    
    On the other hand, every sample path of Brownian motion $x^{(m)}$, $m = 1,\ldots,M$, has finite $p$-variation, $p > 2$, and can therefore be lifted (see \cite{lyons07}) to a weakly geometric $(p,\alpha)$-rough path in $\widehat{C}^{p-var,\alpha}_{1,T}$, for $(p,\alpha) \in (2,\infty) \times (0,1/2)$ with $p \alpha < 1$. Then, we can compute the time-extended signature $\widehat{\mathbb{X}}^{(m)}$ of each $x^{(m)}$.
	
	In the first example, we consider the running maximum
	\begin{equation}
		\label{EqEx1}
		f_1(t,x) = \sup_{s \in [0,t]} x(s),
	\end{equation}
	for $(t,x) \in \Lambda^\alpha_T$, which is continuous with respect to $d_\infty$ (see \cite[Example~9]{contfournie13}). In the second example, we consider the non-anticipative functional $f_2: \Lambda^\alpha_T \rightarrow \mathbb{R}$ defined by
	\begin{equation}
	    \label{EqEx2}
	    f_2(t,x) = \max\left( \frac{1}{t} \int_0^t x(s) ds, -0.3 \right)
	\end{equation}
    for $(t,x) \in \Lambda^\alpha_T$. Since $f_1, f_2: \Lambda^\alpha_T \rightarrow \mathbb{R}$ are continuous and dominated by $\psi_{\text{FNN}}(t,x) := \exp\big( \beta \Vert x \Vert_{C^\alpha([0,T])}^\gamma \big)$, Lemma~\ref{LemmaBpsiEquivChar}~\ref{LemmaBpsiEquivChar2} implies that $f_1, f_2 \in \mathcal{B}_{\psi_{\text{FNN}}}(\Lambda^\alpha_T)$. Similarly, we have $f_1(t,\cdot),f_2(t,\cdot) \in \mathcal{B}_{\psi_{\text{Sig}}}(\widehat{C}^{p-var,\alpha}_{1,T})$, with $\psi_{\text{Sig}}(\widehat{\mathbf{X}}) = \exp\big( \beta \Vert \widehat{\mathbf{X}} \Vert_{cc,p-var,\alpha}^\gamma \big)$, for $\gamma \geq \lfloor 1/\alpha \rfloor$.
    
    We split up the data into $80\%$ for training and $20\%$ for testing, and apply stochastic gradient descent with the Adam algorithm (see \cite{kb15}) over $4000$ epochs with learning rate $10^{-5}$ and batchsize $500$ to minimize the weighted mean squared error
    \begin{align}
        \label{EqDefLossFNN}
        & \frac{1}{MK} \sum_{m=1}^M \sum_{k=1}^K \left( \frac{\left\vert f_i(t_k,x^{(m)}) - \varphi(t_k,x^{(m)}) \right\vert}{\exp\left( \beta \Vert x^{(m)} \Vert_{C^\alpha([0,T])}^\gamma \right)} \right)^2, \quad & & \text{for FNN}, \\
        \label{EqDefLossSig}
        \text{or} \quad & \frac{1}{MK} \sum_{m=1}^M \sum_{k=1}^K \left( \frac{\left\vert f_i(t_k,x^{(m)}) - \sum_{0 \leq \vert I \vert \leq N_{Sig}} a_I \langle e_I, \widehat{\mathbb{X}}^{(m)}_{t_k} \rangle \right\vert}{\exp\left( \beta \Vert \widehat{\mathbf{X}}^{(m)} \Vert_{cc,p-var,\alpha}^\gamma \right)} \right)^2, \quad & & \text{for Sig}.
    \end{align}
	for $i \in \lbrace 1,2 \rbrace$. In both cases, we compute an approximation of $\vert x^{(m)} \vert_\alpha$ and $\Vert \widehat{\mathbf{X}}^{(m)} \Vert_{cc,p-var,\alpha}$ (see the code). Moreover, we choose $\alpha = 0.4$, $p = 2.1$, $\beta = 0.01$, $\gamma = 2.0$, and $N_{Sig} = 7$.
	
	For the FNNs, we consider $\varphi \in \mathcal{FN}^{\rho,\widetilde{\rho}}_{\Lambda^\alpha_T}$ (see Definition~\ref{DefNAFNN}) with $N_{FNN} = 40$ neurons (where $\rho(z) = \widetilde{\rho}(z) = \max(z,0)$ are both the ReLU function and the classical neural networks $(\phi_{n,1})_{n=1,\ldots,N_{FNN}}$ have one hidden layer of $N_1 = 30$ neurons). Moreover, to improve the learning performance, we include a spatial component in $\varphi \in \mathcal{FN}^{\rho,\widetilde{\rho}}_{\Lambda^\alpha_T}$, i.e.
    \begin{equation*}
        \varphi(t,x) = \sum_{n=1}^{N_{FNN}} y_n \rho\left( a_n t + \int_0^t \phi_n(s) x(s) ds + c_n x(t) + b_n \right),
    \end{equation*}
    where $a_1,\ldots,a_{N_{FNN}},b_1,\ldots,b_{N_{FNN}},c_1,\ldots,c_{N_{FNN}} \in \mathbb{R}$, $y_1,\ldots,y_{N_{FNN}} \in \mathbb{R}$, $\phi_1,\ldots,\phi_{N_{FNN}} \in \mathcal{NN}^{\widetilde{\rho}}_{\mathbb{R},\mathbb{R}}$. On the other hand, for $f_2: \Lambda^\alpha_T \rightarrow \mathbb{R}$, we use the FNN of Definition~\ref{DefFuncNN}. In both cases, the time integral $\int_0^t \phi_n(s) x(s) ds$ is numerically computed with a left Riemann sum. The numerical experiments are reported in Figure~\ref{FigEx1}+\ref{FigEx2}.

    Note that weighted mean squared errors in \eqref{EqDefLossFNN}+\eqref{EqDefLossSig} reflect the global nature of the universal approximation results. This is in line with the classical universal approximation theorem on compacta, where one minimizes the unweighted mean squared error to train a neural network. In both cases, the (weighted) mean squared error is an empirical version of a (weighted) $L^2$-norm, being weaker than the (weighted) supremum norm.

    In contrast to UATs on compact subsets, our \emph{global} universal approximation result provides a framework that guarantees the existence of an approximation beyond a given compact set of training data. Indeed, for a pre-specified compact subset (e.g.~paths of Brownian motion), it can never be ensured that the data points of the test set are also contained in this compactum. Our results therefore address this limitation.
    
    In view of specifying a concrete loss function for the training procedure our results do not give a precise strategy how to learn FNNs. This is similar to classical settings where one uses the mean squared error even though the UAT approximation holds for continuous functions. In a similar spirit, we here use a weighted mean squared error which makes errors for largely deviating paths automatically small, thus respecting the weight function appearing in the $\mathcal{B}_\psi(X)$-norm.
	
	\begin{figure}[p]
		\begin{minipage}[t]{0.49\textwidth}
		    \centering
		    \includegraphics[height=5.35cm]{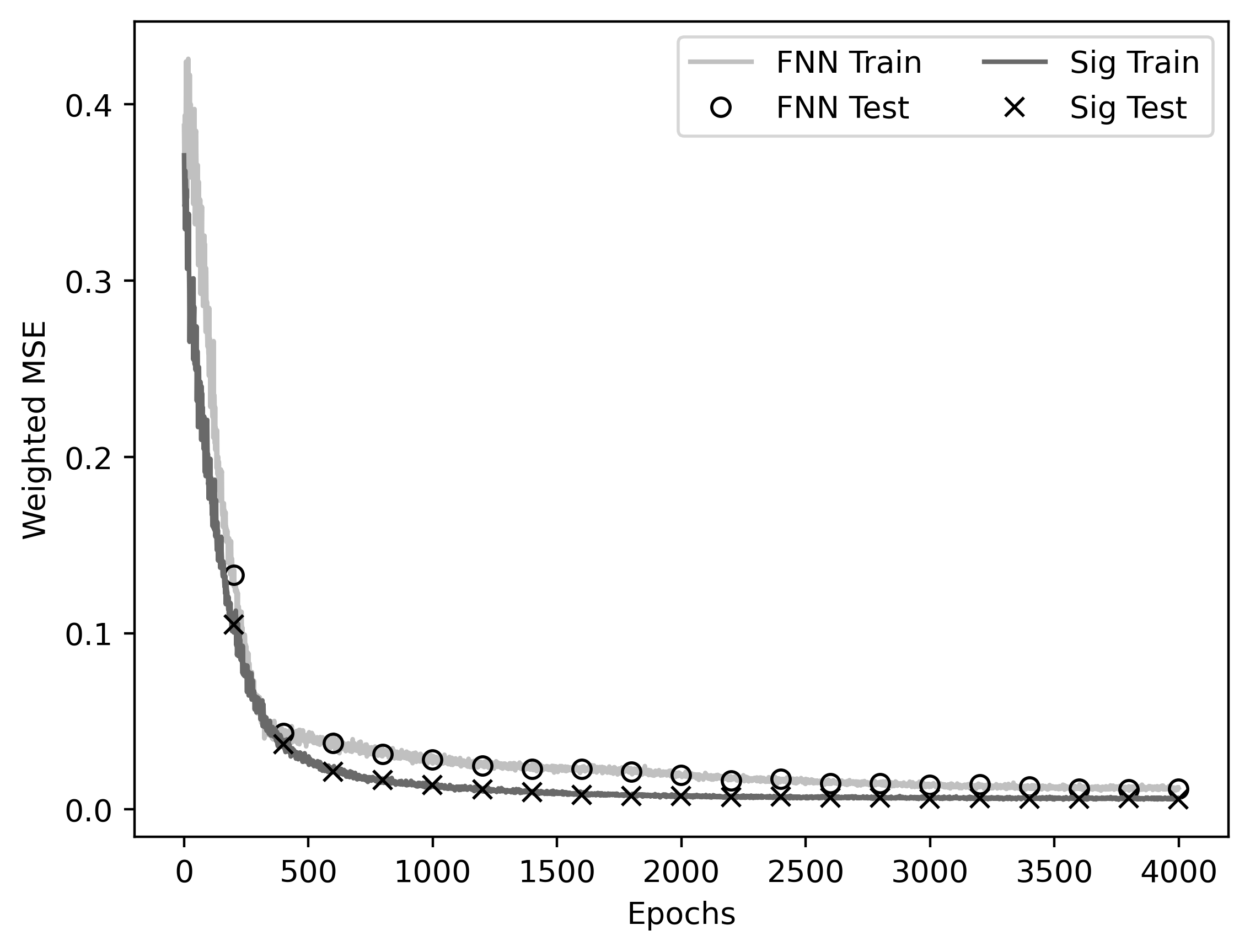}
		    \subcaption{Learning performance}
		\end{minipage}
		\begin{minipage}[t]{0.49\textwidth}
		    \centering
		    \includegraphics[height=5.35cm]{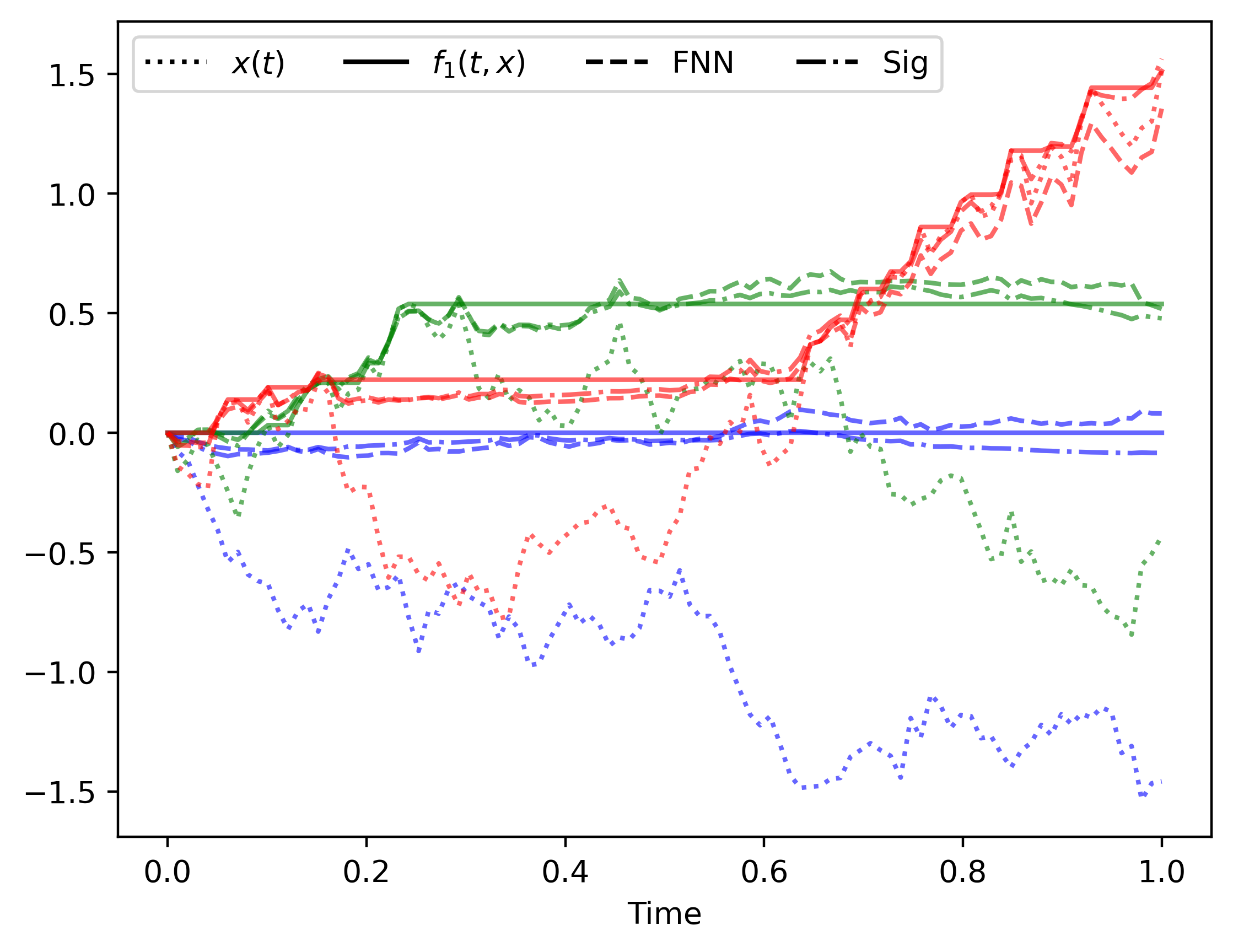}
		    \subcaption{Three samples of the test set}
		\end{minipage}
		\caption{Learning $f_1$ in \eqref{EqEx1} by a non-anticipative FNN $\varphi \in \mathcal{FN}^{\rho,\widetilde{\rho}}_{\Lambda^\alpha_T}$ (label FNN) and a linear function of the signature $\sum_{0 \leq \vert I \vert \leq N_{Sig}} a_I \langle e_I, \widehat{\mathbb{X}}_t \rangle$ (label Sig). In (a), the weighted mean squared errors \eqref{EqDefLossFNN}+\eqref{EqDefLossSig} are evaluated on the training set in each epoch (continuous line) as well as on the test after every 200-th epoch (dots). In (b), three samples of the test set are shown together with $f_1$ and its approximation.}
		\label{FigEx1}
		
		\bigskip
		\vspace{0.5cm}
		
		\begin{minipage}[t]{0.49\textwidth}
		    \centering
		    \includegraphics[height=5.35cm]{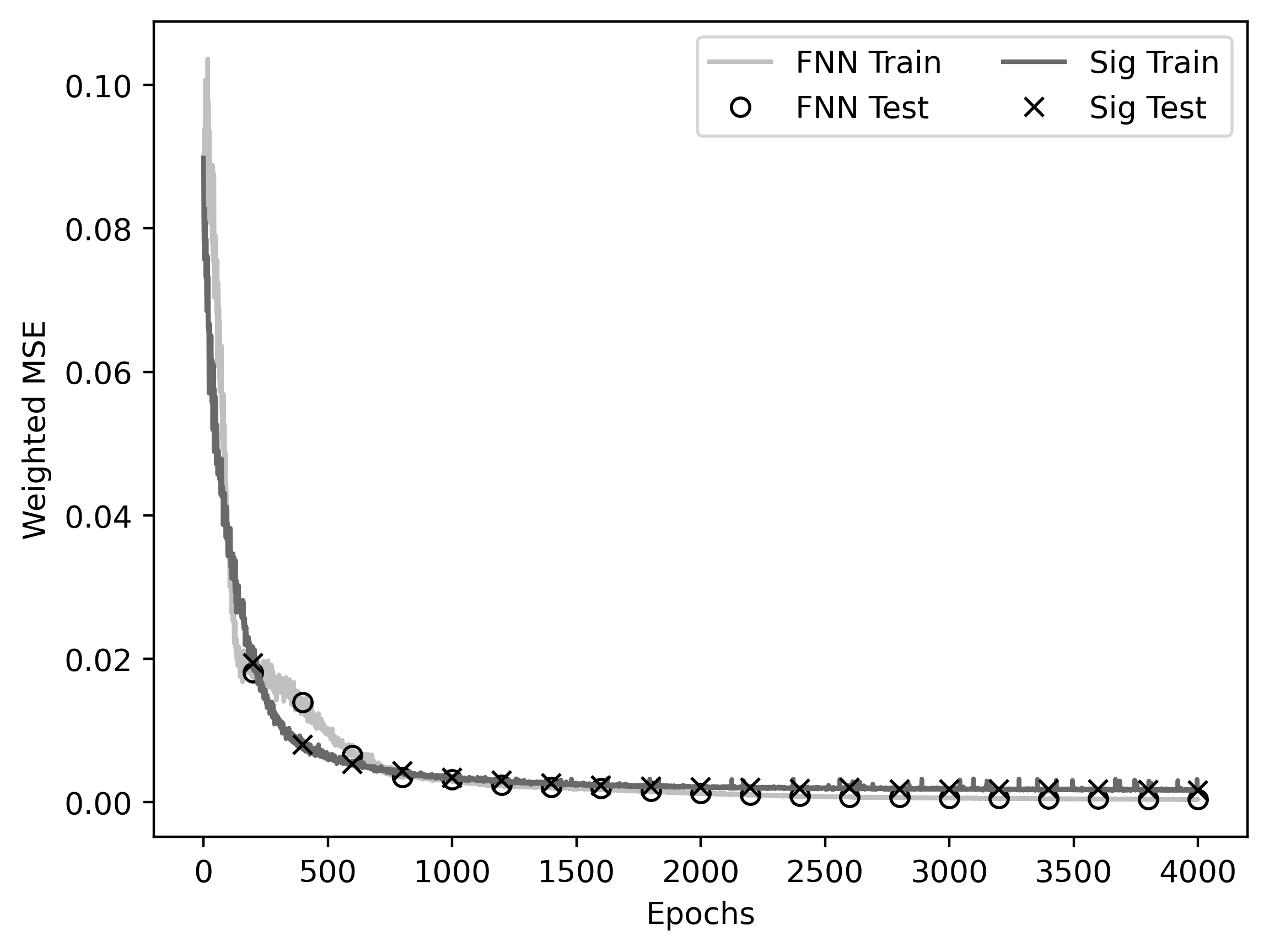}
		    \subcaption{Learning performance}
		\end{minipage}
		\begin{minipage}[t]{0.49\textwidth}
		    \centering
		    \includegraphics[height=5.35cm]{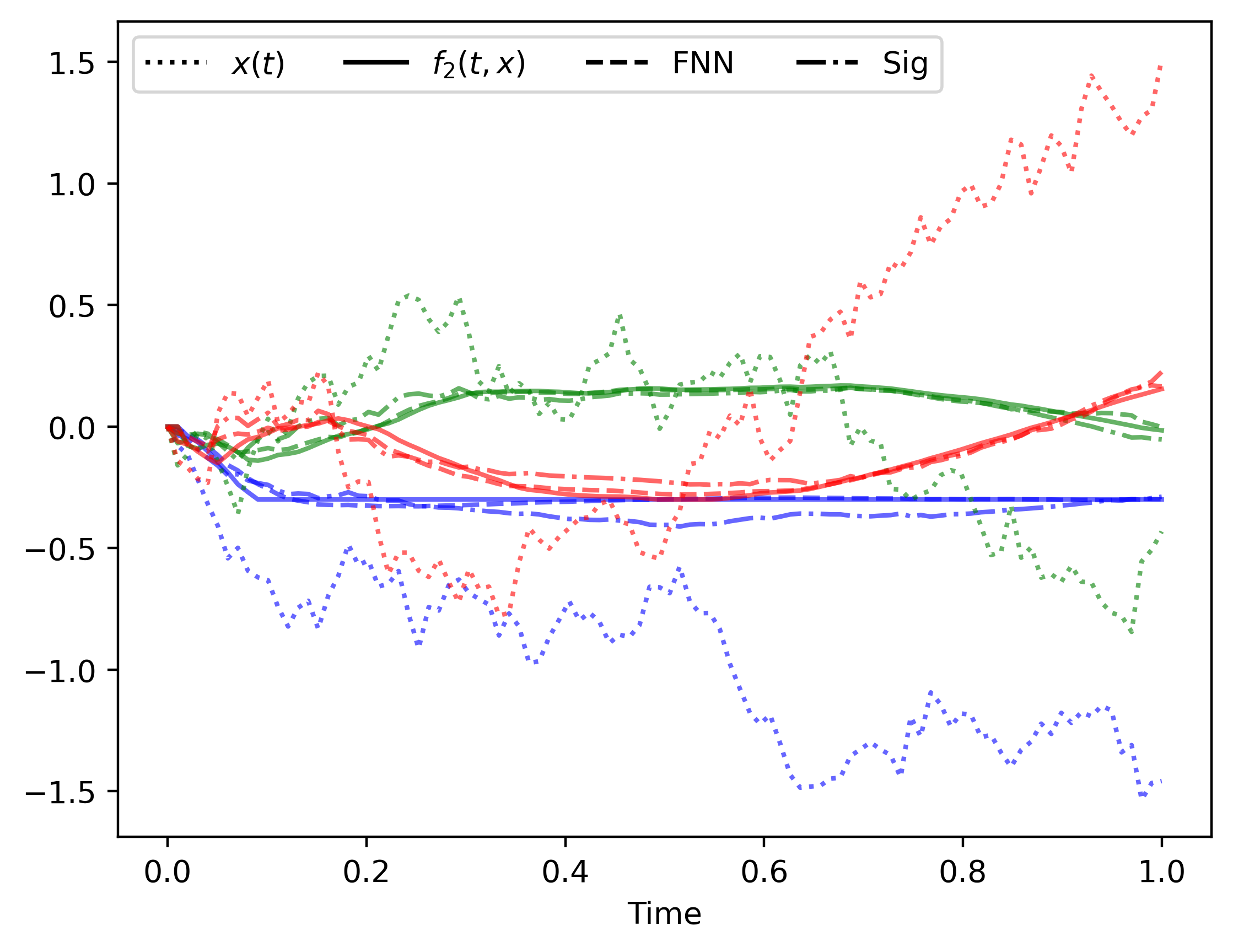}
		    \subcaption{Three samples of the test set}
		\end{minipage}
		\caption{Learning $f_2$ in \eqref{EqEx2} by a non-anticipative FNN $\varphi \in \mathcal{FN}^{\rho,\widetilde{\rho}}_{\Lambda^\alpha_T}$ (label FNN) and a linear function of the signature $\sum_{0 \leq \vert I \vert \leq N_{Sig}} a_I \langle e_I, \widehat{\mathbb{X}}_t \rangle$ (label Sig). In (a), the weighted mean squared errors \eqref{EqDefLossFNN}+\eqref{EqDefLossSig} are evaluated on the training set in each epoch (continuous line) as well as on the test after every 200-th epoch (dots). In (b), three samples of the test set are shown together with $f_2$ and its approximation.}
		\label{FigEx2}
	\end{figure}
	
	\appendix

    \section{Preduals of certain Banach spaces}
    \label{AppBanachPredual}

    In the following, we apply the result of \cite{kaijser77} to characterize a Banach space $(X,\Vert \cdot \Vert_X)$ as dual Banach space, which formally generalizes the Dixmier-Ng theorem (see \cite{dixmier48,ng71}).

    \begin{theorem}[{\cite[Theorem~1]{kaijser77}}]
        \label{ThmBanachPredual}
        Let $(X,\Vert \cdot \Vert_X)$ be a Banach space, and let $E_0 \subseteq X^*$ be a set of continuous linear functionals such that
        \begin{enumerate}
            \item $E_0$ is point separating on $X$, and
            \item the closed unit ball $\overline{B_1(0)} := \left\lbrace x \in X: \Vert x \Vert_X \leq 1 \right\rbrace$ is compact with respect to the weak topology on $X$ induced by $E_0 \subseteq X^*$, i.e.~the weakest topology such that every $e \in E_0$ is continuous from $X$ to $\mathbb{R}$.
        \end{enumerate}
        Then, $X$ is a dual Banach space with predual given by the closure of $\linspan(E_0)$ with respect to $\Vert \cdot \Vert_{X^*}$.
    \end{theorem}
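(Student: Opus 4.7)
Let $E := \overline{\linspan(E_0)}^{\Vert \cdot \Vert_{X^*}} \subseteq X^*$, which is a Banach space. I would establish the claim by constructing a natural evaluation map $\iota : X \to E^*$, defined by $\iota(x)(e) := e(x)$ for $x \in X$ and $e \in E$, and then showing that $\iota$ is an isometric isomorphism. The map $\iota$ is clearly linear with $\Vert \iota(x) \Vert_{E^*} \leq \Vert x \Vert_X$, and $\iota$ is injective because $E_0 \subseteq E$ is point separating on $X$.

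For the isometry $\Vert \iota(x) \Vert_{E^*} = \Vert x \Vert_X$, fix $x \in X$ and $\lambda < \Vert x \Vert_X$. Then $x \notin \lambda \overline{B_1(0)}$, and the set $\lambda \overline{B_1(0)}$ is convex and compact in the locally convex topology $\sigma(X, E_0)$ on $X$ (by hypothesis (ii)). The topological dual of $(X, \sigma(X, E_0))$ is exactly $\linspan(E_0)$, so by the Hahn-Banach separation theorem for compact convex sets there exists $e \in \linspan(E_0) \subseteq E$ with $\operatorname{Re} e(x) > \sup_{y \in \lambda \overline{B_1(0)}} \operatorname{Re} e(y) = \lambda \Vert e \Vert_{X^*}$. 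After normalization, this yields $\Vert \iota(x) \Vert_{E^*} \geq \lambda$, and letting $\lambda \nearrow \Vert x \Vert_X$ gives the equality.

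The main obstacle is surjectivity of $\iota$. It suffices to show $\iota(\overline{B_1(0)}) = \overline{B_1(0)}^{E^*}$, the closed unit ball of $E^*$. By continuity of $\iota : (X, \sigma(X, E_0)) \to (E^*, \sigma(E^*, E))$ and the compactness of $\overline{B_1(0)}$ in $\sigma(X, E_0)$, the image $\iota(\overline{B_1(0)})$ is $\sigma(E^*, E)$-compact, hence weak-$*$ closed in $E^*$; it is also convex and contained in $\overline{B_1(0)}^{E^*}$. Suppose for contradiction that some $\varphi_0 \in \overline{B_1(0)}^{E^*}$ lies outside $\iota(\overline{B_1(0)})$. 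Since the continuous dual of $(E^*, \sigma(E^*, E))$ is canonically $E$, Hahn-Banach separation produces an $e \in E$ and $\alpha \in \mathbb{R}$ with
\begin{equation*}
\operatorname{Re}\varphi_0(e) > \alpha \geq \sup_{x \in \overline{B_1(0)}} \operatorname{Re} e(x) = \Vert e \Vert_{X^*},
\end{equation*}
which contradicts $\Vert \varphi_0 \Vert_{E^*} \leq 1$. Therefore $\iota$ maps $\overline{B_1(0)}$ onto $\overline{B_1(0)}^{E^*}$, and by the already-established isometry and linearity, $\iota$ is an isometric isomorphism of $X$ onto $E^*$, so $E$ is a predual of $X$.

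The delicate point I expect to have to verify carefully is the identification of the continuous dual of $(E^*, \sigma(E^*, E))$ with $E$ (standard but needs $E \subseteq X^*$ to be a genuine Banach space), and the use of Hahn-Banach in the correct locally convex setting in both the isometry and surjectivity steps; everything else is bookkeeping.
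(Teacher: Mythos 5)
The paper does not prove this theorem; it cites Kaijser and, in Remark~\ref{RemWeakStar}, merely records that $\Phi := \iota^* \circ \mathcal{I}$ with $\Phi(x)(e) = e(x)$ is an isometric isomorphism. Your evaluation map is exactly this $\Phi$, and your Hahn--Banach arguments supply the details the paper delegates to the reference, so this is the same approach and the proof is essentially correct.

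One point to tighten: $\iota : (X,\sigma(X,E_0)) \to (E^*,\sigma(E^*,E))$ is \emph{not} globally continuous in general, since an $e \in E \setminus \linspan(E_0)$ need not be $\sigma(X,E_0)$-continuous on $X$ (the dual of $(X,\sigma(X,E_0))$ is $\linspan(E_0)$, which need not be norm-closed). However, the restriction to $\overline{B_1(0)}$ \emph{is} continuous into $(E^*,\sigma(E^*,E))$: approximating $e \in E$ in $\Vert \cdot \Vert_{X^*}$ by elements of $\linspan(E_0)$ gives \emph{uniform} approximation of $x \mapsto e(x)$ on the norm-bounded set $\overline{B_1(0)}$. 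Equivalently, $\iota$ is globally continuous into $(E^*,\sigma(E^*,\linspan(E_0)))$, and this topology agrees with $\sigma(E^*,E)$ on norm-bounded subsets of $E^*$, in particular on $\iota(\overline{B_1(0)})$. Either fix is all your compactness argument needs; the isometry step (Hahn--Banach separation from $\lambda\overline{B_1(0)}$ in $(X,\sigma(X,E_0))$, whose dual is $\linspan(E_0)$) and the surjectivity step (weak-$*$ compactness plus separation using that the dual of $(E^*,\sigma(E^*,E))$ is $E$) are otherwise sound.
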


    \begin{remark}
        \label{RemWeakStar}
        Let $E_0$ satisfy the assumptions of Theorem~\ref{ThmBanachPredual} and denote by $E$ the closure of $\linspan(E_0)$ with respect to $\Vert \cdot \Vert_{X^*}$, which is a closed vector subspace of $X^*$. Then, by the Hahn-Banach theorem, the embedding $\iota: E \hookrightarrow X^*$ has an adjoint $\iota^*: X^{**} \rightarrow E^*$, which concatenated with the canonical embedding $\mathcal{I}: X \hookrightarrow X^{**}$ yields an isometric isomorphism $\Phi := \iota^* \circ \mathcal{I}: X \rightarrow E^*$ satisfying $\Phi(x)(e) = e(x)$ for all $x \in X$ and $e \in E_0$. Hence, the dual pairing $X \times E \ni (x,e) \mapsto \langle x, e \rangle_{X \times E} := \Phi(x)(e) \in \mathbb{R}$ is continuous. Thus, a weak-$*$-topology on $X$ is generated by sets of the form $\left\lbrace x \in X: \langle x, e \rangle_{X \times E} \in U \right\rbrace$, for $e \in E$ and $U \subseteq \mathbb{R}$ open. This weak-$*$-topology is weaker than the weak topology induced by $X^*$ (except if $X$ is reflexive), which in turn is weaker than the norm topology induced by $\Vert \cdot \Vert_X$ (except if $X$ is finite dimensional).
    \end{remark}

    Now, we apply Theorem~\ref{ThmBanachPredual} to show the existence of a predual for the Banach spaces presented in Example~\ref{ExWeightedSpaces}, i.e.~H\"older spaces and spaces of finite $p$-variation.

    \subsection{Predual for $C^\alpha(S;Z)$}
    \label{AppHoelderPredual}

    In this section, we show that H\"older spaces introduced in Section~\ref{SecNotation} are dual Banach spaces. To this end, we fix some $\alpha > 0$, a compact metric space $(S,d_S)$ and a dual Banach space $(Z,\Vert \cdot \Vert_Z)$. To this end, we first show the following relations between the $w^*$-uniform topology $\tau_\infty$ and the $C^{\alpha'}$-topologies $\tau_{\alpha'}$, $0 \leq \alpha' \leq \alpha$, which were defined in Section~\ref{SecNotation}.

    \begin{lemma}
        \label{LemmaHoelderTop}
        Let $0 \leq \alpha' < \alpha$. Then, the following holds true:
        \begin{enumerate}
            \item\label{LemmaHoelderTop1} $\tau_\infty = \tau_0 \subseteq \tau_{\alpha'} \subseteq \tau_{\alpha}$ on $C^\alpha(S;Z)$.
            \item\label{LemmaHoelderTop2} $\tau_\infty = \tau_0 = \tau_{\alpha'}$ on $\Vert \cdot \Vert_\alpha$-bounded subsets of $C^\alpha(S;Z)$.
        \end{enumerate}
    \end{lemma}
    \begin{proof}
        Let $(V,\Vert \cdot \Vert_V)$ be a predual for $(Z,\Vert \cdot \Vert_Z)$. Now, we first show that $\tau_\infty = \tau_0$ on $C^\alpha(S;Z)$. To this end, we observe for every $x \in C^\alpha(S;Z)$ and $v \in V$ that
        \begin{equation*}
            \begin{aligned}
                \Vert x \Vert_{\infty,v} & = \sup_{s \in S} \vert \langle x(s), v \rangle_{Z \times V} \vert \leq \vert \langle x(0), v \rangle_{Z \times V} \vert + \sup_{s \in S} \vert \langle x(s) - x(0), v \rangle_{Z \times V} \vert \\
                & \leq \vert \langle x(0), v \rangle_{Z \times V} \vert + \sup_{s,t \in S} \vert \langle x(s) - x(t), v \rangle_{Z \times V} \vert = \Vert x \Vert_{0,v}
            \end{aligned}
        \end{equation*}
        and that
        \begin{equation*}
            \Vert x \Vert_{0,v} = \vert \langle x(0), v \rangle_{Z \times V} \vert + \sup_{s,t \in S} \vert \langle x(s) - x(t), v \rangle_{Z \times V} \vert \leq 3 \sup_{s \in S} \vert \langle x(s), v \rangle_{Z \times V} \vert = 3 \Vert x \Vert_{\infty,v}.
        \end{equation*}
        Since the seminorms $\Vert \cdot \Vert_{\infty,v}$ and $\Vert \cdot \Vert_{0,v}$, $v \in V$, generate the topologies $\tau_\infty$ and $\tau_0$, respectively, we obtain $\tau_\infty = \tau_0$ on $C^\alpha(S;Z)$.

        Next, we show that $\tau_{\alpha'} \subseteq \tau_{\alpha}$ on $C^\alpha(S;Z)$. To this end, we use the constant $C_S := \sup_{s,t \in S} d_S(s,t) < \infty$ to conclude for every $x \in C^\alpha(S;Z)$ and $v \in V$ that
        \begin{equation*}
            \begin{aligned}
                \Vert x \Vert_{\alpha',v} & = \vert \langle x(0), v \rangle_{Z \times V} \vert + \sup_{s,t \in S} \frac{\vert \langle x(s) - x(t), v \rangle_{Z \times V} \vert}{d_S(s,t)^{\alpha'}} \\
                & \leq \vert \langle x(0), v \rangle_{Z \times V} \vert + \left( \sup_{s,t \in S} d_S(s,t)^{\alpha-\alpha'} \right) \sup_{s,t \in S} \frac{\vert \langle x(s) - x(t), v \rangle_{Z \times V} \vert}{d_S(s,t)^\alpha} \\
                & \leq \left( 1 + C_S^{\alpha-\alpha'} \right) \Vert x \Vert_{\alpha,v}.
            \end{aligned}
        \end{equation*}
        Since the seminorms $\Vert \cdot \Vert_{\alpha',v}$ and $\Vert \cdot \Vert_{\alpha,v}$, $v \in V$, generate the topologies $\tau_{\alpha'}$ and $\tau_\alpha$, respectively, we obtain $\tau_{\alpha'} \subseteq \tau_\alpha$ on $C^\alpha(S;Z)$.

        Finally, we show that $\tau_0 = \tau_{\alpha'}$ on every fixed $\Vert \cdot \Vert_\alpha$-bounded subset $B \subseteq C^\alpha(S;Z)$. To this end, we use the constant $M := \sup_{x \in B} \Vert x \Vert_\alpha < \infty$ and that $\sup_{s,t \in S} \frac{\vert \langle x(s) - x(t), v \rangle_{Z \times V} \vert}{d_S(s,t)^\alpha} \leq \vert x \vert_\alpha \Vert v \Vert_V \leq \Vert x \Vert_\alpha \Vert v \Vert_V$ to conclude for every $x \in B$ and $v \in V$ that
        \begin{equation*}
            \begin{aligned}
                \Vert x \Vert_{\alpha',v} & = \vert \langle x(0), v \rangle_{Z \times V} \vert + \sup_{s,t \in S} \frac{\vert \langle x(s) - x(t), v \rangle_{Z \times V} \vert}{d_S(s,t)^{\alpha'}} \\
                & \leq \vert \langle x(0), v \rangle_{Z \times V} \vert + \left( \sup_{s,t \in S} \frac{\vert \langle x(s) - x(t), v \rangle_{Z \times V} \vert}{d_S(s,t)^\alpha} \right)^\frac{\alpha'}{\alpha} \left( \sup_{s,t \in S} \vert \langle x(s) - x(t), v \rangle_{Z \times V} \vert \right)^{1-\frac{\alpha'}{\alpha}} \\
                & \leq \vert \langle x(0), v \rangle_{Z \times V} \vert + (M \Vert v \Vert_V)^\frac{\alpha'}{\alpha} \vert x \vert_{0,v}^{1-\frac{\alpha'}{\alpha}} \\
                & \leq \left( 1 + (M \Vert v \Vert_V)^\frac{\alpha'}{\alpha} \right) \max\left( \Vert x \Vert_{0,v}, \Vert x \Vert_{0,v}^{1-\frac{\alpha'}{\alpha}} \right)
            \end{aligned}
        \end{equation*}
        Since the seminorms $\Vert \cdot \Vert_{\alpha',v}$ and $\Vert \cdot \Vert_{0,v}$, $v \in V$, generate the topologies $\tau_{\alpha'}$ and $\tau_0$, respectively, we obtain $\tau_{\alpha'} \subseteq \tau_0$ on $B$. This together with \ref{LemmaHoelderTop1} shows that $\tau_0 = \tau_{\alpha'}$ on the $\Vert \cdot \Vert_\alpha$-bounded subset $B \subseteq C^\alpha(S;Z)$.
    \end{proof}

    In the following, we denote by $B_R(0) := \lbrace x \in C^\alpha(S;Z): \Vert x \Vert_\alpha < 1 \rbrace$ and $\overline{B_R(0)} := \lbrace x \in C^\alpha(S;Z): \Vert x \Vert_\alpha \leq 1 \rbrace$ the open and closed ball of radius $R > 0$, respectively.
    
    Then, for every $0 \leq \alpha' < \alpha$, we show that the identity $T: (C^\alpha(S;Z),\Vert \cdot \Vert_\alpha) \hookrightarrow (C^{\alpha'}(S;Z),\tau_{\alpha'})$ is a \emph{compact embedding}, i.e.~there exists a $0$-neighborhood of $(C^\alpha(S;Z),\Vert \cdot \Vert_\alpha)$ such that $T(U)$ is relatively compact with respect to $\tau_{\alpha'}$ (see, e.g., \cite[p.~98]{schaefer99}). In our setting with a normed vector space as domain, this is equivalent to the condition that for every $\Vert \cdot \Vert_\alpha$-bounded subset $B \subseteq C^\alpha(S;Z)$ the image $T(U)$ is relatively compact with respect to $\tau_{\alpha'}$. Indeed, the latter implies that $T(B_1(0))$ is relatively compact with respect to $\tau_{\alpha'}$. Conversely, if there exists a $0$-neighborhood $U$ of $(C^\alpha(S;Z),\Vert \cdot \Vert_\alpha)$ with $B_r(0) \subseteq U$, for some $r > 0$, then for every $\Vert \cdot \Vert_\alpha$-bounded subset $B \subseteq B_R(0) \subset C^\alpha(S;Z)$, for some $R > 0$, it holds that $T(B) \subseteq T(B_R(0)) \subseteq \frac{R}{r} T(B_r(0)) \subseteq \frac{R}{r} T(U)$, where $\frac{R}{r} T(U)$ is relatively compact with respect to $\tau_{\alpha'}$.
    
    For $Z = \mathbb{R}$, the following result can be found in \cite[Satz 2.42]{dobrowolski2010angewandte}, while for general dual Banach space $(Z,\Vert \cdot \Vert_Z)$ we provide a proof for completeness.

    \begin{theorem}
        \label{ThmHoelderEmbedding}
        For $0 \leq \alpha' < \alpha$, the embedding $(C^\alpha(S;Z),\Vert \cdot \Vert_\alpha) \hookrightarrow (C^{\alpha'}(S;Z),\tau_{\alpha'})$ is compact.
    \end{theorem}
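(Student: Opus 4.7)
The proof follows the classical Arzelà--Ascoli scheme combined with a Hölder interpolation inequality. Starting from a bounded sequence $(x_n)_n$ in $C^\alpha(S;Z)$ with $\sup_n \Vert x_n \Vert_{C^\alpha(S;Z)} \leq C$, the Hölder estimate $\Vert x_n(s) - x_n(t) \Vert_Z \leq \Vert x_n \Vert_\alpha \, d_S(s,t)^\alpha \leq C d_S(s,t)^\alpha$ delivers uniform equicontinuity of the family, and combined with the bound on $\Vert x_n(0) \Vert_Z$ it yields the useful preliminary estimate
\begin{equation*}
    \Vert x \Vert_{C^0(S;Z)} \leq \bigl(1 + \diam(S)^\alpha\bigr) \, \Vert x \Vert_{C^\alpha(S;Z)}
\end{equation*}
(which, incidentally, is reused throughout the rest of the paper), in particular $\sup_n \Vert x_n \Vert_{C^0(S;Z)} \leq C(1+\diam(S)^\alpha)$. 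Since $Z$ is a dual space equipped with the weak-$*$-topology, the Banach--Alaoglu theorem implies that for every $s \in S$ the orbit $\{x_n(s) : n \in \N\}$ is weak-$*$ relatively compact in $Z$. The Arzelà--Ascoli theorem in \cite[Theorem~7.17]{kelley75} for a completely regular target space then extracts a subsequence $(x_{n_k})_k$ that converges uniformly on $S$ to some $x : S \to Z$, and the weak-$*$ lower semicontinuity of $\Vert \cdot \Vert_Z$ propagates the Hölder bound to the limit, so that $x \in C^\alpha(S;Z)$ with $\Vert x \Vert_{C^\alpha(S;Z)} \leq C$.

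Next I would establish the Hölder interpolation inequality
\begin{equation*}
    \Vert y \Vert_{\alpha'} \leq 2 \, \Vert y \Vert_{C^0(S;Z)}^{1 - \alpha'/\alpha} \, \Vert y \Vert_\alpha^{\alpha'/\alpha},
\end{equation*}
valid for every $y \in C^\alpha(S;Z)$ and $0 \leq \alpha' < \alpha$, by splitting $\Vert y(s) - y(t) \Vert_Z = \Vert y(s) - y(t) \Vert_Z^{1 - \alpha'/\alpha} \cdot \Vert y(s) - y(t) \Vert_Z^{\alpha'/\alpha}$ and bounding the two factors by $(2 \Vert y \Vert_{C^0(S;Z)})^{1 - \alpha'/\alpha}$ and $\Vert y \Vert_\alpha^{\alpha'/\alpha} \, d_S(s,t)^{\alpha'}$ respectively. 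Applied to $y = x_{n_k} - x$, whose $\alpha$-Hölder seminorm is dominated by $2C$, this converts the uniform convergence obtained above into $\Vert x_{n_k} - x \Vert_{\alpha'} \to 0$; combined with $\Vert (x_{n_k} - x)(0) \Vert_Z \to 0$ (a special case of pointwise convergence at $0 \in S$), it produces $x_{n_k} \to x$ in $\Vert \cdot \Vert_{C^{\alpha'}(S;Z)}$, which is exactly the compactness of the embedding.

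The main technical subtlety is matching the topology produced by Arzelà--Ascoli with the $\Vert \cdot \Vert_Z$-valued convergence required by the interpolation step. A direct application of Arzelà--Ascoli only yields uniform convergence in the weak-$*$-topology of $Z$, whereas the interpolation argument and the target seminorm $\Vert \cdot \Vert_{\alpha'}$ require control of $\Vert x_{n_k}(s) - x(s) \Vert_Z$ uniformly in $s \in S$. The reconciliation exploits the uniform norm-equicontinuity already available from the Hölder bound: after restricting to a countable dense subset $S_0 \subseteq S$ and extracting a diagonal subsequence so that $x_{n_k}(s)$ converges weak-$*$ for every $s \in S_0$, an $\varepsilon/3$-argument based on equicontinuity extends this to uniform weak-$*$ convergence on all of $S$, and the Hölder modulus then pins the $\Vert \cdot \Vert_Z$-difference $\Vert x_{n_k}(s) - x(s) \Vert_Z$ uniformly by the weak-$*$ fluctuations on a finite net plus a geometric remainder controlled by $C d_S(\cdot,\cdot)^\alpha$. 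This passage from weak-$*$ to norm uniform convergence on $C^\alpha$-bounded sets is the conceptual heart of the embedding and is precisely the phenomenon capitalised on later in Remark~\ref{RemWeakerTop}~\ref{RemWeakerTop2}.
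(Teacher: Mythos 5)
Your proof follows essentially the same scheme as the paper's — Arzelà--Ascoli to extract a uniformly convergent subsequence, then a H\"older interpolation inequality to upgrade to $C^{\alpha'}$-norm convergence — and the interpolation half matches the paper's appeal to \cite[Proposition~5.5~(ii)]{friz10}. You are also right to flag the weak-$*$-vs.-norm issue as the crux: the paper's proof asserts $\Vert x - x_{n_k} \Vert_{C^0(S;Z)} \to 0$ directly after invoking \cite[Theorem~7.17]{kelley75}, whereas Kelley's theorem only yields compactness in the topology of uniform convergence with respect to the uniformity carried by $Z$; if $Z$ is equipped with the weak-$*$ uniformity, that gives weak-$*$-uniform convergence, not norm-uniform convergence.

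Your proposed reconciliation, however, does not close this gap. The $\varepsilon/3$ argument on a finite $\delta$-net $\{s_1,\ldots,s_m\}$ decomposes
\begin{equation*}
\Vert x_{n_k}(s) - x(s) \Vert_Z \leq \Vert x_{n_k}(s) - x_{n_k}(s_j) \Vert_Z + \Vert x_{n_k}(s_j) - x(s_j) \Vert_Z + \Vert x(s_j) - x(s) \Vert_Z,
\end{equation*}
and the outer terms are indeed $\leq C\,\delta^\alpha$ by the H\"older modulus. But the middle term $\Vert x_{n_k}(s_j) - x(s_j) \Vert_Z$ is a \emph{norm} quantity at a fixed point, and weak-$*$ convergence of $x_{n_k}(s_j)$ to $x(s_j)$ in $Z$ says nothing about it: norm-equicontinuity controls oscillation across $s$ at fixed $n$, but cannot upgrade pointwise weak-$*$ convergence to pointwise norm convergence at a single $s_j$. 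Concretely, for $Z = \ell^\infty = (\ell^1)^*$, $S = [0,1]$ and the constant functions $x_n \equiv e_n$, one has $\Vert x_n \Vert_{C^\alpha(S;Z)} = 1$ yet $\Vert x_n - x_m \Vert_{C^{\alpha'}(S;Z)} = \Vert e_n - e_m \Vert_{\ell^\infty} = 1$ for $n \neq m$, so no subsequence converges in $C^{\alpha'}$-norm. The Arzelà--Ascoli step genuinely delivers norm-uniform convergence only if the pointwise orbits $\{x_n(s)\}$ are relatively \emph{norm}-compact in $Z$ — automatic when $Z$ is finite-dimensional, which is the case exercised in the paper's rough-path applications, but not for a general infinite-dimensional dual space. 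This is a real gap in the blind attempt, not merely a bookkeeping subtlety.
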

    \begin{proof}
        Fix some $0 \leq \alpha' < \alpha$ and let $(V,\Vert \cdot \Vert_V)$ be a predual for $(Z,\Vert \cdot \Vert_Z)$. Then, by using the constant $C_S := \sup_{s,t \in S} d_S(s,t) < \infty$, it holds for every $x \in C^\alpha(S;Z)$ and $v \in V$ that
        \begin{equation*}
            \begin{aligned}
                \Vert x \Vert_{\alpha',v} & = \vert \langle x(0), v \rangle_{Z \times V} \vert + \sup_{s,t \in S \atop s \neq t} \frac{\vert \langle x(s) - x(t), v \rangle_{Z \times V} \vert}{d_S(s,t)^{\alpha'}} \\
                & \leq \vert \langle x(0), v \rangle_{Z \times V} \vert + (2C_0)^{\alpha-\alpha'} \sup_{s,t \in S \atop s \neq t} \frac{\vert \langle x(s) - x(t), v \rangle_{Z \times V} \vert}{d_S(s,t)^\alpha} \\
                & \leq \left( 1 + C_S^{\alpha-\alpha'} \right) \Vert x \Vert_{\alpha,v} \leq \left( 1 + C_s^{\alpha-\alpha'} \right) \Vert v \Vert_V \Vert x \Vert_\alpha.
            \end{aligned}
        \end{equation*}
        Since the seminorms $\Vert \cdot \Vert_{\alpha',v}$, $v \in V$, generate the topology $\tau_{\alpha'}$, we can apply \cite[Theorem~III.1.1]{schaefer99} to conclude that $(C^\alpha(S;Z),\Vert \cdot \Vert_\alpha) \hookrightarrow (C^{\alpha'}(S;Z),\tau_{\alpha'})$ is continuous.

        Now, we fix an $\Vert \cdot \Vert_\alpha$-bounded subset $B \subseteq \overline{B_R(0)} \subseteq C^\alpha(S;Z)$, for some $R > 0$. First, we show that $\overline{B_R(0)}$ is pointwise closed, i.e.~closed in the topology of pointwise convergence, where $Z$ is equipped with the weak-$*$-topology. Let $(x_\gamma)_\gamma \subseteq \overline{B_R(0)}$ be a net converging pointwise to some $x \in C^\alpha(S;Z)$. Then, for arbitrary $s,t \in S$, there exists some $v_0,v_1 \in V$ with $\Vert v_i \Vert_V \leq 1$ such that $\Vert x(0) \Vert = \langle x(0), v_0 \rangle_{Z \times V}$ and $\Vert x(s) - x(t) \Vert_Z = \langle x(s) - x(t), v_1 \rangle_{Z \times V}$, which implies that
        \begin{equation}
            \label{EqThmHoelderEmbeddingProof1}
            \begin{aligned}
                \Vert x(0) \Vert_Z + \frac{\Vert x(s) - x(t) \Vert_Z}{d_S(s,t)^\alpha} & = \langle x(0), v_0 \rangle_{Z \times V} + \frac{\langle x(s) - x(t), v_1 \rangle_{Z \times V}}{d_S(s,t)^\alpha} \\
                & = \lim_\gamma \left( \langle x_\gamma(0), v_0 \rangle_{Z \times V} + \frac{\langle x_\gamma(s) - x_\gamma(t), v_1 \rangle_{Z \times V}}{d_S(s,t)^\alpha} \right) \\
                & \leq \lim_\gamma \left( \Vert x_\gamma(0) \Vert_Z + \frac{\Vert x_\gamma(s) - x_\gamma(t) \Vert_Z}{d_S(s,t)^\alpha} \right) \\
                & \leq \lim_\gamma \left( \Vert x_\gamma(0) \Vert_Z + \vert x_\gamma \vert_\alpha \right) = \lim_\gamma \Vert x_\gamma \Vert_\alpha \leq R.
            \end{aligned}
        \end{equation}
        By taking the supremum over $s,t \in S$ with $s \neq t$, we have $\Vert x \Vert_\alpha \leq R$, showing that $\overline{B_R(0)}$ is pointwise closed. Moreover, $\overline{B_R(0)}$ is equicontinuous and pointwise bounded, thus by the Banach-Alaoglu theorem pointwise relatively compact with respect to the weak-$*$-topology of $Z$. Hence, we can apply the Arzel\`a-Ascoli theorem in \cite[Theorem~43.15]{willard04} to conclude that $\overline{B_R(0)}$ is relatively compact with respect to $\tau_\infty$. Thus, by using that the identity $(C^\alpha(S;Z),\tau_\infty) \hookrightarrow (C^\alpha(S;Z),\tau_{\alpha'})$ is continuous (see Lemma~\ref{LemmaHoelderTop}~\ref{LemmaHoelderTop1}), it follows that $B \subseteq \overline{B_R(0)}$ is relatively compact with respect to $\tau_{\alpha'}$.
    \end{proof}
    
    \begin{theorem}
        \label{ThmHoelderPredual}
        For $\alpha > 0$, the Banach space $(C^\alpha(S;Z),\Vert \cdot \Vert_\alpha)$ is a dual Banach space. Moreover, $\tau_{w^*} = \tau_\infty = \tau_{\alpha'}$ on every $\Vert \cdot \Vert_\alpha$-bounded subset of $C^\alpha(S;Z)$, for all $0 \leq \alpha' < \alpha$.
    \end{theorem}
    \begin{proof}
        Let $(V,\Vert \cdot \Vert_V)$ be a predual for $(Z,\Vert \cdot \Vert_Z)$. We want to apply Theorem~\ref{ThmBanachPredual} with
        \begin{equation}
            \label{EqThmHoelderPredualProof1}
            E_0 := \left\lbrace C^\alpha(S;Z) \ni x \mapsto \langle x(s), v \rangle_{Z \times V} \in \mathbb{R}: s \in S, \, v \in V \right\rbrace \subseteq C^\alpha(S;Z)^*.
        \end{equation} 
        In order to show that $E_0$ is point separating on $C^\alpha(S;Z)$, let $x_1,x_2 \in C^\alpha(S;Z)$ be distinct. Then, there exists some $s \in S$ such that $Z \ni x_1(s) \neq x_2(s) \in Z$. Moreover, by using that $V^* \cong Z$, there exists some $v \in V$ such that $\mathbb{R} \ni \langle x_1(s), v \rangle_{Z \times V} \neq \langle x_2(s), v \rangle_{Z \times V} \in \mathbb{R}$. Thus, for $\left( x \mapsto e(x) := \langle x(s), v \rangle_{Z \times V} \right) \in E_0$, we have $e(x_1) = \langle x_1(s), v \rangle_{Z \times V} \neq \langle x_2(s), v \rangle_{Z \times V} = e(x_2)$, which shows that $E_0$ is point separating on $C^\alpha(S;Z)$.

        Moreover, $\overline{B_1(0)}$ is by Theorem~\ref{ThmHoelderEmbedding} relatively compact with respect to $\tau_0$, which is by Lemma~\ref{LemmaHoelderTop}~\ref{LemmaHoelderTop1} identical to $\tau_\infty$. Since the weak topology on $C^\alpha(S;Z)$ induced by $E_0 \subseteq C^\alpha(S;Z)^*$ is weaker than $\tau_\infty$, we observe that $\overline{B_1(0)}$ is relatively compact with respect to this weak topology. In addition, by using a similar argument as in \eqref{EqThmHoelderEmbeddingProof1}, it follows that $\overline{B_1(0)}$ is closed with respect to the weak topology on $C^\alpha(S;Z)$ induced by $E_0 \subseteq C^\alpha(S;Z)^*$ and thus compact with respect to this weak topology. Hence, we can apply Theorem~\ref{ThmBanachPredual} to conclude that the closure of $\linspan(E_0)$ with respect to $\Vert \cdot \Vert_{C^\alpha(S;Z)^*}$, denoted by $E$, is a predual for $C^\alpha(S;Z)$.

        In order to prove that the topologies are identical, we fix a $\Vert \cdot \Vert_\alpha$-bounded subset $B \subseteq \overline{B_R(0)} \subseteq C^\alpha(S;Z)$, for some $R > 0$. Moreover, we observe that $\tau_{w^*}$ is equivalent to the topology of pointwise convergence (when $Z$ is equipped with the weak-$*$-topology), which is weaker than the $w^*$-uniform topology $\tau_\infty$. Hence, $\tau_{w^*} \subseteq \tau_\infty \subseteq \tau_{\alpha'}$ implies that the identity $\id: (\overline{B_R(0)},\tau_{\alpha'}) \rightarrow (\overline{B_R(0)},\tau_{w^*})$ is continuous. Furthermore, since $(\overline{B_R(0)}, \tau_{w^*})$ is Hausdorff and $\overline{B_R(0)}$ is relatively compact with respect to $\tau_{\alpha'}$ (see Theorem~\ref{ThmHoelderEmbedding}) as well as closed with respect to $\tau_{\alpha'}$ (by the same argument as in \eqref{EqThmHoelderEmbeddingProof1}), implying that $(\overline{B_R(0)},\tau_{\alpha'})$ is compact, the inverse $\id^{-1}: (\overline{B_R(0)},\tau_{w^*}) \rightarrow (\overline{B_R(0)},\tau_{\alpha'})$ is by \cite[Theorem~26.6]{munkres14} also continuous, which shows that $\tau_{\alpha'} \subseteq \tau_{w^*}$ on $B \subseteq \overline{B_R(0)}$. Together with Lemma~\ref{LemmaHoelderTop}~\ref{LemmaHoelderTop2}, we therefore conclude that $\tau_{w^*} = \tau_\infty = \tau_{\alpha'}$ on $B$, for all $0 \leq \alpha' < \alpha$.
    \end{proof}

    The point evaluations in \eqref{EqThmHoelderPredualProof1} are also used in \cite{weaver99,godefroy03} to construct the Arens-Eells space $\AE(S)$ and the Lipschitz-free space $\mathcal{F}(S)$, respectively, which are both used as preduals of the space of globally Lipschitz continuous functions. 
    
    Choosing the weight function $C^\alpha(S;Z) \ni x \mapsto \psi(x) := \eta\left( \Vert x \Vert_\alpha \right) \in (0,\infty)$, with a continuous non-decreasing unbounded $\eta: [0,\infty) \rightarrow (0,\infty)$, we consider the weighted space $(C^\alpha(S;Z),\psi)$ either equipped with the weak-$*$-topology $\tau_{w^*}$ (see Example~\ref{ExWeightedSpaces}~\ref{ExWeightedSpaceHoelderPredual}), with the $w^*$-uniform topology $\tau_\infty$, or with any $C^{\alpha'}$-topology $\tau_{\alpha'}$ (see Example~\ref{ExWeightedSpaces}~\ref{ExWeightedSpaceHoelderWeaker}). Then, by using that these topologies coincide on the closed $\alpha$-H\"older balls $K_R = \psi^{-1}((0,R])$, we conclude that $\mathcal{B}_\psi(C^\alpha(S;Z))$ is the same for any choice of these topologies on $C^\alpha(S;Z)$.

    \begin{proposition}
        \label{PropHoelderBPsi}
        Let $C^\alpha_\tau(S;Z)$ denote $C^\alpha(S;Z)$ equipped with the topology $\tau \in \mathcal{T} := \lbrace \tau_{w^*}, \tau_\infty \rbrace \cup \left\lbrace \tau_{\alpha'}: \alpha' \in [0,\alpha) \right\rbrace$. Moreover, for some continuous non-decreasing unbounded $\eta: [0,\infty) \rightarrow (0,\infty)$, let $C^\alpha_\tau(S;Z) \ni x \mapsto \psi(x) := \eta\left( \Vert x \Vert_\alpha \right) \in (0,\infty)$ be an admissible weight function, for all $\tau \in \mathcal{T}$. Then, $\mathcal{B}_\psi(C^\alpha_\tau(S;Z)) = \mathcal{B}_\psi(C^\alpha_{\tau'}(S;Z))$, for all $\tau,\tau' \in \mathcal{T}$.
    \end{proposition}
    \begin{proof}
        We fix some $f: C^\alpha(S;Z) \rightarrow \mathbb{R}$ and denote for every $R > 0$ the pre-image of $\psi: C^\alpha(S;Z) \rightarrow (0,\infty)$ by $K_R := \psi^{-1}((0,R]) = \left\lbrace x \in C^\alpha(S;Z): \Vert x \Vert_\alpha \leq \eta^{-1}(R) \right\rbrace$. Moreover, note that it suffices to assume that the weight function $\psi: C^\alpha_\tau(S;Z) \rightarrow (0,\infty)$ is admissible on $C^\alpha_\tau(S;Z)$, for one topology $\tau \in \mathcal{T}$, since the topologies $\tau \in \mathcal{T}$ are by Theorem~\ref{ThmHoelderPredual} identical on the $\Vert \cdot \Vert_\alpha$-bounded pre-images $K_R$. In addition, by combining Lemma~\ref{LemmaBpsiEquivChar}~\ref{LemmaBpsiEquivChar1}+\ref{LemmaBpsiEquivChar2}, it follows for every $\tau \in \mathcal{T}$ that
        \begin{equation*}
            f \in \mathcal{B}_\psi(C^\alpha_\tau(S;Z)) \quad \Longleftrightarrow \quad
            \begin{cases}
                f\vert_{K_R} \text{ is continuous with respect to $\tau$ on $K_R$,} \\
                \text{for all $R > 0$, and } \lim_{R \rightarrow \infty} \sup_{x \in C^\alpha(S;Z) \setminus K_R} \frac{\vert f(x) \vert}{\psi(x)} = 0.
            \end{cases}
        \end{equation*}
        Hence, by combining this with the fact that all topologies $\tau \in \mathcal{T}$ are identical on $K_R$, we have $f \in \mathcal{B}_\psi(C^\alpha_\tau(S;Z))$ if and only if $f \in \mathcal{B}_\psi(C^\alpha_{\tau'}(S;Z))$, for all $\tau,\tau' \in \mathcal{T}$.
    \end{proof}

    \begin{remark}
        \label{RemHoelder0}
        Let us summarize the consequences of this section for the closed vector subspace $C^\alpha_0(S;Z) \subseteq C^\alpha(S;Z)$ of $\alpha$-H\"older continuous functions preserving the origin.
        \begin{enumerate}
            \item\label{RemHoelder0Embedding} Theorem \ref{ThmHoelderEmbedding} implies that the embedding $(C^\alpha_0(S;Z),\Vert \cdot \Vert_\alpha) \hookrightarrow (C^{\alpha'}_0(S;Z),\tau_{\alpha'})$ is compact, for all $0 \leq \alpha' < \alpha$.
            \item\label{RemHoelder0Predual} By using the point evaluations $C^\alpha_0(S;Z) \ni x \mapsto \langle x(s), v \rangle_{Z \times V} \in \mathbb{R}$, for $s \in S$ and $v \in V$ (where $(V,\Vert \cdot \Vert_V)$ is a predual for $(Z,\Vert \cdot \Vert_Z)$), we conclude similarly as in Theorem~\ref{ThmHoelderPredual} that $C^\alpha_0(S;Z)$ is a dual Banach space. Moreover, on every $\Vert \cdot \Vert_\alpha$-bounded subset, the weak-$*$-topology $\tau_{w^*}$, the $w^*$-uniform topology $\tau_\infty$, and the $C^{\alpha'}$-topology $\tau_{\alpha'}$, $0 \leq \alpha' < \alpha$, are identical.
            \item\label{RemHoelder0BPsi} By Proposition~\ref{PropHoelderBPsi}, the function space $\mathcal{B}_\psi(C^\alpha_0(S;Z))$ with admissible weight function $C^\alpha_0(S;Z) \ni x \mapsto \psi(x) := \eta\left( \Vert x \Vert_\alpha \right) \in (0,\infty)$ does not depend on the choice of these topology on the underlying space $C^\alpha_0(S;Z)$.
        \end{enumerate}
    \end{remark}

    \subsection{Predual for $C^{p-var,\alpha}([0,T];Z)$}
    \label{AppPVarPredual}

    In this section, we show for some fixed $T > 0$, $(p,\alpha) \in [1,\infty) \times (0,1)$ with $p \alpha < 1$, and a dual Banach space $Z$ that $C^{p-var,\alpha}([0,T];Z)$ introduced in Section~\ref{SecNotation} is also a dual Banach space. To this end, we first show the following relations between the $w^*$-uniform topology $\tau_\infty$, the $C^{p'-var}$-topologies $\tau_{p'-var}$, $p' \in [1,p]$, and the $C^{p'-var,\alpha'}$-topologies $\tau_{p'-var,\alpha'}$, $(p',\alpha') \in [1,p] \times [0,\alpha]$ with $p' \alpha' < 1$, which were defined in Section~\ref{SecNotation}.

    \begin{lemma}
        \label{LemmaPVarTop}
        Let $(p,\alpha) \in [1,\infty) \times (0,1)$ with $p \alpha < 1$ and $(p',\alpha') \in (p,\infty] \times [0,\alpha)$ with $p' \alpha' < 1$. Then, the following holds true:
        \begin{enumerate}
            \item\label{LemmaPVarTop1} $\tau_\infty \subseteq \tau_{p'-var} = \tau_{p'-var,0} \subseteq \tau_{p'-var,\alpha'} \subseteq \tau_{p-var,\alpha}$ on $C^{p-var,\alpha}([0,T];Z)$.
            \item\label{LemmaPvarTop2} $\tau_\infty = \tau_{p'-var} = \tau_{p'-var,0} = \tau_{p'-var,\alpha'}$ on $\Vert \cdot \Vert_{p-var,\alpha}$-bounded subsets of $C^{p-var,\alpha}([0,T];Z)$.
        \end{enumerate}
    \end{lemma}
    \begin{proof}
        Let $(V,\Vert \cdot \Vert_V)$ be a predual for $(Z,\Vert \cdot \Vert_Z)$. First, we show that $\tau_\infty \subseteq \tau_{p'-var}$ on $C^{p-var,\alpha}([0,T];Z)$. To this end, we observe for every $x \in C^{p-var,\alpha}([0,T];Z)$ and $v \in V$ that
        \begin{equation*}
            \begin{aligned}
                \Vert x \Vert_{\infty,v} & = \sup_{s \in [0,T]} \vert \langle x(s), v \rangle_{Z \times V} \vert \\
                & \leq \vert \langle x(0), v \rangle_{Z \times V} \vert + \sup_{s,t \in S} \vert \langle x(s) - x(0), v \rangle_{Z \times V} \vert \\
                & \leq \vert \langle x(0), v \rangle_{Z \times V} \vert + \left( \sup_{(t_i)_i \in \mathcal{D}([0,T])} \sum_i \vert \langle x(t_{i+1}) - x(t_i), v \rangle_{Z \times V} \vert^{p'} \right)^\frac{1}{p'} \\
                & = \Vert x \Vert_{p'-var,v}
            \end{aligned}
        \end{equation*}
        Since the seminorms $\Vert \cdot \Vert_{\infty,v}$ and $\Vert \cdot \Vert_{p'-var,v}$, $v \in V$, generate the topologies $\tau_\infty$ and $\tau_{p'-var}$, respectively, we obtain $\tau_\infty \subseteq \tau_{p'-var}$ on $C^{p-var,\alpha}([0,T];Z)$.
        
        Next, we show that $\tau_{p'-var} = \tau_{p'-var,0}$ on $C^{p-var,\alpha}([0,T];Z)$. To this end, we observe for every $x \in C^{p-var,\alpha}([0,T];Z)$ and $v \in V$ that
        \begin{equation*}
            \Vert x \Vert_{p'-var,v} = \vert \langle x(0), v \rangle_{Z \times V} \vert + \Vert x \Vert_{p',v} \leq \vert \langle x(0), v \rangle_{Z \times V} \vert + \Vert x \Vert_{p',v} + \Vert x \Vert_{0,v} = \Vert x \Vert_{p'-var,0,v}
        \end{equation*}
        and that
        \begin{equation*}
            \begin{aligned}
                \Vert x \Vert_{p'-var,0,v} & = \vert \langle x(0), v \rangle_{Z \times V} \vert + \left( \sup_{(t_i)_i \in \mathcal{D}([0,T])} \sum_i \vert \langle x(t_{i+1}) - x(t_i), v \rangle_{Z \times V} \vert^{p'} \right)^\frac{1}{p'} \\
                & \quad\quad + \sup_{s,t \in [0,T]} \vert \langle x(s) - x(t), v \rangle_{Z \times V} \vert \\
                & \leq \vert \langle x(0), v \rangle_{Z \times V} \vert + 2 \left( \sup_{(t_i)_i \in \mathcal{D}([0,T])} \sum_i \vert \langle x(t_{i+1}) - x(t_i), v \rangle_{Z \times V} \vert^{p'} \right)^\frac{1}{p'} \\
                & \leq 2 \Vert x \Vert_{p'-var,v}.
            \end{aligned}
        \end{equation*}
        Since the seminorms $\Vert \cdot \Vert_{p'-var,v}$ and $\Vert \cdot \Vert_{p'-var,0,v}$, $v \in V$, generate the topologies $\tau_{p'-var}$ and $\tau_{p'-var,0}$, respectively, we obtain $\tau_{p'-var} = \tau_{p'-var,0}$ on $C^{p-var,\alpha}([0,T];Z)$.

        Now, we show that $\tau_{p'-var,\alpha'} \subseteq \tau_{p-var,\alpha}$ on $C^{p-var,\alpha}([0,T];Z)$. To this end, we use \cite[Proposition~5.3]{friz10} (still holding true in our setting with seminorms) to conclude for every $x \in C^{p-var,\alpha}([0,T];Z)$ and $v \in V$ that
        \begin{equation*}
            \begin{aligned}
                \Vert x \Vert_{p'-var,\alpha',v} & \leq \vert \langle x(0), v \rangle_{Z \times V} \vert + \sup_{s,t \in [0,T] \atop s \neq t} \frac{\vert \langle x(s) - x(t), v \rangle_{Z \times V} \vert}{\vert s-t \vert^{\alpha'}} + \vert x \vert_{p'-var,v} \\
                & \leq \vert \langle x(0), v \rangle_{Z \times V} \vert + T^{\alpha-\alpha'} \sup_{s,t \in [0,T] \atop s \neq t} \frac{\vert \langle x(s) - x(t), v \rangle_{Z \times V} \vert}{\vert s-t \vert^\alpha} + \vert x \vert_{p-var,v} \\
                & \leq \left( 1 + T^{\alpha-\alpha'} \right) \Vert x \Vert_{p-var,\alpha,v}
            \end{aligned}
        \end{equation*}
        Since the seminorms $\Vert \cdot \Vert_{p'-var,\alpha',v}$ and $\Vert \cdot \Vert_{p-var,\alpha,v}$, $v \in V$, generate the topologies $\tau_{p'-var,\alpha'}$ and $\tau_{p-var,\alpha}$, respectively, we obtain $\tau_{p'-var,\alpha'} \subseteq \tau_{p-var,\alpha}$ on $C^{p-var,\alpha}([0,T];Z)$.

        Finally, we show that $\tau_\infty = \tau_{p'-var,\alpha'}$ on every fixed $\Vert \cdot \Vert_{p-var,\alpha}$-bounded subset $B \subseteq C^{p-var,\alpha}([0,T];Z)$. To this end, we use the constant $M := \sup_{x \in B} \Vert x \Vert_{p-var,\alpha} < \infty$ and \cite[Proposition~5.5]{friz10} (still holding true in our setting with seminorms), that $\vert x \vert_{p-var,v} \leq \Vert v \Vert_V \vert x \vert_{p-var} \leq \Vert v \Vert_V \Vert x \Vert_{p-var,\alpha}$, and that $\vert x \vert_{\alpha,v} \leq \Vert v \Vert_V \vert x \vert_\alpha \leq \Vert v \Vert_V \Vert x \Vert_{p-var,\alpha}$ to conclude for every $x \in B$ and $v \in V$ that
        \begin{equation*}
            \begin{aligned}
                \Vert x \Vert_{p'-var,\alpha',v} & = \vert \langle x(0), v \rangle_{Z \times V} \vert + \vert x \vert_{p'-var,v} + \vert x \vert_{\alpha',v} \\
                & \leq \vert \langle x(0), v \rangle_{Z \times V} \vert + \vert x \vert_{p-var,v}^\frac{p}{p'} \vert x \vert_{0,v}^{1-\frac{p}{p'}} + \vert x \vert_{\alpha,v}^\frac{\alpha'}{\alpha} \vert x \vert_{0,v}^{1-\frac{\alpha'}{\alpha}} \\
                & \leq \vert \langle x(0), v \rangle_{Z \times V} \vert + (M \Vert v \Vert_V)^\frac{p}{p'} \vert x \vert_{0,v}^{1-\frac{p}{p'}} + (M \Vert v \Vert_V)^\frac{\alpha'}{\alpha} \vert x \vert_{0,v}^{1-\frac{\alpha'}{\alpha}} \\
                & \leq \left( 1 + (M \Vert v \Vert_V)^\frac{p}{p'} + (M \Vert v \Vert_V)^\frac{\alpha'}{\alpha} \right) \max\left( \vert x \vert_{0,v}, \vert x \vert_{0,v}^{1-\frac{p}{p'}}, \vert x \vert_{0,v}^{1-\frac{\alpha'}{\alpha}} \right).
            \end{aligned}
        \end{equation*}
        Since the seminorms $\Vert \cdot \Vert_{p'-var,\alpha',v}$ and $\Vert \cdot \Vert_{0,v}$, $v \in V$, generate the topologies $\tau_{p'-var,\alpha'}$ and $\tau_0$ on $B$, respectively, we obtain $\tau_{p'-var,\alpha'} \subseteq \tau_0$ on $B$. This together with \ref{LemmaPVarTop1} and Lemma~\ref{LemmaHoelderTop}~\ref{LemmaHoelderTop1} shows that $\tau_\infty = \tau_0 = \tau_{p'-var,\alpha'}$ on the $\Vert \cdot \Vert_{p-var,\alpha}$-bounded subset $B \subseteq C^{p-var,\alpha}([0,T];Z)$.
    \end{proof}
    
    In the following, we now show that the embeddings $(C^{p-var,\alpha}([0,T];Z),\Vert \cdot \Vert_{p-var,\alpha}) \hookrightarrow (C^{p'-var,\alpha'}([0,T];Z),\tau_{p'-var,\alpha'})$ are compact, where $(p',\alpha') \in (p,\infty] \times [0,\alpha)$ with $p' \alpha' < 1$, where $C^{\infty-var,\alpha'}([0,T];Z) := C^{\alpha'}([0,T];Z)$ and $C^{p'-var,0}([0,T];Z) := C^{p'-var}([0,T];Z)$.

    \begin{theorem}
        \label{ThmPVarEmbedding}
        For $(p,\alpha) \in [1,\infty) \times (0,1)$ with $p \alpha < 1$ and $(p',\alpha') \in (p,\infty] \times [0,\alpha)$ with $p' \alpha' < 1$, the embedding $(C^{p-var,\alpha}([0,T];Z),\Vert \cdot \Vert_{p-var,\alpha}) \hookrightarrow (C^{p'-var,\alpha'}([0,T];Z),\tau_{p'-var,\alpha'})$ is compact.
    \end{theorem}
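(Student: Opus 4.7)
The plan is to mirror the structure of the Hölder embedding result in Theorem~\ref{ThmHoelderEmbedding}, splitting the argument into (i) continuity of the embedding and (ii) compactness via extraction of a subsequence and two interpolation inequalities.

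For continuity, the key ingredients are the standard monotonicity and interpolation relations
\begin{equation*}
   \Vert x \Vert_{p'-var} \leq \Vert x \Vert_{p-var}, \qquad
   \Vert x \Vert_{\alpha'} \leq (2 \max(1,T))^{\alpha-\alpha'}\,\Vert x \Vert_{\alpha},
\end{equation*}
valid for $p \leq p'$ and $\alpha' \leq \alpha$ (with the obvious modifications when $p' = \infty$ or $\alpha' = 0$, in which case only one of the two seminorms appears in the target norm). Together with the trivial bound on $\Vert x(0) \Vert_Z$, these yield a constant $C = C(p,p',\alpha,\alpha',T) > 0$ such that $\Vert x \Vert_{C^{p'-var,\alpha'}([0,T];Z)} \leq C \Vert x \Vert_{C^{p-var,\alpha}([0,T];Z)}$, establishing continuity.

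For compactness, I would take a sequence $(x_n)_{n \in \mathbb{N}}$ in a $\Vert \cdot \Vert_{C^{p-var,\alpha}([0,T];Z)}$-bounded set $B$, with $M := \sup_n \Vert x_n \Vert_{C^{p-var,\alpha}([0,T];Z)} < \infty$. Since $B$ is also bounded in $C^\alpha([0,T];Z)$, Theorem~\ref{ThmHoelderEmbedding} produces a subsequence $(x_{n_k})$ and a limit $x \in C^\alpha([0,T];Z)$ with $\Vert x - x_{n_k} \Vert_{C^0([0,T];Z)} \to 0$; by lower semicontinuity of the $p$-variation seminorm under pointwise convergence, $x$ inherits finite $p$-variation (with $\Vert x \Vert_{p-var} \leq M$) so that $x \in C^{p-var,\alpha}([0,T];Z)$. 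The remaining task is to upgrade this uniform convergence to convergence in the target norm, which is where the two interpolation estimates
\begin{equation*}
    \Vert y \Vert_{p'-var} \leq \Vert y \Vert_{p-var}^{p/p'} \big(2\Vert y \Vert_{C^0([0,T];Z)}\big)^{1-p/p'}, \qquad
    \Vert y \Vert_{\alpha'} \leq \Vert y \Vert_{\alpha}^{\alpha'/\alpha} \big(2\Vert y \Vert_{C^0([0,T];Z)}\big)^{1-\alpha'/\alpha},
\end{equation*}
applied to $y = x - x_{n_k}$, deliver the conclusion: the $C^0$-factor tends to $0$ while the $p$-variation and $\alpha$-Hölder factors are uniformly controlled by $2M$, so both $\Vert x - x_{n_k} \Vert_{p'-var}$ and $\Vert x - x_{n_k} \Vert_{\alpha'}$ vanish as $k \to \infty$. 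The first inequality is proved by noting that for any partition $(t_i)$, $\sum_i \Vert y(t_{i+1}) - y(t_i) \Vert_Z^{p'} \leq (2\Vert y \Vert_{C^0})^{p'-p} \sum_i \Vert y(t_{i+1}) - y(t_i) \Vert_Z^{p}$; the second is the Hölder-interpolation used already inside the proof of Theorem~\ref{ThmHoelderEmbedding}.

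The main (minor) obstacle is bookkeeping of the degenerate cases: if $p' = \infty$ we only need the $\alpha'$-Hölder interpolation, and if $\alpha' = 0$ only the $p$-variation interpolation, with the edge configuration $(p',\alpha') = (\infty,0)$ reducing to the uniform convergence already furnished by Theorem~\ref{ThmHoelderEmbedding}. The condition $p' \alpha' < 1$ is not used in the proof itself but ensures that the target space is genuinely a proper intersection (cf.~Section~\ref{SecNotation}) so that the statement is non-vacuous.
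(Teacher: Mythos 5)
Your proof is correct and follows essentially the same strategy as the paper's (continuity via monotonicity of the $p$-variation seminorm plus the Hölder exponent reduction, then compactness via Arzelà–Ascoli combined with the two interpolation inequalities of \cite[Proposition~5.5]{friz10}). Two minor improvements in your write-up over the paper's version: you invoke Theorem~\ref{ThmHoelderEmbedding} as a black box to produce the uniformly convergent subsequence instead of re-running the Arzelà–Ascoli argument, and you explicitly justify that the uniform limit $x$ lies in $C^{p-var,\alpha}([0,T];Z)$ via lower semicontinuity of the $p$-variation seminorm under pointwise convergence, a step the paper asserts without comment. Your sketch of the $p$-variation interpolation inequality is correct, and your remark about the role of the condition $p'\alpha' < 1$ (to keep the target space a genuine intersection, not used in the estimates themselves) is accurate.
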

    \begin{proof}
        For some fixed $(p',\alpha') \in (p,\infty] \times [0,\alpha)$ with $p' \alpha' < 1$, we use \cite[Proposition~5.3]{friz10} (still holding true in our setting with seminorms) to conclude for every $x \in C^{p-var,\alpha}([0,T];Z)$ and $v \in V$ that
        \begin{equation*}
            \begin{aligned}
                \Vert x \Vert_{p'-var,\alpha',v} & \leq \vert \langle x(0), v \rangle_{Z \times V} \vert + \sup_{s,t \in [0,T] \atop s \neq t} \frac{\vert \langle x(s) - x(t), v \rangle_{Z \times V} \vert}{\vert s-t \vert^{\alpha'}} + \vert x \vert_{p'-var,v} \\
                & \leq \vert \langle x(0), v \rangle_{Z \times V} \vert + T^{\alpha-\alpha'} \sup_{s,t \in [0,T] \atop s \neq t} \frac{\vert \langle x(s) - x(t), v \rangle_{Z \times V} \vert}{\vert s-t \vert^\alpha} + \vert x \vert_{p-var,v} \\
                & \leq \left( 1 + T^{\alpha-\alpha'} \right) \Vert x \Vert_{p-var,\alpha,v} \\
                & \leq \left( 1 + T^{\alpha-\alpha'} \right) \Vert v \Vert_V \Vert x \Vert_{p-var,\alpha}.
            \end{aligned}
        \end{equation*}
        Since the seminorms $\Vert \cdot \Vert_{p'-var,\alpha',v}$, $v \in V$, generate the topology $\tau_{p'-var,\alpha'}$, we can apply \cite[Theorem~III.1.1]{schaefer99} to conclude that the embedding $(C^{p-var,\alpha}([0,T];Z),\Vert \cdot \Vert_{p-var,\alpha}) \hookrightarrow (C^{p'-var,\alpha'}([0,T];Z),\tau_{p'-var,\alpha'})$ is continuous.

        Next, we fix an $\Vert \cdot \Vert_{p-var,\alpha}$-bounded subset $B \subseteq \overline{B_R(0)} \subseteq C^{p-var,\alpha}([0,T];Z)$, for some $R > 0$. First, we show that $\overline{B_R(0)}$ is pointwise closed, i.e.~closed in the topology of pointwise convergence, where $Z$ is equipped with the weak-$*$-topology. Let $(x_\gamma)_\gamma \subseteq \overline{B_R(0)}$ be a net converging pointwise to some $x \in C^\alpha(S;Z)$. Then, for arbitrary $s,t \in S$ and $0 \leq t_0 < t_1 < \ldots < t_n \leq T$, there exists some $v_0,v_1,\ldots,v_{n+2} \in V$ with $\Vert v_i \Vert_V \leq 1$ such that $\Vert x(0) \Vert_Z = \langle x(0), v_0 \rangle_{Z \times V}$, $\Vert x(t_{i+1}) - x(t_i) \Vert_Z = \langle x(t_{i+1}) - x(t_i), v_{i+1} \rangle_{Z \times V}$ for all $i = 1,\ldots,n+1$, and $\Vert x(s) - x(t) \Vert_Z = \langle x(s) - x(t), v_{n+2} \rangle_{Z \times V}$, which implies that
        \begin{equation*}
            \begin{aligned}
                & \Vert x(0) \Vert_Z + \left( \sum_{i=1}^n \Vert x(t_{i+1}) - x(t_i) \Vert_Z^p \right)^\frac{1}{p} + \frac{\Vert x(s) - x(t) \Vert_Z}{d_S(s,t)^\alpha} \\
                & = \vert \langle x(0), v_0 \rangle_{Z \times V} \vert + \left( \sum_{i=1}^n \vert \langle x(t_{i+1}) - x(t_i), v_{i+1} \rangle_{Z \times V} \vert^p \right)^\frac{1}{p} + \frac{\vert \langle x(s) - x(t), v_{n+2} \rangle_{Z \times V} \vert}{d_S(s,t)^\alpha} \\
                & = \lim_\gamma \left( \langle x_\gamma(0), v_0 \rangle_{Z \times V} \!+\! \left( \sum_{i=1}^n \vert \langle x_\gamma(t_{i+1}) \!-\! x_\gamma(t_i), v_{i+1} \rangle_{Z \times V} \vert^p \right)^\frac{1}{p} \!+\!\frac{\langle x_\gamma(s) \!-\! x_\gamma(t), v_{n+2} \rangle_{Z \times V}}{d_S(s,t)^\alpha} \right) \\
                & \leq \lim_\gamma \left( \Vert x_\gamma(0) \Vert_Z + \left( \sum_{i=1}^n \Vert x_\gamma(t_{i+1}) - x_\gamma(t_i) \Vert_Z^p \right)^\frac{1}{p} + \frac{\Vert x_\gamma(s) - x_\gamma(t) \Vert_Z}{d_S(s,t)^\alpha} \right) \\
                & \leq \lim_\gamma \left( \Vert x_\gamma(0) \Vert_Z + \vert x_\gamma \vert_{p-var} + \vert x_\gamma \vert_\alpha \right) = \lim_\gamma \Vert x_\gamma \Vert_{p-var,\alpha} \leq R.
            \end{aligned}
        \end{equation*}
        By taking the supremum over $s,t \in S$ with $s \neq t$ and the supremum over $(t_i)_i \in \mathcal{D}([0,T])$, we have $\Vert x \Vert_{p-var,\alpha} \leq R$, showing that $\overline{B_R(0)}$ is pointwise closed. Moreover, $\overline{B_R(0)}$ is equicontinuous and pointwise bounded, thus by the Banach-Alaoglu theorem pointwise relatively compact with respect to the weak-$*$-topology of $Z$. Hence, we can apply the Arzel\`a-Ascoli theorem in \cite[Theorem~43.15]{willard04} to conclude that $\overline{B_R(0)}$ is relatively compact with respect to $\tau_\infty$. Thus, by using that the identity $(C^{p-var,\alpha}([0,T];Z),\tau_\infty) \hookrightarrow (C^{p-var,\alpha}([0,T];Z),\tau_{p'-var,\alpha'})$ is continuous (see Lemma~\ref{LemmaPVarTop}~\ref{LemmaPVarTop1}), it follows that $B \subseteq \overline{B_R(0)}$ is relatively compact with respect to $\tau_{p'-var,\alpha'}$.
    \end{proof}

    \begin{theorem}
        \label{ThmPVarPredual}
        For every $(p,\alpha) \in [1,\infty) \times (0,1)$ with $p \alpha < 1$, the Banach space $(C^{p-var,\alpha}([0,T];Z),\Vert \cdot \Vert_{p-var,\alpha})$ is a dual Banach space. Moreover, $\tau_\infty = \tau_{p'-var} = \tau_{p'-var,\alpha'}$ on every $\Vert \cdot \Vert_{p-var,\alpha}$-bounded subset of $C^{p-var}([0,T];Z)$, for all $(p',\alpha') \in (p,\infty] \times [0,\alpha)$ with $p' \alpha' < 1$.
    \end{theorem}
    \begin{proof}
        Let $(V,\Vert \cdot \Vert_V)$ be a predual for $(Z,\Vert \cdot \Vert_Z)$. We want to apply Theorem~\ref{ThmBanachPredual} with
        \begin{equation*}
            E_0 := \left\lbrace C^{p-var,\alpha}([0,T];Z) \ni x \mapsto \langle x(s), v \rangle_{Z \times V} \in \mathbb{R}: \,
            \begin{matrix}
                s \in [0,T] \\
                v \in V
            \end{matrix}
            \right\rbrace \subseteq C^{p-var,\alpha}([0,T];Z)^*.
        \end{equation*}
     	Then, we follow the proof of Theorem~\ref{ThmHoelderPredual} to conclude that the closure of $\linspan(E_0)$ with respect to $\Vert \cdot \Vert_{C^{p-var,\alpha}([0,T];Z)^*}$ is a predual for $C^{p-var,\alpha}([0,T];Z)$. Moreover, by using the same arguments as in the proof of Theorem~\ref{ThmHoelderPredual}, we conclude that the above mentioned topologies coincide on each $\Vert \cdot \Vert_{p-var,\alpha}$-bounded subset of $C^{p-var,\alpha}([0,T];Z)$.
    \end{proof}
    
    For the weight function $C^{p-var,\alpha}([0,T];Z) \ni x \mapsto \psi(x) := \eta\left( \Vert x \Vert_{p-var,\alpha} \right) \in (0,\infty)$, with continuous and non-decreasing function $\eta: [0,\infty) \rightarrow (0,\infty)$, we consider the weighted space $(C^{p-var,\alpha}([0,T];Z),\psi)$ either equipped with the weak-$*$-topology as in Example~\ref{ExWeightedSpaces}~\ref{ExWeightedSpacePVarPredual} or with the $C^{p'-var,\alpha'}$-topology as in Example~\ref{ExWeightedSpaces}~\ref{ExWeightedSpacePVarWeaker}. Then, by using that these topologies coincide on the pre-image $K_R = \psi^{-1}((0,R])$ (see Theorem~\ref{ThmPVarPredual}), $\mathcal{B}_\psi(C^{p-var,\alpha}([0,T];Z))$ is the same for both choices of topologies on $C^{p-var,\alpha}([0,T];Z)$.

    \begin{proposition}
        \label{PropPVarBPsi}
        Let $C^{p-var,\alpha}_\tau([0,T];Z)$ denote $C^{p-var,\alpha}([0,T];Z)$ equipped with the topology $\tau \in \mathcal{T} := \lbrace \tau_{w^*}, \tau_\infty \rbrace \cup \left\lbrace \tau_{p'-var,\alpha'}: (p',\alpha') \in (p,\infty] \times [0,\alpha), \, p' \alpha' < 1 \right\rbrace$. Moreover, for some continuous non-decreasing unbounded $\eta: [0,\infty) \rightarrow (0,\infty)$, let $C^{p-var,\alpha}_\tau([0,T];Z) \ni x \mapsto \psi(x) := \eta\left( \Vert x \Vert_{p-var,\alpha} \right) \in (0,\infty)$ be an admissible weight function, for all $\tau \in \mathcal{T}$. Then, $\mathcal{B}_\psi(C^{p-var,\alpha}_\tau([0,T];Z)) = \mathcal{B}_\psi(C^{p-var,\alpha}_{\tau'}([0,T];Z))$, for all $\tau,\tau' \in \mathcal{T}$.
    \end{proposition}
    \begin{proof}
        The proof follows along the lines of the proof for Proposition~\ref{PropHoelderBPsi}.
    \end{proof}
    
    \begin{remark}
        \label{RemPVar0}
       For the closed vector subspace $C^{p-var,\alpha}_0([0,T];Z) \subseteq C^{p-var,\alpha}([0,T];Z)$ of $\alpha$-H\"older continuous paths $x: [0,T] \rightarrow Z$ with finite $p$-variation preserving the origin, we obtain the analogous results as in Remark~\ref{RemHoelder0}.
    \end{remark}

	\section{Proof of Proposition~\ref{PropAct}}
	\label{sec:proof}
	
	\begin{proof}[Proof of Proposition~\ref{PropAct}]
	    First, we show that $\mathcal{NN}^\rho_{\mathbb{R},\mathbb{R}} \subseteq \mathcal{B}_{\psi_1}(\mathbb{R})$. Since $\mathcal{NN}^\rho_{\mathbb{R},\mathbb{R}}$ is defined as the linear span of functions of the form $\mathbb{R} \ni z \mapsto w \rho(az + b) \in \mathbb{R}$, it suffices to show that $\left( z \mapsto w \rho(az + b) \right) \in \mathcal{B}_{\psi_1}(\mathbb{R})$, for all $a \in \mathbb{N}_0$ and $b,w \in \mathbb{R}$. Moreover, we can assume without loss of generality that $w = 1$. Now, we fix some $a \in \mathbb{N}_0$, $b \in \mathbb{R}$, and $\varepsilon > 0$. Then, by using that $\lim_{z \rightarrow \pm \infty} \frac{\vert \rho(az+b) \vert}{\psi_1(z)} = 0$, there exists some $\widetilde{R} > 0$ such that $\sup_{\vert z \vert > \widetilde{R}} \frac{\vert \rho(az+b) \vert}{\psi_1(z)} \leq \varepsilon/2$. Moreover, we define the constant $C := \sup_{\vert z \vert \leq \widetilde{R}} \vert \rho(az+b) \vert \geq 0$ which is finite as $\rho \in C^0(\mathbb{R})$ is continuous. Hence, by choosing $R \geq 2C/\varepsilon$, it follows that
	    \begin{equation*}
	    	\begin{aligned}
	    		\sup_{z \in \mathbb{R} \setminus K_{1,R}} \frac{\vert \rho(az+b) \vert}{\psi_1(z)} & \leq \sup_{z \in \mathbb{R} \setminus K_{1,R} \atop \vert z \vert \leq \widetilde{R}} \frac{\vert \rho(az+b) \vert}{\psi_1(z)} + \sup_{z \in \mathbb{R} \setminus K_{1,R} \atop \vert z \vert > \widetilde{R}} \frac{\vert \rho(az+b) \vert}{\psi_1(z)} \\
	    		& \leq \frac{\sup_{\vert z \vert \leq \widetilde{R}} \vert \rho(az+b) \vert}{R} + \sup_{z \in \mathbb{R} \atop \vert z \vert > \widetilde{R}} \frac{\vert \rho(az+b) \vert}{\psi_1(z)} \\
	    		& \leq \frac{C}{R} + \frac{\varepsilon}{2} \leq \varepsilon,
	    	\end{aligned}
	    \end{equation*}
	    where $K_{1,R} := \psi_1^{-1}((0,R])$. Since $\varepsilon > 0$ was chosen arbitrarily and $K_{1,R} \ni z \mapsto \rho(az+b) \in \mathbb{R}$ is continuous, for all $R > 0$, Lemma~\ref{LemmaBpsiEquivChar}~\ref{LemmaBpsiEquivChar2} shows that $\left( z \mapsto \rho(az + b) \right) \in \mathcal{B}_{\psi_1}(\mathbb{R})$.
	    
	    Now, we show that each of the conditions \ref{A1}-\ref{A3} implies that $\mathcal{NN}^\rho_{\mathbb{R},\mathbb{R}}$ is dense in $\mathcal{B}_{\psi_1}(\mathbb{R})$:
	    \begin{enumerate}
	        \item[\ref{A1}] We adapt the original proof of George Cybenko in \cite[Theorem~1]{cybenko89} to this weighted setting, where we use the Riesz representation theorem in \cite[Theorem~2.4]{doersek10} for $\mathcal{B}_{\psi_1}(\mathbb{R})$. To this end, let us assume by contradiction that $\rho \in C^0(\mathbb{R})$ is not $\psi_1$-activating, which means that $\mathcal{NN}^\rho_{\mathbb{R},\mathbb{R}}$ is not dense in $\mathcal{B}_{\psi_1}(\mathbb{R})$. Then, by the Hahn-Banach theorem, there exists a non-zero continuous linear functional $l: \mathcal{B}_{\psi_1}(\mathbb{R}) \rightarrow \mathbb{R}$ such that $l(\varphi) = 0$, for all $\varphi \in \mathcal{NN}^\rho_{\mathbb{R},\mathbb{R}}$. Moreover, by using the Riesz representation theorem in \cite[Theorem~2.4]{doersek10} there exists a signed Radon measure $\mu \in \mathcal{M}_{\psi_1}(\mathbb{R})$ such that $l(f) = \int_{\mathbb{R}} f(z) \mu(dz)$, for all $f \in \mathcal{B}_{\psi_1}(\mathbb{R})$, which implies that $\int_{\mathbb{R}} \rho(a z + b) \mu(dz) = 0$, for all $a \in \mathbb{N}_0$ and $b \in \mathbb{R}$. However, since $\rho \in C^0(\mathbb{R})$ is $\psi_1$-discriminatory, it follows that $\mu = 0$, which contradicts the assumption that $l: \mathcal{B}_{\psi_1}(\mathbb{R}) \rightarrow \mathbb{R}$ is non-zero. Hence, it follows that $\mathcal{NN}^\rho_{\mathbb{R},\mathbb{R}}$ is dense in $\mathcal{B}_{\psi_1}(\mathbb{R})$.
	        
	        \item[\ref{A2}] We adapt the original proof of George Cybenko in \cite[Lemma~1]{cybenko89} to this weighted setting to show that $\rho \in C^0(\mathbb{R})$ is $\psi_1$-discriminatory. Then, it follows from \ref{A1} that $\mathcal{NN}^\rho_{\mathbb{R},\mathbb{R}}$ is dense in $\mathcal{B}_{\psi_1}(\mathbb{R})$.
         
            For this purpose, let $\mu \in \mathcal{M}_{\psi_1}(\mathbb{R})$ be a signed Radon measure satisfying for every $a \in \mathbb{N}_0$ and $b \in \mathbb{R}$ that
            \begin{equation}
                \label{EqPropActProof1}
                \int_{\mathbb{R}} \rho(a z + b) \mu(dz) = 0.
            \end{equation}
            Now, we choose $a = \lambda$ and $b = -\lambda \theta + \xi$, for some $\lambda \in \mathbb{N}_0$ and $\theta, \xi \in \mathbb{R}$. Then, by using that $\rho \in C^0(\mathbb{R})$ is sigmoidal, it follows for every $\theta, \xi, z \in \mathbb{R}$ that
    		\begin{equation*}
    			\rho\left( \lambda \left( z - \theta \right) + \xi \right) \quad \overset{\lambda \rightarrow \infty}{\underset{\lambda \in \mathbb{N}_0}{\longrightarrow}} \quad
    			\begin{cases}
    				1 & \text{if } z \in (\theta,\infty), \\
    				\rho(\xi) & \text{if } z = \theta, \\
    				0 & \text{if } z \in (-\infty,\theta).
    			\end{cases}
    		\end{equation*}
    		Moreover, we observe that $\vert \rho(\lambda (z-\theta) + \xi) \vert \leq C_\rho$ for any $\lambda \in \mathbb{N}_0$ and $\theta, \xi \in \mathbb{R}$, where $C_\rho = \sup_{z \in \mathbb{R}} \vert \rho(z) \vert < \infty$ (as $\rho \in C^0(\mathbb{R})$ is continuous and sigmoidal), and that $\mu \in \mathcal{M}_{\psi_1}(\mathbb{R})$ is a finite signed measure as $\int_{\mathbb{R}} \vert \mu \vert(dz) \leq M \int_{\mathbb{R}} \psi_1(z) \vert \mu \vert(dz) < \infty$, where $M = \left( \inf_{x \in X} \psi_1(x) \right)^{-1} < \infty$. Hence, we can apply the dominated convergence theorem to conclude from \eqref{EqPropActProof1} that
    		\begin{equation}
                \label{EqPropActProof2}
                \begin{aligned}
                	0 = \int_{\mathbb{R}} \rho\left( \lambda \left( z - \theta \right) + \xi \right) \mu(dz) \quad \overset{\lambda \rightarrow \infty}{\underset{\lambda \in \mathbb{N}_0}{\longrightarrow}} \quad & \rho(\xi) \int_{\mathbb{R}} \mathds{1}_{\lbrace \theta \rbrace}(z) \mu(dz) + \int_{\mathbb{R}} \mathds{1}_{(\theta,\infty)}(z) \mu(dz) \\
                	= \, & \rho(\xi) \mu\left( \left\lbrace \theta \right\rbrace \right) + \mu\left( (\theta,\infty) \right).
                \end{aligned}
    		\end{equation}
    		Hereby, we use that $\lim_{\xi \rightarrow -\infty} \rho(\xi) = 0$ (as $\rho \in C^0(\mathbb{R})$ is sigmoidal) to conclude that the first term on the right-hand side of \eqref{EqPropActProof2} can be omitted. Now, we decompose $\mu \in \mathcal{M}_{\psi_1}(\mathbb{R})$ into its positive and negative part $\mu_+, \mu_- \in \mathcal{M}_{\psi_1}(\mathbb{R})$ such that $\mu = \mu_+ - \mu_-$. Then, \eqref{EqPropActProof2} (without the first term on the right-hand side) implies that
    		\begin{equation*}
    			\mu_+\left( (\theta,\infty) \right) - \mu_-\left( (\theta,\infty) \right) = \mu\left( (\theta,\infty) \right) = 0,
    		\end{equation*}
    		and thus $\mu_+\left( (\theta,\infty) \right) = \mu_-\left( (\theta,\infty) \right)$. Moreover, by letting $\theta \rightarrow -\infty$, we obtain $\mu_+(\mathbb{R}) = \mu_-(\mathbb{R})$. Hence, we conclude for every $\theta \in \mathbb{R}$ that
    		\begin{equation*}
    			\mu_-\left( (-\infty,\theta] \right) = \mu_-(\mathbb{R}) - \mu_-\left( (\theta,\infty) \right) = \mu_+(\mathbb{R}) - \mu_+\left( (\theta,\infty) \right) = \mu_+\left( (-\infty,\theta] \right).
    		\end{equation*}
    		Thus, by using that $\mathcal{B}(\mathbb{R})$ is generated by the family $\left\lbrace (-\infty,b]: b \in \mathbb{R} \right\rbrace$, which is closed under finite intersections, it follows from \cite[Theorem~5.4]{bauer01} that $\mu_- = \mu_+$. This shows that $\mu = 0 \in \mathcal{M}_{\psi_1}(\mathbb{R})$ and that $\rho \in C^0(\mathbb{R})$ is $\psi_1$-discriminatory.

            \item[\ref{A3}] We generalize Norbert Wiener's Tauberian theorem in \cite{wiener32} to the current setting and we use the argument of Jacob Koorevar's distributional extension in \cite{korevaar65}. In short, \cite{korevaar65} shows that $f * g =0 $, where $f \in L^1(\mathbb{R})$ with nowhere vanishing Fourier transform and some $g \in L^\infty(\mathbb{R})$, leads to $g=0$. We prove with a similar argument that $f * g = 0$, where $g$ has a Fourier transform with $0$ in the interior of its support and $f \in L^1(\mathbb{R})$, leads to local assertions on the Fourier transform of $f$. Then, as we will show, it follows from \ref{A1} that $\mathcal{NN}^\rho_{\mathbb{R},\mathbb{R}}$ is dense in $\mathcal{B}_{\psi_1}(\mathbb{R})$.

            In the following, we argue by contradiction and assume that $\rho \in C^0(\mathbb{R})$ is not $\psi_1$-discriminatory. Then, as in the proof of \ref{A1}, there exists a non-zero finite signed Radon measure $\mu \in \mathcal{M}_{\psi_1}(\mathbb{R})$ such that for every fixed $a \in \mathbb{N}_0$ and $b \in \mathbb{R}$ we have
            \begin{equation}
                \label{EqPropActProof3}
                \int_{\mathbb{R}} \rho(az+ab) \mu(dz) = 0.
            \end{equation}
            Moreover, we identify the one-point compactification $\overline{\mathbb{R}} = \mathbb{R} \cup \lbrace \infty \rbrace$ with the unit circle $S^1 := \lbrace (u,v) \in \mathbb{R}^2: u^2 + v^2 = 1 \rbrace$ via the mapping
            \begin{equation*}
                S^1 \ni (u,v) \quad \mapsto \quad
                \begin{cases}
                    \frac{u}{1-v} & \text{if } v \neq 1, \\
                    \infty & \text{if } v = 1.
                \end{cases}
            \end{equation*}
            Hence, by using that $\rho \in C^0(\mathbb{R})$, that $\psi_1: \mathbb{R} \rightarrow (0,\infty)$ is lower semicontinuous (see Remark~\ref{RemWeight}~\ref{RemWeight1}), and that $\lim_{(z,b) \rightarrow (\pm \infty,b_0)} \frac{\vert \rho(az+ab) \vert}{\psi_1(z)} = 0$ for any $b_0 \in \mathbb{R}$, the function
            \begin{equation*}
                S^1 \times \mathbb{R} \ni (u,v,b) \quad \mapsto \quad g_a(u,v,b) :=
                \begin{cases}
                    \frac{\left\vert \rho\left( a \frac{u}{1-v} + ab \right) \right\vert}{\psi_1\left( \frac{u}{1-v} \right)} & \text{if } v \neq 1, \\
                    0 & \text{if } v = 1,
                \end{cases}
            \end{equation*}
            is upper semicontinuous. Thus, for every compact subset $K \subsetneq \mathbb{R}$, we obtain that
            \begin{equation*}
                \sup_{(z,b) \in \mathbb{R} \times K} \frac{\vert \rho(az+ab) \vert}{\psi_1(z)} = \sup_{(u,v,b) \in S^1 \times K} g_a(u,v,b) < \infty.
            \end{equation*}
            Hence, by using this together with $\int_{\mathbb{R}} \psi_1(z) \vert \mu \vert(dz) < \infty$, we can apply the dominated convergence theorem to conclude that $\mathbb{R} \ni b \mapsto \int_{\mathbb{R}} \vert \rho(az+ab) \vert \vert \mu \vert(dz) \in \mathbb{R}$ is continuous. Therefore, for every compactly supported $h \in C^\infty_c(\mathbb{R};\mathbb{C})$, it follows that
             \begin{equation}
                \label{EqPropActProof4}
                \begin{aligned}
                    & \int_{\mathbb{R}} \int_{\mathbb{R}} \vert h(s-t-z) \vert \vert \rho(as) \vert \, ds \vert \mu \vert(dz) = \int_{\mathbb{R}} \int_{\mathbb{R}} \vert h(b-t) \vert \vert \rho(az+ab) \vert \, db \vert \mu \vert(dz) \\
                    & \quad\quad = \int_{\mathbb{R}} \int_{\mathbb{R}} \vert h(b-t) \vert \vert \rho(az+ab) \vert \vert \mu \vert(dz) db < \infty,
                \end{aligned}
            \end{equation}
            where we have used the substitution $b \mapsto s-z$.

            Next, by using that $\mu \in \mathcal{M}_{\psi_1}(\mathbb{R})$ is non-zero, there exists some $h \in C^\infty_c(\mathbb{R};\mathbb{C})$ such that $\big( s \mapsto f(s) := (h * \mu)(-s) := \int_{\mathbb{R}} h(-s-z) \mu(dz) \big) \in L^1(\mathbb{R};\mathbb{C})$ is also non-zero. Indeed, since every $h \in C^\infty_c(\mathbb{R};\mathbb{C})$ is uniformly bounded and $\mu \in \mathcal{M}_{\psi_1}(\mathbb{R})$ is finite, the function $\mathbb{R} \ni s \mapsto f(s) := (h * \mu)(-s) := \int_{\mathbb{R}} h(-s-z) \mu(dz) \in \mathbb{C}$ is by the dominated convergence theorem continuous, whence we actually only need one fixed $s \in \mathbb{R}$ with $f(s) \neq 0$. Now, by using that $f \in L^1(\mathbb{R};\mathbb{C})$ is non-zero, the injectivity of the Fourier transform implies that $\widehat{f}: \mathbb{R} \rightarrow \mathbb{C}$ is non-zero, too, i.e.~there exists some $\zeta \in \mathbb{R}$ such that $\widehat{f}(\zeta) \neq 0$. Thus, $\left( s \mapsto f_\zeta(s) := f(s) e^{-\mathbf{i} \zeta s} \right) \in L^1(\mathbb{R};\mathbb{C})$ satisfies $\widehat{f_\zeta}(0) = \widehat{f}(\zeta) \neq 0$. Moreover, for every $a \in \mathbb{N}$, we define $\left( z \mapsto \rho_{a,\zeta}(z) := \rho(az) e^{-\mathbf{i} \zeta z} \right) \in C^0(\mathbb{R};\mathbb{C})$. Then, by using the Fubini theorem (justified by \eqref{EqPropActProof4}), the substitution $b \mapsto s-z$, again Fubini's theorem with signed measures (justified by \eqref{EqPropActProof4}), and the identity \eqref{EqPropActProof3}, we conclude for every $a \in \mathbb{N}$ and $t \in \mathbb{R}$ that
            \begin{equation}
                \label{EqPropActProof5}
                \begin{aligned}
                    \left( f_\zeta * \rho_{a,\zeta} \right)(t) & = \int_{\mathbb{R}} f_\zeta(t-s) \rho_{a,\zeta}(s) ds \\
                    & = \int_{\mathbb{R}} f(t-s) e^{-\mathbf{i} \zeta (t-s)} \rho(as) e^{-\mathbf{i} \zeta s} ds \\
                    & = e^{-\mathbf{i} \zeta t} \int_{\mathbb{R}} (h * \mu)(s-t) \rho(as) ds \\
                    & = e^{-\mathbf{i} \zeta t} \int_{\mathbb{R}} \left( \int_{\mathbb{R}} h(s-t-z) \mu(dz) \right) \rho(as) ds \\
                    & = e^{-\mathbf{i} \zeta t} \int_{\mathbb{R}} \left( \int_{\mathbb{R}} h(s-t-z) \rho(as) ds \right) \mu(dz) \\   
                    & = e^{-\mathbf{i} \zeta t} \int_{\mathbb{R}} \left( \int_{\mathbb{R}}h(b-t) \rho(az+ab) db \right) \mu(dz) \\
                    & = e^{-\mathbf{i} \zeta t} \int_{\mathbb{R}} h(b-t) \left( \int_{\mathbb{R}} \rho(az+ab) \mu(dz) \right) db = 0.
                \end{aligned}
            \end{equation}            
            Let $\phi \in \mathcal{S}(\mathbb{R};\mathbb{C})$ be a Schwartz function with $\widehat{\phi}(\xi) = 1$, for all $\xi \in [-1,1]$, and $\widehat{\phi}(\xi) = 0$, for all $\xi \in \mathbb{R} \setminus [-2,2]$. From this, we define for every $n \in \mathbb{N}$ the Schwartz function $\left( s \mapsto \phi_n(s) := \frac{1}{n} \phi\left( \frac{s}{n} \right) \right) \in \mathcal{S}(\mathbb{R};\mathbb{C})$. Then, by following the proof of \cite[Theorem~A]{korevaar65}, there exists some $n \in \mathbb{N}$ and $w \in L^1(\mathbb{R};\mathbb{C})$ such that $w * f_\zeta = \phi_{2n} \in \mathcal{S}(\mathbb{R};\mathbb{C})$. Hence, by using \eqref{EqPropActProof4}+\eqref{EqPropActProof5}, we conclude for every $a \in \mathbb{N}$ and $s \in \mathbb{R}$ that
            \begin{equation*}
                (\phi_{2n} * T_{\rho_{a,\zeta}})(s) = (w * f_\zeta * \rho_{a,\zeta})(s) = \int_{\mathbb{R}} w(s-t) \left( f_\zeta * \rho_{a,\zeta} \right)(t) dt = 0,
            \end{equation*}
            which implies by using \cite[Equation~9.32]{folland92} that $\widehat{\phi_{2n}} \widehat{T_{\rho_{a,\zeta}}} = 0 \in \mathcal{S}'(\mathbb{R};\mathbb{C})$, for all $a \in \mathbb{N}$. Since $\widehat{\phi_{2n}}(\xi) = \widehat{\phi}(2n \xi) = 1$ for all $\xi \in [-\frac{1}{2n},\frac{1}{2n}]$, it follows for every $a \in \mathbb{N}$ that $\widehat{T_{\rho_{a,\zeta}}}$ vanishes on $(-\frac{1}{2n},\frac{1}{2n})$, i.e.~$\widehat{T_{\rho_{a,\zeta}}}(g) = 0$ for all $g \in C^\infty_c((-\frac{1}{2n},\frac{1}{2n}))$.
            
            Finally, we define for every fixed $a \in \mathbb{N}$ and $g \in C^\infty_c((\frac{\zeta-1/(2n)}{a},\frac{\zeta+1/(2n)}{a}))$, the function $\left( z \mapsto g_{a,\zeta}(z) := g\left( \frac{z+\zeta}{a} \right) \right) \in C^\infty_c((-\frac{1}{2n},\frac{1}{2n}))$. Then, the previous step shows that
            \begin{equation*}
                \begin{aligned}
                    \widehat{T_\rho}(g) & = T_\rho(\widehat{g}) = \int_{\mathbb{R}} \rho(s) \widehat{g}(s) ds = a \int_{\mathbb{R}} \rho(az) \widehat{g}(az) dz = \int_{\mathbb{R}} \rho_{a,\zeta}(z) e^{\mathbf{i} \zeta z} \widehat{g\left( \frac{\cdot}{a} \right)}(z) dz \\
                    & = \int_{\mathbb{R}} \rho_{a,\zeta}(z) \widehat{g_{a,\zeta}}(z) dz = T_{\rho_{a,\zeta}}(\widehat{g_{a,\zeta}}) = \widehat{T_{\rho_{a,\zeta}}}(g_{a,\zeta}) = 0.
                \end{aligned}
            \end{equation*}
            Since $a \in \mathbb{N}$ and $g \in C^\infty_c((\frac{\zeta-1/(2n)}{a},\frac{\zeta+1/(2n)}{a}))$ were chosen arbitrarily, it follows that $\widehat{T_\rho} \in \mathcal{S}'(\mathbb{R};\mathbb{C})$ vanishes on the set $\bigcup_{a \in \mathbb{N}} (\frac{\zeta-1/(2n)}{a},\frac{\zeta+1/(2n)}{a})$, whose closure contains the point $0 \in \mathbb{R}$. This, however, contradicts the assumption that $\widehat{T_\rho} \in \mathcal{S}'(\mathbb{R};\mathbb{C})$ has a support with $0 \in \mathbb{R}$ as inner point, whence $\rho \in C^0(\mathbb{R})$ is $\psi_1$-discriminatory.
	    \end{enumerate}
	\end{proof}
	
	\bibliographystyle{abbrv}
	\bibliography{mybib}
\end{document}